\crefname{section}{Sec.}{Secs.}
\Crefname{section}{Section}{Sections}
\Crefname{table}{Table}{Tables}
\crefname{table}{Tab.}{Tabs.}
\newlist{inlineroman}{enumerate*}{1}
\setlist[inlineroman]{itemjoin*={{, and }},afterlabel=~,label=\roman*)}
\newcommand{\y}[1]{\gls{#1}}
\DeclareMathOperator{\Tr}{tr}
\DeclareMathOperator*{\argmin}{\arg\!\min}
\newtheorem{lemma}{Lemma}
\newtheorem{prop}{Proposition}
\begin{document}

\title{Convex Relaxations for Isometric and Equiareal NRS\textit{f}M}

\author{Agniva Sengupta,  Adrien Bartoli\\
\footnotesize{EnCoV, TGI, Institut Pascal, UMR6602 CNRS, Universit\'e Clermont Auvergne, France}\\
{\textit{\footnotesize agniva.sengupta@uca.fr}}, {\textit{\footnotesize adrien.bartoli@gmail.com}}
}
\maketitle
\allowdisplaybreaks
\begin{abstract}
Extensible objects form a challenging case for \y{nrsfm}, owing to the lack of a sufficiently constrained extensible model of the point-cloud. We tackle the challenge by proposing 1) convex relaxations of the isometric model up to quasi-isometry, and 2) convex relaxations involving the equiareal deformation model, which preserves local area and has not been used in \y{nrsfm}.
The equiareal model is appealing because it is physically plausible and widely applicable. 
However, it has two main difficulties: first, when used on its own, it is ambiguous, and second, it involves quartic, hence highly nonconvex, constraints.
Our approach handles the first difficulty by mixing the equiareal with the isometric model and the second difficulty by new convex relaxations. 
We validate our methods on multiple real and synthetic data, including well-known benchmarks.
\end{abstract}

\section{Introduction}\label{sec:intro}
Many objects and structures commonly deform. \y{nrsfm} is the problem of reconstructing the geometry of such deforming objects from point correspondences across monocular images. \y{nrsfm} requires a deformation model, for which the length-preserving isometry is the most common~\cite{chen2020dense, probst2018incremental, chhatkuli2014non}. Consequently, the reconstruction of isometrically deforming surfaces has been well explored and many solutions already exist.  However, stretchable surfaces break isometry and defeat these methods. In contrast, the area-preserving equiareal model is obeyed by many real-world stretchable objects and hence, offers a valuable alternative to isometry. We propose models and methods for \y{nrsfm} that allow significant deviation from isometry, in the form of: 1) deviation from the isometric solution up to some tolerance, and 2) deviation from the isometric solution while following the equiareal model as closely as possible. 

\glsunset{gt}
\begin{figure}\centering
\begin{overpic}[width = \columnwidth,trim=30 0 0 0,clip]{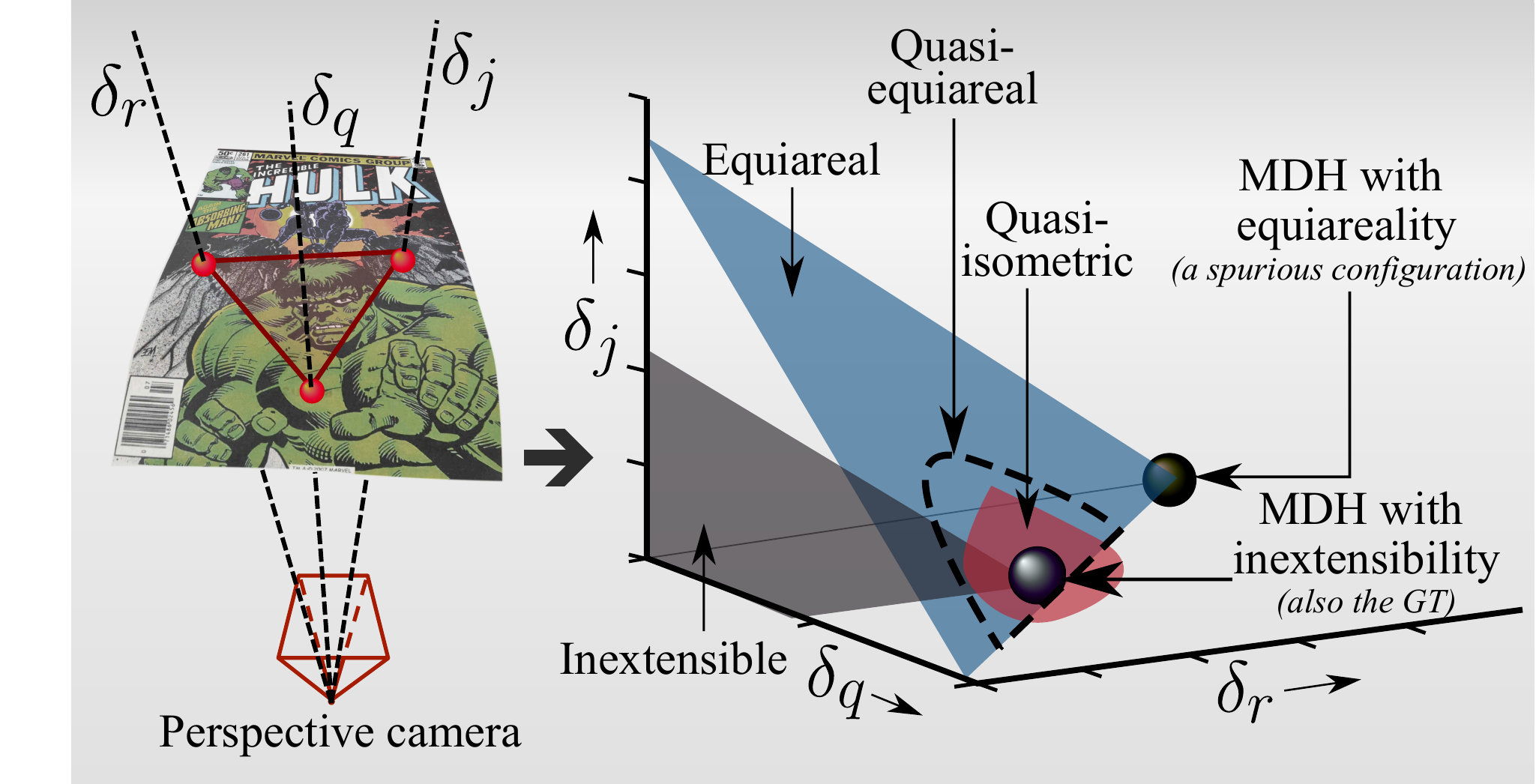}
\end{overpic}
\caption{Observation of the solution spaces for three simulated points in general configuration observed by a perspective camera and parameterised by their depths $\delta_j$, $\delta_q$, and $\delta_r$.
The \y{gt} point is in the centre. 
The solution spaces for inextensibility and equiareality are unbounded subspaces.
Using the MDH with inextensibility yields a solution point reasonably close to the \y{gt}. 
However, using the MDH with equiareality yields a spurious solution point.
The proposed quasi-isometric and quasi-equiareal formulations are smaller bounded subspaces enclosing the \y{gt}.}
\label{fig:theme_outlay}
\end{figure}
\glsresetall

Existing methods for \y{nrsfm} are broadly split into two categories, depending on the type of deformation model they use: {\em i)} \y{lrsb}~\cite{gotardo2011kernel,kumar2020non,kumar2022organic, dai2014simple}, and {\em ii)} physics-based or differential models. The latter type was solved from point correspondences~\cite{chhatkuli2017inextensible}, the so-called {\em zeroth-order methods}, and their derivatives~\cite{parashar2020local,parashar2017isometric}.
Isometric \y{nrsfm} received solutions based on convex relaxations with the zeroth-order methods~\cite{chhatkuli2017inextensible, ji2017maximizing}.
These solutions uses the \y{mdh} with inextensibility constraints. 
The angle-preserving conformal deformation model was also investigated~\cite{parashar2019local} but shows very limited applicability to real-world objects. 
In contrast, equiareality has never been explored.  
A possible reason is the inherent intractability of the problem \cite{heidelberg_05}. 
Interestingly however, the area of simplices bounded by sightlines can be expressed as polynomials which can be reparameterised and relaxed to convex formulations. 
Our proposed approaches search the near vicinity of the solution space of \y{mdh} - isometric \y{nrsfm}, either freely up to a tolerance, or using equiareality as a secondary objective. Figure~(\ref{fig:theme_outlay}) illustrates the intuition behind this reasoning.

Technically, our \textit{first} contribution is to reformulate isometric \y{nrsfm} with a more restrictive relaxation than any other previous approaches, yielding a strictly isometric method. Our \textit{second} contribution is to relax this strict isometry up to a defined tolerance which, unlike the well-known inextensibility, is not restricted to a convex cone. Our \textit{third} contribution is to parameterise the solutions using 3D point positions in $\mathbb{R}^3$, instead of the prevalent noise-sensitive \y{dsl}. Our \textit{fourth} contribution is the introduction of a deformation model combining isometry and equiareality.
In addition, we establish the close relationship of \y{nrsfm} to the \y{grp} and show how the two fields can cross-fertilise.
All our methods are solved with convex optimisation. They reconstruct benchmark datasets with state-of-the-art accuracy and stretchable surfaces with an accuracy unmatched by any other \y{nrsfm} method.


\section{Background}\label{sec:background}
We describe the \y{nrsfm} literature and background.

{\flushleft \textbf{\y{nrsfm}.}} \y{nrsfm} has been essentially dealt with using isometry~\cite{perriollat2011monocular,chhatkuli2014non, chen2020dense, parashar2017isometric}. Following early work on \y{sft} \cite{perriollat2011monocular}, which showed that isometry leads to surface flipping ambiguities, heuristics were proposed to force a unique solution, \y{mdh} being the most prominent~\cite{salzmann2010linear}.
\y{nrsfm} has also been addressed using the conformal model~\cite{parashar2019local}. In constrast, equiareality is highly ambiguous as a deformation constraint~\cite{heidelberg_05, do2016differential}. 
An earlier attempt to solve equiareal \y{nrsfm} concluded on its impossibility~\cite{parashar2019local}.
Indeed, an equiareally deforming surface has more independent parameters than the number of constraints it generates. Later, a hybrid equiareal \y{nrsfm} was attempted~\cite{parashar2020local}, but with lower accuracy than isometric and conformity. Interestingly, \cite{casillas2019equiareal} attempted to solve the closely related and better constrained \y{sft} problem with equiareality; it concluded that even equiareal \y{sft} is unsolvable without additional priors in the form of boundary conditions.
To date, no convex formulation to equiareal \y{nrsfm} has been known.

{\flushleft \textbf{Convex solutions to \y{nrsfm}.}} To solve the \y{nrsfm} problem, \cite{chhatkuli2017inextensible} proposed convex inextensibility relaxations. The squared Euclidean distance between points in $\mathbb{R}^3$ can be trivially obtained as a quadratic function. The inextensible model upper bounds the Euclidean distance up to an unknown geodesic distance. Such inequality constraints in the Lorentz cone of quadratic terms form a special subclass of conic optimisation, \y{socp}. \cite{ji2017maximizing} realised that a reformulation of inextensibility constraints allow the Euclidean distances between point-pairs to be restricted to semidefinite cones, hence posed the problem as an \y{sdp} and a trace minimisation of these \y{sdp} matrices (as a convex surrogate for rank) of depth-of-points.
This also has the desirable side-effect of maximising rigidity. 

Existing approaches do not deal with the unrelaxed, original isometry constraint, nor with the equiareal constraint.
They all follow inextensibility and rely on an \y{nng_} to impart a graph structure to the input point correspondences. 
Isometric \y{nrsfm} is therefore the problem of finding the most isometric configuration of the \y{nng_} in $\mathbb{R}^3$, subject to additional constraints, including reprojection and surface smoothness.
This formulation compels comparison with \y{grp} \cite{belk2007realizability}, which has many applications in fields including wireless sensor network localisation, molecular conformation, design and analysis of tensegrity structures. 

{\flushleft \textbf{Graph realisation problem.}} 
We consider a set of $n$ vertices $\{\mathbf{v}_i \in \mathbb{R}^d\}$, $i \in [1,n]$, with a prescribed \y{edm} denoted by $\mathbf{D} \in \mathbb{R}^{n \times n}$ such that $\mathbf{D}_{i,i'} = \|\mathbf{v}_i - \mathbf{v}_{i'}\|$. 
For an adjacency matrix $\mathcal{E}$ imparting a graph structure on $\{\mathbf{v}_i\}$, the problem of recovering the unknown vertex positions $\{\mathbf{v}_i\}$ from a sparse but known \y{edm} forms the \y{agrp}. 
The wireless sensor network localisation literature~\cite{so2007theory} offers a well-known \y{sdp} solution to \y{agrp}. A \y{gm} obtained from the vertex positions as $\mathbf{Y} = \mathfrak{v}\mathfrak{v}^{\top}$, where $\mathfrak{v}^{\top} = (\mathbf{v}_1^{\top},\hdots,\mathbf{v}_n^{\top})$ parameterises the \y{sdp}. Given a sparse observation of some elements of $\mathbf{Y}$, attempting to reduce the rank of $\mathbf{Y}$ while maintaining it as \y{psd} performs \textit{matrix completion}~\cite{candes2010power}. Matrix completion by \y{sdp} is a well-studied problem. There exists a linear expression in $\mathbf Y$ for the Euclidean distance between vertices. Combining the matrix completion via \y{sdp} with this linear expression yields:
\begin{subequations}\label{eqn:biswas_ye_solution}
 \begin{gather}
     \min_{\mathbf{Y}} \Tr(\mathbf{Y}) \label{eqn:bys_1}\\
     \text{s.t.} \qquad \mathbf{Y}_{i,i} + \mathbf{Y}_{i',i'} - 2\mathbf{Y}_{i,i'} = \mathbf{D}_{i,i'}, \label{eqn:bys_2}\\
     \mathbf{Y} \in \mathbf{S}^n_+, \qquad \forall i, i' \in [1,n]. \label{eqn:bys_3}
 \end{gather}
 \end{subequations}
A solution to equation~(\ref{eqn:biswas_ye_solution}) forms the \textit{Biswas-Ye Semidefinite Relaxation} solutions to \y{agrp} problem \cite{biswas2008distributed}. There also exists variants of equation~(\ref{eqn:biswas_ye_solution}) where only the upper and/or lower bounds of the Euclidean distances in $\mathbf{D}$ can be observed, e.g., in \textit{molecular conformation}. 

{\flushleft \textbf{Beyond Euclidean distances.}} The linearisation of Euclidean distances in equation~(\ref{eqn:bys_2}) happens due to `lifting' the solution to a higher-dimensional space (higher-rank \y{sdp}). However, from the theory of \textit{polynomial optimisation}, there exists well-known methods to solve any global unconstrained minimisation of a \y{sos} polynomial by a finite sequence of \y{lmi} relaxations \cite{lasserre2006convergent}. Therefore, the approximation of Euclidean distances, as in equation~(\ref{eqn:bys_2}), is just one possible variant of polynomial valued function linearised using a higher dimensional \y{psd} matrix. Such an approach had indeed already been used for solving first-order, isometric \y{nrsfm} \cite{probst2019convex}. Interestingly, the experiments of \cite{probst2019convex} point to the discretionary nature of the higher degree \y{lmi} relaxations for the \y{nrsfm} problem. Another possibility is to formulate constraints based on angle between vertices, as in the \textit{angle based sensor network localisation} problem \cite{jing2021angle}. 
Importantly, equiareal constraints have never been handled using \y{sdp} in the context of computer vision or otherwise.

\section{Methods}\label{sec:method}
We describe our \y{nrsfm} methods in details.
\subsection{Notation and Preliminaries}
\y{nrsfm} takes as inputs $m$ point correspondences across $n$ images taken from a perspective camera with known intrinsics. The point correspondences are denoted by their normalised homogeneous coordinates $\{\mathbf{x}_i = (\mathbf{p}_{i,1},\hdots,\mathbf{p}_{i,m})^{\top} \in \mathbb{R}^{m \times 3}\}$, $i \in [1,n]$, and corresponding unknown 3D points by $\{\mathbf{X}_i = (\mathbf{P}_{i,1},\hdots,\mathbf{P}_{i,m})^{\top} \in \mathbb{R}^{m \times 3}\}$, $i \in [1,n]$. $\mathbf{P}_{i,j} = (X_{i,j}, Y_{i,j}, Z_{i,j})^{\top}$ is linked to $\mathbf{p}_{i,j}$ by the perspective projection function $\Pi(\mathbf{P}_{i,j}) = \mathbf{p}_{i,j} = \mathbf{P}_{i,j}/Z_{i,j}$. The \y{sl} is the line directed along $\mathfrak{d}_{i,j} \in S^2$ passing through the camera centre such that $\mathfrak{d}_{i,j} = \mathbf{p}_{i,j}/\|\mathbf{p}_{i,j}\| = (x_{i,j},y_{i,j},z_{i,j})^{\top}$. Each $\mathbf{X}_i$ lies on the surface embedding $\mathcal{S}_i \subset \mathbb{R}^3$ which is linked to an unknown parameterisation $\mathcal{T} \subset \mathbb{R}^2$. For isometric deformations, \textit{geodesic distances} along the surface manifold $\mathcal{S}_i$ must be preserved, while for equiareal deformations, area on the surface manifold $\mathcal{S}_i$ must be preserved.



We denote as $\mathcal{E}$, the set of edges of a fully connected graph with points in $\mathbf{x}_1$ as vertices. From $\mathcal{E}$, we derive the following: 1) a set of 1-simplices $\mathcal{E}_2 \subset \mathcal{E}$, and 2) a set of 2-simplices $\mathcal{E}_3 = \{(j, q, r)\}$, where $(j,q),(q,r),(r,j) \in \mathcal{E}_2$, such that  $(\mathbf{P}_{i,j},\mathbf{P}_{i,q},\mathbf{P}_{i,r})$ forms a 2-simplex. 
We define $|\mathcal{E}_{2}| = p_1$ and $|\mathcal{E}_{3}| = p_2$. We define two more sets $\mathcal{G}_2$ and $\mathcal{G}_3$, such that: 1) $\mathcal{G}_2$ is the set of geodesic distances for all 1-simplices in $\mathcal{E}_2$, and 2) $\mathcal{G}_3$ is the set of areas for all 2-simplices in $\mathcal{E}_3$. For \y{nrsfm}, $\mathcal{G}_2$ and $\mathcal{G}_3$ are unknowns. We define two functions: 1) $\mathfrak{g}_I(i,j,q)$ computes the distance between $\mathbf{P}_{i,j}$ and $\mathbf{P}_{i,q}$ and 2) $\mathfrak{g}_{E}(i,j,q,r)$ computes the area of the triangle with $\mathbf{P}_{i,j}$, $\mathbf{P}_{i,q}$ and $\mathbf{P}_{i,r}$ as vertices in $\mathbb{R}^3$. We denote by $\mathbf{S}_+^n$ the group of $n \times n$ dimensional \y{psd} matrices. We use the notation $\sum_{i_1,\hdots,i_z = 1}^{x_1, \hdots, x_z}$ to denote the $z$-times summation $\sum_{i_1 = 1}^{x_1} \sum_{i_2 = 1}^{x_2}\hdots\sum_{i_z = 1}^{x_z}$.

\subsection{Problem Statement}
We assume $\mathcal{E}_{2}$ and $\mathcal{E}_{3}$ are given and we solve the \y{nrsfm} problem using isometric and equiareal constraints. We do so by solving three different variants, two involving just isometry and one involving equiareality. Isometric \y{nrsfm} does not possess a unique solution without additional priors, \y{mdh} being the most widely used. This is implemented as a depth-maximisation term added to the cost, denoted by $f_{\mathrm{mdh}}$. For equiareal \y{nrsfm}, we observe the following:
\begin{lemma}\label{lem:non_unique_nrsfm}
There does not exist a unique solution, even up to scale, for zeroth-order \y{nrsfm} using just equiareal and reprojection constraints.
\end{lemma}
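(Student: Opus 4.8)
The plan is to exhibit, for a generic point configuration, a nontrivial family of 3D reconstructions that all satisfy the equiareal constraints on $\mathcal{E}_3$ and the reprojection constraints, and that are not related by a global scale. Since reprojection forces $\mathbf{P}_{i,j} = \delta_{i,j}\,\mathfrak{d}_{i,j}$ for unknown positive depths $\delta_{i,j}$, the unknowns for each image $i$ collapse to the $m$ depths $(\delta_{i,1},\dots,\delta_{i,m})$. I would first count degrees of freedom versus constraints: we have $m$ depth unknowns per image (minus one for an overall per-image scale, if one insists on that normalization), while each 2-simplex in $\mathcal{E}_3$ contributes a single scalar equation of the form $\mathfrak{g}_E(i,j,q,r) = A_{jqr}$ with $A_{jqr}\in\mathcal{G}_3$ the (also unknown) common area; even treating the areas as known, a triangulated surface patch on $m$ vertices has roughly $2m$ triangles, so the na\"ive count does not immediately give non-uniqueness. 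The real point, as the excerpt already hints (``more independent parameters than the number of constraints''), is that the area of a sightline-triangle is invariant under a one-parameter scaling that is \emph{not} global: I would show that rescaling the three depths of a single triangle by factors that preserve the cross-product norm leaves that triangle's area fixed, and then argue these local moves can be propagated.

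Concretely, the cleanest route is an explicit deformation argument. Fix any valid reconstruction $\{\delta_{i,j}\}$ (e.g.\ the ground truth, which exists by assumption of the problem). Consider perturbing the depths in image $i$ by $\delta_{i,j}\mapsto \delta_{i,j}(1+\epsilon\, t_j)$ for a vector $(t_j)$ and small $\epsilon$. The area $\mathfrak{g}_E$ of triangle $(j,q,r)$ is $\tfrac12\|\mathbf{P}_{i,j}-\mathbf{P}_{i,q})\times(\mathbf{P}_{i,r}-\mathbf{P}_{i,q})\|$, a homogeneous degree-$2$ function of the three depths. Differentiating the squared area at $\epsilon=0$ gives, for each triangle, one linear equation in $(t_j)$. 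So the first-order-preserving directions form the kernel of a $p_2\times m$ matrix $M$; I want to show $\operatorname{rank} M < m-1$, i.e.\ the kernel is strictly larger than the span of the all-ones vector (global scale). For a generic triangulated patch this is a linear-algebra fact I would verify by a local/counting argument: each triangle equation only couples three coordinates, and the incidence structure of a 2D triangulation of a disk is too sparse to pin down all depths — one can, for instance, explicitly produce a nonconstant harmonic-like labeling $(t_j)$ on the triangulation that annihilates every triangle equation, or invoke that the equiareal PDE on a surface is a single scalar equation for a 2D displacement field, hence underdetermined (the continuous analogue, already cited via \cite{heidelberg_05, do2016differential}, that equiareality is ambiguous). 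Exponentiating such a first-order direction, or better, directly constructing a finite one-parameter family, then yields genuinely distinct reconstructions with identical reprojections and identical triangle areas, and not differing by scale because $(t_j)$ is nonconstant.

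The main obstacle is making the ``generic configuration'' and ``triangulation is too sparse'' steps rigorous rather than hand-wavy: one must rule out degenerate incidence patterns where the triangle equations happen to have full rank, and one must ensure the constructed deformation keeps all depths positive (so the reconstruction stays in front of the camera) and stays within the regime where areas, not signed areas, are what is constrained. I expect to handle this by exhibiting one concrete small example (a fan or strip of triangles sharing structure) where an explicit nonconstant $(t_j)$ in the kernel is written down, then remarking that any triangulation contains such a sub-pattern, so the ambiguity is never removed. An alternative, even shorter, argument avoids linearization entirely: take the ground-truth surface and apply a genuine area-preserving but non-isometric self-map of the plane (e.g.\ the shear $(u,v)\mapsto(u+cv,\,v)$ composed with the embedding), which changes the 3D shape, hence generically changes all depths non-proportionally, yet preserves every triangle area exactly and — after re-centering the camera appropriately, or by noting the construction can be arranged to fix the sightlines — preserves reprojection; this directly gives a continuum of non-scale-equivalent solutions. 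I would present this map-based construction as the primary proof and keep the rank/counting discussion as the explanation of \emph{why} it must work.
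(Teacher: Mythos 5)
Your primary construction --- precomposing the embedding with an area-preserving shear $\sigma$ of the template --- does not survive the reprojection constraint, and this cannot be waved away by ``re-centering the camera'' or ``arranging the construction to fix the sightlines.'' The map $\varphi\circ\sigma$ has the same image as $\varphi$, so the surface itself does not move; what move are the sampled points on it, and $\Pi(\varphi(\sigma(u_j,v_j)))\neq\mathbf{p}_{i,j}$ in general. The camera and the observed image points are fixed data, so any admissible alternative solution must move each $\mathbf{P}_{i,j}$ \emph{along its own sightline} $\mathfrak{d}_{i,j}$, which a generic equiareal reparameterisation does not do. Moreover, $\sigma$ preserves the area \emph{form} (infinitesimal areas), not the areas $\mathfrak{g}_E(i,j,q,r)$ of the finite 2-simplices that the constraint actually involves, so even the equiareal half of your claim holds only approximately. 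This route should be dropped, not promoted to the primary proof.

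The linearisation route is closer to workable but has two concrete problems. First, the ``triangulation is too sparse'' claim is false by Euler's formula: a triangulated disc on $m$ vertices with $b$ boundary vertices has $2m-b-2$ faces, so your per-image system is $(2m-b-2)\times m$, i.e.\ \emph{over}determined, and the all-ones vector is not in its kernel anyway (uniformly scaling one image's depths multiplies its areas by a common factor and breaks the match with the other images). Second, exhibiting a nonconstant kernel vector on a fan or strip and then ``remarking that any triangulation contains such a sub-pattern'' is not a valid inference: the triangles outside the sub-pattern impose further equations that can annihilate the local flex; local flexibility of a substructure never implies global flexibility. The cheapest rigorous argument is the one the paper signposts in Fig.~\ref{fig:theme_outlay}: count on a minimal instance. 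For a single 2-simplex observed in $n$ views, reprojection reduces the unknowns to $3n$ depths plus one unknown common area $\mathcal{G}_3(j,q,r)$, constrained by only $n$ equations $\mathfrak{g}_E(i,j,q,r)=\mathcal{G}_3(j,q,r)$; since the ground truth satisfies them, the component of the solution variety through it has dimension at least $2n+1>1$ and so cannot be the one-dimensional scale orbit. (Equivalently, two views of any triangulated disc give $2m$ depths against $2m-b-2$ area-matching equations, hence a family of dimension at least $b+2\ge 5$.) No genericity, rank-deficiency, or sparsity argument is needed. Your continuous PDE intuition --- one scalar constraint on the deformation field --- is the right explanation of \emph{why} the ambiguity persists for dense meshes, but it is the explicit count on a concrete instance, not the shear map, that should carry the proof.
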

\begin{proof}
Section~1 of supplementary materials\footnote{Supplementary materials appended at the end of this document}.
\end{proof}
Unfortunately, \y{mdh} is not a valid prior for equiareal \y{nrsfm}. Therefore, equiareal \y{nrsfm} requires other additional constraints and priors. The problem we actually solve is a hybrid-equiareal \y{nrsfm} with a quasi-isometric cost weighted down to allow significant deviation from isometry, maintaining perfect equiareality between corresponding 2-simplices.

To describe our problem statement, we introduce two cost functions: {\em i)} a cost for \y{mdh} $f_{\mathrm{mdh}}(\mathbf{X}_i)$, the maximisation of which increases the depth of $\mathbf{X}_i$ along the positive $Z$ axis in $\mathbb{R}^3$, and {\em ii)} a reprojection cost $f_{\mathrm{prj}}(\mathbf{X}_{i},\mathbf{x}_{i})$ which penalises deviation of $\Pi(\mathbf{P}_{i,j})$ from $\mathbf{p}_{i,j}$ for $i \in [1,n],j \in [1,m]$. Our problem statements are:

1) \y{snr}:
\begin{subequations} \label{eqn:iso_nrsfm_prblm_formul}
\begin{gather}
    \{\mathbf{X}_i\}, \mathcal{G}_{2} = \argmin_{\{\mathbf{X}_i\}, \mathcal{G}_{2}} \sum_{i=1}^n \Big(f_{\mathrm{prj}}(\mathbf{X}_{i},\mathbf{x}_{i}) - f_{\mathrm{mdh}}(\mathbf{X}_i)\Big) \label{eqn:inpf_1}\\
    \text{s.t.} \quad \mathfrak{g}_{I}(i,j,q) = \mathcal{G}_{2}(j,q), \label{eqn:inpf_1} \\
    \forall \qquad i \in [1,n],\quad j,q \in [1,m], \quad (j,q) \in\mathcal{E}_{2}.
\end{gather}
\end{subequations}

2) \y{qnr}:
\begin{subequations} \label{eqn:quasi_nrsfm_prblm_formul}
\begin{gather} \{\mathbf{X}_i\}, \mathcal{G}_{2} = \argmin_{\{\mathbf{X}_i\}, \mathcal{G}_{2}} \sum_{i,j,q = 1}^{n,m,\mathcal{E}_{2}(j)} |\mathfrak{g}_{I}(i,j,q) - \mathcal{G}_{2}(j,q)| \notag \\ + \sum_{i=1}^n \Big(f_{\mathrm{prj}}(\mathbf{X}_{i},\mathbf{x}_{i}) - f_{\mathrm{mdh}}(\mathbf{X}_i)\Big) \label{eqn:qnpf_2}\\ 
     \forall \quad i \in [1,n],\quad j,q \in [1,m], \quad (j,q) \in\mathcal{E}_{2}. \label{eqn:qnpf_3}
\end{gather}
\end{subequations}

3) \y{hnr}:
\begin{subequations} \label{eqn:hybrid_nrsfm_prblm_formul}
\begin{gather}
    \{\mathbf{X}_i\}, \mathcal{G}_{2}, \mathcal{G}_{3} = \argmin_{\{\mathbf{X}_i\}, \mathcal{G}_{2}, \mathcal{G}_{3}}  \Big(f_{\mathrm{prj}}(\mathbf{X}_{i},\mathbf{x}_{i}) - f_{\mathrm{mdh}}(\mathbf{X}_i)\Big) \notag\\
    + \lambda_I\sum_{i,j,q=1}^{n,m,\mathcal{E}_{2}(j)} |\mathfrak{g}_{I}(i,j,q) - \mathcal{G}_{2}(j,q)| \notag \\+ \lambda_E \sum_{i,j,q,r=1}^{n,m,\mathcal{E}_{2}(j),\mathcal{E}_{3}(j,q)} |\mathfrak{g}_{E}(i,j,q,r) - \mathcal{G}_{3}(j,q,r)| \label{eqn:enpf_1}\\
    \forall \qquad i \in [1,n],\quad j,q,r \in [1,m], \quad (j,q,r) \in\mathcal{E}_{3}.
\end{gather}
\end{subequations}
Beyond these formulations, our technical contributions are reparamerisations and convex relaxation to \y{snr} and \y{qnr} in section~\ref{sec:iso_sol} and to \y{hnr} in section~\ref{sec:equi_sol}.

\subsection{Isometric Cost: Relaxation, Convex Reparameterisation and Isometric Methods}\label{sec:iso_sol}
The next two sections give our isometric methods.
The first set of methods uses the \y{dsl} parameterisation, where an unknown point $\mathbf{P}_{i,j}$ is parameterised by its depth along its \y{sl} $\mathfrak{d}_{i,j}$, as in existing work.
The second set of methods uses the \y{pp} parameterisation, where $\mathbf{P}_{i,j}$ may not lie on $\mathfrak{d}_{i,j}$ exactly. Importantly, using \y{pp} allows us to handle correspondence noise must better than \y{dsl}.

\subsubsection{Parameterisation with \y{dsl}}
The \y{dsl} parameterisation defines $\mathbf{P}_{i,j} = \delta_{i,j} \mathfrak{d}_{i,j}$ for depth $\delta_{i,j} \in \mathbb{R}_+$. Stacked together for the $i$-th image, the depth vector is $\mathfrak{D}_i^{\top} = \begin{pmatrix}\delta_{i,1}, \hdots, \delta_{i,m}\end{pmatrix}$. We define the \y{dgm} as $\mathfrak{R}_i = \mathfrak{D}_i \mathfrak{D}_i^{\top}$, which is related to the Euclidean distances by:
\begin{equation}\label{eqn:geod_first_expr}
    \mathfrak{g}_I^{\delta}(i,j,q) = \mathfrak{R}_{i,j,j} + \mathfrak{R}_{i,q,q} - 2\mathfrak{R}_{i,j,q}\langle\mathfrak{d}_{i,j},\mathfrak{d}_{i,q}\rangle,
\end{equation}
where $\mathfrak{g}_I^{\delta}(i,j,q)$ is the implementation of $\mathfrak{g}_I$ for \y{dsl}. Therefore, we express equation~(\ref{eqn:iso_nrsfm_prblm_formul}) parameterised by \y{dgm} and $\mathfrak{g}_I^{\delta}(i,j,q)$ as equation~(\ref{eqn:geod_first_expr}). To handle the global scale ambiguity, we require the geodesics to sum to 1. Moreover, \y{dgm} has to be \y{psd} and of rank 1. 
However, strict rank restrictions being non-convex, we use the common convex relaxation provided by trace minimisation of $\mathfrak{R}_i$. Eventually, we implement $f_{\mathrm{mdh}}(\mathbf{X}_i)$ by the minimisation of the inverse diagonal elements of $\mathfrak{R}_i$, whilst $f_{\mathrm{prj}}$ is already enforced by \y{dsl}. Therefore, our final solution to \y{snr} parameterised by \y{dsl} is:
\begin{subequations}
    \begin{gather}
        \min_{\{\mathfrak{R}_i\},\mathcal{G}_{2}}\sum_{i=1}^n \Tr(\mathfrak{R}_i)  + \sum_{i,j=1}^{n,m} \frac{1}{\mathfrak{R}_{i,j,j}}  \label{eqn:sds2_1} \\ \text{s.t.} \notag \\
        \mathfrak{g}_I^{\delta}(i,j,q) = \mathcal{G}_{2}(j,q), 
        \sum_{j,q=1}^{m,\mathcal{E}_{2}(j)} \mathcal{G}_{2}(j,q) = 1,  \notag\\ \mathfrak{R}_i \in \mathbf{S}_+^m, \quad 
        \forall \quad i \in [i,n], j \in [1,m],(j,q) \in \mathcal{E}_{2},\label{eqn:sds2_2}
    \end{gather}
\end{subequations}
which is a convex formulation solved by standard \y{sdp}. 

We modify equation~(\ref{eqn:sds2_1})-(\ref{eqn:sds2_2}) to solve \y{qnr} by recasting it in Lagrangian form, modifying the strict equality in equation~(\ref{eqn:sds2_2}) to an $\ell^1$-norm minimisation, giving the \y{dsl} solution to \y{qnr}:
\begin{subequations}\label{eqn:qnr_dep_sol}
    \begin{gather}
        \min_{\{\mathfrak{R}_i\},\mathcal{G}_{2}} \quad \sum_{i=1}^n \Tr(\mathfrak{R}_i) + \sum_{i,j=1}^{n,m} \frac{1}{\mathfrak{R}_{i,j,j}}\quad + \notag \\
        \lambda_I \sum_{i,j,q=1}^{n,m,\mathcal{E}_{2}(j)} \Big( |\mathfrak{g}_I^{\delta}(i,j,q) - \mathcal{G}_{2}(j,q)|\Big),\label{eqn:qds_1}\\
        \text{s.t.} \qquad \sum_{j,q=1}^{m,\mathcal{E}_{2}(j)} \mathcal{G}_{2}(j,q) = 1, \quad \mathfrak{R}_i \in \mathbf{S}_+^m,\label{eqn:qds_2}\\
        \forall \quad i \in [i,n], \quad j \in [1,m], \quad (j,q) \in \mathcal{E}_{2}.
    \end{gather}
\end{subequations}

\subsubsection{Parameterisation with \y{pp}}
The \y{pp} parameterisation directly uses $\mathbf{P}_{i,j} \in \mathbb{R}^3$. 
Stacked together for the $i$-th image, the  position vector is $\mathfrak{P}_i^{\top} = \begin{pmatrix}\mathbf{P}_{i,1}^{\top},\hdots,\mathbf{P}_{i,m}^{\top}\end{pmatrix} \in \mathbb{R}^{3m}$.
We define the \y{pgm} as $\mathfrak{S}_i = \mathfrak{P}_i \mathfrak{P}_i^{\top}$, which is related to the Euclidean distances by:
\begin{equation}
    \begin{gathered}
    \mathfrak{g}_I^{P}(i,j,q) = \|\mathbf{P}_{i,j} - \mathbf{P}_{i,q}\|_2^2 = \\ \mathfrak{S}_{i,j_x,j_x} + \mathfrak{S}_{i,j_y,j_y} +  \mathfrak{S}_{i,j_z,j_z} +  \mathfrak{S}_{i,q_x,q_x} +  \mathfrak{S}_{i,q_y,q_y}\\ + \mathfrak{S}_{i,q_z,q_z} -  2 ( \mathfrak{S}_{i,j_x,q_x} + \mathfrak{S}_{i,j_y,q_y} + \mathfrak{S}_{i,j_z,q_z}), 
    \end{gathered}
\end{equation}
where $\begin{pmatrix}j_x, j_y, j_z\end{pmatrix} = 3(j-1) + \begin{pmatrix}1, 2, 3\end{pmatrix}$ and $(q_x, q_y, q_z)$ are also obtained similarly. 
We formulate the reprojection cost $f_{\mathrm{prj}}$ for \y{snr} and \y{qnr} as $f^{\Pi}_{i,j}$, given by:
\begin{equation}
    \begin{gathered}
    f^{\Pi}_{i,j}(\mathfrak{S}_{i}) = \|\mathbf{P}_{i,j} \times \mathfrak{d}_{i,j}\|_2^2 = \mathfrak{S}_{i,j_x,j_x} (y_{i,j}^2 \\
     +  z_{i,j}^2) + \mathfrak{S}_{i,j_y,j_y} (x_{i,j}^2 +  z_{i,j}^2) +
    \mathfrak{S}_{i,j_z,j_z} (x_{i,j}^2 +  y_{i,j}^2) \\- 2( \mathfrak{S}_{i,j_x,j_y} x_{i,j}y_{i,j} +  \mathfrak{S}_{i,j_x,j_z} x_{i,j}z_{i,j} + \mathfrak{S}_{i,j_y,j_z} y_{i,j}z_{i,j} ).
    \end{gathered}
\end{equation}
Adapting \y{mdh} to \y{pp} is non-trivial, as it requires the $Z$-component of $\mathbf{P}_{i,j}$ to be in $\mathbb{R}_+$ whilst the $X$ and $Y$ components should be left free in $\mathbb{R}$. This is not realisable on a \y{pgm} which is being used as the unknown parameter. To resolve this, we augment the \y{pgm} with an additional row and column as:
\begin{equation}\label{eqn:augmented_gram}
    \mathfrak{S}_{i}^{'} = \begin{pmatrix} 1 & \mathfrak{P}_i^{\top} \\ \mathfrak{P}_i & \mathfrak{S}_{i} \end{pmatrix},
\end{equation}
which also happens to be the moment matrix of first order \y{lmi} relaxation \cite{lasserre2000convergent}. The first column or row of $\mathfrak{S}_{i}^{'}$ allows us to compute $\langle\mathbf{P}_{i,j},\mathfrak{d}_{i,j}\rangle$, which is a signed scalar value. Hence, maximising $\langle\mathbf{P}_{i,j},\mathfrak{d}_{i,j}\rangle$ ensures non-negative $Z_{i,j}$. The \y{mdh} cost function $f_{\mathrm{mdh}}$ for \y{pp} is then given as:
\begin{equation}
    f_{i,j}^{\delta}(\mathfrak{S}_{i}^{'}) = \begin{pmatrix} \mathfrak{S}_{i,j_x',1}^{'} , \mathfrak{S}_{i,j_y',1}^{'} , \mathfrak{S}_{i,j_z',1}^{'} \end{pmatrix}^{\top}\mathfrak{d}_{i,j}.
\end{equation}
where $\begin{pmatrix}j_x', j_y', j_z'\end{pmatrix} = \begin{pmatrix}j_x, j_y, j_z\end{pmatrix} + 1$ due to equation~(\ref{eqn:augmented_gram}). With a slight abuse of notation, we consider these change of indices to be transmitted appropriately to $f^{\Pi}_{i,j}(\mathfrak{S}_{i}^{'})$ and $\mathfrak{g}_I^{P}(i,j,q)$.

In order to solve \y{snr} with \y{pp}, we {\em i)} reparameterise equation~(\ref{eqn:iso_nrsfm_prblm_formul}) with $\mathfrak{S}_{i}^{'}$, {\em ii)} introduce \y{mdh} by maximising $\langle\mathbf{P_{i,j}},\mathfrak{d}_{i,j}\rangle$, {\em iii)} anchor the scale of the scene by requiring $\mathcal{G}_{2}$ to sum to 1, and {\em iv)} relax exact rank constraint on $\mathfrak{S}_{i}^{'}$ to minimise $\Tr(\mathfrak{S}_{i}^{'})$, i.e., an exact \y{pp} counterpart of the \y{dsl} solution in equation~(\ref{eqn:sds2_1})-(\ref{eqn:sds2_2}) with an additional reprojection error. Therefore, the \y{pp} solution to \y{snr} is:
\begin{subequations}\label{eqn:snr_xyz_sol}
    \begin{gather}
        \min_{\{\mathfrak{S}_{i}^{'}\}, \mathcal{G}_{2}}  \sum_{i=1}^n \Tr(\mathfrak{S}_{i}^{'}) + \sum_{i,j=1}^{n,m} f^{\Pi}_{i,j}(\mathfrak{S}_{i}^{'}) - \sum_{i,j=1}^{n,m} f_{i,j}^{\delta}(\mathfrak{S}_{i}^{'})  \label{eqn:sxs_1}\\
        \text{s.t.} \quad \mathcal{G}_{2}(j,q) = \mathfrak{g}_I^{P}(i,j,q), \notag \\ \sum_{j,q=1}^{m,\mathcal{E}_{2}(j)} \mathcal{G}_{2}(j,q) = 1,   \mathfrak{S}_{i,1,1}^{'}  = 1, \mathfrak{S}_{i}^{'} \in \mathbf{S}_+^{m+1} \label{eqn:sxs_2}\\
        \forall \quad i \in [i,n], \quad j \in [1,m], \quad (j,q) \in \mathcal{E}_{2}. \label{eqn:sxs_3}
    \end{gather}
\end{subequations}
The \y{pp} solution for \y{qnr} is:
\begin{subequations}\label{eqn:qnr_xyz_sol}
    \begin{gather}
        \min_{\{\mathfrak{S}_{i}^{'}\}, \mathcal{G}_{2}} \bigg(  \sum_{i=1}^n\Tr(\mathfrak{S}_{i}^{'}) + \sum_{i,j,q=1}^{n,m,\mathcal{E}_{2}(j)} \Big(f^{\Pi}_{i,j}(\mathfrak{S}_{i}^{'}) + \notag \\ \lambda_I|\mathfrak{g}_I^{P}(i,j,q)  - \mathcal{G}_{2}(j,q)| -  f_{i,j}^{\delta}(\mathfrak{S}_{i}^{'}) \Big) \bigg)  \label{eqn:qxs_1}\\
        \text{s.t.} \qquad \sum_{j,q=1}^{m,\mathcal{E}_{2}(j)} \mathcal{G}_{2}(j,q) = 1, \quad \mathfrak{S}_{i,1,1}^{'} = 1,  \mathfrak{S}_{i}^{'} \in \mathbf{S}_+^{m+1} \label{eqn:qxs_2}\\
        \forall \quad i \in [i,n],\quad j \in [1,m],\quad (j,q) \in \mathcal{E}_{2}. \label{eqn:qxs_3}
    \end{gather}
\end{subequations}
Again, equations~(\ref{eqn:snr_xyz_sol}) and (\ref{eqn:qnr_xyz_sol}) represent convex problems solvable by standard \y{sdp}.

{\flushleft \textbf{Correspondence completion.}} 
In addition to dealing with noise, \y{pp} allows one to implement correspondence completion.
In the vast majority of cases, the input correspondences $\{\mathbf{x}_i\}$ have missing points. We write the visibility indicator as $\mathcal{V}$ such that $\mathcal{V}_{i,j}=1$  if $\mathbf{p}_{i,j}$ is visible and $0$ otherwise. 
Existing isometric methods handle incomplete data, but do not hallucinate the missing points. 
To all such missing points in $\mathcal{N}_i^{'}$, we assign $s$ \y{pn} which are its $s$ closest neighbours in $\mathbf{x}_1$. We guide the depth maximisation of these invisible points with the support of \y{sl} from its \y{pn}, leaving the rest of the minimisation as is for both \y{snr} and \y{qnr}. The \y{sdp} solution computes the position in $\mathbb{R}^3$ for these missing vertices, as if `hallucinating' correspondences that are not part of the inputs. We do this by substituting the scalar product $f_{i,j}^{\delta}(\mathfrak{S}_{i}^{'})$ in equations~(\ref{eqn:sxs_1}) and (\ref{eqn:qxs_1}) by a function $g^{\delta}(\mathfrak{S}_{i}^{'})$, which when evaluated for the $(i,j)$-th point is written as:
\begin{equation}\label{eq_corr_compl}
    g^{\delta}_{i,j}(\mathfrak{S}_{i}^{'}) = \begin{cases*}
                     \begin{pmatrix} \mathfrak{S}_{i,j_x',1}^{'}, \mathfrak{S}_{i,j_y',1}^{'}, \mathfrak{S}_{i,j_z',1}^{'} \end{pmatrix}^{\top}\mathfrak{d}_{i,j}  \\
                     \text{ } \qquad \qquad \qquad \qquad \qquad \qquad  \quad  \text{if~} \mathcal{V}_{i,j} = 1, \\
                     \sum_{l=1}^{\mathrm{PN}(j)}\begin{pmatrix} \mathfrak{S}_{i,j_x',1}^{'}, \mathfrak{S}_{i,j_y',1}^{'}, \mathfrak{S}_{i,j_z',1}^{'} \end{pmatrix}^{\top}\mathfrak{d}_{i,l} \\ \text{ } \qquad \qquad \qquad \text{ }\qquad \qquad \qquad \quad \text{otherwise}. 
                 \end{cases*}
\end{equation}

\subsection{Equiareal Cost: Convex Relaxations and Quasi-equiareality}\label{sec:equi_sol}
We describe our method to solve \y{hnr} by first computing $\mathfrak{g}_E^{\delta}$ and $\mathfrak{g}_E^{P}$ followed by convex relaxations to linearise these two functions.

{\flushleft \textbf{Area of 2-simplices.}} We observe that the square area enclosed by vertices $\mathbf{P}_{i,j}, \mathbf{P}_{i,q}, \mathbf{P}_{i,r} \in \mathbf{X}_i$ can be expressed as a quartic with both \y{dsl} and \y{pp} parameterisations. For $(\mathbf{P}_{i,j}, \mathbf{P}_{i,q}, \mathbf{P}_{i,r})$, the square area using \y{dsl} parameterisation is expressed by a function $\mathfrak{h}_E^{\delta}(i,j,q,r)$ which is a trivariate quartic in $\delta_{i,j}$, $\delta_{i,q}$ and $\delta_{i,r}$. Similarly expanding the square area for \y{pp} results in a function $\mathfrak{h}_E^{P}(i,j,q,r)$ which, when expanded element-wise, also results in a quartic in the 9 coordinates of $\mathbf{P}_{i,j}$, $\mathbf{P}_{i,q}$ and $\mathbf{P}_{i,r}$. The detailed expressions are given in section~(2) of supplementary materials.



{\flushleft \textbf{Sparsity-aware convex relaxations for $\mathfrak{h}_E^{\delta}$ and $\mathfrak{h}_E^{P}$.}} There exists a well-known technique for convex relaxations of multivariate polynomials, namely, \y{llr} \cite{lasserre2000convergent}. However, standard \y{llr} would require full moment matrices for second order relaxations in order to arrive at a linear expression for multivariate quartics, which are highly computationally expensive. However, we show the following: 
\begin{prop}
For the $i$-th image, $\mathfrak{h}_E^{\delta}$ and $\mathfrak{h}_E^{P}$ admits lower-degree relaxations using sub-matrices of second-order moment matrices of \y{llr}, given by $\mathfrak{T}_i \in \mathbf{S}_{+}^{\tilde{p}_{2} + m}$ and $\mathfrak{U}_i \in \mathbf{S}_{+}^{6\tilde{p}_{2} + 3m + 1}$ respectively, where $\tilde{p}_{2}$ are the number of unique edges of the 2-simplices in $\mathcal{E}_{3}$
\end{prop}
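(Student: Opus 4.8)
The plan is to show that, for every image $i$, the square-area quartics $\mathfrak{h}_E^{\delta}$ and $\mathfrak{h}_E^{P}$ collapse — through the Gram-determinant (Cayley--Menger) identity together with the vanishing of the self-cross-product $\mathfrak{d}_{i,j}\times\mathfrak{d}_{i,j}=0$ — into quadratic forms in a \emph{small, edge-indexed} family of degree-$2$ monomials; the matrix carrying these monomials is then a principal submatrix of the order-$2$ moment matrix of \y{llr}~\cite{lasserre2000convergent}, and relaxing its rank-$1$ constraint to positive semidefiniteness yields the claimed relaxation. I establish the \y{dsl} case first, then the \y{pp} case. Throughout, let $\tilde{\mathcal{E}}$ be the set of the $\tilde{p}_{2}$ distinct edges occurring among the $2$-simplices of $\mathcal{E}_{3}$, and recall that $(j,q,r)\in\mathcal{E}_{3}$ forces $(j,q),(q,r),(r,j)\in\tilde{\mathcal{E}}$.

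\textbf{\y{dsl} case.} Since twice the triangle area equals $\|(\mathbf{P}_{i,q}-\mathbf{P}_{i,j})\times(\mathbf{P}_{i,r}-\mathbf{P}_{i,j})\|$, the function $\mathfrak{h}_E^{\delta}(i,j,q,r)$ is a fixed multiple of $\|(\mathbf{P}_{i,q}-\mathbf{P}_{i,j})\times(\mathbf{P}_{i,r}-\mathbf{P}_{i,j})\|^2$. Substituting $\mathbf{P}_{i,j}=\delta_{i,j}\mathfrak{d}_{i,j}$ and using $\mathfrak{d}_{i,j}\times\mathfrak{d}_{i,j}=0$, the pure $\delta_{i,j}^{2}$ contribution cancels and the cross product equals $\delta_{i,j}\delta_{i,q}\,\mathbf{n}_{jq}+\delta_{i,q}\delta_{i,r}\,\mathbf{n}_{qr}+\delta_{i,r}\delta_{i,j}\,\mathbf{n}_{rj}$, where $\mathbf{n}_{ab}=\mathfrak{d}_{i,a}\times\mathfrak{d}_{i,b}$ are known. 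Hence $\mathfrak{h}_E^{\delta}(i,j,q,r)$ is the quadratic form $\tilde{\mathbf{e}}^{\top}G_i^{jqr}\tilde{\mathbf{e}}$ in $\tilde{\mathbf{e}}=(\delta_{i,j}\delta_{i,q},\delta_{i,q}\delta_{i,r},\delta_{i,r}\delta_{i,j})^{\top}$, with $G_i^{jqr}$ the fixed $3\times3$ Gram matrix of $\{\mathbf{n}_{jq},\mathbf{n}_{qr},\mathbf{n}_{rj}\}$, i.e.\ a linear functional of the entries of $\tilde{\mathbf{e}}\tilde{\mathbf{e}}^{\top}$. Stacking $\delta_{i,1},\dots,\delta_{i,m}$ followed by $\{\delta_{i,j}\delta_{i,q}\}_{(j,q)\in\tilde{\mathcal{E}}}$ into a vector $\mathbf{b}_i$ of length $m+\tilde{p}_{2}$ — a sub-vector of the full degree-$\le 2$ monomial vector — every product $\tilde{\mathbf{e}}_a\tilde{\mathbf{e}}_b$ is an entry of $\mathfrak{T}_i:=\mathbf{b}_i\mathbf{b}_i^{\top}$, whose leading $m\times m$ block is exactly $\mathfrak{R}_i$, so $\mathfrak{g}_I^{\delta}$ and $\Tr(\mathfrak{R}_i)$ stay linear. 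Dropping the rank-$1$ condition leaves $\mathfrak{T}_i\in\mathbf{S}_+^{\tilde{p}_{2}+m}$, a principal submatrix of the order-$2$ moment matrix.

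\textbf{\y{pp} case.} Write $\mathbf{u}_{jq}=\mathbf{P}_{i,j}-\mathbf{P}_{i,q}$. By the Gram-determinant identity $\mathfrak{h}_E^{P}(i,j,q,r)$ equals (up to a fixed factor) $\|\mathbf{u}_{jq}\|^2\|\mathbf{u}_{jr}\|^2-\langle\mathbf{u}_{jq},\mathbf{u}_{jr}\rangle^2$, and since $\langle\mathbf{u}_{jq},\mathbf{u}_{jr}\rangle=\tfrac12(\mathfrak{g}_I^{P}(i,j,q)+\mathfrak{g}_I^{P}(i,j,r)-\mathfrak{g}_I^{P}(i,q,r))$, this is the universal quadratic (Cayley--Menger) in the three squared edge lengths $\mathfrak{g}_I^{P}$ of the $2$-simplex, with \emph{constant} coefficients. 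Each $\mathfrak{g}_I^{P}(i,j,q)=\|\mathbf{u}_{jq}\|^2$ is linear in the three diagonal entries of the symmetric $3\times3$ block $\mathbf{u}_{jq}\mathbf{u}_{jq}^{\top}$, so $\mathfrak{h}_E^{P}$ becomes a linear functional of degree-$4$ products of entries of these per-edge blocks. Collecting, I stack the constant $1$, the $3m$ coordinates $\{(\mathbf{P}_{i,j})_c\}$, and, for each $(j,q)\in\tilde{\mathcal{E}}$, the $6$ second-order monomials $\{(\mathbf{u}_{jq})_c(\mathbf{u}_{jq})_{c'}\}_{c\le c'}$ of the edge displacement (the $3$ off-diagonal ones are retained so the collection stays a genuine order-$2$ monomial set); this vector $\mathbf{b}_i$ has length $1+3m+6\tilde{p}_{2}$, every required product of entries is an entry of $\mathfrak{U}_i:=\mathbf{b}_i\mathbf{b}_i^{\top}$, and its leading $(1+3m)\times(1+3m)$ block is exactly $\mathfrak{S}_i^{'}$, so $f^{\Pi}_{i,j}$, $f_{i,j}^{\delta}$ and the trace remain linear. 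Since the $\mathbf{u}_{jq}$ are linear forms in the coordinates and \y{llr} is invariant under this affine reparameterisation, $\mathbf{b}_i$ is a sub-vector of the order-$2$ monomial vector; relaxing rank-$1$ leaves $\mathfrak{U}_i\in\mathbf{S}_+^{6\tilde{p}_{2}+3m+1}$, a principal submatrix of the order-$2$ moment matrix.

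\textbf{Main obstacle.} I expect the real work to be the algebraic bookkeeping behind these two collapses: one must verify that, after the cancellation of the $\delta_{i,j}^{2}$-type (equivalently, fourth-power) terms, \emph{every} surviving monomial of $\mathfrak{h}_E^{\delta}$ — and likewise of $\mathfrak{h}_E^{P}$ — factors as a product of two monomials already carried by $\mathbf{b}_i$, so that no larger blocks are forced (no degree-$3$ moments, no unmatched cross-products beyond those reconstructible from the edge blocks, no extra vertex squares), and that the resulting linear functionals depend only on the known sightlines $\mathfrak{d}_{i,\cdot}$ (\y{dsl}) or on the universal Cayley--Menger constants (\y{pp}). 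Concretely this amounts to checking that the square area is \emph{exactly} the Cayley--Menger quadratic in the three squared edge lengths and that it reparameterises into the stated blocks without remainder. Once that is in hand, positive semidefiniteness of $\mathfrak{T}_i$ and $\mathfrak{U}_i$ is immediate (principal submatrices of a rank-$1$ \y{psd} matrix, then relaxed), the dimensions $\tilde{p}_{2}+m$ and $6\tilde{p}_{2}+3m+1$ follow by counting the stacked monomials, and the identification with submatrices of second-order moment matrices of \y{llr} is the order-$2$ instance of~\cite{lasserre2000convergent}, completing the proof.
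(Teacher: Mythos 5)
Your proposal is correct and, for the \y{dsl} half, is essentially the paper's own construction: your vector $\mathbf{b}_i=(\delta_{i,1},\dots,\delta_{i,m},\dots,\delta_{i,j}\delta_{i,q},\dots)$ is exactly the paper's $\mathfrak{E}_i$, and your explicit cross-product cancellation $\mathfrak{d}_{i,j}\times\mathfrak{d}_{i,j}=0$ supplies the justification the paper defers to its supplementary material for why only the $\tilde{p}_2$ edge-products (and no pure squares $\delta_{i,j}^2$ or degree-3 moments) are needed. For the \y{pp} half you take a genuinely different route: the paper expands the cross product directly and stacks, per unique edge, the six cross-coordinate monomials $\vartheta(i,j,q)=(X_{i,j}Y_{i,q},X_{i,j}Z_{i,q},\dots)$, which are literal degree-2 monomials in the original coordinates, so $\mathfrak{U}_i$ is a genuine principal submatrix of the second-order Lasserre moment matrix; you instead invoke Heron/Cayley--Menger to write the square area as a universal quadratic in the three squared edge lengths and stack the six second-order monomials of each edge \emph{displacement} $\mathbf{u}_{jq}$. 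Both give the stated dimension $1+3m+6\tilde{p}_2$ and the same leading block $\mathfrak{S}_i^{'}$, and your version has the advantage that the quartic's coefficients become universal constants rather than data-dependent. The one point to tighten is your claim that $\mathbf{b}_i$ is "a sub-vector of the order-2 monomial vector": since $(\mathbf{u}_{jq})_c(\mathbf{u}_{jq})_{c'}$ expands into a \emph{linear combination} of four monomials $(\mathbf{P}_j)_c(\mathbf{P}_j)_{c'}-(\mathbf{P}_j)_c(\mathbf{P}_q)_{c'}-(\mathbf{P}_q)_c(\mathbf{P}_j)_{c'}+(\mathbf{P}_q)_c(\mathbf{P}_q)_{c'}$, your $\mathfrak{U}_i$ is a congruence image $LML^{\top}$ of the full moment matrix $M$ rather than a principal submatrix of it. This preserves positive semidefiniteness and linearity of all the costs, so the relaxation is equally valid, but it does not literally match the proposition's "sub-matrices of second-order moment matrices" wording, and the localised constraints $\mathfrak{U}_{i,j,q}^2\leq\mathfrak{U}_{i,b_t,b_t}$ would have to be restated with linear combinations of first-order-block entries on the left-hand side rather than single entries.
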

\begin{proof}
We define two vectors $\mathfrak{E}_i$ and $\mathfrak{Q}_i$ as:
\begin{equation}
    \begin{gathered}
        \mathfrak{E}_i^{\top} = \Big(\delta_{i,1},\hdots,\delta_{i,m},\hdots,\delta_{i,j}\delta_{i,q},\delta_{i,q}\delta_{i,r},\delta_{i,j}\delta_{i,r},\hdots\Big), \\ 
        \mathfrak{Q}_i^{\top} = \Big(1, \mathbf{P}_{i,1}^{\top},\hdots,\mathbf{P}_{i,m}^{\top}, \hdots, \vartheta(i,j,q)^{\top}, \vartheta(i,q,r)^{\top}, \\ \vartheta(i,j,r)^{\top},\hdots\Big), \quad \forall (j,q,r) \in \mathcal{E}_3,
    \end{gathered}
\end{equation}
such that:
\begin{equation}
\begin{gathered}
\vartheta(i,j,q)^{\top} = \Big(X_{i,j}Y_{i,q}, X_{i,j}Z_{i,q}, Y_{i,j}X_{i,q}, Y_{i,j}Z_{i,q}, \\ Z_{i,j}X_{i,q}, Z_{i,j}Y_{i,q}\Big).    
\end{gathered}
\end{equation}
This allows us to construct two augmented Gram matrices, $\mathfrak{T}_i = \mathfrak{E}_i \mathfrak{E}_i^{\top}$ and $\mathfrak{U}_i = \mathfrak{Q}_i \mathfrak{Q}_i^{\top}$. We show that $\mathfrak{T}_i$ and $\mathfrak{U}_i$ are sufficient to express the quartics in $\mathfrak{h}_E^{\delta}$ and $\mathfrak{h}_E^{P}$ respectively as a linear combination of their elements. Let $\mathfrak{g}_E^{\delta}(i,j,q,r)$ and $\mathfrak{g}_E^{P}(i,j,q,r)$ be functions that maps $(\mathbf{P}_{i,j},\mathbf{P}_{i,q},\mathbf{P}_{i,r})$ to their area using linear combinations of elements from $\mathfrak{T}_i$ and $\mathfrak{U}_i$ respectively. We define two maps: \textit{a)} $\Omega^k: \mathbb{Z}^3 \mapsto \mathbb{Z}^3$ such that $\Omega^k(j,q,r) = (a_1, a_2, a_3)$ and $\mathfrak{T}_{i,a_1,a_1} = \delta_{i,j}^2\delta_{i,q}^2$, $\mathfrak{T}_{i,a_2,a_2} = \delta_{i,q}^2\delta_{i,r}^2$, and $\mathfrak{T}_{i,a_3,a_3} = \delta_{i,j}^2\delta_{i,r}^2$, and \textit{b)} $\rho^k:\mathbb{Z}^3 \mapsto \mathbb{Z}^{18}$ such that $\rho^k(j,q,r) = (b_1, \hdots, b_{18})$ and the following set of identities apply: $\{ \mathfrak{U}_{i,b_1,b_1} = X_{i,j}^2 Y_{i,q}^2, \hdots, \mathfrak{U}_{i,b_{18},b_{18}} = Z_{i,r}^2 Y_{i,q}^2\}$. The details are given in section~(3) of supplementary materials for $\mathfrak{g}_E^{\delta}$ and $\mathfrak{g}_E^{P}$. \end{proof}

{\flushleft \textbf{Localised constraints on $\mathfrak{T}_i$ and $\mathfrak{U}_i$.}} Some of the diagonal elements of $\mathfrak{T}_i$ and $\mathfrak{U}_i$ are equal to some other squared off-diagonal elements of themselves and this mapping is obtainable from $\Omega^k$ and $\rho^k$. However, the equality between an element of a matrix with another squared element of itself is non-affine, hence non-convex. So we relax it to an inequality constraint in the convex cone of the squared term, i.e., for $(j,q) \leftrightarrow  a_1$, we change the constraint $\mathfrak{T}_{i,j,q}^2 = \mathfrak{T}_{i,a_1,a_1}$ to $\mathfrak{T}_{i,j,q}^2 \leq \mathfrak{T}_{i,a_1,a_1}$. This is combined with minimisation of the traces of $\mathfrak{T}_i$ and $\mathfrak{U}_i$ while maximising depth. With an appropriate $(\lambda_I,\lambda_E)$ (from the \y{hnr} problem statement), this relaxation finds an accurate solution that slightly violates isometry of 1-simplices while maintaining equiareality of 2-simplices. 

\subsubsection{Parameterisation with \y{dsl}}
We follow four steps to solve \y{hnr} with \y{dsl}: {\em i)} we do all the convex relaxations, reparameterisation and inclusion of \y{mdh} as in equation~(\ref{eqn:qnr_dep_sol}), {\em ii)} we relax the exact rank-1 constraint on $\mathfrak{T}_i$ to trace minimisation, {\em iii)} we minimise the $\ell^1$-norm of the difference between the areas of all 2-simplices in $\mathcal{E}_{3}$ and the estimated  areas in $\mathcal{G}_{3}(j,q,r)$, and {\em iv)} we impose inequality constraints between elements of $\mathfrak{T}_i$.
With these modifications, \y{hnr} is  expressed as:
\begin{subequations}\label{eqn:hnr_dep_sol}
    \begin{gather}
        \min_{\{\mathfrak{T}_i\},\mathcal{G}_{2},\mathcal{G}_{3}} \sum_{i=1}^n \Tr(\mathfrak{T}_i) +
         \sum_{i,j,q=1}^{n,m,\mathcal{E}_{2}(j)} \lambda_I \Big( |\mathfrak{g}_I^{\delta}(i, j, q) - \mathcal{G}_{2}(j,q)|\Big)  \notag\\ + \sum_{i,j=1}^{n,m} \Big( \frac{1}{\mathfrak{T}_{i,j,j}} + \sum_{q,r=1}^{\mathcal{E}_{2}(j),\mathcal{E}_{3}(j,q)} \lambda_E|\mathfrak{g}_E^{\delta}(i,j,q,r) - \mathcal{G}_{3}(j,q,r)|\Big),\label{eqn:hds_1}\\
        \text{s.t.~~} 
         \mathfrak{T}_{i,j,q}^2  \leq \mathfrak{T}_{i,a_1,a_1} , \mathfrak{T}_{i,q,r}  \leq  \mathfrak{T}_{i,a_2,a_2},  \mathfrak{T}_{i,j,r} \leq \mathfrak{T}_{i, a_3, a_3},  \\\sum_{j,q=1}^{m,\mathcal{E}_{2}(j)} \mathcal{G}_{2}(j,q) = 1, \mathfrak{T}_i \in \mathbf{S}^{ \tilde{p}_2 + m }_+ \quad \forall i \in [i,n], j \in [1,m], \label{eqn:hds_2}\\
        (j,q) \in \mathcal{E}_{2} , (j,q,r) \in   \mathcal{E}_{3},(a_1, a_2, a_3) \in \Omega^k(j, q, r), \lambda_I > 0.
    \end{gather}
\end{subequations}

\subsubsection{Parameterisation with \y{pp}}
We follow four steps to solve \y{hnr} with \y{pp}: {\em i)} we do all the convex relaxations, reparameterisation and inclusion of \y{mdh} as in equation~(\ref{eqn:qnr_xyz_sol}), {\em ii)} we relax the exact rank-1 constraint on $\mathfrak{U}_i$ to trace minimisation, {\em iii)} we minimise area difference, as in equation~(\ref{eqn:hds_1}), and {\em iv)} we impose inequality constraints between diagonal and off-diagonal elements of $\mathfrak{U}_i$. We obtain the following solution:
\begin{subequations}\label{eqn:hnr_xyz_sol}
    \begin{gather}
        \min_{\{\mathfrak{U}_{i}\},\mathcal{G}_{2},\mathcal{G}_{3}}   \sum_{i=1}^n\bigg( \Big(f^{\Pi}_{i,j}(\mathfrak{S}_{i}^{'}) + \Tr(\mathfrak{U}_i)\Big) + \sum_{j,q=1}^{m,\mathcal{E}_{2}(j)} \Big( \lambda_I |\mathfrak{g}_I^{P}(i,j,q)  \notag \\  - \mathcal{G}_{2}(j,q)| -  f_{i,j}^{\delta}(\mathfrak{S}_{i}^{'})   +
        \sum_{r=1}^{\mathcal{E}_{3}(j,q)}  \lambda_E|\mathfrak{g}_E^{P}(i,j,q,r) - \mathcal{G}_{3}(j,q,r)|\Big) \bigg)
        \label{eqn:hxs_1}\\
        \text{s.t.~~} \mathfrak{U}_{i,j,q}^2 \leq \mathfrak{U}_{i,b_t,b_t}, \sum_{j,q=1}^{m,\mathcal{E}_{2}(j)} \mathcal{G}_{2}(j,q) = 1, \mathfrak{U}_{i,1,1} = 1,\notag \\  \mathfrak{U}_i \in \mathbf{S}^{ 6\tilde{p}_2 + 3m + 1}_+ \quad \forall \quad b_t \in \rho^k(j,q,r), t \in [1,18], i \in [i,n], \notag \\
        j \in [1,m], (j,q) \in \mathcal{E}_{2}, (j,q, r) \in \mathcal{E}_{3}, \lambda_I > 0, \label{eqn:hxs_3}
    \end{gather}
\end{subequations}
where $\mathfrak{S}_{i}^{'}$ is the leading $(m+1)\times(m+1)$ submatrix of $\mathfrak{U}_{i}$.

{\flushleft \textbf{Acceleration with sub-relaxations.}} Clearly, the matrix $\mathfrak{U}_{i}$ grows with the number of correspondences $m$ with $\mathcal{O}(m^3)$. Although much faster than standard \y{lmi} relaxations, $\mathfrak{U}_{i}$ slows the \y{sdp} solution considerably for \y{pp}. Hence, we base our real-world solution for \y{hnr} on equation~(\ref{eqn:hnr_dep_sol}) while in section~(4) of the supplementary materials, we suggest multiple acceleration strategies for speeding up the \y{sdp} solution of equation~(\ref{eqn:hnr_xyz_sol}) using edge based relaxation techniques \cite{wang2008further} and auxiliary variables, with small sacrifice in reconstruction accuracy.

\section{Experimental Results}\label{sec:exp}
\glsunset{s1} \glsunset{s2} \glsunset{q1} \glsunset{q2} \glsunset{h1} \glsunset{h2}
We describe our experimental results. For our synthetic data experiments, we use a randomised \y{isg} based on~\cite{perriollat2013computational}. We also propose a technique for extending \y{isg} to \y{qsg}. \y{isg} is associated with a scalar $x_{\sigma}$, the multiplier for additive zero mean Gaussian noise added to the point correspondences. \y{qsg} is associated with the scalar $\chi_{\mathrm{E}}$ which controls the mean of the uniform random distribution signifying the extent of `equiareality' in the data. The details of \y{isg}, \y{qsg} and $\chi_{\mathrm{e}}$ are given in section~5 of the supplementary materials.  For our real data experiments, we use the following \y{nrsfm} benchmarks: {\em i)} \textit{\y{wc}} and \textit{Hulk} datasets from~\cite{chhatkuli2014non}, and {\em ii)} \textit{\y{kp}} dataset from~\cite{varol2012constrained}. We also use three dataset containing stretchable objects, obtained from the authors of~\cite{casillas2019equiareal}: {\em i)} a \textit{\y{st}} dataset, {\em ii)} a \textit{\y{slg}}, and {\em iii)} a \textit{\y{sb}}. To validate the capability of our proposed \y{nrsfm} techniques to also handle rigid objects, we use the \textit{\y{mh}} from the VGG dataset\footnote{https://www.robots.ox.ac.uk/$\sim$vgg/data/mview}. We compare our methods against \cite{chhatkuli2014non}, \cite{chhatkuli2017inextensible}, \cite{parashar2017isometric}-G and \cite{parashar2017isometric}-IP (for the general and infinitesimal planarity models resp.), \cite{dai2014simple}, \cite{ji2017maximizing}, \cite{hamsici2012learning} and \cite{gotardo2011kernel}. We name our solutions from equation~(\ref{eqn:sds2_1})-(\ref{eqn:sds2_2}), (\ref{eqn:snr_xyz_sol}), (\ref{eqn:qnr_dep_sol}), (\ref{eqn:qnr_xyz_sol}), (\ref{eqn:hnr_dep_sol}) and (\ref{eqn:hnr_xyz_sol}) as \y{s1}, \y{s2}, \y{q1}, \y{q2}, \y{h1}, and \y{h2} (resp.). We use \y{rms} and \y{med} to quantify the error between reconstruction and \y{gt}. As with all monocular techniques, our reconstruction is up to scale, which is recovered a posteriori by comparison with \y{gt}. 
All our solutions are implemented using CVX \cite{gb08} with MOSEK solver \cite{mosek} in Matlab.

{\flushleft \textbf{Strict and quasi-isometric \y{nrsfm}.}}
By sharing the same graph structure between $\mathcal{E}_2$ and \y{nng_} for  \cite{chhatkuli2017inextensible} and \cite{ji2017maximizing}, such that $|\mathcal{E}_2| = k\mathrm{N}$, the number of nearest neighbours in \y{nng_}, we demonstrate the accuracy of our methods w.r.t increasing $|\mathcal{E}_2|$ in figure~\ref{fig_q1}a. For this, we chose repeated Monte Carlo cross validation by randomly sampling 30\% correspondences from \y{st} while varying $|\mathcal{E}_2|$. At low $|\mathcal{E}_2|$ ($<5$), our proposed methods \y{s1}, \y{s2}, \y{q1} and \y{q2} are significantly better than \cite{chhatkuli2017inextensible} and \cite{ji2017maximizing}, but the gap in accuracy diminishes with increasing neighbours. Figure~\ref{fig_q1}b shows \y{rms} with increasing noise on correspondences. This is done by repeatedly drawing 25\% random correspondences from \y{wc} and by sharing the same connectivity of $|\mathcal{E}_2| = k\mathrm{N} = 7$. All the methods show comparable accuracy within a tolerance, except \cite{ji2017maximizing}, which is slightly worse. Given simulated data from \y{isg}, we demonstrate the stability of \y{s1}, \y{s2}, \y{q1} and \y{q2} against the closest zeroth-order \y{nrsfm} methods \cite{chhatkuli2017inextensible} and \cite{ji2017maximizing} with increasing number of point correspondences in figure~\ref{fig_q1}c. This is done by sharing the same connectivity of $|\mathcal{E}_2| = k\mathrm{N} = 5$. All the methods perform comparably but \cite{ji2017maximizing} is slightly worse. Our \y{s2} and \y{q2} shows slight dip in accuracy with higher correspondences. This is unsurprising, since increasing correspondences without adjusting $\mathcal{E}_2$ increases the sparsity of $\{\mathfrak{R}_i\}$ and $\{\mathfrak{S}_i^{'}\}$ and can be easily compensated back by expanding $\mathcal{E}_2$, as evident from figure~\ref{fig_q1}a. To demonstrate the efficacy of our \textit{correspondence completion} method based on equation~(\ref{eq_corr_compl}), we repeatedly draw 60\% correspondences randomly from \y{kp} and log the accuracy for observed and completed missing points by artificially `hiding' up to $\sim$20\% correspondence points. This is done with a \y{pn} of $s = 3$. The results are shown in figure~\ref{fig_q1}d. The accuracy of correspondence completion remains stable within this range. 

We compare our proposed methods with many state-of-the-art methods on hulk, \y{kp}, \y{wc} and \y{st} datasets in table~\ref{tab_iso_results}, all values in $mm$. \y{q2} is the most accurate method for hulk and \y{kp}, while \y{q2} is 1.59 $mm$ behind \cite{chhatkuli2017inextensible} in \y{rms} on the \y{wc} dataset and \y{lrsb} methods \cite{hamsici2012learning} and \cite{dai2014simple} beats \y{q1} by 0.84 and 1.17 $mm$ (resp.) of \y{rms} in \y{st}, which are very small differences in accuracy. Some qualitative plots are shown in figure~(\ref{fig_iso_quals}). Notably, in the highly stretched dataset of \y{st}, all our proposed methods on an average perform 42.8\% and 64.9\% better than comparable zeroth-order methods \cite{ji2017maximizing} and \cite{chhatkuli2017inextensible}, our methods being almost comparable to \y{lrsb} in accuracy, while also retaining the same accuracy in the perfectly isometric datasets of \y{wc}, \y{kp} and hulk, unlike the \y{lrsb} methods. We also demonstrate accurate reconstruction of rigid objects with the \y{mh} dataset. We create semi-synthetic data by randomly roto-translating the \y{gt} and projecting with arbitrary intrinsics. The results from repeated experiments are given in figure~(\ref{fig:mh_randomized}), where the strictly isometric methods show accuracies as good as \y{lrsb} methods. 

{\flushleft \textbf{Quasi-equiareal \y{nrsfm}.}} We define two error terms: $g\mathrm{E} = \frac{1}{n|\mathcal{E}_2|} \sum_{i,j,q=1}^{n,m,\mathcal{E}_2(j)} | \mathcal{G}_2(j,q) - \mathfrak{g}_I(i,j,q)|$ denoting deviation from isometry and $a\mathrm{E} = \frac{1}{n|\mathcal{E}_3|} \sum_{i,j,q,r=1}^{n,m,\mathcal{E}_2(j),\mathcal{E}_3(j,q)} | \mathcal{G}_3(j, q, r) - \mathfrak{g}_E(i,j,q,r)|$ denoting deviation from equiareality, where $\mathfrak{g}_I$ and $\mathfrak{g}_E$ are computed geometrically in $\mathbb{R}^3$ from the unscaled result of \y{h1}. In figure~\ref{fig_q1}e, we show an important result: upon repeated experiments on data from \y{qsg} with suitable $\lambda_I, \lambda_E$, we manage to control $g\mathrm{E}$ independently of $a\mathrm{E}$ in \y{h1}. Increasing $\lambda_I$ reduces both $g\mathrm{E}$ and $a\mathrm{E}$, as physics mandates. But increasing $\lambda_E$ forces \y{h1} to find a minimal $a\mathrm{E}$ solution without affecting $g\mathrm{E}$. To demonstrate how this affects \y{rms}, we demonstrate synthetic results in figure~\ref{fig_q1}f. Using \y{qsg} with increasing equiareality $\chi_{\mathrm{E}}$, we test the \y{rms} for \y{q1}, \y{h1}, \cite{ji2017maximizing} and \cite{chhatkuli2017inextensible} repeatedly ($\sim$200 times). Between \y{q1} and \y{h1}, $\lambda_I$ is identical. All four methods share same $\mathcal{E}_2$ or \y{nng_}, $\mathcal{E}_3$ is derived from the edges of $\mathcal{E}_2$. At low $\chi_{\mathrm{E}}$, \y{q1} is better than \y{h1} and similar to \cite{chhatkuli2017inextensible}. With increasing $\chi_{\mathrm{E}}$, \y{q1} becomes increasingly better than \cite{chhatkuli2017inextensible} while \y{h1} crosses over at $\chi_{\mathrm{E}} \approx 0.3$ and keeps performing better than \y{q1}. Clearly, for equiareal deformations, \y{q1} performs better than \cite{ji2017maximizing} and \cite{chhatkuli2017inextensible} while \y{h1} improves upon this accuracy and remains the most accurate method for all equiareal deformations. 
\setlength{\tabcolsep}{3.0pt}
\renewcommand{\arraystretch}{1.2}
\begin{table*}[]
\caption{Comparison of \y{s1}, \y{s2}, \y{q1} and \y{q2} with baseline methods}
\scriptsize
\begin{adjustwidth}{-0.3cm}{-0.12cm}
\begin{tabular}{ccccccccccccccccccccccccc} \toprule
& \multicolumn{2}{c}{\textbf{\y{s1}}}& \multicolumn{2}{c}{\textbf{\y{s2}}} &
\multicolumn{2}{c}{\textbf{\y{q1}}} & \multicolumn{2}{c}{\textbf{\y{q2}}} &
\multicolumn{2}{c}{\textbf{\cite{ji2017maximizing}}} &
\multicolumn{2}{c}{\textbf{\cite{chhatkuli2017inextensible}}} &
\multicolumn{2}{c}{\textbf{\cite{parashar2017isometric}-IP}} &
\multicolumn{2}{c}{\textbf{\cite{parashar2017isometric}-G}}  &
\multicolumn{2}{c}{\textbf{\cite{hamsici2012learning}}} &
\multicolumn{2}{c}{\textbf{\cite{gotardo2011kernel}}} &
\multicolumn{2}{c}{\textbf{\cite{dai2014simple}}} &
\multicolumn{2}{c}{\textbf{\cite{chhatkuli2014non}}}\\ \cline{2-25} 
& \multicolumn{1}{c}{\tiny{\y{rms}}} & \multicolumn{1}{c}{\tiny{\y{med}}} & \multicolumn{1}{c}{\tiny{\y{rms}}} & \multicolumn{1}{c}{\tiny{\y{med}}} & \multicolumn{1}{c}{\tiny{\y{rms}}} & \multicolumn{1}{c}{\tiny{\y{med}}} & \multicolumn{1}{c}{\tiny{\y{rms}}} & \multicolumn{1}{c}{\tiny{\y{med}}} & \multicolumn{1}{c}{\tiny{\y{rms}}} & \multicolumn{1}{c}{\tiny{\y{med}}} & \multicolumn{1}{c}{\tiny{\y{rms}}} & \multicolumn{1}{c}{\tiny{\y{med}}} & \multicolumn{1}{c}{\tiny{\y{rms}}} & \multicolumn{1}{c}{\tiny{\y{med}}} & \multicolumn{1}{c}{\tiny{\y{rms}}} & \multicolumn{1}{c}{\tiny{\y{med}}} & \multicolumn{1}{c}{\tiny{\y{rms}}} & \multicolumn{1}{c}{\tiny{\y{med}}} & \multicolumn{1}{c}{\tiny{\y{rms}}} & \multicolumn{1}{c}{\tiny{\y{med}}} & \multicolumn{1}{c}{\tiny{\y{rms}}} & \multicolumn{1}{c}{\tiny{\y{med}}} & \multicolumn{1}{c}{\tiny{\y{rms}}} & \multicolumn{1}{c}{\tiny{\y{med}}}  \\ \cline{2-25} 
\multicolumn{1}{c}{\textbf{Hulk}} & 2.49 & 1.94 & \textbf{2.19} & 1.68 & 2.53 & 1.98 & \textbf{2.19} & 1.68 & 2.58 & 2.10 & 2.53 & 1.87 & 11.10 & 9.64 & 11.25 & 9.81 & 19.34 & 18.33 & 34.61 & 32.18 & 16.59 & 15.00 & 16.72 & 14.93 \\ \cline{2-25} 
\multicolumn{1}{c}{\textbf{\y{kp}}}& 4.76 & 4.17 & 3.93 & 3.53 & 4.53 & 3.98 & \textbf{3.85} & 3.47 & 9.06 & 7.57 & 4.77 & 4.25 & 11.30 & 10.12 & 11.34 & 10.14 & 13.25 & 11.87 & 30.61 & 28.40 & 13.40 & 12.12 & 13.88 & 12.12 \\ \cline{2-25} 
\multicolumn{1}{c}{\textbf{\y{wc}}}& 9.39 & 7.54 & 7.45 & 5.97 & 8.78 & 7.11 & 7.39 & 5.92 & \textbf{5.8} & 4.57 & 9.72 & 7.88 & 21.45 & 18.69 & 21.54 & 18.79 & 66.28 & 62.78 & 106.89 & 95.24 & 25.7 & 22.91 & 26.54 & 23.38  \\ \cline{2-25} 
\multicolumn{1}{c}{\textbf{\y{st}}}& 8.65 & 6.79 & 8.64 & 6.68 & 8.53 & 6.71 & 8.61 & 6.66 & 24.54 & 20.71 & 15.05 & 12.39 & 11.97 & 10.85 & 11.82 & 10.72 & 7.69 & 7.14 & 13.41 & 12.01 & \textbf{7.36} & 6.79 & 9.30 & 8.69\\ \bottomrule
\label{tab_iso_results}
\end{tabular}
\end{adjustwidth}
\vspace{-0.5cm}
\end{table*}
\setlength{\tabcolsep}{2.8pt}
\renewcommand{\arraystretch}{1.2}
\begin{table}[]
\caption{\y{q1}, \y{q2} and \y{h1} against baseline methods (in $mm$)}
\small
\begin{tabular}{lccccccccc}
\toprule
\multicolumn{2}{l}{}  & \textbf{\cite{ji2017maximizing}} & \textbf{\cite{chhatkuli2017inextensible}} & \textbf{\cite{gotardo2011kernel}} & \textbf{\cite{hamsici2012learning}} & \textbf{\cite{chhatkuli2014non}} & \textbf{\y{q1}} & \textbf{\y{q2}} & \textbf{\y{h1}} \\ \cline{2-10} 
\multirow{2}{*}{\textbf{\y{sb}}}  & \textbf{\y{rms}} & 27.36  & 13.59  & 18.66  & 12.66  & 12.11 & 13.53 & 12.48 & \textbf{8.72} \\ \cline{2-10} 
 & \textbf{\y{med}} & 21.07  & 11.47  & 17.63  & 11.36  & 10.95 & 10.09 & 9.92  & \textbf{7.19} \\ \hline
\multirow{2}{*}{\textbf{\y{slg}}} & \textbf{\y{rms}} & 34.43  & 10.78  & x  & 8.92 & 10.41 & 8.81  & 8.67  & \textbf{6.43} \\ \cline{2-10} 
 & \textbf{\y{med}} & 31.53  & 8.65 & x  & 8.00 & 9.36  & 7.70  & 7.55  & \textbf{5.59} \\ \bottomrule
\end{tabular}
\label{tab_eq_comp}
\vspace{-3mm}
\end{table}
\begin{figure*}\centering
\subcaptionbox*{(a)}[4.4cm]{%
\begin{overpic}[height = 2.75cm, trim=0 0 0 0,clip]{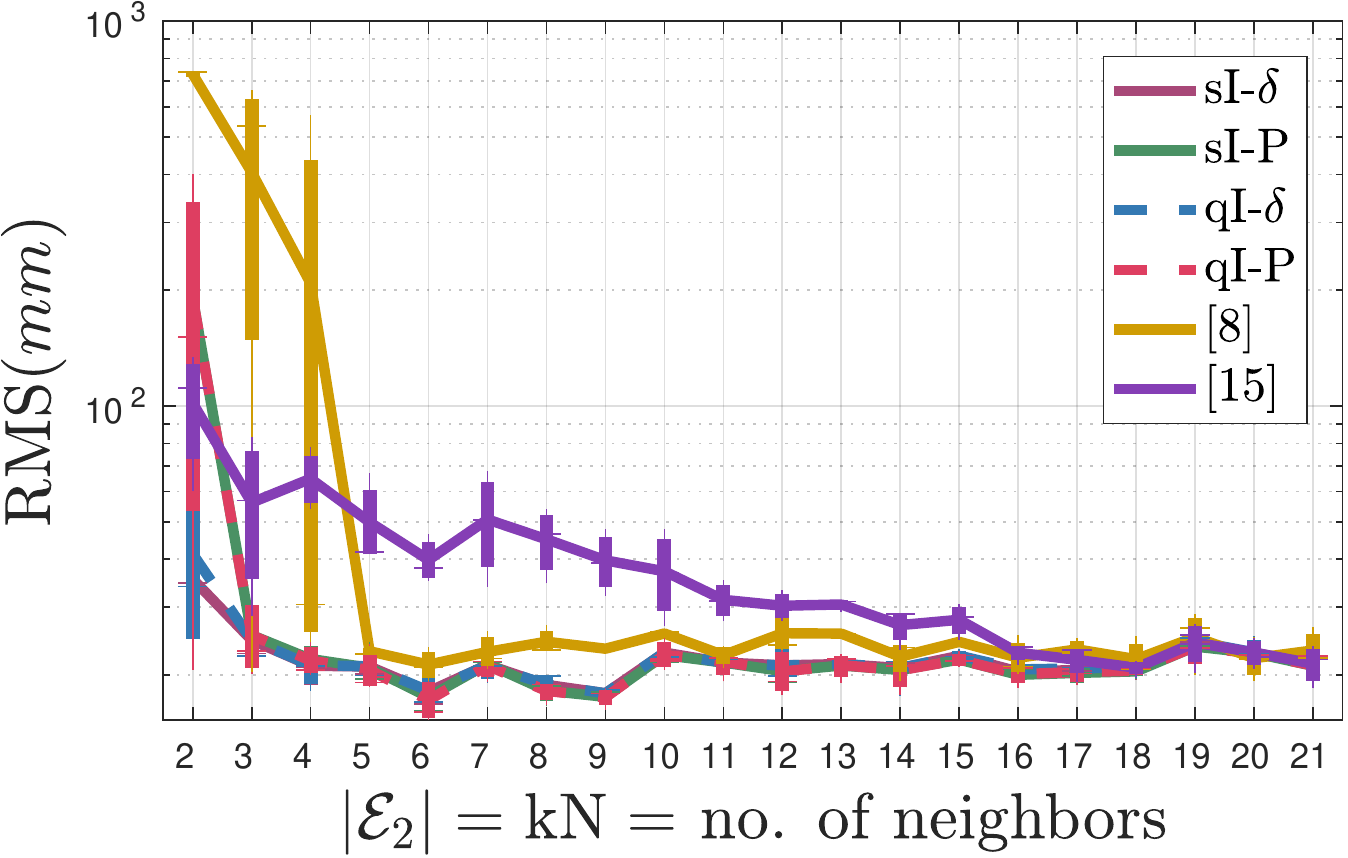}
\end{overpic}
}
\subcaptionbox*{(b)}[4.2cm]{%
\begin{overpic}[height = 2.75cm, trim=0 0 0 0,clip]{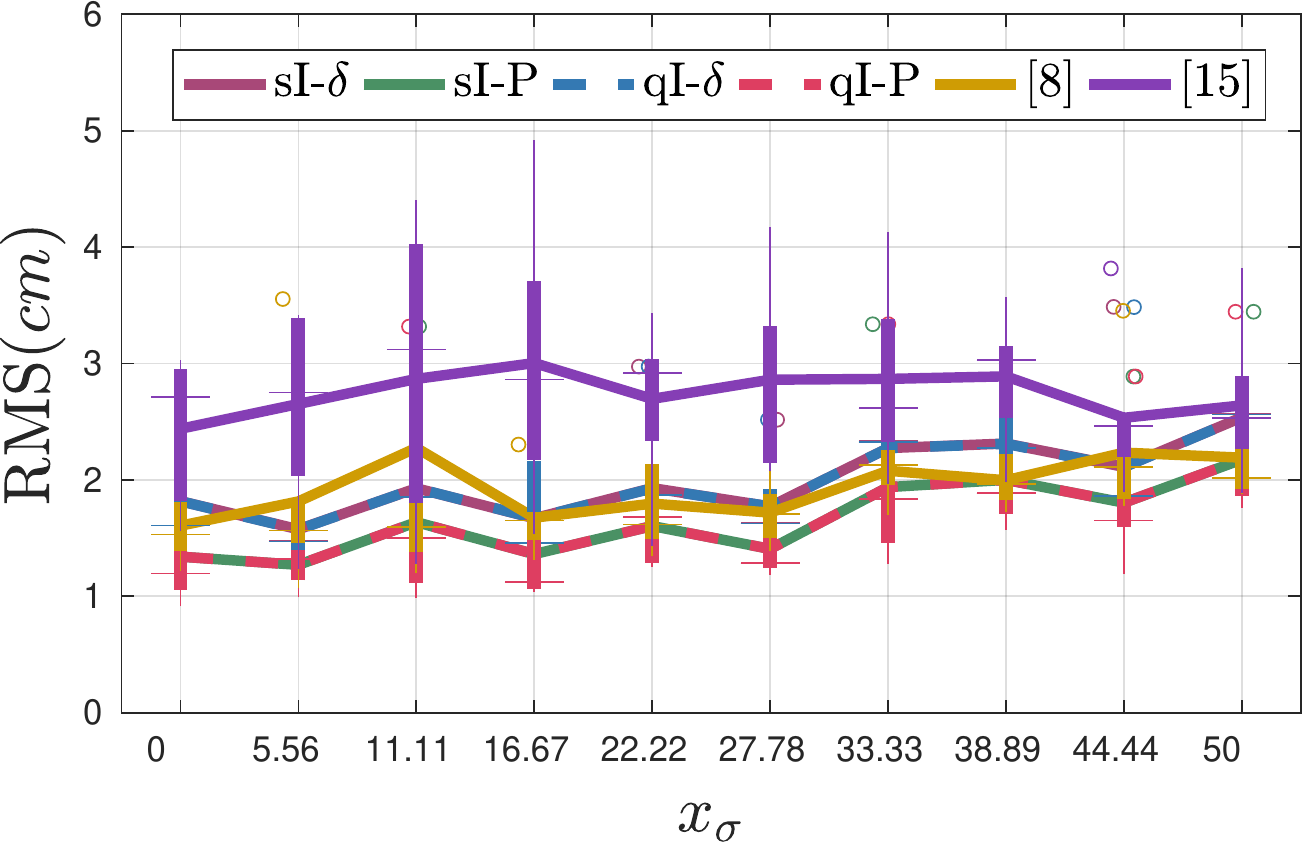}
\end{overpic}
}
\subcaptionbox*{(c)}[4.2cm]{%
\begin{overpic}[height = 2.75cm, trim=0 0 0 0,clip]{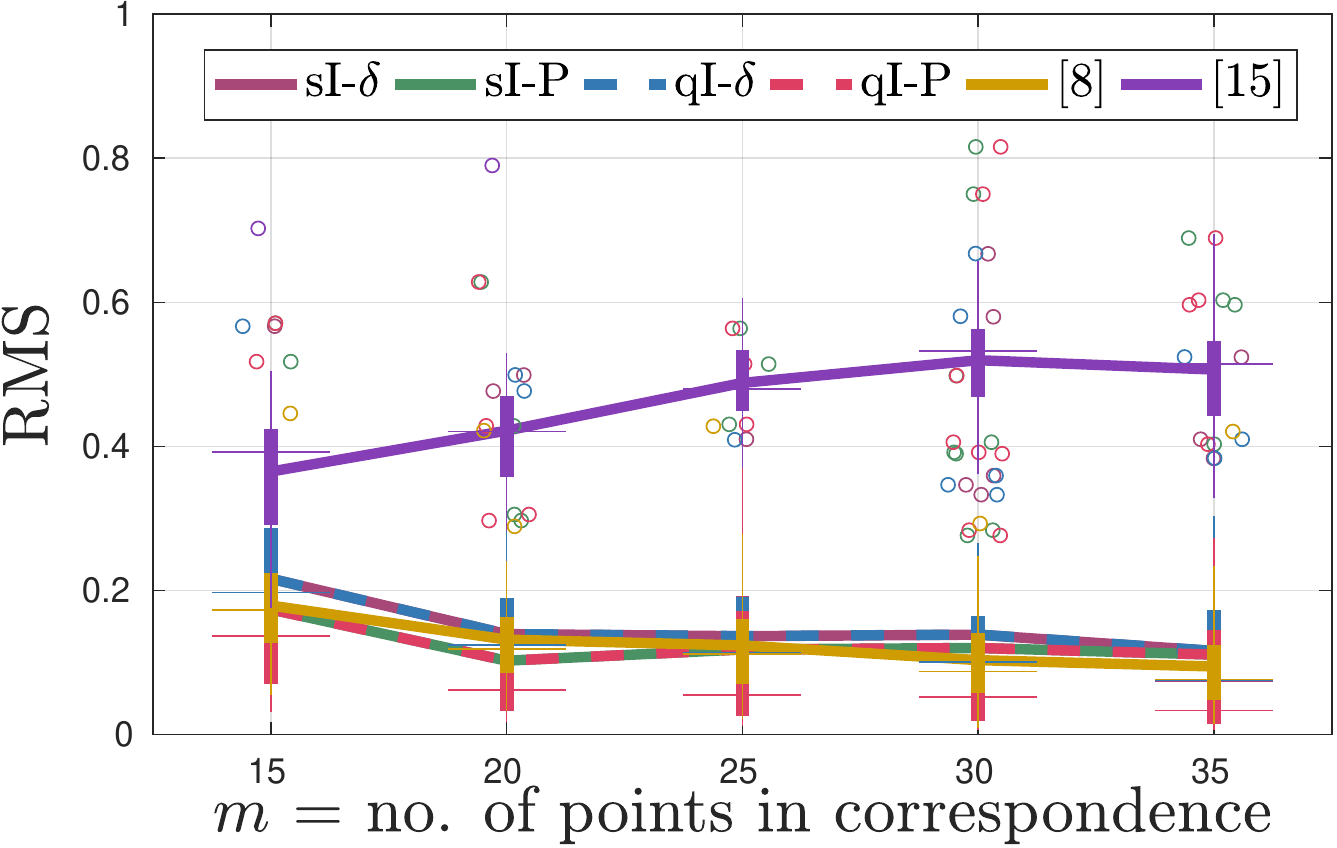}
\end{overpic}
}
\subcaptionbox*{(d)}[4.4cm]{%
\begin{overpic}[height = 2.75cm, trim=0 0 0 0,clip]{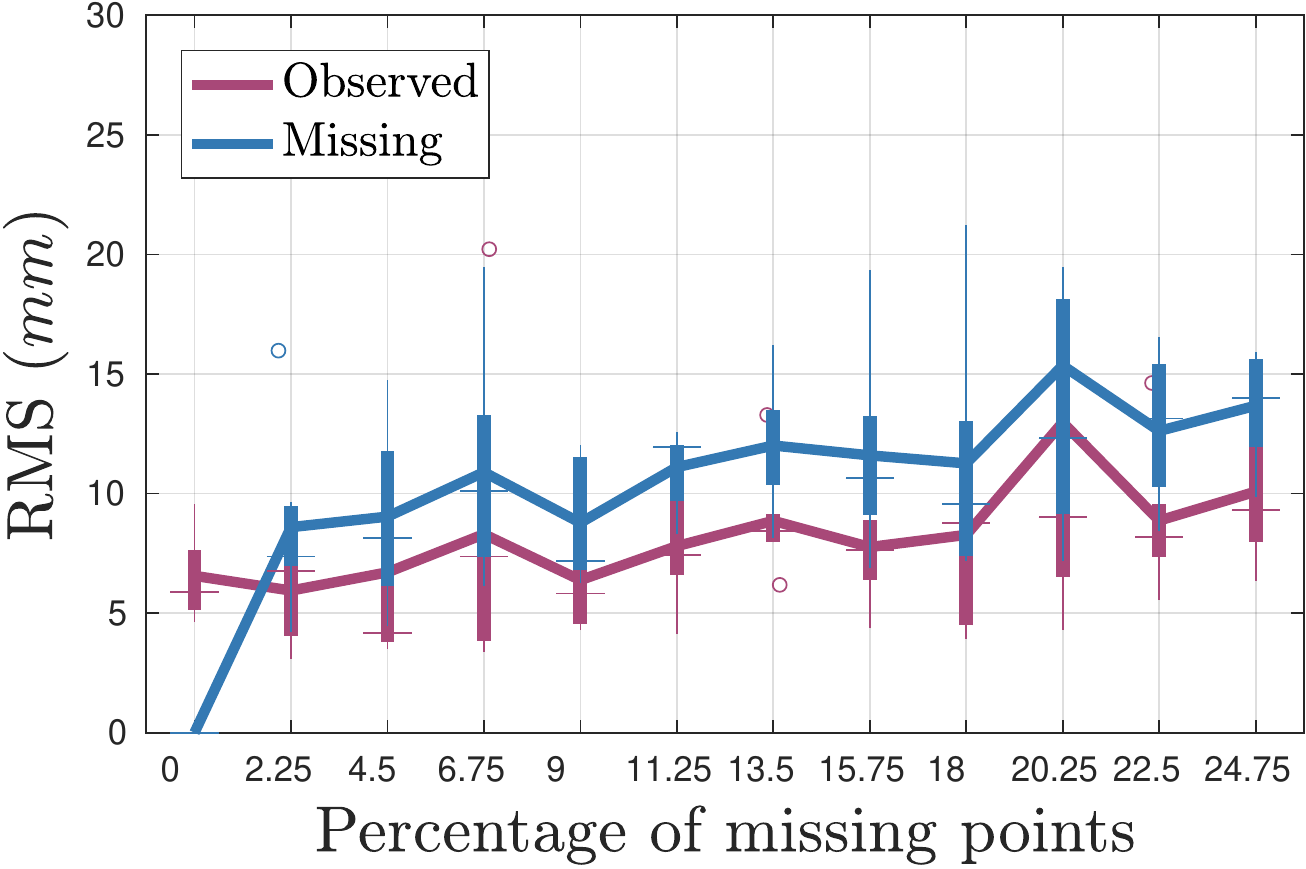}
\end{overpic}
}
\subcaptionbox*{(e)}[6cm]{%
\begin{overpic}[height = 2.75cm, trim=0 0 0 0,clip]{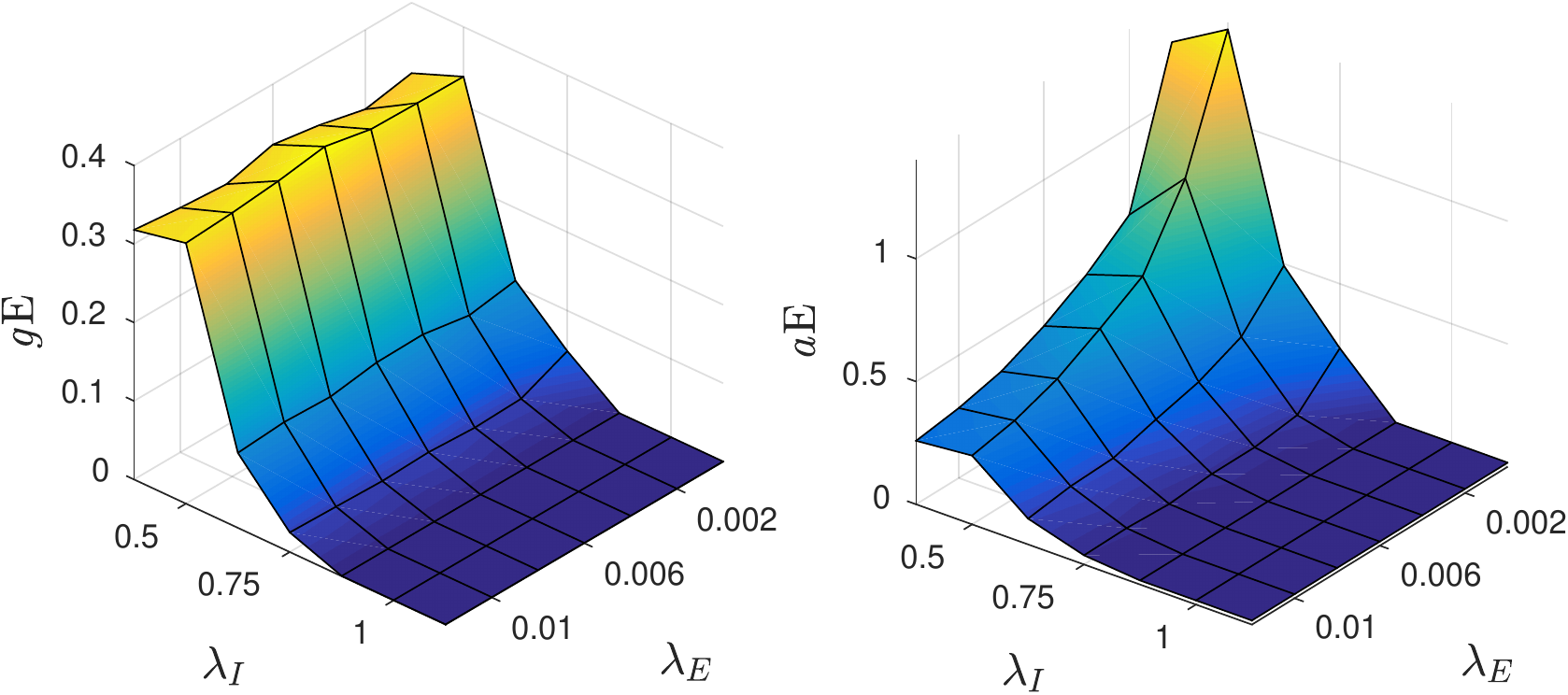}
\end{overpic}
}
\subcaptionbox*{(f)}[3.0cm]{%
\begin{overpic}[height = 2.75cm, trim=0 0 0 0,clip]{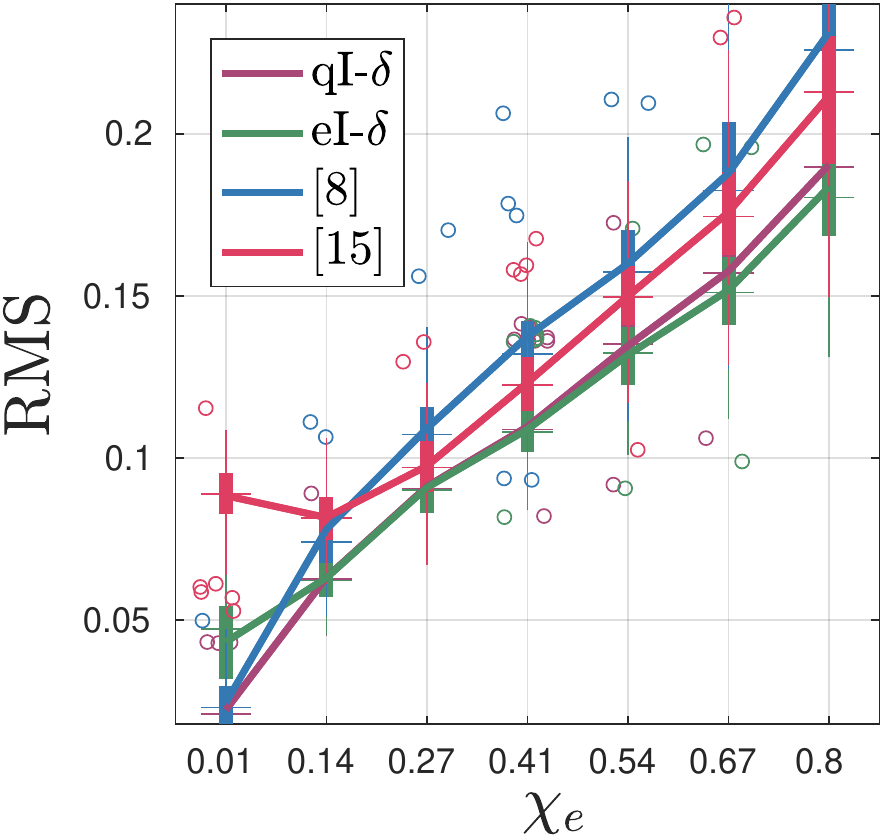}
\end{overpic}
}
\subcaptionbox*{(g)}[4.1cm]{%
\begin{overpic}[height = 2.75cm, trim=0 0 0 0,clip]{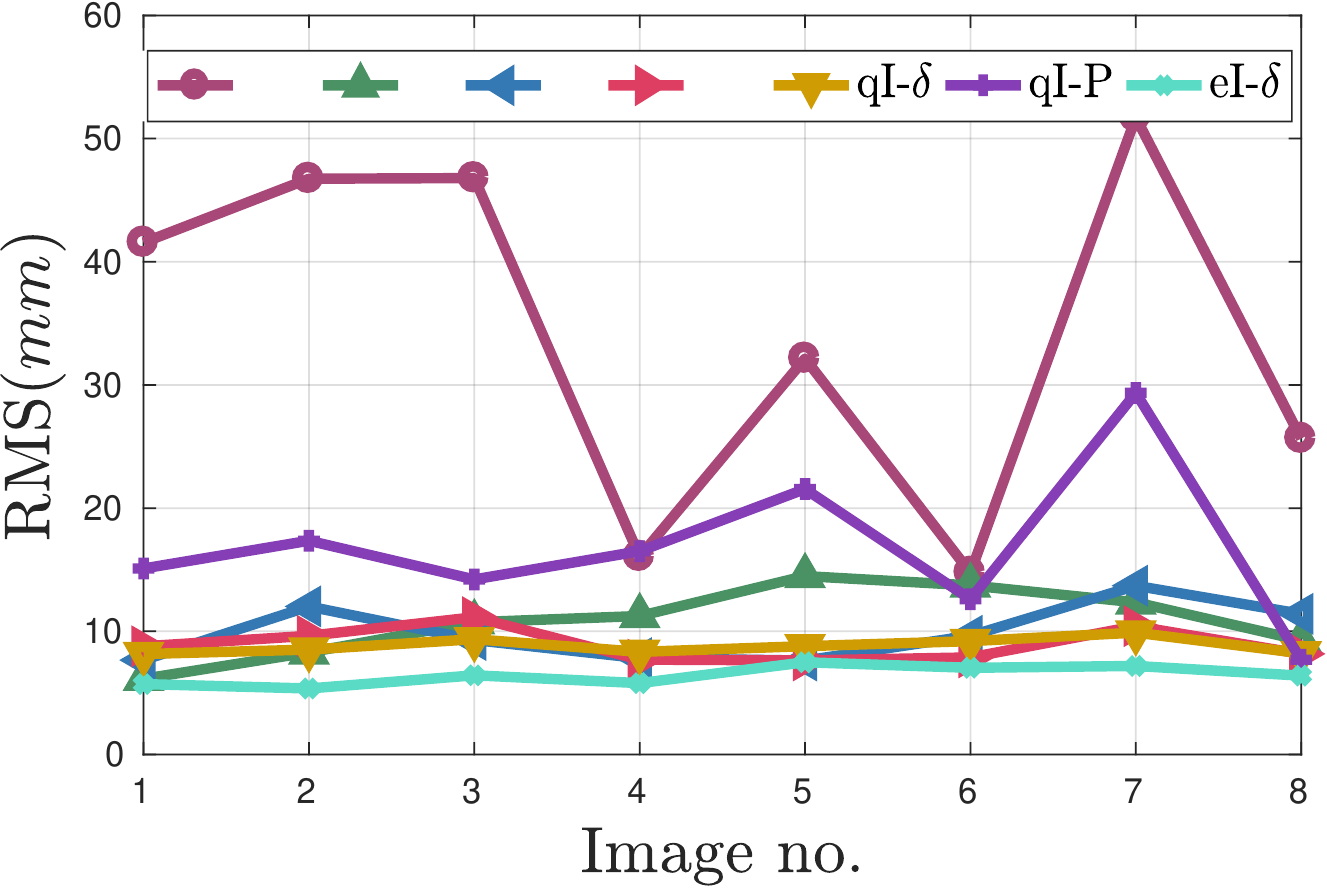}
\put(20,69.8){\scalebox{.8}{\tiny{\cite{ji2017maximizing}}}}
\put(34,69.8){\scalebox{.8}{\tiny{\cite{chhatkuli2017inextensible}}}}
\put(47,69.8){\scalebox{.8}{\tiny{\cite{chhatkuli2014non}}}}
\put(60.5,69.8){\scalebox{.8}{\tiny{\cite{hamsici2012learning}}}}
\end{overpic}
}
\subcaptionbox*{(h)}[4.0cm]{%
\begin{overpic}[height = 2.75cm, trim=0 0 0 0,clip]{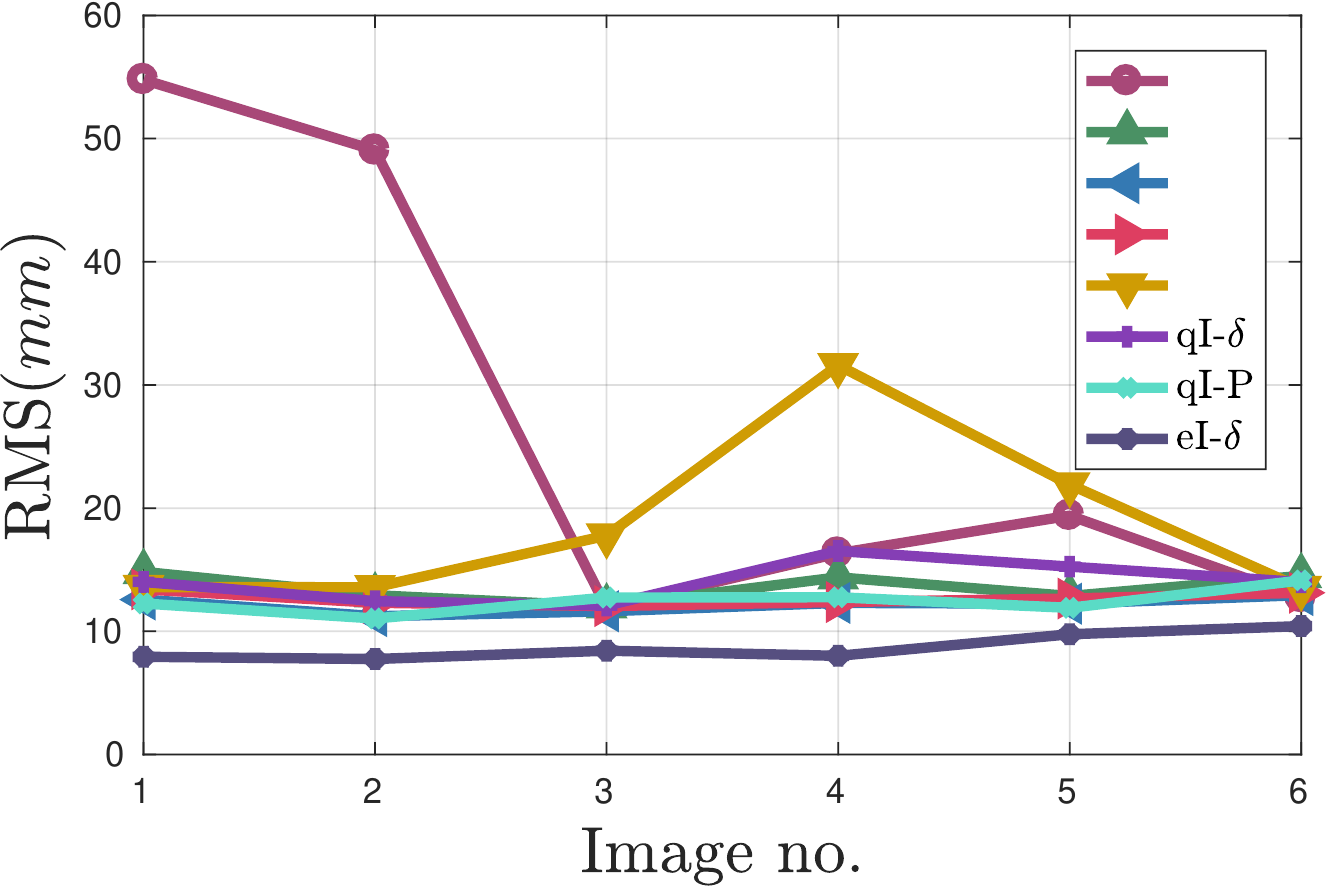}
\put(102.5,70){\scalebox{.8}{\tiny{\cite{ji2017maximizing}}}}
\put(102.5,66){\scalebox{.8}{\tiny{\cite{chhatkuli2017inextensible}}}}
\put(102.5,61){\scalebox{.8}{\tiny{\cite{chhatkuli2014non}}}}
\put(102.5,57){\scalebox{.8}{\tiny{\cite{hamsici2012learning}}}}
\put(102.5,52){\scalebox{.8}{\tiny{\cite{gotardo2011kernel}}}}
\end{overpic}
}
\vspace{-4mm}
\caption[Fig1]{We show the following: \textbf{(a)} the variation of \y{rms} w.r.t increasing neighbors in $\mathcal{E}_2$, \textbf{(b)} the variation of \y{rms} w.r.t additive pixel noise to correspondences, \textbf{(c)} the variation of \y{rms}, in \y{au}, w.r.t increasing $m$ for same $\mathcal{E}_2$ (or \y{nng_}) for all compared methods, \textbf{(d)} the variation of \y{rms} in observed and missing correspondences completed by \y{q2}, \textbf{(e)} the variation of $g\mathrm{E}$ and $a\mathrm{E}$ w.r.t changes in $\lambda_I$ and $\lambda_E$, \textbf{(f)} the variation of \y{rms} in \y{au} w.r.t increasing $\chi_{\mathrm{E}}$ in \y{qsg}, \textbf{(g)} and \textbf{(h)} \y{rms} for each frame from \y{slg} and \y{sb} respectively.}
\setlength{\belowcaptionskip}{-15pt}
\label{fig_q1}
\end{figure*}
\begin{figure}
    \centering
    \begin{overpic}[width = 6.5cm, trim=80 320 100 323,clip]{}
    \put(32,0){\scalebox{.8}{\y{s1}}}
    \put(52,0){\scalebox{.8}{\y{q1}}}
    \put(72,0){\scalebox{.8}{\y{s2}}}
    \put(92,0){\scalebox{.8}{\y{q2}}}
    \put(113,0){\scalebox{.8}{\cite{chhatkuli2017inextensible}}}
    \put(132,0){\scalebox{.8}{\cite{ji2017maximizing}}}
    \put(157,0){\scalebox{.8}{\cite{dai2014simple}}}
\end{overpic}
    \caption{Results from \y{mh} rigid dataset.}
    \label{fig:mh_randomized}
    \vspace{-5mm}
\end{figure}

On real data, our experiments on \y{wc}, \y{kp}, hulk or \y{st} did not yield better results than \y{q1} or \y{q2}, unsurprisingly. Another observation (with details in section~6 of supplementary) is that with denser correspondences, we have smaller simplices and hence isometry dominates over equiareality. Hence, from the data of \cite{casillas2019equiareal}, we select two highly stretchable/squeezable objects \y{sb} and \y{sl}, and obtain sparse and randomly sampled tracks of 11 correspondences. The results for two such tracks (to be made publicly available with the code) are demonstrated in table~\ref{tab_eq_comp}, compared against all the methods that produced an output, and the accuracy of each frames are shown in figure~\ref{fig_q1}g and~\ref{fig_q1}h. \y{h1} is significantly ahead of all compared methods. Even with denser correspondences, the accuracy from these compared methods do not reach the level of \y{h1}. Check section~6 of supplementary for validation across varying resolutions for \y{sb} and \y{sl}, as well as extended analysis of all our experiments. 
\setlength{\belowcaptionskip}{-17pt}
\begin{figure}
    \centering
    \begin{overpic}[width=4.0cm, trim=0 35 260 28,clip]{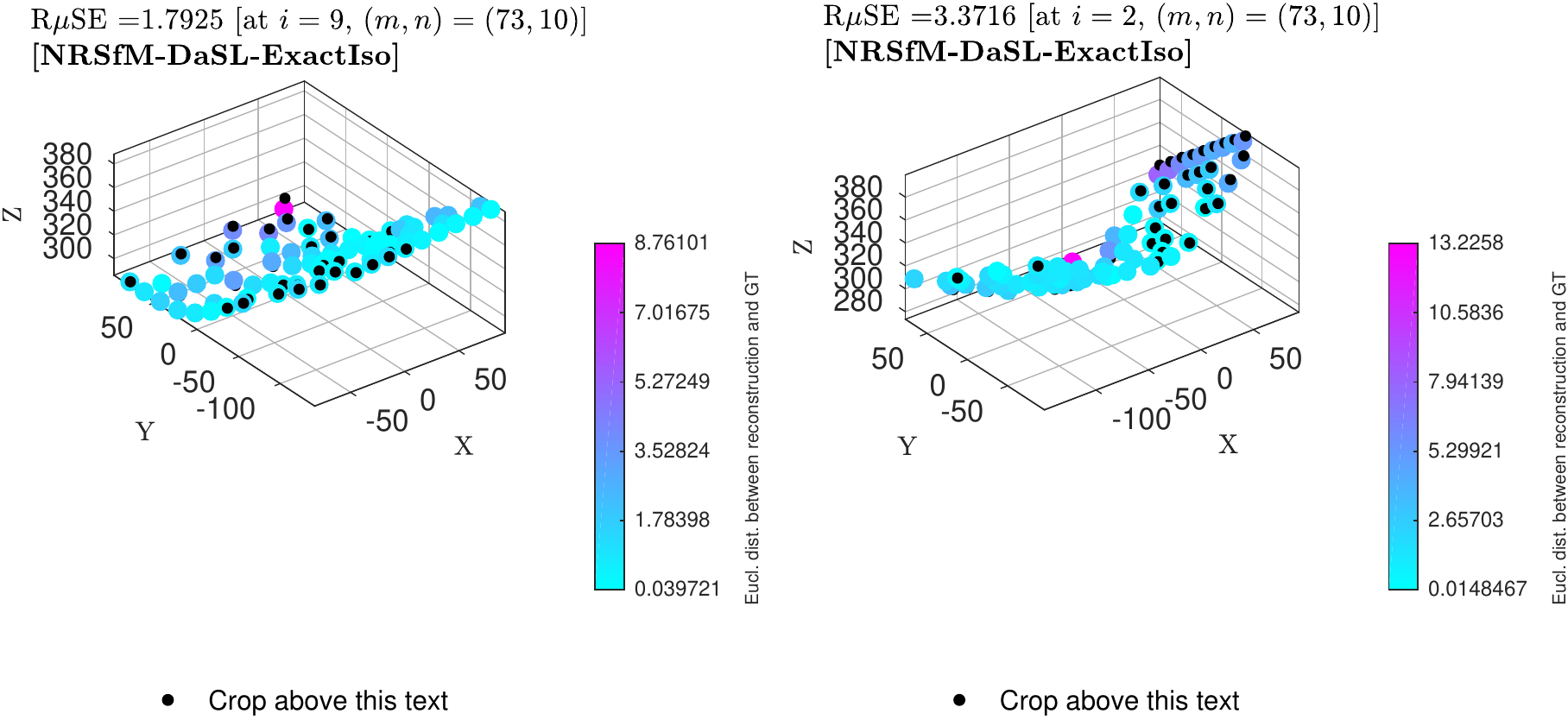}
    \put(20,15){\scalebox{.6}{Hulk ($i=9$)}}
    \end{overpic}
    \begin{overpic}[width=4.15cm, trim=0 35 260 27,clip]{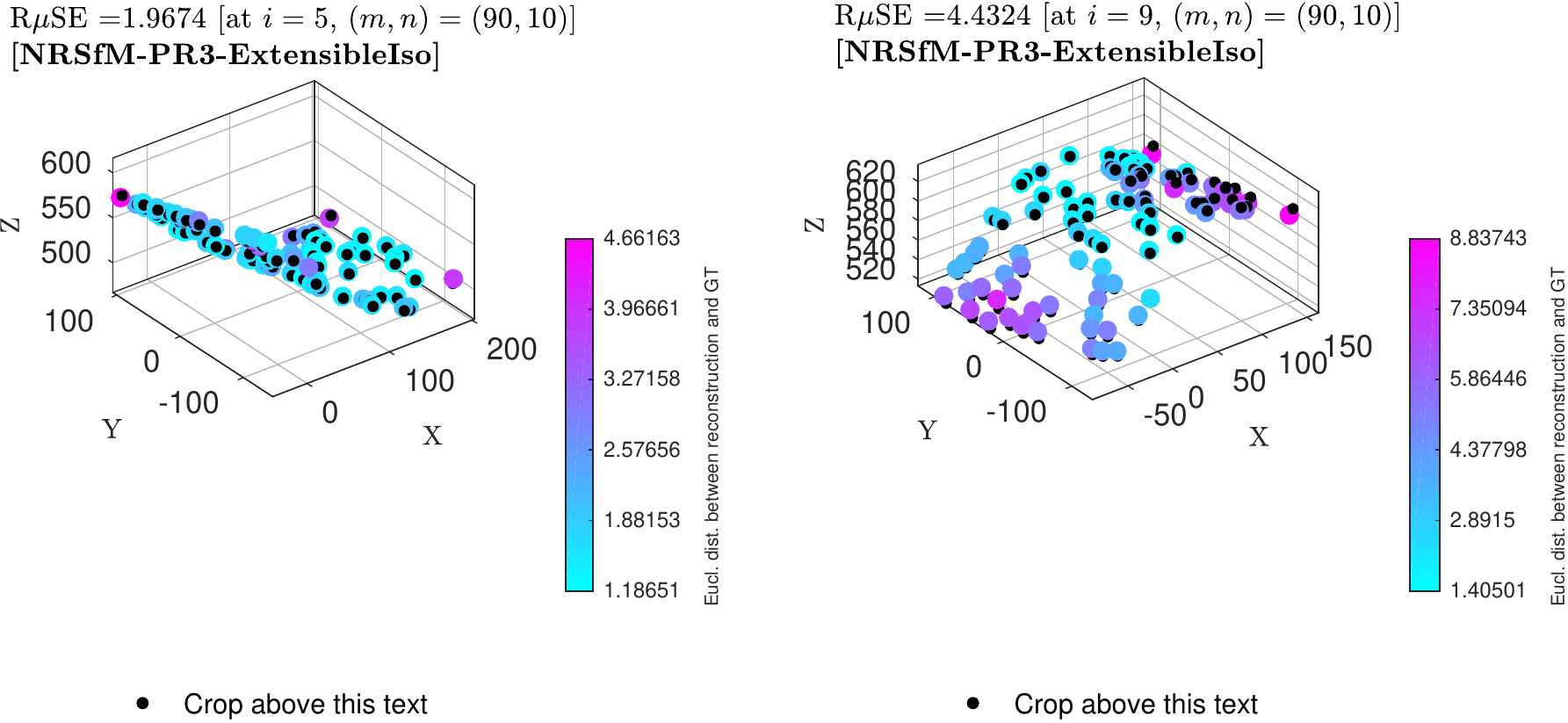}
    \put(23,20){\scalebox{.6}{\y{kp} ($i=5$)}}
    \end{overpic}
    \begin{overpic}[width=4.0cm, trim=0 35 260 26,clip]{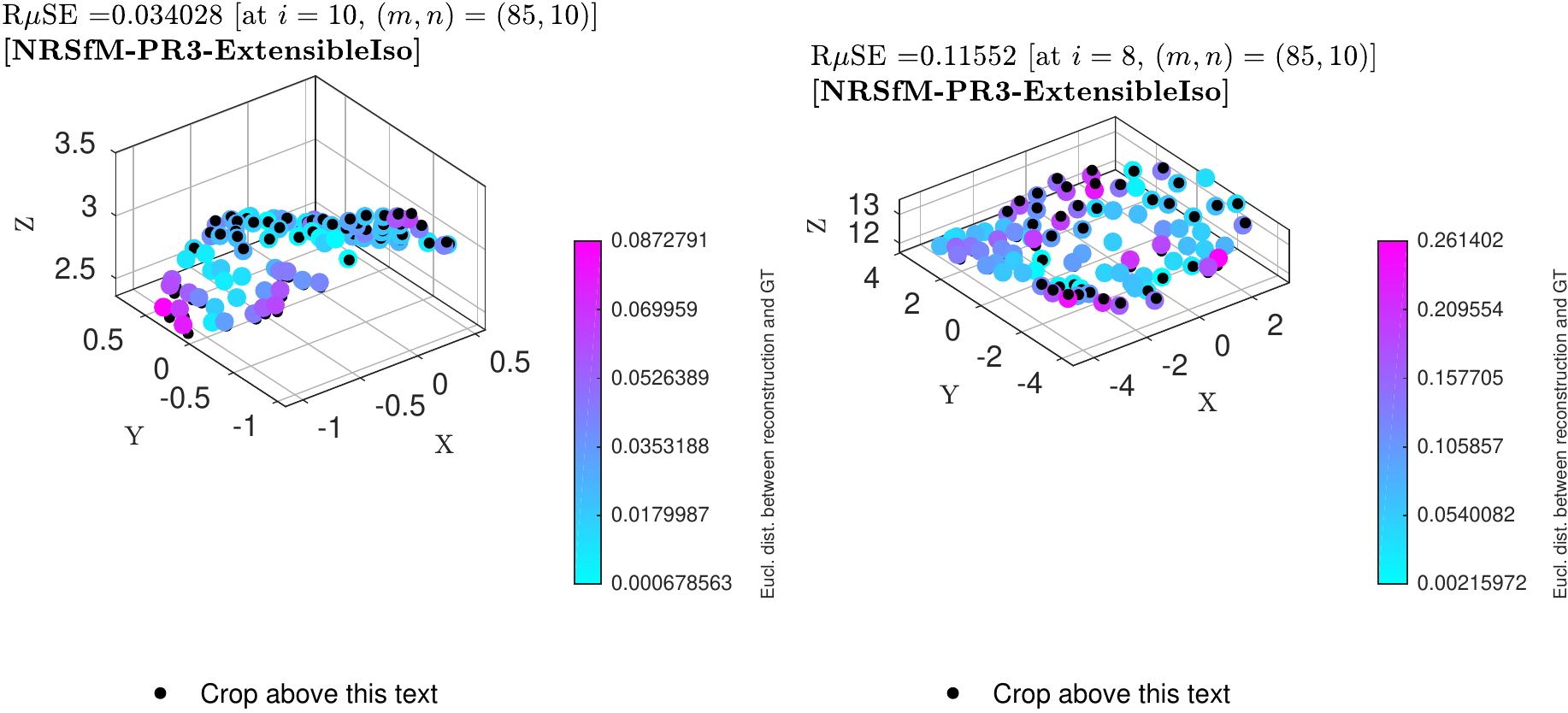}
    \put(23,15){\scalebox{.6}{\y{wc} ($i=10$)}}
    \end{overpic}
    \begin{overpic}[width=4.15cm, trim=0 35 260 36,clip]{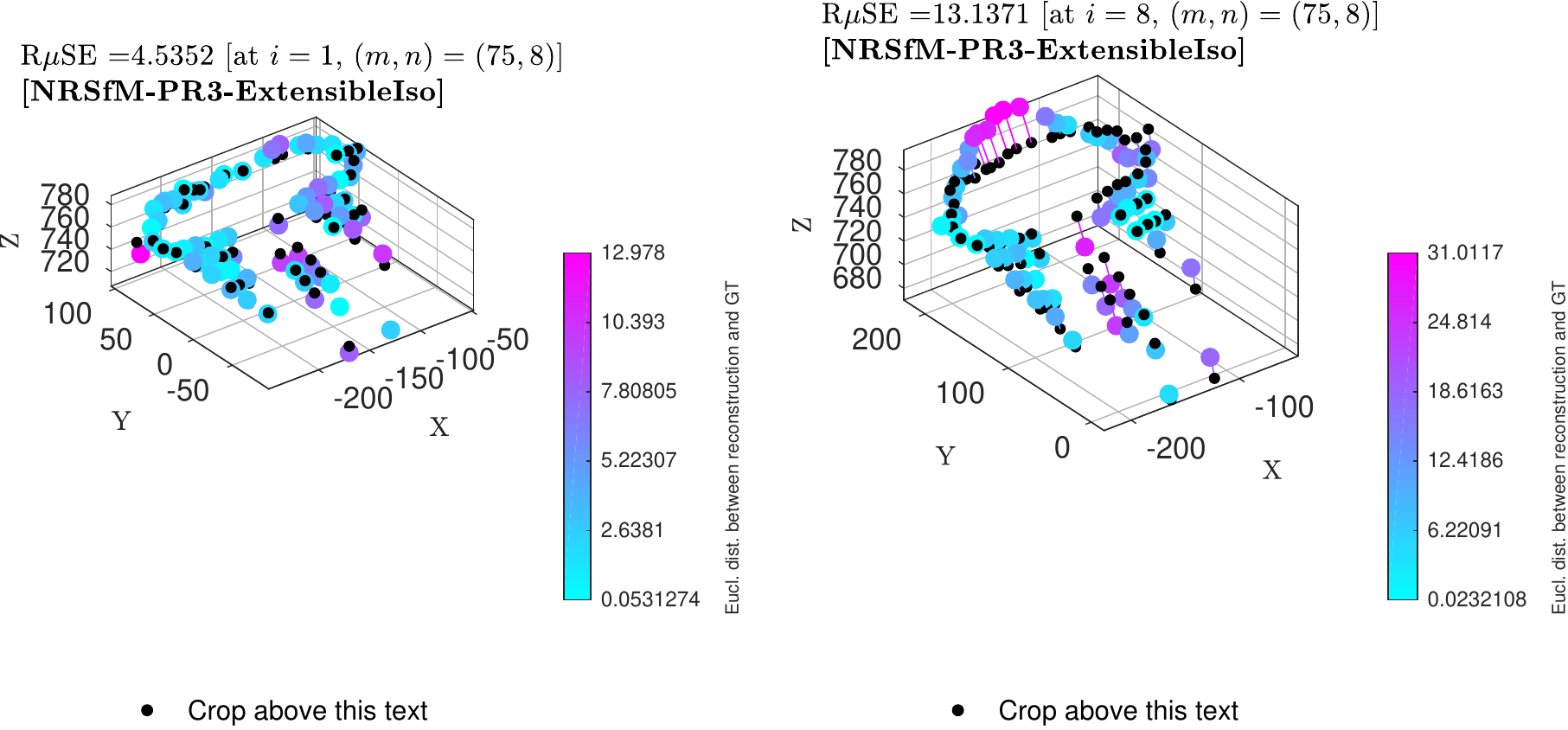}
    \put(23,20){\scalebox{.6}{\y{st} ($i=1$)}}
    \end{overpic}
    \caption{Qualitative results from \y{s2}. Black points are \y{gt}.}
    \label{fig_iso_quals}
\end{figure}
\vspace{-3mm}

In \cref{fig:colonic_real}, we show a sample reconstruction of a colonoscopic sequence from the EndoMapper dataset \cite{azagra2023endomapper} obtained by following the approach of \y{h1}. In the absence of \y{gt}, only the qualitative results have been shown.

\section{Conclusion}\label{sec:conclusion}
\vspace{-2mm}
We have presented a convex formulation for strict and quasi-isometric \y{nrsfm}, as well as extended the method to handle quasi-equiareality. Our solution for \y{snr} and \y{qnr} solves the restrictive isometry problem without utilising inextensible relaxations, which is a first for zeroth order \y{nrsfm} methods. Our solution for \y{hnr} presents the first method handling quasi-equiareality convexly, searching for nearly equiareal solution in the vicinity of the solution space of quasi-isometry. We demonstrate that \y{snr} and \y{qnr} achieve state-of-the-art accuracy in benchmark datasets and are more generalisable to extensible surfaces than comparable methods. We demonstrate that for highly extensible surfaces, our solution to \y{hnr} produces reconstruction of unmatched accuracy.

\begin{figure*}
    \centering    
    \includegraphics[height=3.6cm]{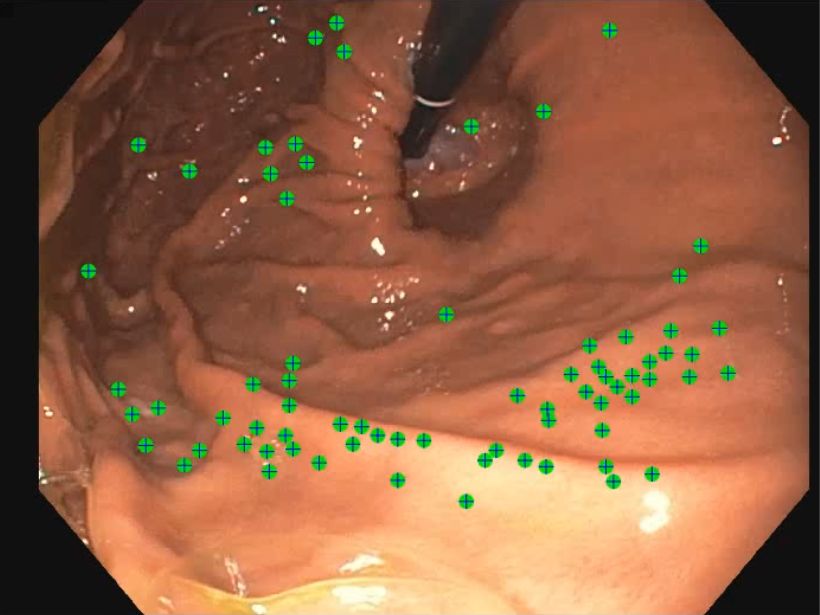}   
\includegraphics[height=3.6cm]{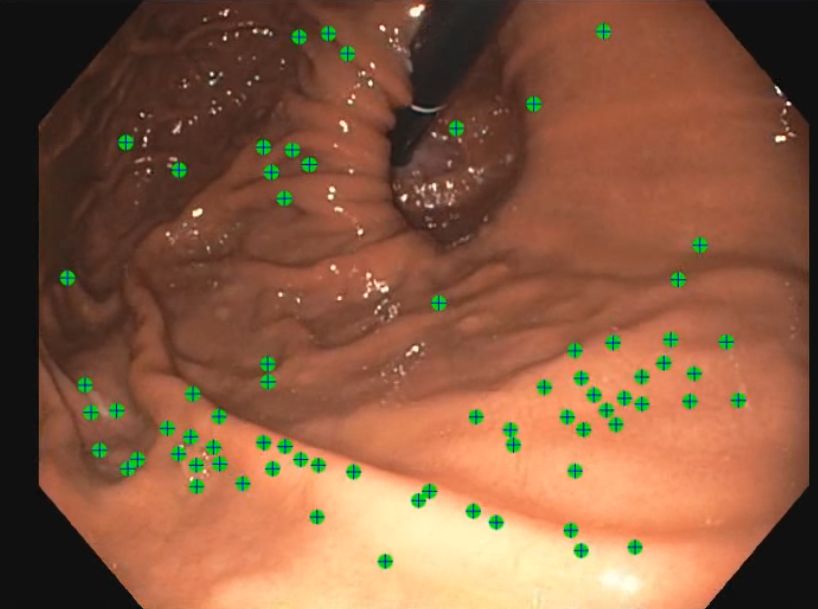}
\includegraphics[height=3.6cm]{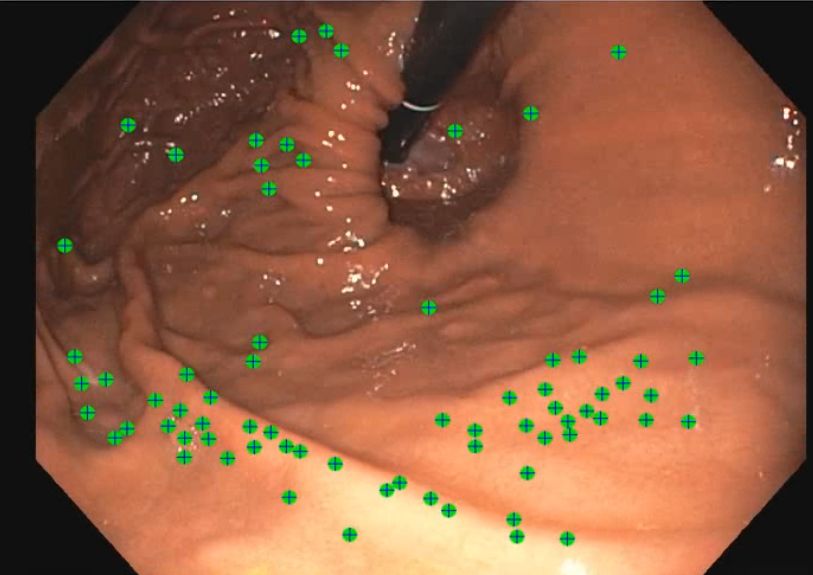}\\
    \includegraphics[height=3cm]{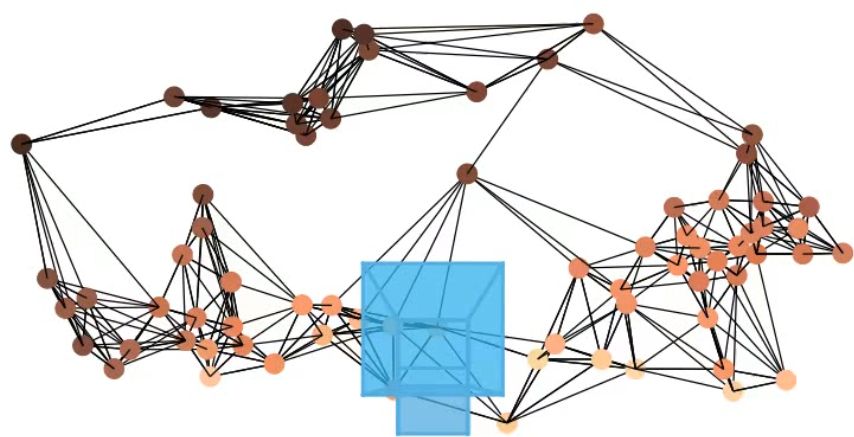}   
    \includegraphics[height=3cm]{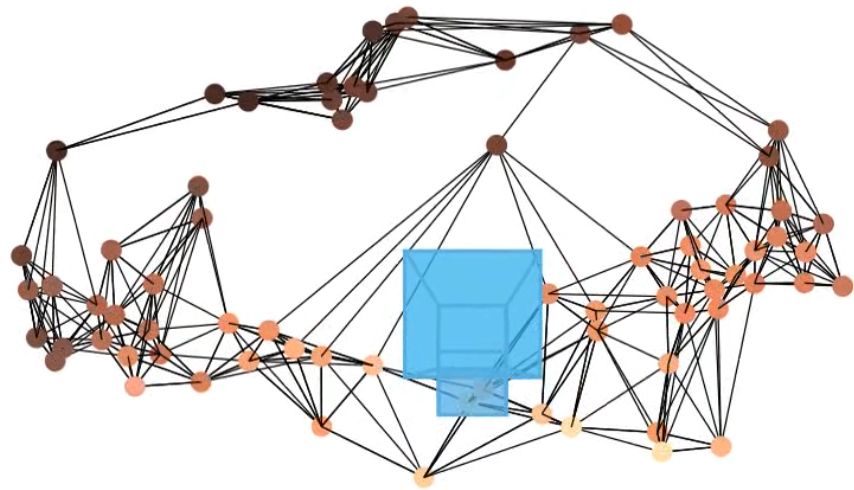}   
    \includegraphics[height=3cm]{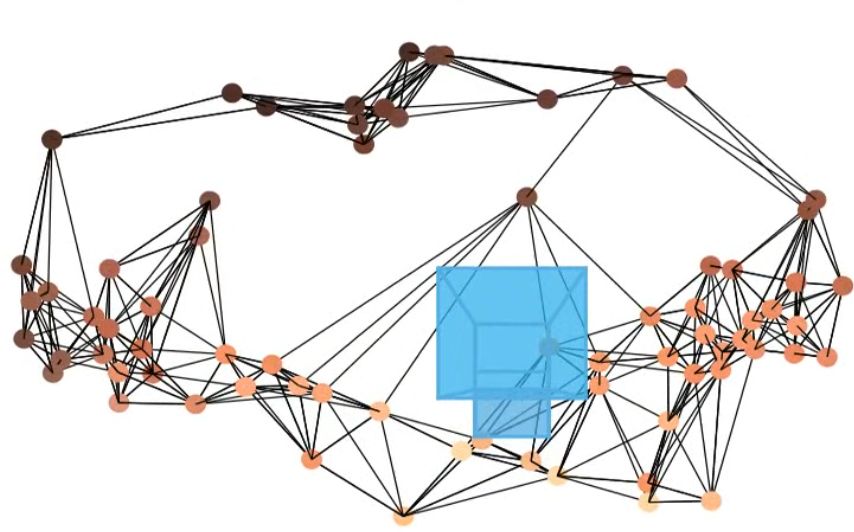}   
    \caption{Non-isometric NRSfM applied to a colonoscopy sequence of 67 images with 72 point tracks. The camera is shown in blue and the reconstructed points are colour-coded according to depth.}
    \label{fig:colonic_real}
\end{figure*}

\section*{Acknowledgement}
We thank the authors of \cite{casillas2019equiareal} for helping us out by sharing and explaining their dataset to us. This project has received funding from the European Union’s Horizon 2020 research and innovation programme under grant agreement No 863146.

{\small
\bibliographystyle{ieee_fullname}
\bibliography{Main}

\begin{thebibliography}{10}\itemsep=-1pt

\bibitem{mosek}
MOSEK ApS.
\newblock {\em The MOSEK optimization toolbox for MATLAB manual. Version 9.0.},
  2019.

\bibitem{azagra2023endomapper}
Pablo Azagra, Carlos Sostres, {\'A}ngel Ferr{\'a}ndez, Luis Riazuelo, Clara
  Tomasini, O~Le{\'o}n Barbed, Javier Morlana, David Recasens, V{\'\i}ctor~M
  Batlle, Juan~J G{\'o}mez-Rodr{\'\i}guez, et~al.
\newblock Endomapper dataset of complete calibrated endoscopy procedures.
\newblock {\em Scientific Data}, 10(1):671, 2023.

\bibitem{belk2007realizability}
Maria Belk.
\newblock Realizability of graphs in three dimensions.
\newblock {\em Discrete \& Computational Geometry}, 37(2):139--162, 2007.

\bibitem{biswas2008distributed}
Pratik Biswas, Kim-Chuan Toh, and Yinyu Ye.
\newblock A distributed sdp approach for large-scale noisy anchor-free graph
  realization with applications to molecular conformation.
\newblock {\em SIAM Journal on Scientific Computing}, 30(3):1251--1277, 2008.

\bibitem{candes2010power}
Emmanuel~J Cand{\`e}s and Terence Tao.
\newblock The power of convex relaxation: Near-optimal matrix completion.
\newblock {\em IEEE Transactions on Information Theory}, 56(5):2053--2080,
  2010.

\bibitem{casillas2019equiareal}
David Casillas-Perez, Daniel Pizarro, David Fuentes-Jimenez, Manuel Mazo, and
  Adrien Bartoli.
\newblock Equiareal shape-from-template.
\newblock {\em Journal of Mathematical Imaging and Vision}, 61(5):607--626,
  2019.

\bibitem{chen2020dense}
Yongbo Chen, Liang Zhao, Yanhao Zhang, and Shoudong Huang.
\newblock Dense isometric non-rigid shape-from-motion based on graph
  optimization and edge selection.
\newblock {\em IEEE Robotics and Automation Letters}, 5(4):5889--5896, 2020.

\bibitem{chhatkuli2014non}
Ajad Chhatkuli, Daniel Pizarro, and Adrien Bartoli.
\newblock Non-rigid shape-from-motion for isometric surfaces using
  infinitesimal planarity.
\newblock In {\em BMVC}, 2014.

\bibitem{chhatkuli2017inextensible}
Ajad Chhatkuli, Daniel Pizarro, Toby Collins, and Adrien Bartoli.
\newblock Inextensible non-rigid structure-from-motion by second-order cone
  programming.
\newblock {\em IEEE transactions on pattern analysis and machine intelligence},
  40(10):2428--2441, 2017.

\bibitem{dai2014simple}
Yuchao Dai, Hongdong Li, and Mingyi He.
\newblock A simple prior-free method for non-rigid structure-from-motion
  factorization.
\newblock {\em International Journal of Computer Vision}, 107(2):101--122,
  2014.

\bibitem{do2016differential}
Manfredo~P Do~Carmo.
\newblock {\em Differential geometry of curves and surfaces: revised and
  updated second edition}.
\newblock Courier Dover Publications, 2016.

\bibitem{heidelberg_05}
Michael~S Floater and Kai Hormann.
\newblock {Surface Parameterization: a Tutorial and Survey}.
\newblock In Neil~A Dodgson, Michael~S Floater, and Malcolm~A Sabin, editors,
  {\em Advances in Multiresolution for Geometric Modelling}, pages 157--186,
  Berlin, Heidelberg, 2005. Springer Berlin Heidelberg.

\bibitem{gotardo2011kernel}
Paulo~FU Gotardo and Aleix~M Martinez.
\newblock Kernel non-rigid structure from motion.
\newblock In {\em 2011 International Conference on Computer Vision}, pages
  802--809. IEEE, 2011.

\bibitem{gb08}
M. Grant and S. Boyd.
\newblock Graph implementations for nonsmooth convex programs.
\newblock In V. Blondel, S. Boyd, and H. Kimura, editors, {\em Recent Advances
  in Learning and Control}, Lecture Notes in Control and Information Sciences,
  pages 95--110. Springer-Verlag Limited, 2008.
\newblock \url{http://stanford.edu/~boyd/graph_dcp.html}.

\bibitem{hamsici2012learning}
Onur~C Hamsici, Paulo~FU Gotardo, and Aleix~M Martinez.
\newblock Learning spatially-smooth mappings in non-rigid structure from
  motion.
\newblock In {\em European Conference on computer vision}, pages 260--273.
  Springer, 2012.

\bibitem{ji2017maximizing}
Pan Ji, Hongdong Li, Yuchao Dai, and Ian Reid.
\newblock " maximizing rigidity" revisited: a convex programming approach for
  generic 3d shape reconstruction from multiple perspective views.
\newblock In {\em Proceedings of the IEEE International Conference on Computer
  Vision}, pages 929--937, 2017.

\bibitem{jing2021angle}
Gangshan Jing, Changhuang Wan, and Ran Dai.
\newblock Angle-based sensor network localization.
\newblock {\em IEEE Transactions on Automatic Control}, 67(2):840--855, 2021.

\bibitem{kumar2020non}
Suryansh Kumar.
\newblock Non-rigid structure from motion: Prior-free factorization method
  revisited.
\newblock In {\em Proceedings of the IEEE/CVF Winter Conference on Applications
  of Computer Vision}, pages 51--60, 2020.

\bibitem{kumar2022organic}
Suryansh Kumar and Luc Van~Gool.
\newblock Organic priors in non-rigid structure from motion.
\newblock {\em arXiv preprint arXiv:2207.06262}, 2022.

\bibitem{lasserre2000convergent}
Jean~B Lasserre.
\newblock Convergent lmi relaxations for nonconvex quadratic programs.
\newblock In {\em Proceedings of the 39th IEEE Conference on Decision and
  Control (Cat. No. 00CH37187)}, volume~5, pages 5041--5046. IEEE, 2000.

\bibitem{lasserre2006convergent}
Jean~B Lasserre.
\newblock Convergent sdp-relaxations in polynomial optimization with sparsity.
\newblock {\em SIAM Journal on Optimization}, 17(3):822--843, 2006.

\bibitem{parashar2017isometric}
Shaifali Parashar, Daniel Pizarro, and Adrien Bartoli.
\newblock Isometric non-rigid shape-from-motion with riemannian geometry solved
  in linear time.
\newblock {\em IEEE transactions on pattern analysis and machine intelligence},
  40(10):2442--2454, 2017.

\bibitem{parashar2019local}
Shaifali Parashar, Daniel Pizarro, and Adrien Bartoli.
\newblock Local deformable 3d reconstruction with cartan's connections.
\newblock {\em IEEE transactions on pattern analysis and machine intelligence},
  42(12):3011--3026, 2019.

\bibitem{parashar2020local}
Shaifali Parashar, Mathieu Salzmann, and Pascal Fua.
\newblock Local non-rigid structure-from-motion from diffeomorphic mappings.
\newblock In {\em Proceedings of the IEEE/CVF Conference on Computer Vision and
  Pattern Recognition}, pages 2059--2067, 2020.

\bibitem{perriollat2013computational}
Mathieu Perriollat and Adrien Bartoli.
\newblock A computational model of bounded developable surfaces with
  application to image-based three-dimensional reconstruction.
\newblock {\em Computer Animation and Virtual Worlds}, 24(5):459--476, 2013.

\bibitem{perriollat2011monocular}
Mathieu Perriollat, Richard Hartley, and Adrien Bartoli.
\newblock Monocular template-based reconstruction of inextensible surfaces.
\newblock {\em International journal of computer vision}, 95(2):124--137, 2011.

\bibitem{probst2019convex}
Thomas Probst, Danda~Pani Paudel, Ajad Chhatkuli, and Luc~Van Gool.
\newblock Convex relaxations for consensus and non-minimal problems in 3d
  vision.
\newblock In {\em Proceedings of the IEEE/CVF International Conference on
  Computer Vision}, pages 10233--10242, 2019.

\bibitem{probst2018incremental}
Thomas Probst, Danda~Pani Paudel, Ajad Chhatkuli, and Luc Van~Gool.
\newblock Incremental non-rigid structure-from-motion with unknown focal
  length.
\newblock In {\em Proceedings of the European Conference on Computer Vision
  (ECCV)}, pages 756--771, 2018.

\bibitem{salzmann2010linear}
Mathieu Salzmann and Pascal Fua.
\newblock Linear local models for monocular reconstruction of deformable
  surfaces.
\newblock {\em IEEE Transactions on Pattern Analysis and Machine Intelligence},
  33(5):931--944, 2010.

\bibitem{so2007theory}
Anthony Man-Cho So and Yinyu Ye.
\newblock Theory of semidefinite programming for sensor network localization.
\newblock {\em Mathematical Programming}, 109(2):367--384, 2007.

\bibitem{varol2012constrained}
Aydin Varol, Mathieu Salzmann, Pascal Fua, and Raquel Urtasun.
\newblock A constrained latent variable model.
\newblock In {\em 2012 IEEE conference on computer vision and pattern
  recognition}, pages 2248--2255. Ieee, 2012.

\bibitem{wang2008further}
Zizhuo Wang, Song Zheng, Yinyu Ye, and Stephen Boyd.
\newblock Further relaxations of the semidefinite programming approach to
  sensor network localization.
\newblock {\em SIAM Journal on Optimization}, 19(2):655--673, 2008.

\end{thebibliography}


\begin{thebibliography}{10}\itemsep=-1pt

\bibitem{amidror2002scattered}
Isaac Amidror.
\newblock Scattered data interpolation methods for electronic imaging systems:
  a survey.
\newblock {\em Journal of electronic imaging}, 11(2):157--176, 2002.

\bibitem{chhatkuli2014non}
Ajad Chhatkuli, Daniel Pizarro, and Adrien Bartoli.
\newblock Non-rigid shape-from-motion for isometric surfaces using
  infinitesimal planarity.
\newblock In {\em BMVC}, 2014.

\bibitem{chhatkuli2017inextensible}
Ajad Chhatkuli, Daniel Pizarro, Toby Collins, and Adrien Bartoli.
\newblock Inextensible non-rigid structure-from-motion by second-order cone
  programming.
\newblock {\em IEEE transactions on pattern analysis and machine intelligence},
  40(10):2428--2441, 2017.

\bibitem{dai2014simple}
Yuchao Dai, Hongdong Li, and Mingyi He.
\newblock A simple prior-free method for non-rigid structure-from-motion
  factorization.
\newblock {\em International Journal of Computer Vision}, 107(2):101--122,
  2014.

\bibitem{fukuda2001exploiting}
Mituhiro Fukuda, Masakazu Kojima, Kazuo Murota, and Kazuhide Nakata.
\newblock Exploiting sparsity in semidefinite programming via matrix completion
  i: General framework.
\newblock {\em SIAM Journal on optimization}, 11(3):647--674, 2001.

\bibitem{gotardo2011kernel}
Paulo~FU Gotardo and Aleix~M Martinez.
\newblock Kernel non-rigid structure from motion.
\newblock In {\em 2011 International Conference on Computer Vision}, pages
  802--809. IEEE, 2011.

\bibitem{hamsici2012learning}
Onur~C Hamsici, Paulo~FU Gotardo, and Aleix~M Martinez.
\newblock Learning spatially-smooth mappings in non-rigid structure from
  motion.
\newblock In {\em European Conference on computer vision}, pages 260--273.
  Springer, 2012.

\bibitem{ji2017maximizing}
Pan Ji, Hongdong Li, Yuchao Dai, and Ian Reid.
\newblock " maximizing rigidity" revisited: a convex programming approach for
  generic 3d shape reconstruction from multiple perspective views.
\newblock In {\em Proceedings of the IEEE International Conference on Computer
  Vision}, pages 929--937, 2017.

\bibitem{parashar2017isometric}
Shaifali Parashar, Daniel Pizarro, and Adrien Bartoli.
\newblock Isometric non-rigid shape-from-motion with riemannian geometry solved
  in linear time.
\newblock {\em IEEE transactions on pattern analysis and machine intelligence},
  40(10):2442--2454, 2017.

\bibitem{perriollat2013computational}
Mathieu Perriollat and Adrien Bartoli.
\newblock A computational model of bounded developable surfaces with
  application to image-based three-dimensional reconstruction.
\newblock {\em Computer Animation and Virtual Worlds}, 24(5):459--476, 2013.

\bibitem{wang2008further}
Zizhuo Wang, Song Zheng, Yinyu Ye, and Stephen Boyd.
\newblock Further relaxations of the semidefinite programming approach to
  sensor network localization.
\newblock {\em SIAM Journal on Optimization}, 19(2):655--673, 2008.

\end{thebibliography}
}

\end{document}


\title{Supplementary Materials}
\maketitle
\allowdisplaybreaks
\glsunset{sl} \glsunset{nrsfm} \glsunset{dsl} \glsunset{pp} \glsunset{mdh} \glsunset{snr} \glsunset{qnr} \glsunset{hnr} \glsunsetall


\begin{table*}[t]
\centering  
\begin{tabular}{>{\centering\arraybackslash}m{2cm} >{\centering\arraybackslash}m{6cm} >{\centering\arraybackslash}m{3cm} >{\centering\arraybackslash}m{4cm}}
    \toprule
  &   \textbf{Problem formulation}    &    \textbf{Assumptions} & \textbf{Existing convex methods}\\ \cline{2-4}
Isometric \y{nrsfm}  
        &   $\begin{gathered}
 \min \quad 0, \\ \text{s.t.:} \mathfrak{g}_{I}(i,j,q) = \mathcal{G}_{2}(j,q) \\
 \forall i \in [1,n], j \in [1,m], (j,q) \in \mathcal{E}_2
 \end{gathered}$
                        &  \center Euclidean distances approximate  geodesic distances       & -                \\ \hline
Isometric \y{nrsfm} with \y{mdh}  
        &   $\begin{gathered}
 \min \quad - \sum_{1,j=1}^{n,m}\delta_{i,j}, \\ \text{s.t.:} \mathfrak{g}_{I}(i,j,q) = \mathcal{G}_{2}(j,q) \\
 \forall i \in [1,n], j \in [1,m], (j,q) \in \mathcal{E}_2
 \end{gathered}$
                        &  \center Euclidean distances approximate  geodesic distances, \y{mdh}       & -                \\ \hline
Inextensible \y{nrsfm} with \y{mdh}  
        &   $\begin{gathered}
 \min \quad - \sum_{1,j=1}^{n,m}\delta_{i,j}, \\ \text{s.t.:} \mathfrak{g}_{I}(i,j,q) \leq \mathcal{G}_{2}(j,q) \\
 \forall i \in [1,n], j \in [1,m], (j,q) \in \mathcal{E}_2
 \end{gathered}$
                        &  \center Euclidean distances approximate  geodesic distances, \y{mdh}, inextensibility       & \cite{chhatkuli2017inextensible} \cite{ji2017maximizing}                \\
    \bottomrule
\end{tabular}
\caption{Hierarchy of isometric \y{nrsfm} problems. The solutions we provide to \y{snr} and \y{qnr} are reformulations of `isometric \y{nrsfm} with \y{mdh}', while purely `isometric \y{nrsfm}' is uninteresting, since it does not have an unique solution}
\label{tab_summary}
\end{table*}

We describe additional details of our article in this supplementary material. To recap, we had provided convex solutions to isometric and quasi-isometric \y{nrsfm} in the main article, as well as a convex solution for quasi-equiareal \y{nrsfm}. Table~(\ref{tab_summary}) summarizes the position of our contributions w.r.t existing \y{nrsfm} state-of-the-art.

Following up from our main article, we provide the following: section~(\ref{sec_1}) provides the proof of Lemma 1, section~(\ref{sec:quartic}) provides the details of the quartics involved in area computation, section~(\ref{sec_area_gram}) provides details of how to express these quartics as linear combination of elements of the Gram matrices $\{\mathfrak{T}_i\}$ and $\{\mathfrak{U}_i\}$, section~(\ref{sec_acc}) provides strategies for accelerating the \y{pp} solution for \y{hnr}, section~(\ref{sec_sim}) details the methodology for generating the synthetic data used in our experiments, and section~(\ref{sec_exp}) provides additional experimental results, both qualitative and quantitative.

\section{Proof of Lemma 1}\label{sec_1}
\begin{figure}[t]
    \centering
    \begin{overpic}[width=\columnwidth, trim=0 0 0 0,clip]{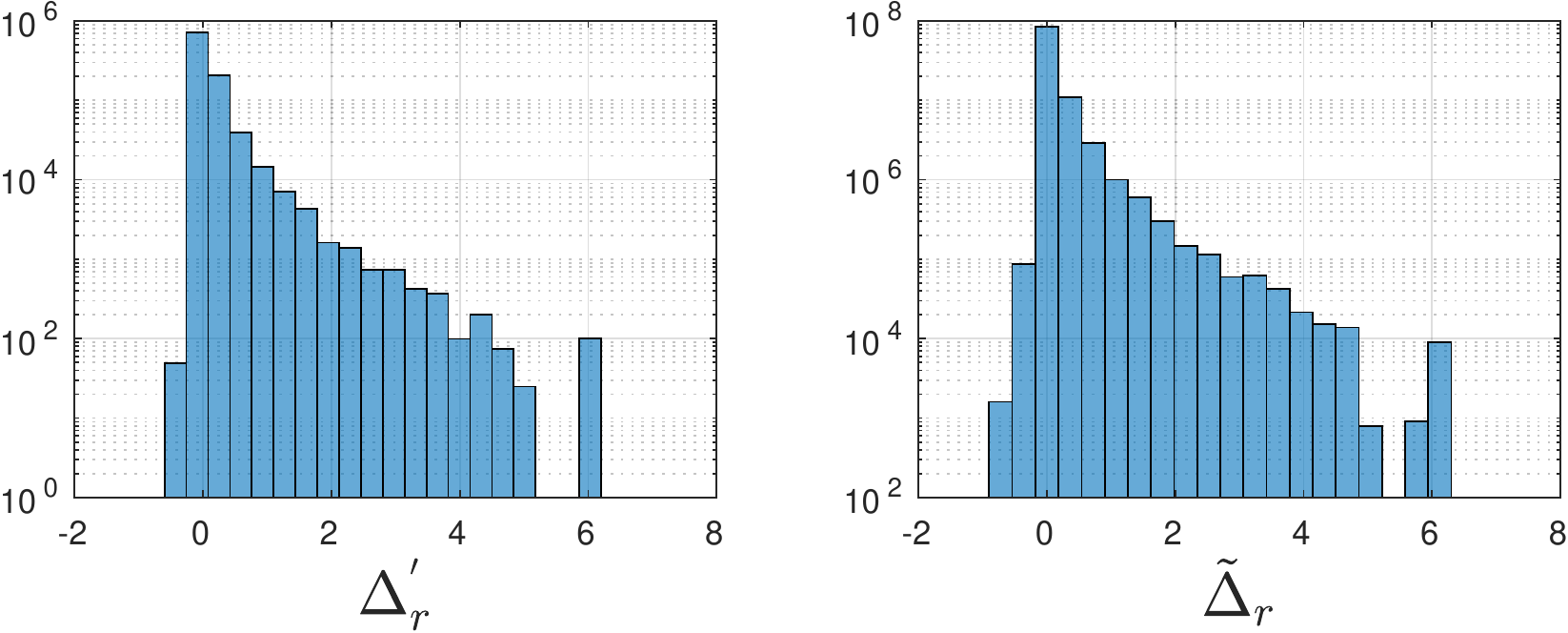}
    \end{overpic}
    \caption{\textit{left:} the discriminant $\Delta_r^{'}$ of the solution for $\delta_{i,r}^{'}$ from equation~(\ref{eqn:first_expr}), sampled over 1 million times with randomly drawn 3D vertices forming a triangle with $h_1 \in [0, 0.1]$, \textit{right}: the discriminant $\tilde{\Delta}_r$ of the solution for $\tilde{\delta}_{i,r}$ from equation~(\ref{eqn:second_expr}), sampled over 1 million times with similarly drawn 3D vertices with $(h_1, h_2) \in [0, 0.1]$, where the average edge length of all the triangles is $\sim 0.6$ units. $(\Delta_r^{'}, \tilde{\Delta}_r)$ are positive, ensuring real solution, for a clear majority of cases }
    \label{fig:lem1}
\end{figure}
A 2-simplex with vertices lying strictly on three non-degenerate \y{sl}s with an additional equiareality constraint has 2 degrees of freedom. Therefore, with 3 independent variables (using \y{dsl} for the simplest parameterisation), we have the following minimal case: 4 vertices forming a fully connected graph, i.e., $|\mathcal{E}_{2}| = 5$ and $|\mathcal{E}_{3}| = 3$ observed across 3 images gives us 12 equations for 12 variables, hence \textit{solvable} in theory. However, lemma~1 posits that such a solution, if it exists, is non-unique for the \y{nrsfm} problem. 

The squared area of a 2-simplex enclosed by $(\mathbf{P}_{i,j}, \mathbf{P}_{i,q}, \mathbf{P}_{i,r})$ is given by the element-wise expansion of the vector product of any two edges of the triangle formed by those three vertices, given by:
\begin{gather}\label{eqn:area_dsl}
    \mathfrak{h}_E^{\delta}(i,j,q,r) = \Big(\mathrm{Area}(\mathbf{P}_{i,j}, \mathbf{P}_{i,q}, \mathbf{P}_{i,r})\Big)^2 = \notag\\ \Big( \frac{1}{2} \| (\delta_{i,j}\mathfrak{d}_{i,j} - \delta_{i,q}\mathfrak{d}_{i,q}) \times (\delta_{i,r}\mathfrak{d}_{i,r} - \delta_{i,q}\mathfrak{d}_{i,q}) \| \Big)^2 \notag\\
    = \frac{1}{4}\bigg(\Big(G_1^{i,j,q,r} \delta_{i,q}^2 + G_2^{i,j,q,r} \delta_{i,r} \delta_{i,q} + G_3^{i,j,q,r} \delta_{i,r}^2\Big) \delta_{i,j}^2 + \notag\\ \Big( G_4^{i,j,q,r} \delta_{i,r} \delta_{i,q}^2 + G_5^{i,j,q,r} \delta_{i,r}^2 \delta_{i,q} \Big) \delta_{i,j} + G_6^{i,j,q,r} \delta_{i,q}^2 \delta_{i,r}^2\bigg),
\end{gather}
where $G_1^{i,j,q,r}, \hdots, G_6^{i,j,q,r}$ are data-terms derived from the \y{sl}s (check section~\ref{sec:quartic} for details).

The proof of lemma~1 follows.
\begin{proof}
Given the triangle enclosed by $(\mathbf{P}_{i,j}, \mathbf{P}_{i,q}, \mathbf{P}_{i,r})$, we claim the following:

\textit{Ansatz 1}: for some small displacement $h_1$ applied to $\delta_{i,j}$, there exists a corrective displacement $\delta_{i,r}^{'}$ such that the triangles at depth $(\delta_{i,j}, \delta_{i,q}, \delta_{i,r})$ and $(\delta_{i,j}+h_1, \delta_{i,q}, \delta_{i,r}^{'})$ of \y{sl}s $(\mathfrak{d}_{i,j},\mathfrak{d}_{i,q},\mathfrak{d}_{i,r})$ have the same area. We prove this by equating the squared area of the two triangles, giving us:
    \begin{gather}
        \bigg(\Big(G_1^{i,j,q,r} \delta_{i,q}^2 + G_2^{i,j,q,r} \delta_{i,r} \delta_{i,q} + G_3^{i,j,q,r} \delta_{i,r}^2\Big) \delta_{i,j}^2 + \notag\\ \Big( G_4^{i,j,q,r} \delta_{i,r} \delta_{i,q}^2 + G_5^{i,j,q,r} \delta_{i,r}^2 \delta_{i,q} \Big) \delta_{i,j} + G_6^{i,j,q,r} \delta_{i,q}^2 \delta_{i,r}^2\bigg) = \notag\\ \bigg(\Big(G_1^{i,j,q,r} \delta_{i,q}^2 + G_2^{i,j,q,r} \delta_{i,r}^{'} \delta_{i,q} + G_3^{i,j,q,r} (\delta_{i,r}^{'})^2\Big) (\delta_{i,j} + h_1)^2 \notag\\ + \Big( G_4^{i,j,q,r} \delta_{i,r}^{'} \delta_{i,q}^2 + G_5^{i,j,q,r} (\delta_{i,r}^{'})^2 \delta_{i,q} \Big) (\delta_{i,j} + h_1) \notag\\+ G_6^{i,j,q,r} \delta_{i,q}^2 (\delta_{i,r}^{'})^2\bigg)
        \label{eqn:first_expr}
    \end{gather}
Solving for $\delta_{i,r}^{'}$ leads to a hexic discriminant\footnote{Maple scripts for generating the full solution of $\delta_{i,r}^{'}$ (and $\tilde{\delta}_{i,r}$ later) has been provided with the supplementary}. Hence existence of a real solution cannot be ensured analytically. However, we show with extensive exhaustive randomized simulations that for all small values of $h_1$, a real $\delta_{i,r}^{'}$ is obtainable in most cases. Check figure~\ref{fig:lem1} for details. 

\textit{Ansatz 2}: there exist small displacements $h_1$ and $h_2$ which, when applied to $\delta_{i,j}$ and $\delta_{i,q}$, can change the area of the triangle in $(\mathbf{P}_{i,j}, \mathbf{P}_{i,q}, \mathbf{P}_{i,r})$ but can be exactly compensated by a corrective displacement $\tilde{\delta}_{i,r}$. This is derived by similarly equating the two squared areas:
\begin{gather}\label{eqn:second_expr}
    \bigg(\Big(G_1^{i,j,q,r} \delta_{i,q}^2 + G_2^{i,j,q,r} \delta_{i,r} \delta_{i,q} + G_3^{i,j,q,r} \delta_{i,r}^2\Big) \delta_{i,j}^2 + \notag\\ \Big( G_4^{i,j,q,r} \delta_{i,r} \delta_{i,q}^2 + G_5^{i,j,q,r} \delta_{i,r}^2 \delta_{i,q} \Big) \delta_{i,j} + G_6^{i,j,q,r} \delta_{i,q}^2 \delta_{i,r}^2\bigg) = \notag\\ \bigg(\Big(G_1^{i,j,q,r} (\delta_{i,q} + h_2)^2 + G_2^{i,j,q,r} \tilde{\delta}_{i,r} (\delta_{i,q} + h_2) + G_3^{i,j,q,r} \tilde{\delta}_{i,r}^2\Big) \notag\\ (\delta_{i,j} + h_1)^2 + \Big( G_4^{i,j,q,r} \tilde{\delta}_{i,r} (\delta_{i,q} + h_2)^2 + G_5^{i,j,q,r} \tilde{\delta}_{i,r}^2 \notag\\ (\delta_{i,q} + h_2) \Big) (\delta_{i,j} + h_1) + G_6^{i,j,q,r} (\delta_{i,q} + h_2)^2 \tilde{\delta}_{i,r}^2\bigg)
        \label{eqn:second_expr}
\end{gather}
Solving for $\tilde{\delta}_{i,r}$ leads to another solution with a hexic discriminant. Again using randomized simulations, we show that there exist many combinations of $(h_1, h_2)$ with $(\mathbf{P}_{i,j}, \mathbf{P}_{i,q}, \mathbf{P}_{i,r})$ which leads to a real solution for $\tilde{\delta}_{i,r}$. as shown in figure~(\ref{fig:lem1}).

Combining \textit{ansatz 1} with \textit{ansatz 2}, we can say the following: given vertices $\mathbf{P}_{i,1}, \hdots, \mathbf{P}_{i,m}$, there can exist a locally acyclic graph structure on these vertices such that the change of area due to arbitrary displacement of one of the vertices can be compensated by some other displacement of some other vertices. Hence, the vertices are non-unique. Given that in NRSfM the areas are also unknown, not just the depth of the points, this makes purely equiareal NRSfM highly ambiguous.
\end{proof}

Such a proof precludes the possibility of defining specially constrained graph structures (e.g.: locally cyclic Whitney triangulation etc.) on the vertices to enforce unique solution, but such graph structures are unavailable for \y{nrsfm}, hence not considered.

\section{Quartics for Area Computation}\label{sec:quartic}
We now describe the details of $\mathfrak{h}_E^{\delta}$ and $\mathfrak{h}_E^{P}$, the squared area of 2-simplices in \y{dsl} and \y{pp} respectively. Beginning with $\mathfrak{h}_E^{\delta}$, which can be expressed by equation~(\ref{eqn:area_dsl}), where:
\begin{subequations}
    \begin{gather}
        G_1^{i,j,q,r} = (y_{i,q}^2 + z_{i,q}^2)x_{i,j}^2 - 2(y_{i,q} x_{i,q} y_{i,j} + z_{i,q} x_{i,q} z_{i,j}) x_{i,j} \notag \\+ (z_{i,q}^2 + x_{i,q}^2) y_{i,j}^2- 2 z_{i,q} y_{i,q} z_{i,j} y_{i,j} + (y_{i,q}^2 + x_{i,q}^2) z_{i,j}^2, \\
        G_2^{i,j,q,r} = -2(y_{i,q}y_{i,r} + z_{i,r}z_{i,q})x_{i,j}^2 - 2\Big((-x_{i,r}y_{i,q} \notag\\- x_{i,q}y_{i,r})y_{i,j} + (-x_{i,r}z_{i,q} - z_{i,r}x_{i,q})z_{i,j}\Big)x_{i,j} - 2(z_{i,r}z_{i,q} \notag\\+ x_{i,q}x_{i,r})y_{i,j}^2 + 2(z_{i,r}y_{i,q} + z_{i,q}y_{i,r})z_{i,j}y_{i,j} \notag\\- 2(y_{i,q}y_{i,r} + x_{i,q}x_{i,r})z_{i,j}^2, \\
        G_3^{i,j,q,r} = (y_{i,r}^2 + z_{i,r}^2)x_{i,j}^2 - 2(y_{i,r}x_{i,r}y_{i,j} + z_{i,r}x_{i,r}z_{i,j})x_{i,j} \notag\\+ (x_{i,r}^2 + z_{i,r}^2)y_{i,j}^2 - 2y_{i,j}z_{i,j}y_{i,r}z_{i,r} + z_{i,j}^2(x_{i,r}^2 + y_{i,r}^2), \\
    G_4^{i,j,q,r} = 2\Big((y_{i,q}y_{i,r} + z_{i,r}z_{i,q})x_{i,q} - x_{i,r}(y_{i,q}^2 \notag\\+ (z_{i,q})^2)\Big)x_{i,j} - 2(-z_{i,r}z_{i,q}y_{i,q} - x_{i,r}y_{i,q}x_{i,q} + z_{i,q}^2y_{i,r} \notag\\+ x_{i,q}^2y_{i,r})y_{i,j}- 2(y_{i,q}^2z_{i,r} - z_{i,q}y_{i,r}y_{i,q} - x_{i,r}z_{i,q}x_{i,q} \notag\\+ x_{i,q}^2z_{i,r})z_{i,j}, \\
        G_5^{i,j,q,r} = 2\Big(-(y_{i,r}^2 + z_{i,r}^2)x_{i,q} - x_{i,r}(-y_{i,q}y_{i,r} \notag\\- z_{i,r}z_{i,q})\Big)x_{i,j} - 2\Big(-x_{i,r}y_{i,r}x_{i,q} + (x_{i,r}^2 + z_{i,r}^2)y_{i,q} \notag\\- z_{i,q}y_{i,r}z_{i,r})y_{i,j} - 2(-x_{i,r}z_{i,r}x_{i,q} - y_{i,r}z_{i,r}y_{i,q} + z_{i,q}(x_{i,r}^2 \notag\\+ y_{i,r}^2)\Big)z_{i,j}, \\
        G_6^{i,j,q,r} = (y_{i,r}^2 + z_{i,r}^2)x_{i,q}^2 - 2x_{i,r}(y_{i,q}y_{i,r} + z_{i,r}z_{i,q})x_{i,q} \notag\\+ (x_{i,r}^2 + z_{i,r}^2)y_{i,q}^2 - 2z_{i,q}y_{i,r}z_{i,r}y_{i,q} + z_{i,q}^2(x_{i,r}^2 + y_{i,r}^2).
    \end{gather}
\end{subequations}
Similarly expanding for $\mathfrak{h}_E^{P}$, we have:
    \begin{gather}
        \mathfrak{h}_E^{P}(i,j,q,r) = \Big( \mathrm{Area}(\mathbf{P}_{i,j}, \mathbf{P}_{i,q}, \mathbf{P}_{i,r}) \Big)^2 \notag\\= \frac{1}{4} \Big(X_{i,j}^2Y_{i,q}^2 - 2X_{i,j}^2Y_{i,q}Y_{i,r} + X_{i,j}^2Y_{i,r}^2 + X_{i,j}^2Z_{i,q}^2 \notag\\- 2X_{i,j}^2Z_{i,q}Z_{i,r} +
        X_{i,j}^2Z_{i,r}^2 - 2X_{i,j}X_{i,q}Y_{i,j}Y_{i,q} \notag\\+ 2X_{i,j}X_{i,q}Y_{i,j}Y_{i,r} + 2X_{i,j}X_{i,q}Y_{i,q}Y_{i,r} - 2X_{i,j}X_{i,q}Y_{i,r}^2 -\notag\\
        2X_{i,j}X_{i,q}Z_{i,j}Z_{i,q} + 2X_{i,j}X_{i,q}Z_{i,j}Z_{i,r} + 2X_{i,j}X_{i,q}Z_{i,q}Z_{i,r} \notag\\- 2X_{i,j}X_{i,q}Z_{i,r}^2 + 2X_{i,j}X_{i,r}Y_{i,j}Y_{i,q} - 
        2X_{i,j}X_{i,r}Y_{i,j}Y_{i,r} \notag\\- 2X_{i,j}X_{i,r}Y_{i,q}^2 + 2X_{i,j}X_{i,r}Y_{i,q}Y_{i,r} + 2X_{i,j}X_{i,r}Z_{i,j}Z_{i,q} \notag\\- 2X_{i,j}X_{i,r}Z_{i,j}Z_{i,r} - 
        2X_{i,j}X_{i,r}Z_{i,q}^2 + 2X_{i,j}X_{i,r}Z_{i,q}Z_{i,r} \notag\\+ X_{i,q}^2Y_{i,j}^2 - 2X_{i,q}^2Y_{i,j}Y_{i,r} + X_{i,q}^2Y_{i,r}^2 \notag\\+ X_{i,q}^2Z_{i,j}^2 -
        2X_{i,q}^2Z_{i,j}Z_{i,r} + X_{i,q}^2Z_{i,r}^2 \notag\\- 2X_{i,q}X_{i,r}Y_{i,j}^2 + 2X_{i,q}X_{i,r}Y_{i,j}Y_{i,q} + 2X_{i,q}X_{i,r}Y_{i,j}Y_{i,r} \notag\\-
        2X_{i,q}X_{i,r}Y_{i,q}Y_{i,r} - 2X_{i,q}X_{i,r}Z_{i,j}^2 + 2X_{i,q}X_{i,r}Z_{i,j}Z_{i,q} +\notag\\ 2X_{i,q}X_{i,r}Z_{i,j}Z_{i,r} - 2X_{i,q}X_{i,r}Z_{i,q}Z_{i,r} +
        X_{i,r}^2Y_{i,j}^2 \notag\\- 2X_{i,r}^2Y_{i,j}Y_{i,q} + X_{i,r}^2Y_{i,q}^2 + X_{i,r}^2Z_{i,j}^2 \notag\\- 2X_{i,r}^2Z_{i,j}Z_{i,q} + X_{i,r}^2Z_{i,q}^2 +
        Y_{i,j}^2Z_{i,q}^2 \notag\\- 2Y_{i,j}^2Z_{i,q}Z_{i,r} +  Y_{i,j}^2Z_{i,r}^2 - 2Y_{i,j}Y_{i,q}Z_{i,j}Z_{i,q} \notag\\+ 2Y_{i,j}Y_{i,q}Z_{i,j}Z_{i,r} +
        2Y_{i,j}Y_{i,q}Z_{i,q}Z_{i,r} - 2Y_{i,j}Y_{i,q}Z_{i,r}^2 \notag\\+ 2Y_{i,j}Y_{i,r}Z_{i,j}Z_{i,q} - 2Y_{i,j}Y_{i,r}Z_{i,j}Z_{i,r} - 2Y_{i,j}Y_{i,r}Z_{i,q}^2 \notag\\+
        2Y_{i,j}Y_{i,r}Z_{i,q}Z_{i,r} + Y_{i,q}^2Z_{i,j}^2 - 2Y_{i,q}^2Z_{i,j}Z_{i,r} \notag\\+ Y_{i,q}^2Z_{i,r}^2 - 2Y_{i,q}Y_{i,r}Z_{i,j}^2 + 2Y_{i,q}Y_{i,r}Z_{i,j}Z_{i,q} \notag\\+ 
        2Y_{i,q}Y_{i,r}Z_{i,j}Z_{i,r} - 2Y_{i,q}Y_{i,r}Z_{i,q}Z_{i,r} + Y_{i,r}^2Z_{i,j}^2 \notag\\- 2Y_{i,r}^2Z_{i,j}Z_{i,q} + Y_{i,r}^2Z_{i,q}^2\Big). \label{eqn:area_hp}
    \end{gather}
\section{Area from Gram Matrices}\label{sec_area_gram}
We now describe the functions $\mathfrak{g}_E^{\delta}$ and $\mathfrak{g}_E^{P}$ that compute squared area from linear combination of the elements of $\mathfrak{T}_i$ and $\mathfrak{U}_i$ respectively. Beginning with $\mathfrak{g}_E^{\delta}$:
\begin{equation}
    \begin{gathered}
        \mathfrak{g}_E^{\delta}(i,j,q,r) = \frac{1}{4}\begin{pmatrix}G_1^{i,j,q,r}, \hdots,G_6^{i,j,q,r} \end{pmatrix}^{\top}\begin{pmatrix}\mathfrak{T}_{i,a_1,a_1}\\ \mathfrak{T}_{i,a_1,c_1}\\ \mathfrak{T}_{i,c_1,c_1}\\ \mathfrak{T}_{i,a_1,b_1} \\ \mathfrak{T}_{i,b_1,c_1}\\ \mathfrak{T}_{i,b_1,b_1}\end{pmatrix},
    \end{gathered}
\end{equation}
where $\mathfrak{E}_{i,a_1} = \delta_{i,j}\delta_{i,q}$, $\mathfrak{E}_{i,b_1} = \delta_{i,q}\delta_{i,r}$ and, $\mathfrak{E}_{i,c_1} = \delta_{i,j}\delta_{i,r}$.

Next, we describe $\mathfrak{g}_E^{P}$ as a linear combination of elements of $\mathfrak{U}_i$. Deriving from equation~(\ref{eqn:area_hp}):
    \begin{gather}
     \mathfrak{g}_E^{P}(i,j,q,r) = \frac{1}{4}\bigg(\mathfrak{U}_{i,d_1,d_1}  - 2\mathfrak{U}_{i,d_1,f_1}  + \mathfrak{U}_{i,f_1,f_1}  + \mathfrak{U}_{i,e_1,e_1} \notag\\ - 2\mathfrak{U}_{i,e_1,g_1}  +  \mathfrak{U}_{i,g_1,g_1}  - 2\mathfrak{U}_{i,d_1,h_1}  + 2\mathfrak{U}_{i,f_1,h_1} \notag\\  + 2\mathfrak{U}_{i,d_1,p_1}  - 2\mathfrak{U}_{i,f_1,p_1}  -   2\mathfrak{U}_{i,e_1,l_1} 
 + 2\mathfrak{U}_{i,g_1,l_1} \notag\\ + 2\mathfrak{U}_{i,e_1,q_1}  - 2\mathfrak{U}_{i,g_1,q_1}   + 2\mathfrak{U}_{i,d_1,j_1}  - 2\mathfrak{U}_{i,f_1,j_1}  - \notag\\  2\mathfrak{U}_{i,d_1,r_1}  + 2\mathfrak{U}_{i,f_1,r_1}  + 2\mathfrak{U}_{i,e_1,n_1}   - 2\mathfrak{U}_{i,g_1,n_1} \notag\\  - 2\mathfrak{U}_{i,e_1,t_1}  +   2\mathfrak{U}_{i,g_1,t_1} 
 + \mathfrak{U}_{i,h_1,h_1}  - 2\mathfrak{U}_{i,h_1,p_1} \notag\\ + \mathfrak{U}_{i,p_1,p_1}   + \mathfrak{U}_{i,l_1,l_1}  - 2\mathfrak{U}_{i,l_1,q_1}  +  \mathfrak{U}_{i,q_1,q_1}  - 2\mathfrak{U}_{i,h_1,j_1}  \notag\\ + 2\mathfrak{U}_{i,h_1,r_1}   + 2\mathfrak{U}_{i,j_1,p_1}  - 2\mathfrak{U}_{i,p_1,r_1}  -  2\mathfrak{U}_{i,l_1,n_1} \notag\\ 
 + 2\mathfrak{U}_{i,l_1,t_1}  + 2\mathfrak{U}_{i,n_1,q_1}  -  2\mathfrak{U}_{i,q_1,t_1}   + \mathfrak{U}_{i,j_1,j_1}  - \notag\\ 2\mathfrak{U}_{i,j_1,r_1}  +   \mathfrak{U}_{i,r_1,r_1}  + \mathfrak{U}_{i,n_1,n_1}  - 2\mathfrak{U}_{i,n_1,t_1}   + \mathfrak{U}_{i,t_1,t_1} \notag\\  + \mathfrak{U}_{i,i_1,i_1}  -  2\mathfrak{U}_{i,i_1,k_1} 
 + \mathfrak{U}_{i,k_1,k_1}  - 2\mathfrak{U}_{i,i_1,m_1}  + 2\mathfrak{U}_{i,k_1,m_1}  \notag\\  + 2\mathfrak{U}_{i,i_1,s_1}  - 2\mathfrak{U}_{i,k_1,s_1}  +  2\mathfrak{U}_{i,i_1,o_1}  - 2\mathfrak{U}_{i,k_1,o_1}  -  \notag\\ 2\mathfrak{U}_{i,i_1,u_1}   + 2\mathfrak{U}_{i,k_1,u_1}  + \mathfrak{U}_{i,m_1,m_1}  -  2\mathfrak{U}_{i,m_1,s_1} \notag\\ 
 + \mathfrak{U}_{i,s_1,s_1}  - 2\mathfrak{U}_{i,m_1,o_1}  + 2\mathfrak{U}_{i,m_1,u_1}   + 2\mathfrak{U}_{i,o_1,s_1} \notag\\  - 2\mathfrak{U}_{i,s_1,u_1}  +  \mathfrak{U}_{i,o_1,o_1}  - 2\mathfrak{U}_{i,o_1,u_1}  + \mathfrak{U}_{i,u_1,u_1} \bigg),
    \end{gather}
where $\mathfrak{Q}_{i,d_1} = X_{i,j} Y_{i,q}$, $\mathfrak{Q}_{i,f_1} = X_{i,j} Y_{i,r}$, and so on. Clearly, $(d_1,\hdots, u_1)$ is exactly mappable from the definition of $\mathfrak{Q}_i$ in equation~(15) of the main paper using the map $\vartheta(i,j,q)$.

\glsunset{hnr} \glsunset{pp}
\section{Accelerating \y{pp} Solution for \y{hnr}}\label{sec_acc}
We now present a method for accelerating the \y{hnr} solution in equation~(18) of the main paper. 

\textbf{Edge based relaxation}. If $\mathcal{E}_{3}$ does not represent all 2-simplices from a fully connected graph in the vertices of $\mathbf{X}_i$, $\{\mathfrak{U}_i\}$ is a partial \y{psd} matrix with a sparsely connected structure and standard \y{sdp} solvers do not guarantee a rank-1 \y{psd} completion of $\{\mathfrak{U}_i\}$ while solving equations~(18) \cite{wang2008further}. However, for our \y{nrsfm} problem, an edge in $\mathcal{E}_2$ cannot be connected beyond its local neighborhood, hence an exact rank-1 completion of $\{\mathfrak{U}_i\}$ is not mandatory. However, there exist principal submatrices of $\{\mathfrak{U}_i\}$ (in $\mathbf{S}^{18}_+$) corresponding to the connected 2-simplices for which an exact rank-1 completion is desirable. Therefore we can relax the \y{psd} constraint on the full matrices of $\{\mathfrak{U}_i\}$ in equation~(18) into multiple \y{psd} constraints of $\mathbf{S}^{18}_+$ each. This approach roughly corresponds to the so-called `edge based relaxation' techniques for \y{sdp} and allows us to utilise \y{hnr} for problems with a moderate number of point correspondences. Since \y{sdp} has an arithmetic operation complexity of $\mathcal{O}(n^3)$, splitting into smaller matrices accelerates the solution significantly. Concretely, we do this in two steps.

\textbf{Auxiliary variable}. As the \textit{first} step, we introduce the auxiliary variable $\mathfrak{V}_{i} = \mathfrak{Q}_i^{'}(\mathfrak{Q}_i^{'})^{\top}$, such that $\mathfrak{Q}_i^{'} = \Big(\vartheta(i,j,q)^{\top}, \vartheta(i,q,r)^{\top}, \vartheta(i,j,r)^{\top}, \hdots\Big)^{\top}$. $\mathfrak{V}_{i}$ allows us to split $\mathfrak{U}_i$ into: 1) $\mathfrak{V}_{i}$, and 2) $\mathfrak{S}_i^{'}$ (once again). Therefore, equation~(18) modifies to:
\begin{subequations}
    \label{eqn:hnr_xyz_sol_2}
    \begin{gather}
        \min_{\{\mathfrak{S}_{i}^{'}\}, \{\mathfrak{V}_{i}\},\mathcal{G}_{2},\mathcal{G}_{3}}   \sum_{i=1}^n\bigg( \Big(f^{\Pi}_{i,j}(\mathfrak{S}_{i}^{'}) + \Tr(\mathfrak{S}_{i}^{'}) + \Tr(\mathfrak{V}_{i})\Big) + \notag \\ \sum_{j,q=1}^{m,\mathcal{E}_{2}(j)} \Big( \lambda_I |\mathfrak{g}_I^{P}(i,j,q)  - \mathcal{G}_{2}(j,q)| -  f_{i,j}^{\delta}(\mathfrak{S}_{i}^{'})   + \notag \\
        \sum_{r=1}^{\mathcal{E}_{3}(j,q)}  \lambda_E|\mathfrak{g}_E^{P}(i,j,q,r) - \mathcal{G}_{3}(j,q,r)|\Big) \bigg)
        \label{eqn:hxs_1}\\
        \text{s.t.~~} (\mathfrak{S}^{'}_{i,j,q})^2 \leq \mathfrak{V}_{i,b_t^{'},b_t^{'}}, \sum_{j,q=1}^{m,\mathcal{E}_{2}(j)} \mathcal{G}_{2}(j,q) = 1, \mathfrak{S}^{'}_{i,1,1} = 1,\notag \\  \mathfrak{S}_i^{'} \in \mathbf{S}^{3m + 1}_+, \quad \mathfrak{V}_i \in \mathbf{S}^{6\tilde{p}_2}_{+}, \quad b_t^{'} = b_t - (3m + 1) \notag \\ \quad \forall \quad b_t \in \rho^k(j,q,r), t \in [1,18], i \in [i,n], \notag \\
        j \in [1,m], (j,q) \in \mathcal{E}_{2}, (j,q, r) \in \mathcal{E}_{3}, \lambda_I > 0, \label{eqn:hxs_3}
    \end{gather}
\end{subequations}

\textbf{Principal sub-matrices}. The \textit{second} modification comes from the principal sub-matrices of $\mathfrak{V}_{i}$. Following our strategy to limit usage of \y{psd} constraints to smaller matrices, we extract those principal sub-matrices of $\mathfrak{V}_{i}$ which represents the $18 \times 18$ matrix of the edges of each 2-simplex in $\mathcal{E}_3$. For doing this, we define a map $\beta^k$ such that:
\begin{equation}
    \beta^k_{j,q,r} (\mathfrak{V}_{i}) = \begin{pmatrix} \vartheta(i,j,q)^{\top} \\ \vartheta(i,q,r)^{\top} \\ \vartheta(i,j,r)^{\top} \end{pmatrix}^{\top} \begin{pmatrix} \vartheta(i,j,q)^{\top} \\ \vartheta(i,q,r)^{\top} \\ \vartheta(i,j,r)^{\top} \end{pmatrix} \in \mathbb{R}^{18 \times 18}
\end{equation}
This allows us to re-write equation~(\ref{eqn:hnr_xyz_sol_2}) with smaller \y{psd} constraints, given by:
\begin{subequations}\label{eqn:hnr_xyz_sol_3}
    \begin{gather}
        \min_{\{\mathfrak{S}_{i}^{'}\}, \{\mathfrak{V}_{i}\},\mathcal{G}_{2},\mathcal{G}_{3}}   \sum_{i=1}^n\bigg( \Big(f^{\Pi}_{i,j}(\mathfrak{S}_{i}^{'}) + \Tr(\mathfrak{S}_{i}^{'}) + \Tr(\mathfrak{V}_{i})\Big) + \notag \\ \sum_{j,q=1}^{m,\mathcal{E}_{2}(j)} \Big( \lambda_I |\mathfrak{g}_I^{P}(i,j,q)  - \mathcal{G}_{2}(j,q)| -  f_{i,j}^{\delta}(\mathfrak{S}_{i}^{'})   + \notag \\
        \sum_{r=1}^{\mathcal{E}_{3}(j,q)}  \lambda_E|\mathfrak{g}_E^{P}(i,j,q,r) - \mathcal{G}_{3}(j,q,r)|\Big) \bigg)
        \label{eqn:hxs_1}\\
        \text{s.t.~~} (\mathfrak{S}^{'}_{i,j,q})^2 \leq \mathfrak{V}_{i,b_t^{'},b_t^{'}}, \sum_{j,q=1}^{m,\mathcal{E}_{2}(j)} \mathcal{G}_{2}(j,q) = 1, \mathfrak{S}^{'}_{i,1,1} = 1,\notag \\  \mathfrak{S}_i^{'} \in \mathbf{S}^{3m + 1}_+, \quad \beta^k_{j,q,r}(\mathfrak{V}_i) \in \mathbf{S}^{18}_{+}, \quad b_t^{'} = b_t - (3m + 1) \notag \\ \quad \forall \quad b_t \in \rho^k(j,q,r), t \in [1,18], i \in [i,n], \notag \\
        j \in [1,m], (j,q) \in \mathcal{E}_{2}, (j,q, r) \in \mathcal{E}_{3}, \lambda_I > 0. \label{eqn:hxs_3}
    \end{gather}
\end{subequations}

\glsunset{lrsb}
\section{Simulators}\label{sec_sim}
We use two simulators, one for generating randomly deforming isometric surfaces and another for generating randomly deforming equiareal surfaces. We describe these two simulator below.

\textbf{Isometric surfaces}. The \y{isg} proposed by us is based on the method of \cite{perriollat2013computational}. Given a sparse grid of $m_a \times m_b$ flat 3D points $\mathbf{T}_s \in \mathbb{R}^{3 \times m}$ (with $m = m_a m_b$), we use the model from \cite{perriollat2013computational} to create $n$ bounded developable surfaces, represented as a pointcloud $\{ \mathbf{P}_{S,i} \in \mathbb{R}^{3 \times m}, \forall i \in [1,n]\}$. This is done by assigning random bending angles to the rulings used to map $\mathbf{T}_s$ to $\{\mathbf{P}_{S,i}\}$ using \cite{perriollat2013computational}. To $\{\mathbf{P}_{S,i}\}$, we assign random roto-translations $\mathbf{R}_{S,i}, \mathbf{t}_{S,i}$ and perspectively project to $\{\mathbf{p}_{S,i}\}$ using random realistic camera intrinsics. Optionally, we modify each projected point $\mathbf{p}_{S,i,j}$ by adding noise $\mathbf{p}^{'}_{S,i,j} = \mathbf{p}_{S,i,j} + x_{\sigma} \sigma_1$, where $\sigma_1 \sim \mathrm{U}(0,1)$ is drawn
from a random uniform distribution and the parameter $x_{\sigma}$ is a scalar multiplier. $\{\mathbf{p}_{S,i}^{'}\}$ gives our \textit{isometric simulated} data from \y{isg}.

\textbf{Equiareal surfaces}. For our \y{qsg}, we follow the same framework as \y{isg}, but randomly deform the developable surfaces $\{\mathbf{P}_{S,i}\}$ and use non-linear least squares to enforce equal area between 2-simplices. Given $\{\mathbf{P}_{S,i}\}$ from $\mathbf{T}_s$ using \cite{perriollat2013computational}, we obtain each randomly deformed 3D point as $\mathbf{P}_{E,i,j} = \mathbf{P}_{S,i,j} + \mathbf{1} \chi_{e} \sigma_2 $, where $\sigma_2 \sim \mathrm{U}(-0.5,0.5)$ is drawn from a random uniform distribution and $\mathbf{1}  = (1,1,1)$. We already have $\mathcal{G}_3$ from $\mathbf{T}_s$. Therefore, the quasi-equiareal pointcloud is obtained as:
\begin{equation}
    \begin{gathered}
        \{\mathbf{P}_{Q,i}\} = \\ \argmin_{\{\mathbf{P}_{E,i,j}\}} \sum_{i,j,q,r=1}^{n,m,\mathcal{E}_2(j),\mathcal{E}_3(j,q)} \bigg(\mathcal{G}_3(j,q,r) - \mathfrak{g}_E(i,j,q,r)\bigg)^2.
    \end{gathered}
    \label{eqn:qeg}
\end{equation}
We solve equation~(\ref{eqn:qeg}) with the Levenberg–Marquardt algorithm.

\section{Additional Experimental Results}\label{sec_exp}

\glsunset{s1} \glsunset{s2} \glsunset{q1} \glsunset{q2} \glsunset{h1} \glsunset{h2} \glsunset{gt} \glsunset{sb} \glsunset{sl} \glsunset{kp} \glsunset{wc} \glsunset{st}
We now present some additional results that could not be included in the main article due to space limitations:
\subsection{Additional qualitative results}
We show some qualitative results from our experiments on the Hulk, \y{kp}, \y{wc} and \y{st} datasets with \y{s1}, \y{s2}, \y{q1} and \y{q2} in table~(\ref{tab_2}). We show one frame for each of the datasets, densely interpolated using \cite{amidror2002scattered}. Our proposed methods reconstruct the shape accurately in all cases. Among the state-of-the-art methods, the zeroth order \y{nrsfm} techniques (\cite{chhatkuli2017inextensible}, \cite{ji2017maximizing}) are very accurate with the inextensible objects (\y{kp}, \y{wc}, Hulk), but we manage to outperform them in \y{kp} and Hulk, while in \y{wc}, we are the close second. \y{lrsb} methods (\cite{hamsici2012learning}, \cite{gotardo2011kernel}, \cite{dai2014simple}) do not seem to be very accurate with inextensible objects. However, in the highly extensible data of \y{st}, the \y{lrsb} methods work very accurately, the zeroth order methods suffer to maintain accuracy and our proposed method maintains accuracy at par with \y{lrsb} methods. These results on these datasets fit the claim that quasi-isometry is a better model for handling both inextensible and extensible objects, which is not the case for either the previous zeroth-order \y{nrsfm} methods or the \y{lrsb} methods. In figure~(\ref{fig_qual_synth_large}), we show some additional qualitative results on synthetic data using \y{s1}, \y{s2}, \y{q1} and \y{q2} on \y{isg}.

\newcommand{\imgwidth}{1.21cm}
\newcommand{\imgheight}{1.0cm}
\begin{table}[]
\makegapedcells
\centering
\caption{Qualitative results for Hulk, KP, WC, ST}
\label{tab_2}
\begin{tabular}{ccccc}
\toprule
& Hulk & \y{kp} & \y{wc} & \y{st} \\ \hline
\multicolumn{1}{c|}{\scriptsize{Input}} &   \cincludegraphics[height=\imgheight,trim=80 0 30 20,clip]{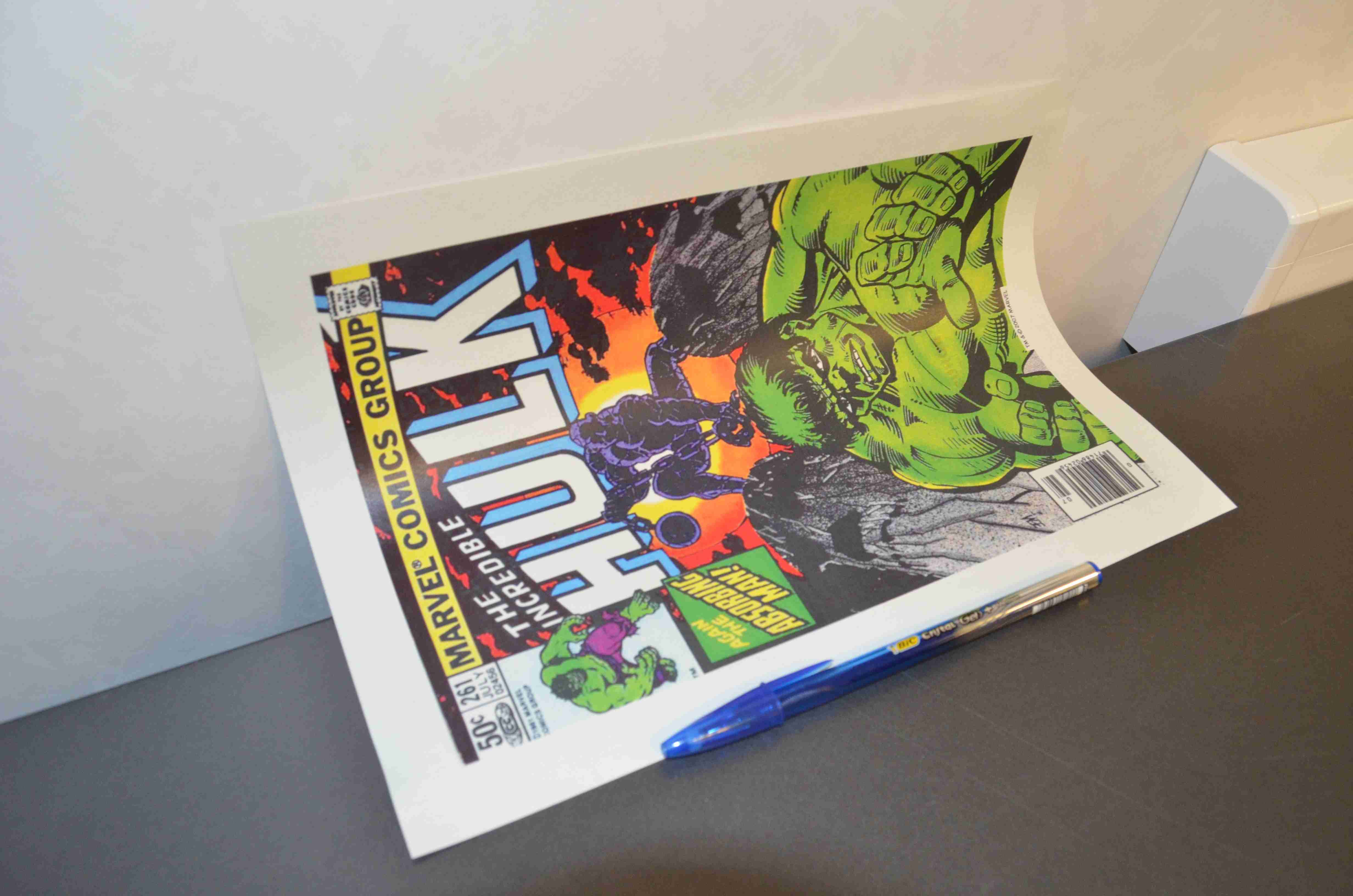}         &     \cincludegraphics[height=\imgheight,trim=120 50 30 40,clip]{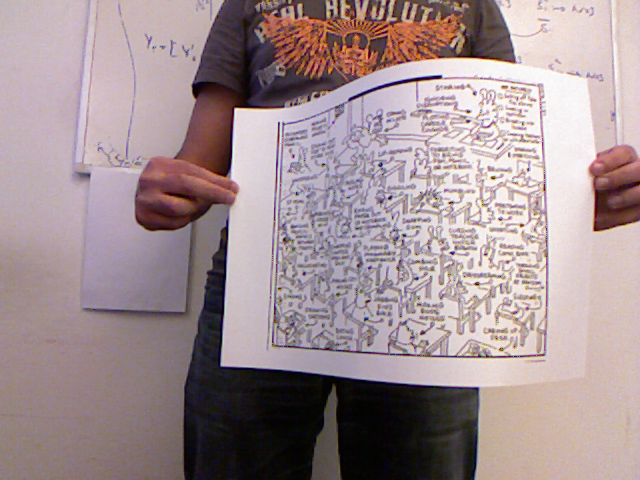}  &    \cincludegraphics[height=\imgheight,trim=30 20 100 20,clip]{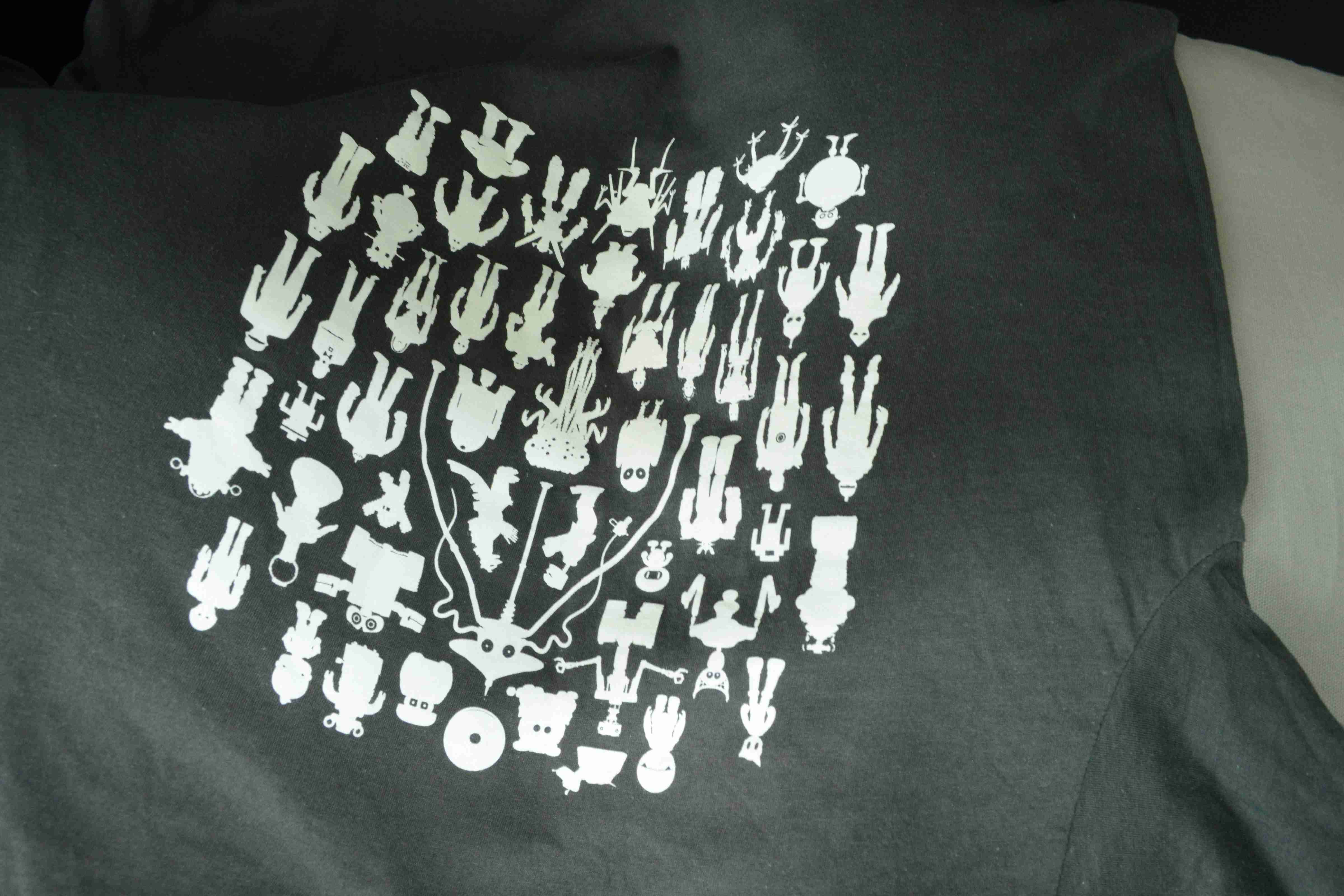}          &  \cincludegraphics[height=\imgheight,trim=0cm 70 20cm 5cm,clip]{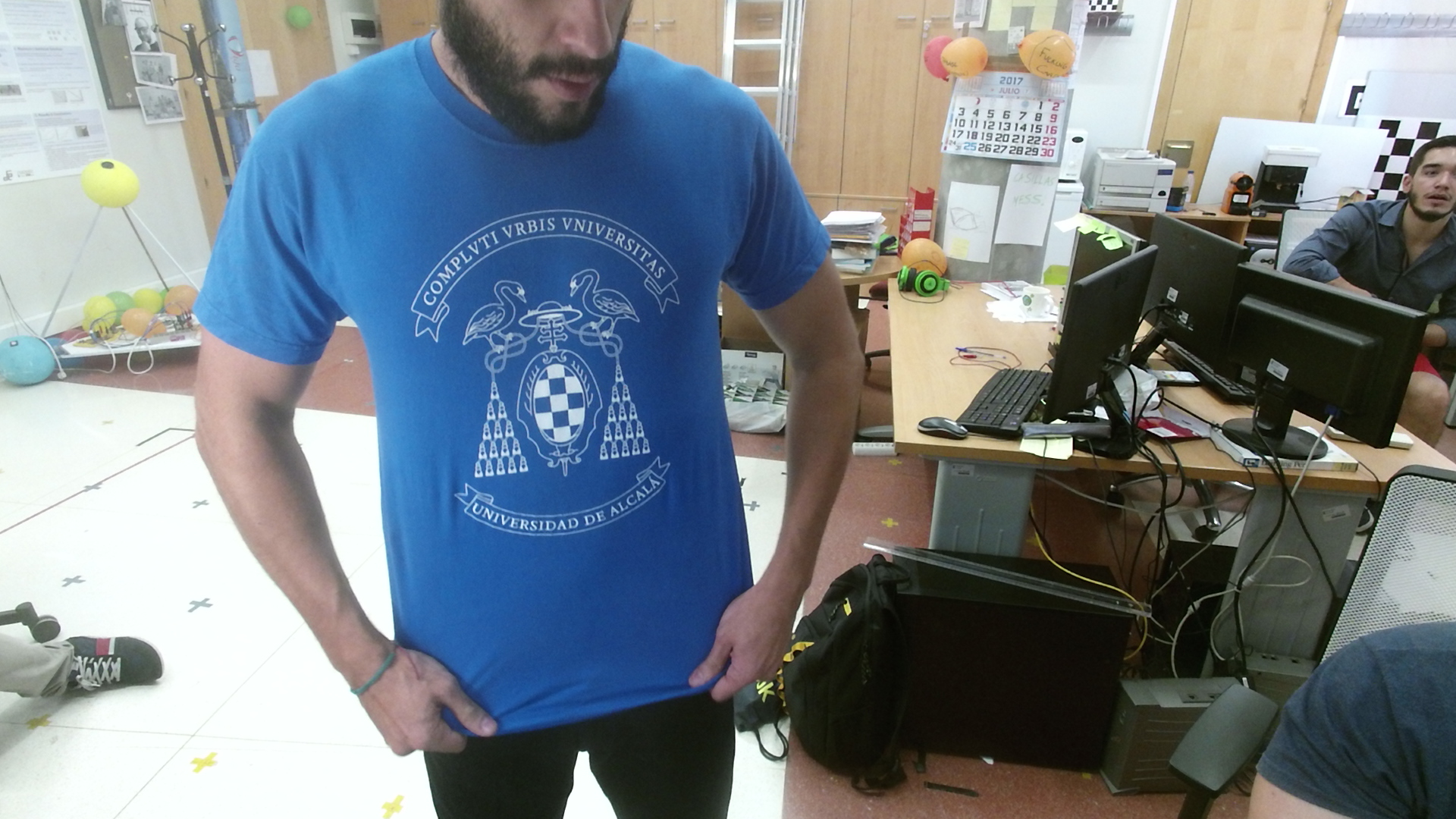}   \\ \hline           \multicolumn{1}{c|}{\y{gt}} &   \cincludegraphics[width=\imgwidth]{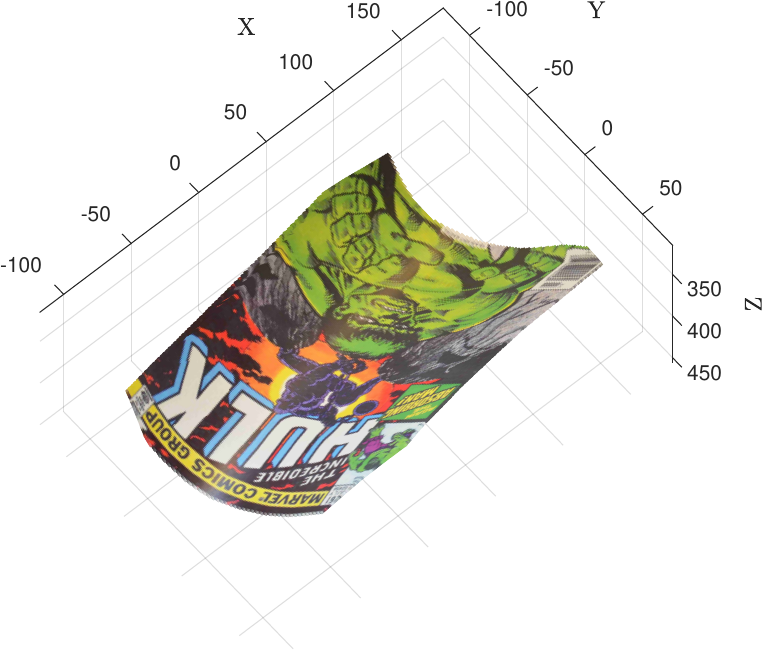}    &    \cincludegraphics[width=\imgwidth]{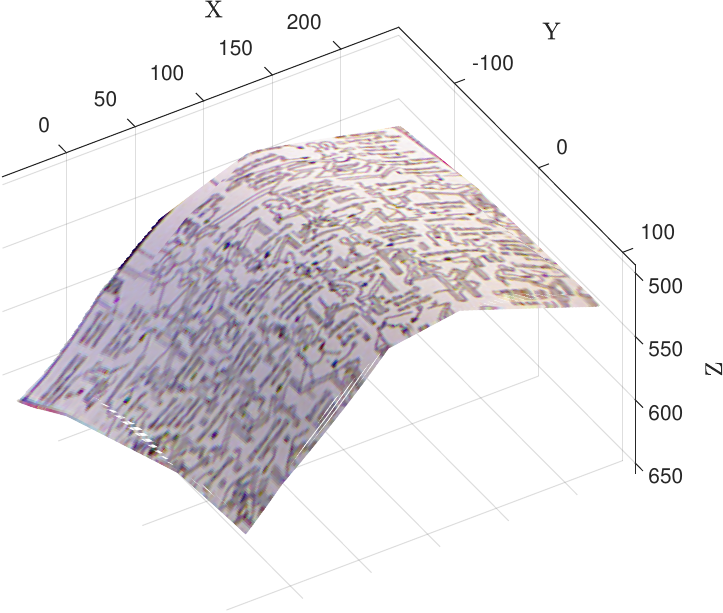}          &  \cincludegraphics[width=\imgwidth]{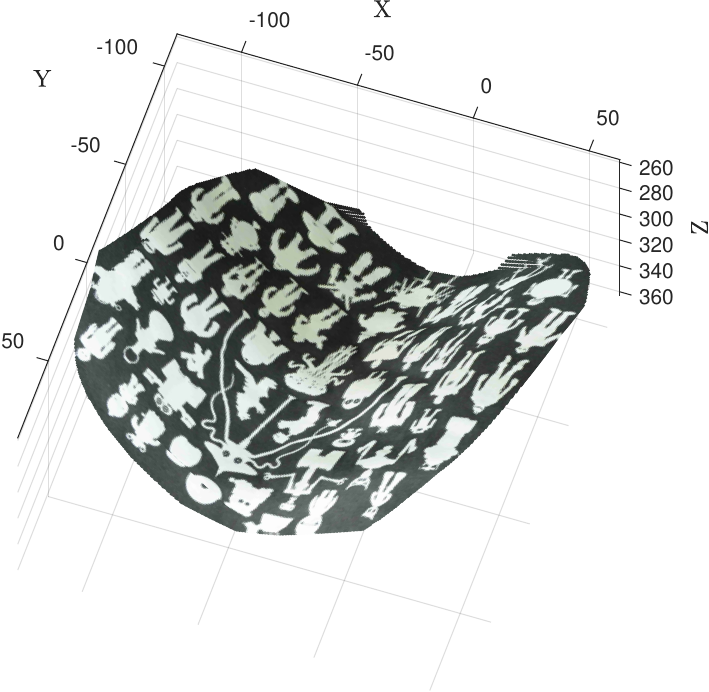}  & \cincludegraphics[width=\imgwidth]{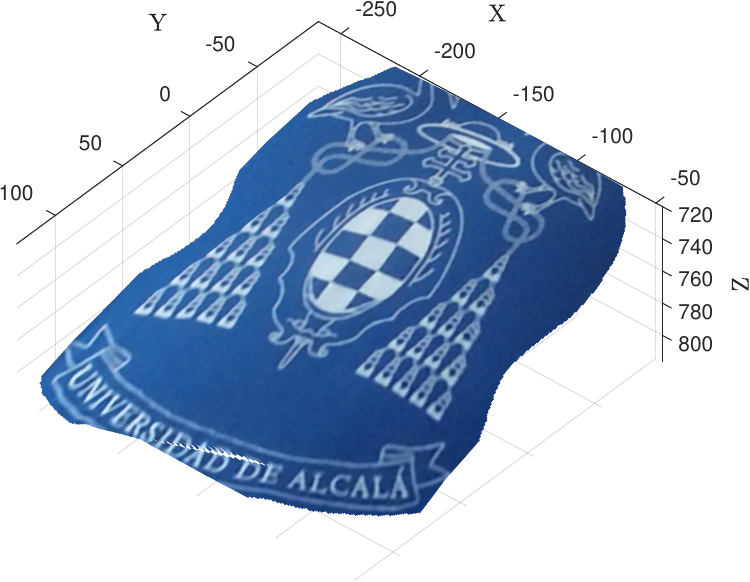}   \\ \hline
\multicolumn{1}{c|}{\y{s1}} & \cincludegraphics[width=\imgwidth]{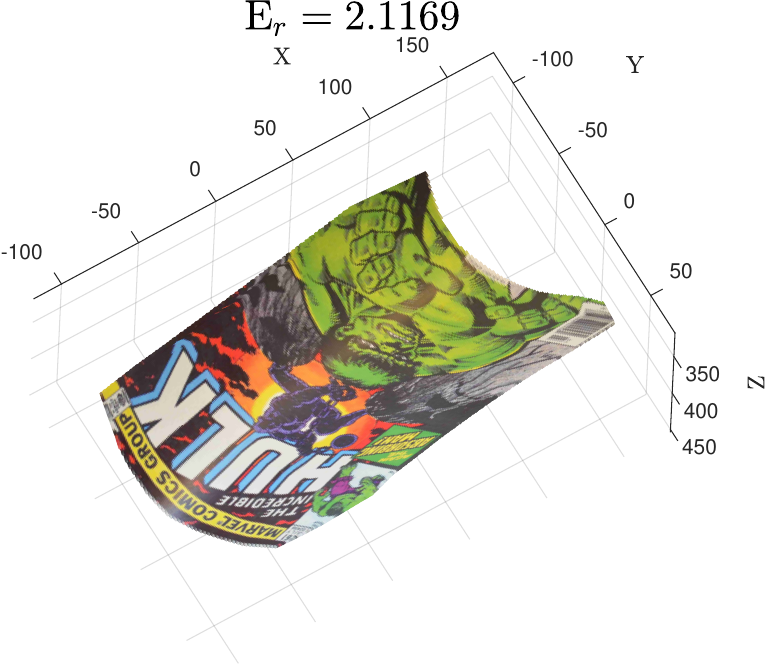}    &    \cincludegraphics[width=\imgwidth]{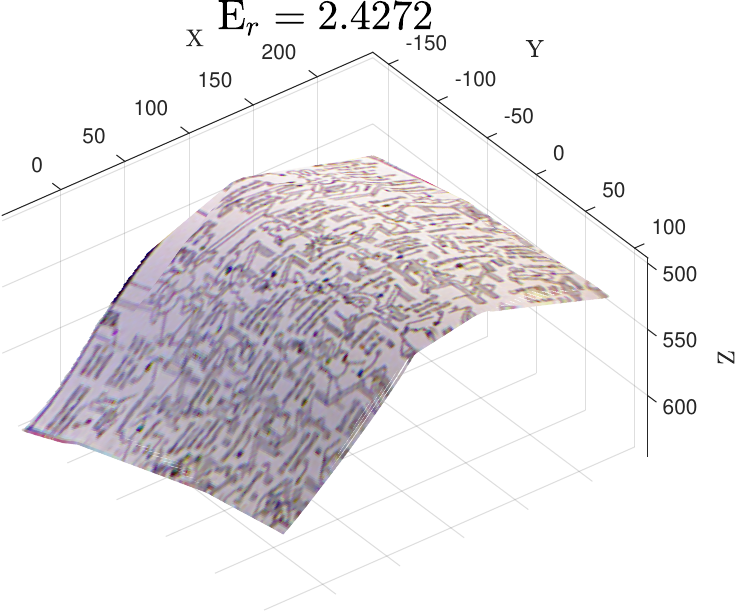}          &  \cincludegraphics[width=\imgwidth]{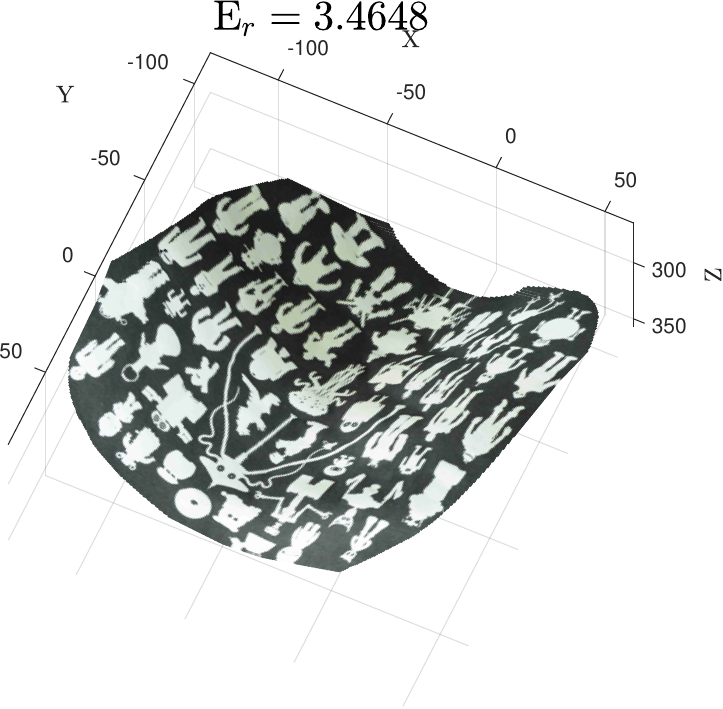}  & \cincludegraphics[width=\imgwidth]{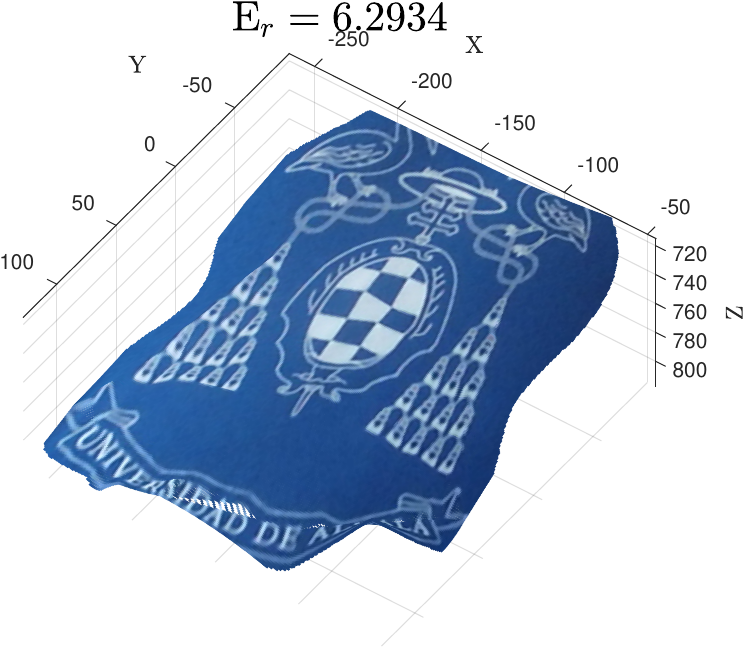}            \\ \hline
\multicolumn{1}{c|}{\y{s2}} & \cincludegraphics[width=\imgwidth]{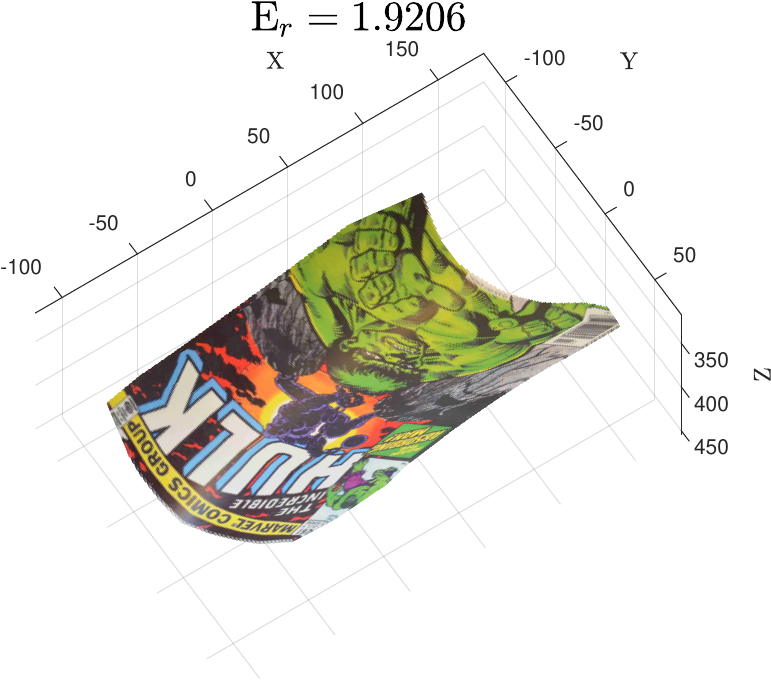}    &    \cincludegraphics[width=\imgwidth]{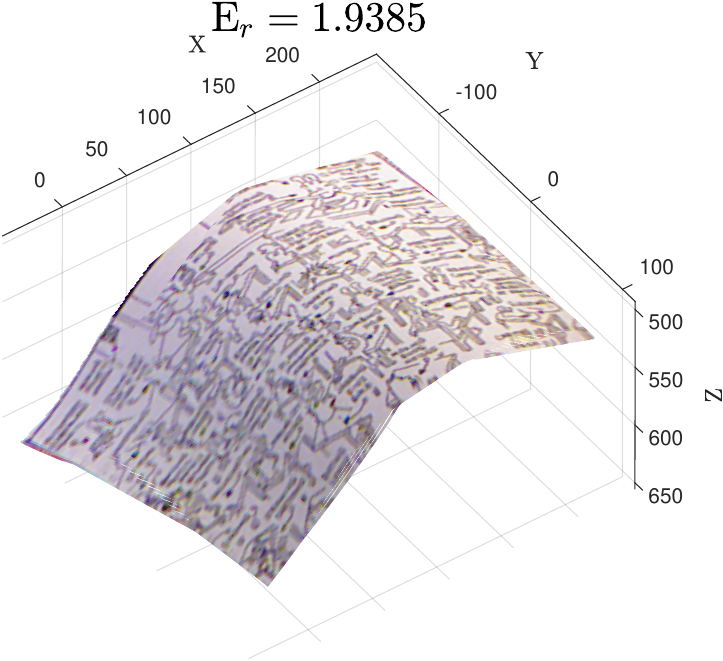}          &  \cincludegraphics[width=\imgwidth]{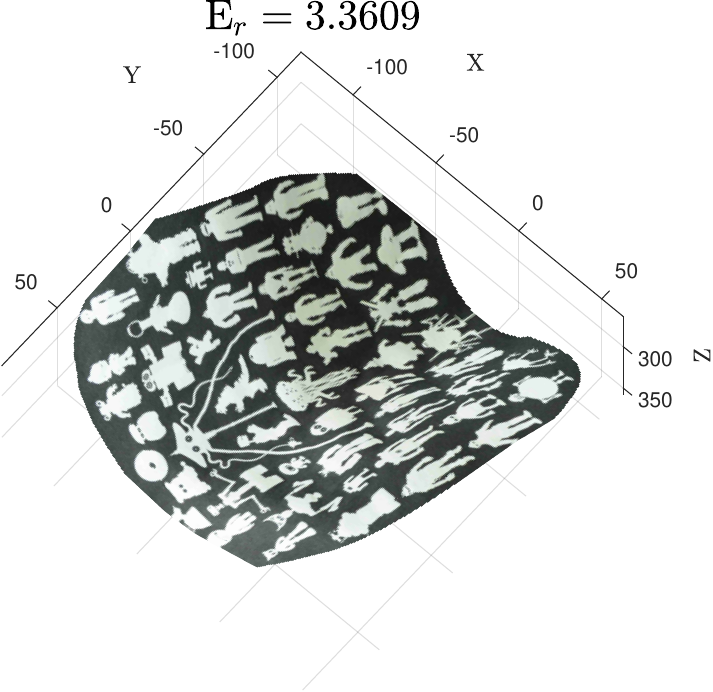}  & \cincludegraphics[width=\imgwidth]{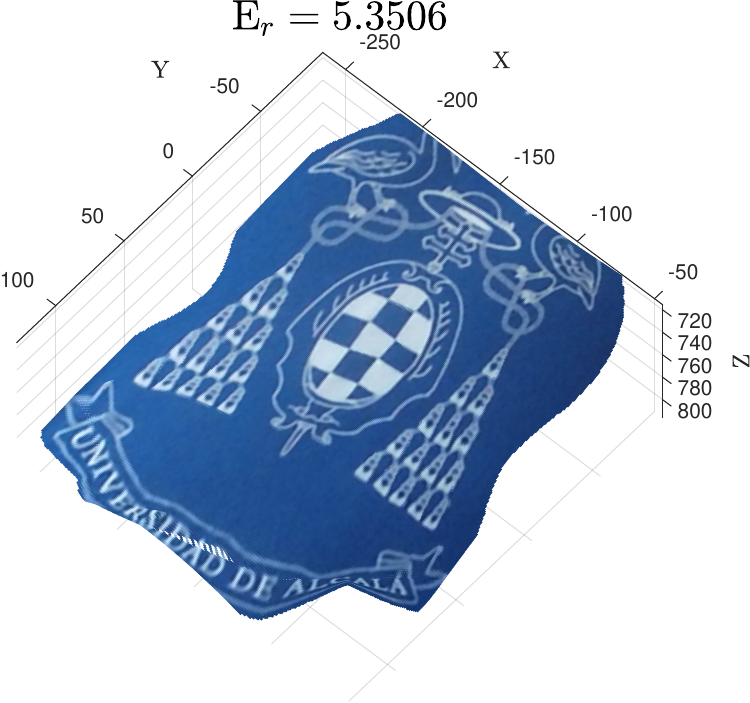}           \\ \hline
\multicolumn{1}{c|}{\y{q1}} & \cincludegraphics[width=\imgwidth]{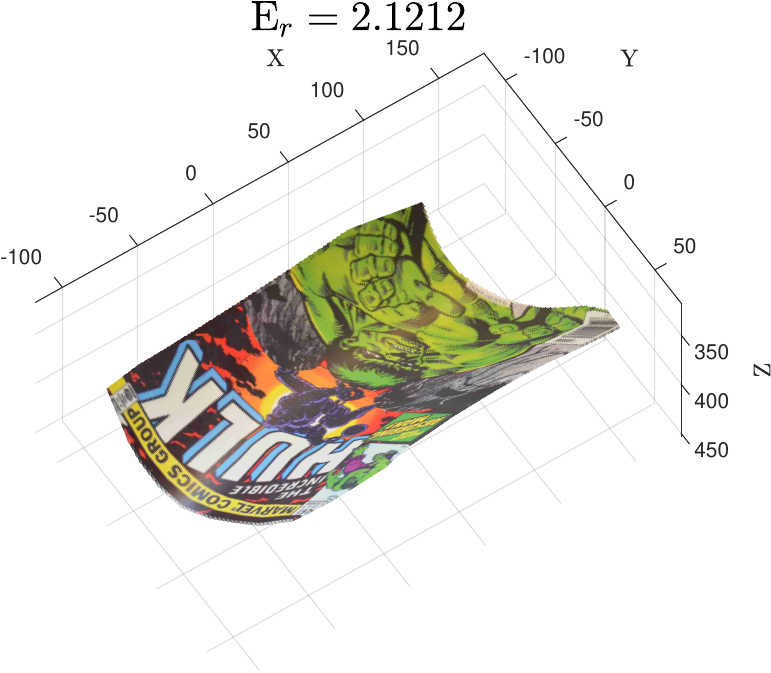}    &    \cincludegraphics[width=\imgwidth]{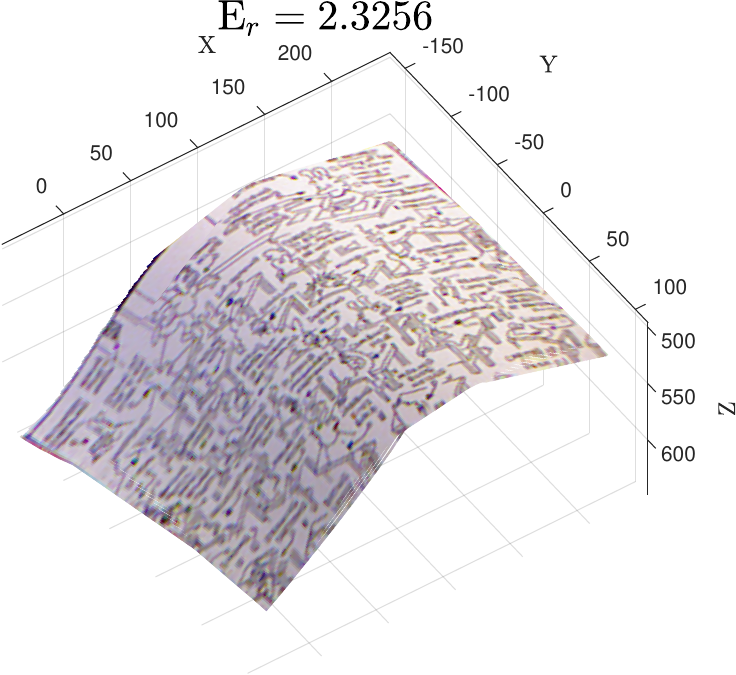}          &  \cincludegraphics[width=\imgwidth]{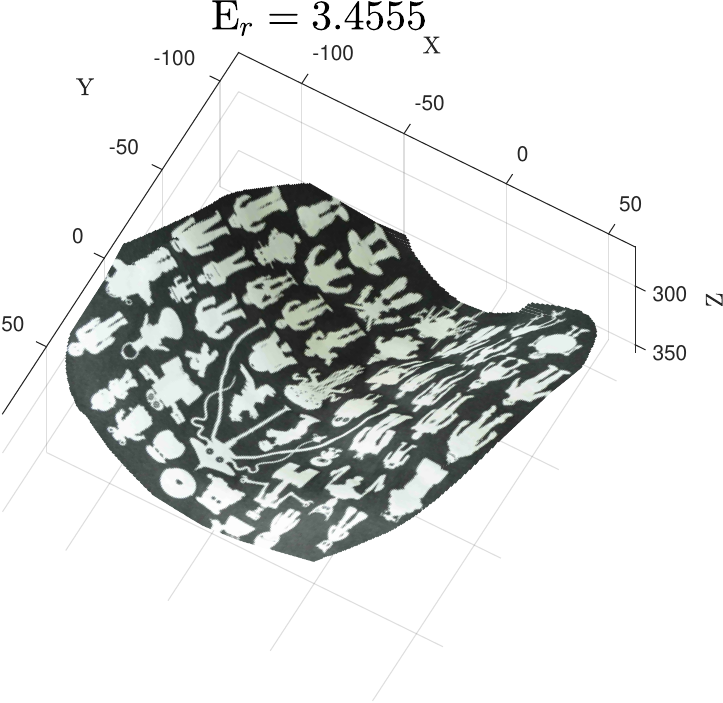}  & \cincludegraphics[width=\imgwidth]{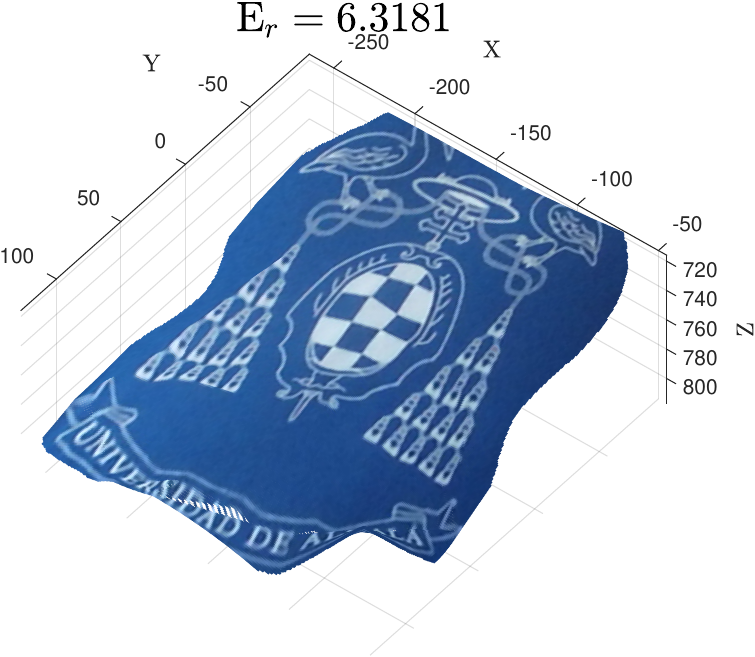}   \\ \hline
\multicolumn{1}{c|}{\y{q2}}& \cincludegraphics[width=\imgwidth]{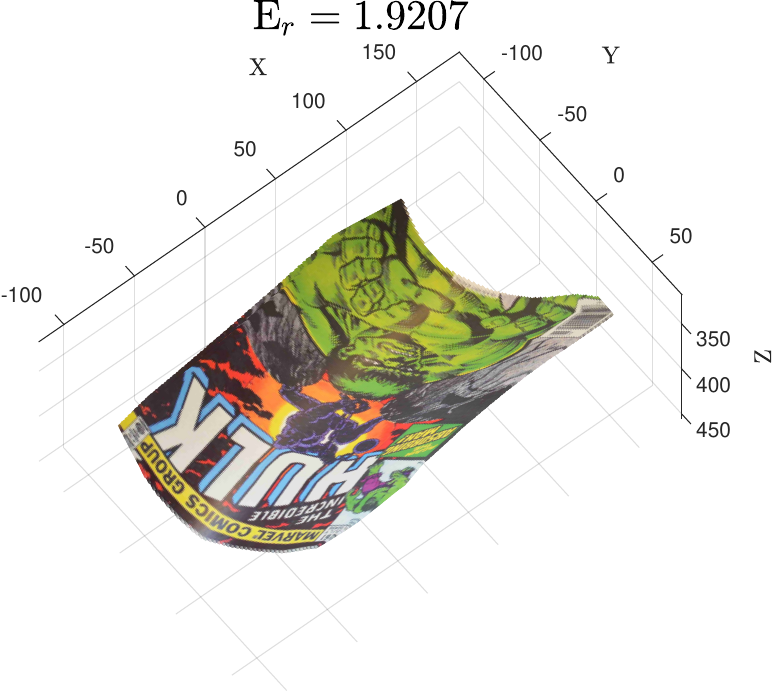}    &    \cincludegraphics[width=\imgwidth]{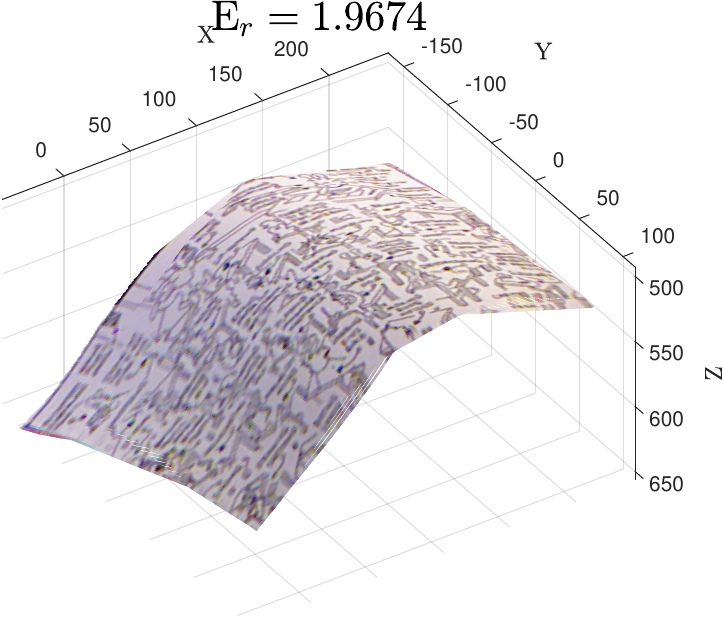}          &  \cincludegraphics[width=\imgwidth]{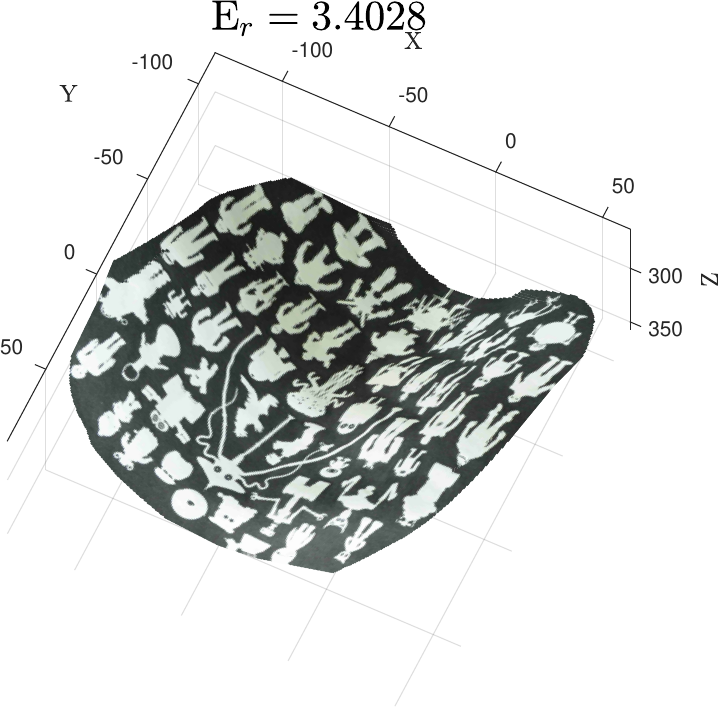}  & \cincludegraphics[width=\imgwidth]{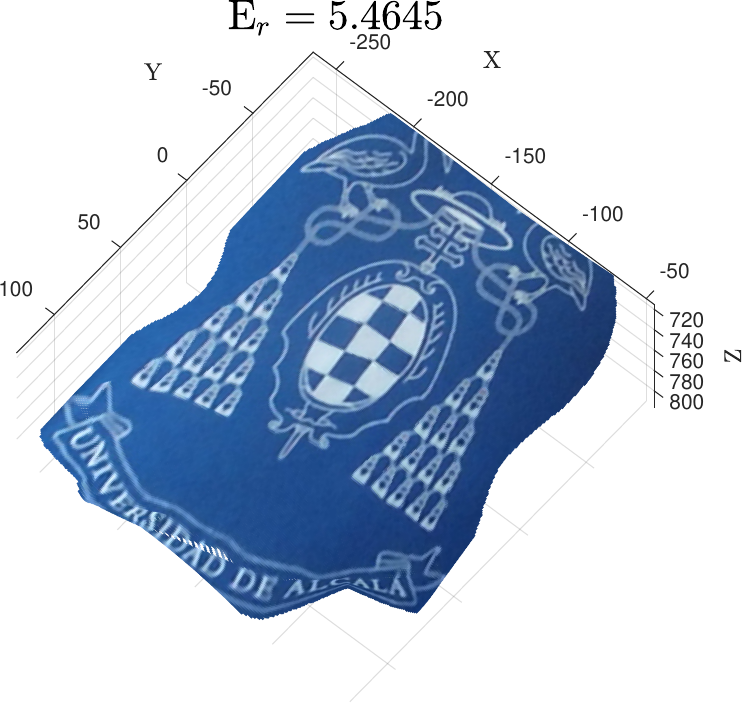}  \\ \hline
\multicolumn{1}{c|}{\cite{ji2017maximizing}} & \cincludegraphics[width=\imgwidth]{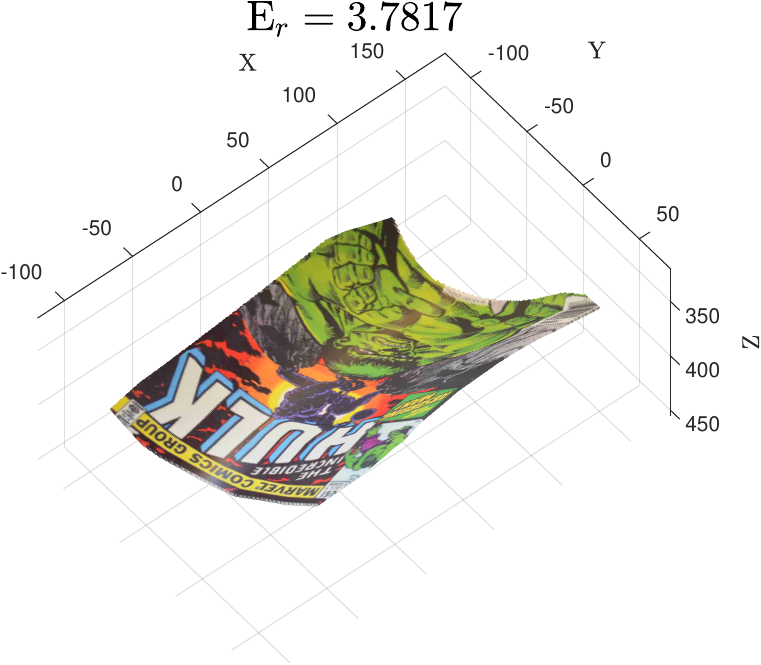}    &    \cincludegraphics[width=\imgwidth]{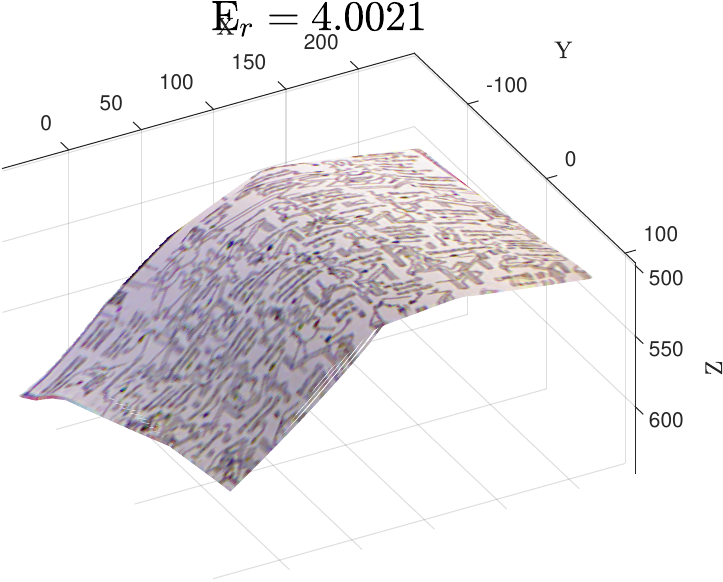}          &  \cincludegraphics[width=\imgwidth]{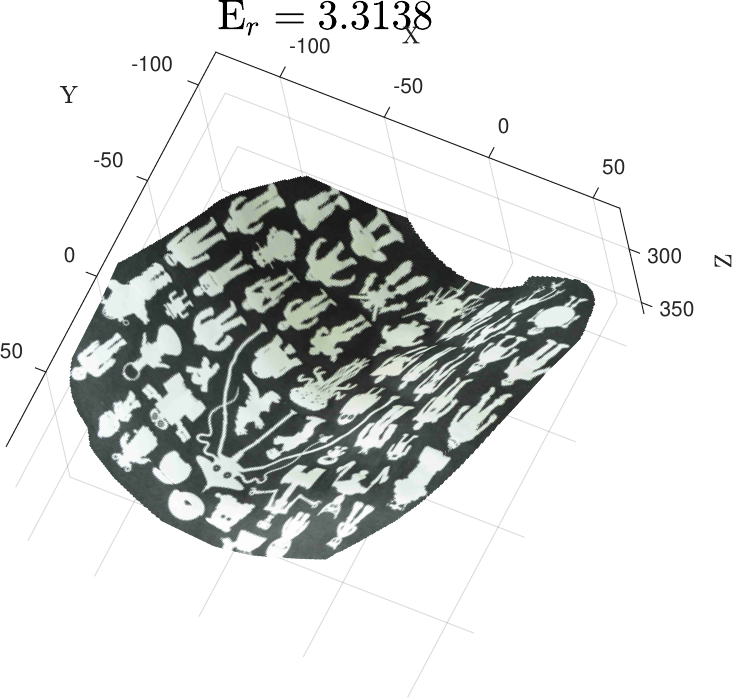}  & \cincludegraphics[width=\imgwidth]{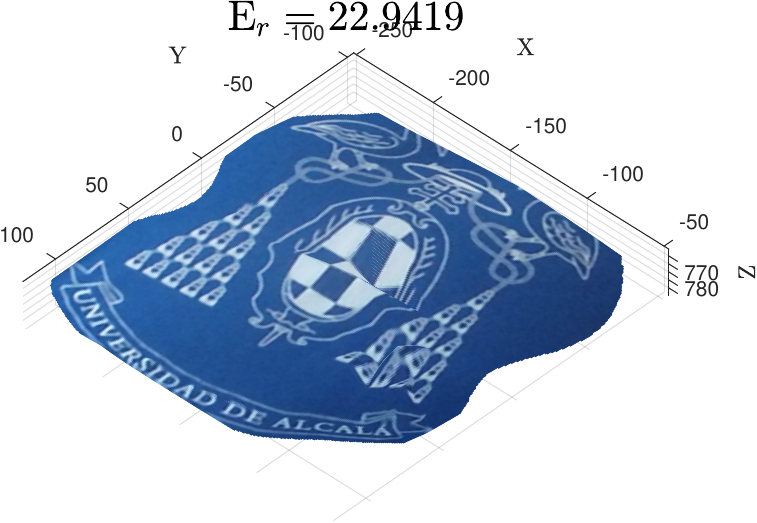}  \\ \hline
\multicolumn{1}{c|}{\cite{chhatkuli2017inextensible}}& \cincludegraphics[width=\imgwidth]{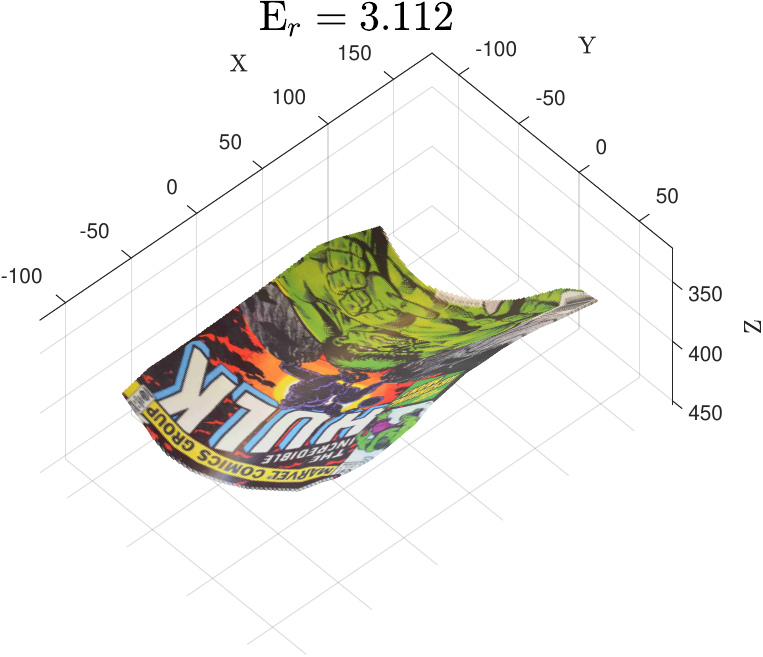}    &    \cincludegraphics[width=\imgwidth]{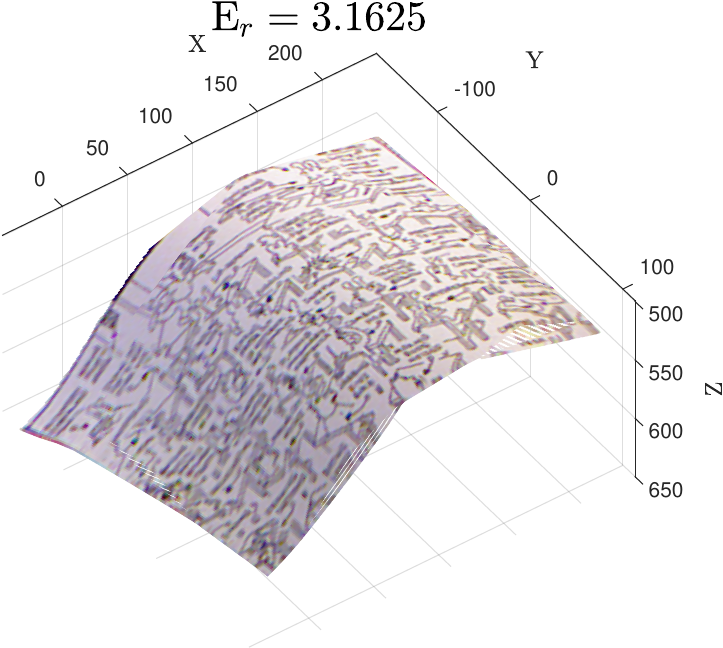}          &  \cincludegraphics[width=\imgwidth]{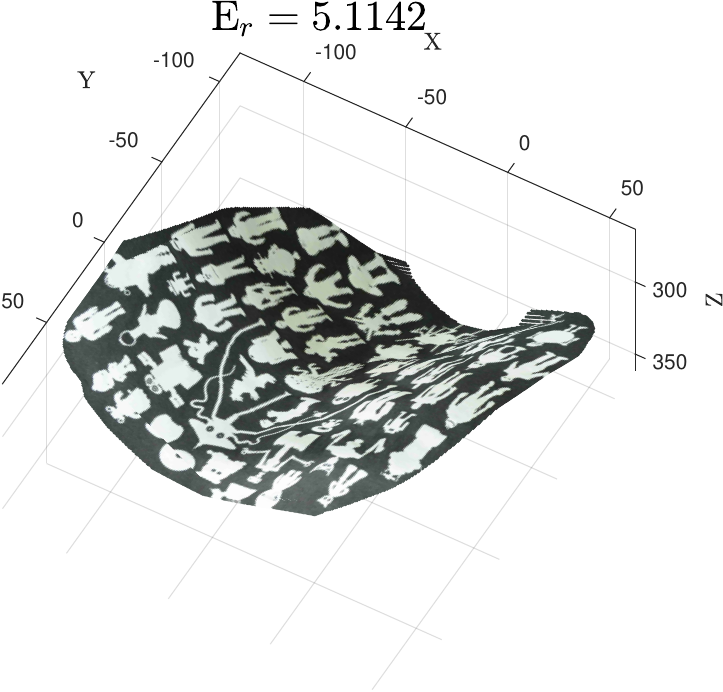}  & \cincludegraphics[width=\imgwidth]{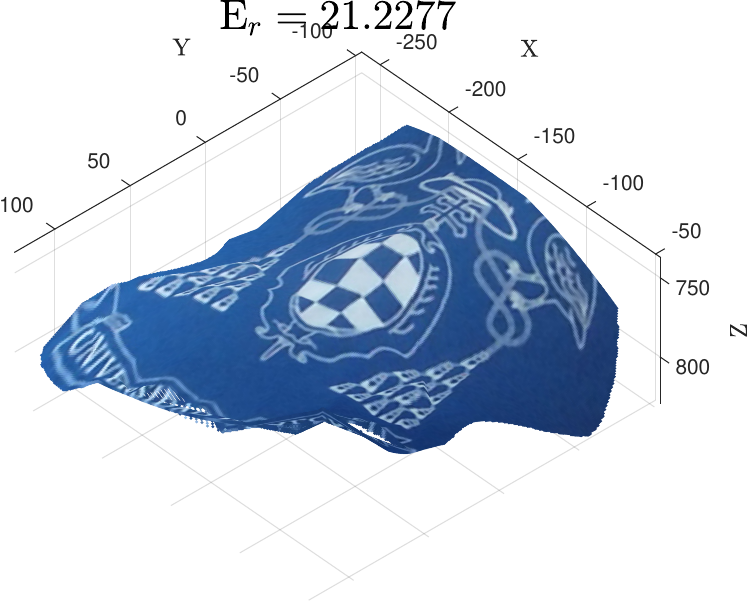}  \\ \hline
\multicolumn{1}{c|}{\cite{parashar2017isometric}-IP}& \cincludegraphics[width=\imgwidth]{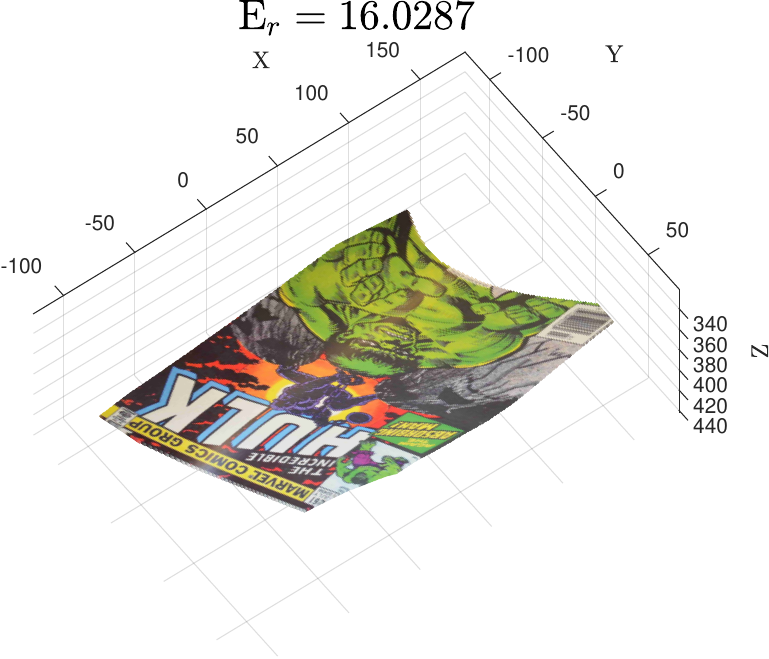}    &    \cincludegraphics[width=\imgwidth]{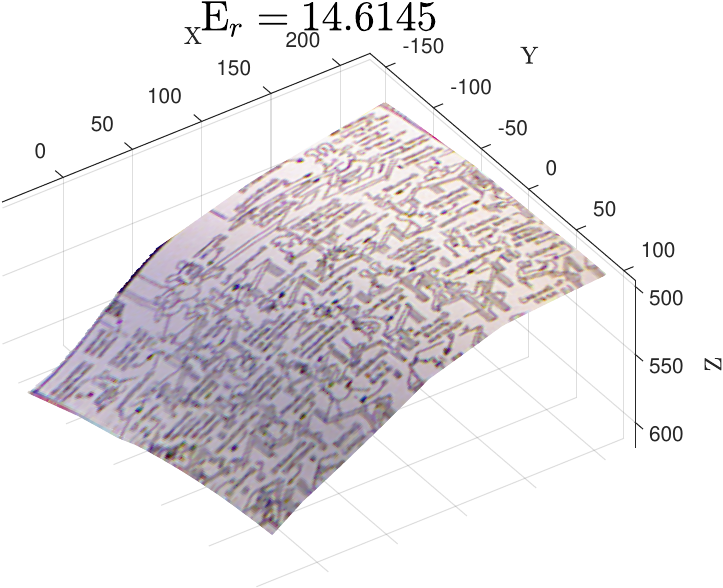}          &  \cincludegraphics[width=\imgwidth]{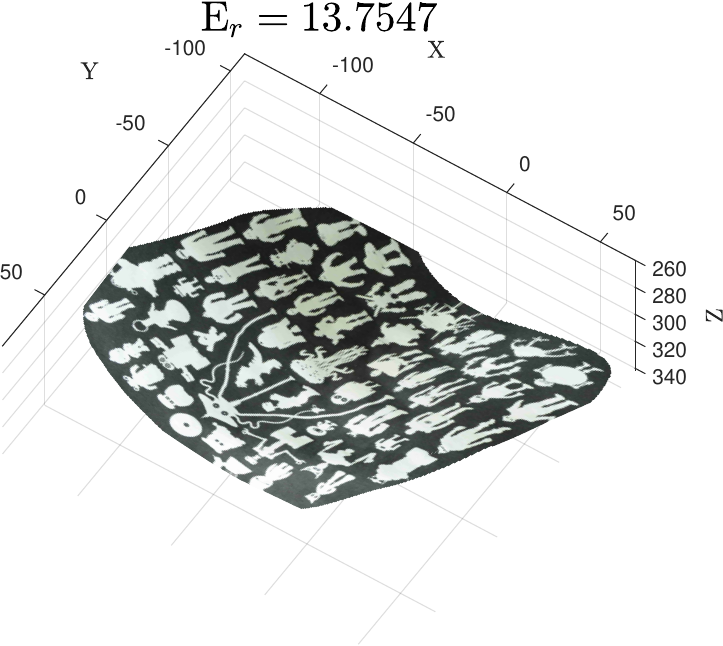}  & \cincludegraphics[width=\imgwidth]{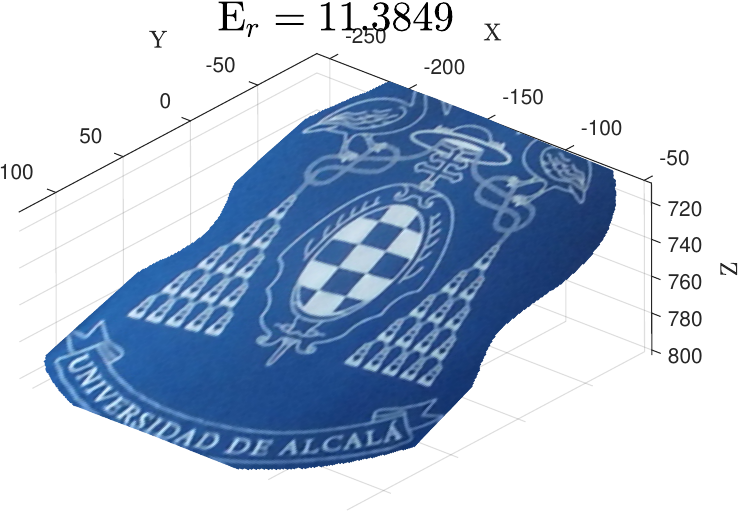}   \\ \hline
\multicolumn{1}{c|}{\cite{parashar2017isometric}-G} & \cincludegraphics[width=\imgwidth]{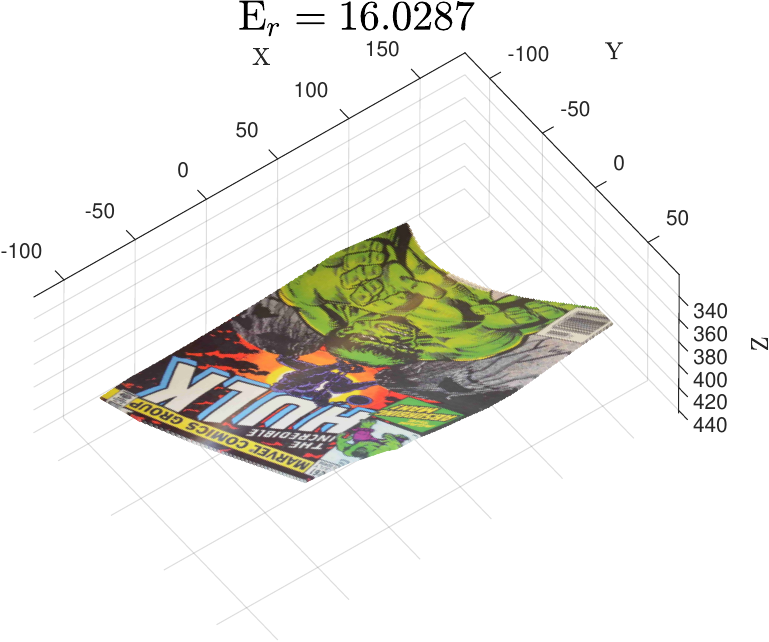}    &    \cincludegraphics[width=\imgwidth]{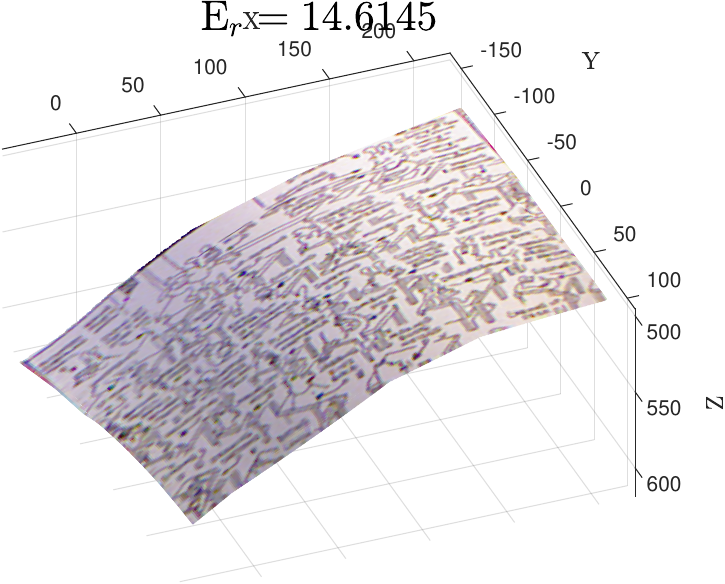}          &  \cincludegraphics[width=\imgwidth]{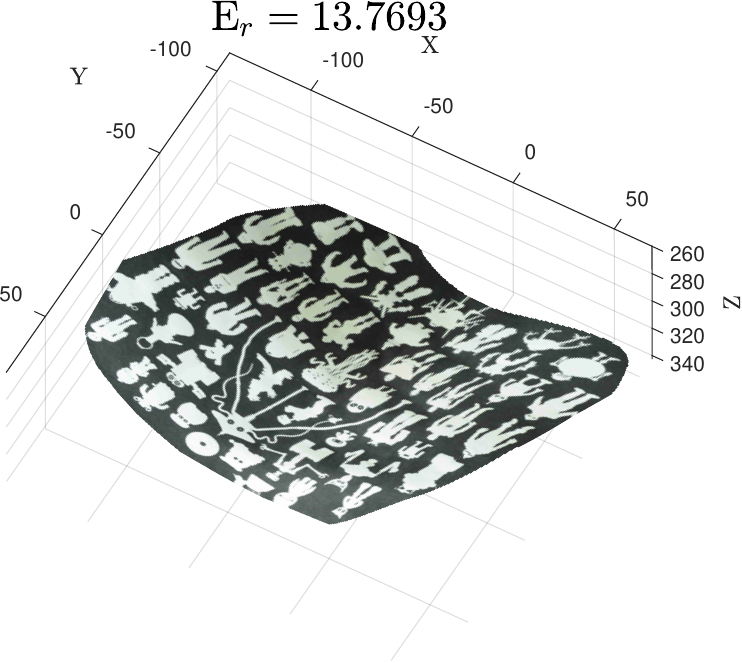}  & \cincludegraphics[width=\imgwidth]{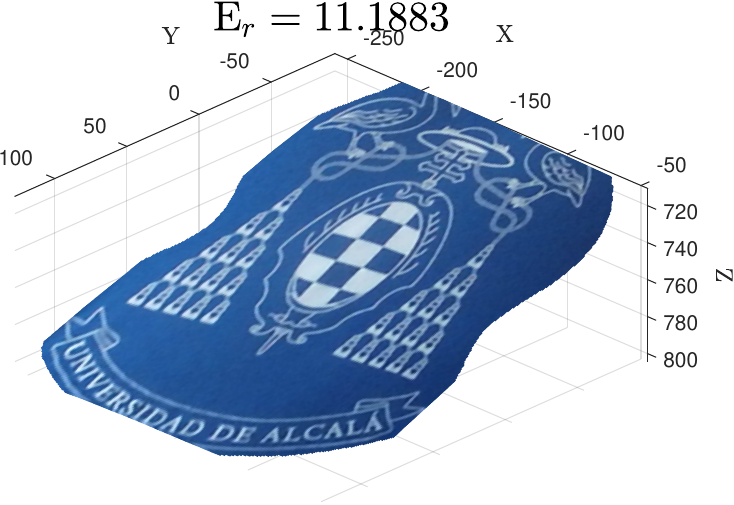} \\ \hline
\multicolumn{1}{c|}{\cite{hamsici2012learning}}& \cincludegraphics[width=\imgwidth]{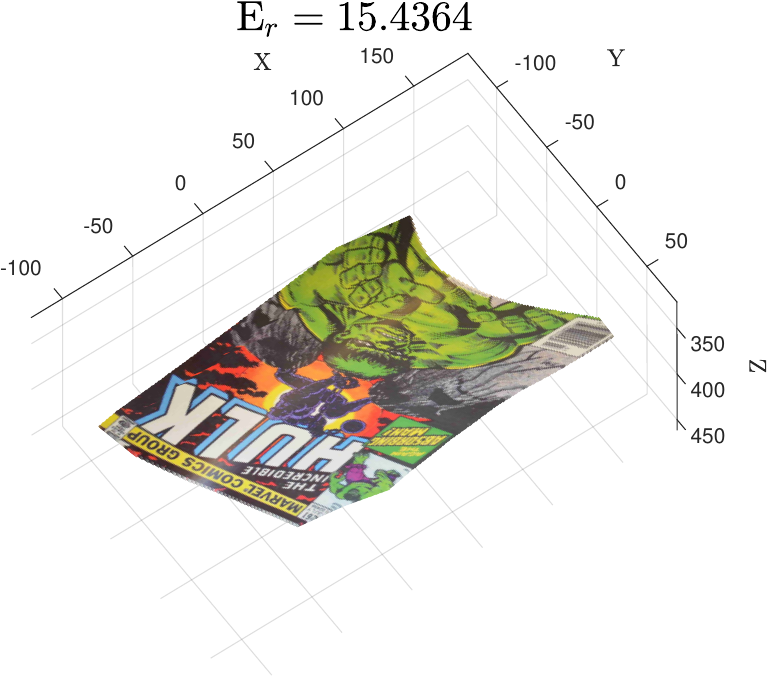}    &    \cincludegraphics[width=\imgwidth]{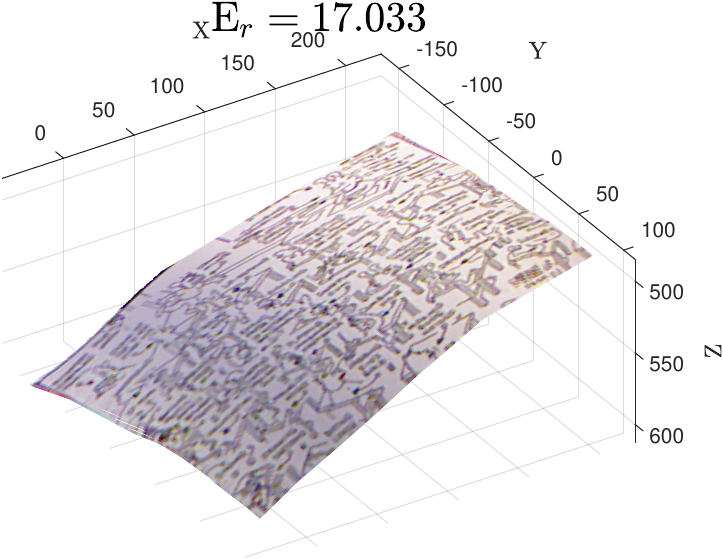}          &  \cincludegraphics[width=\imgwidth]{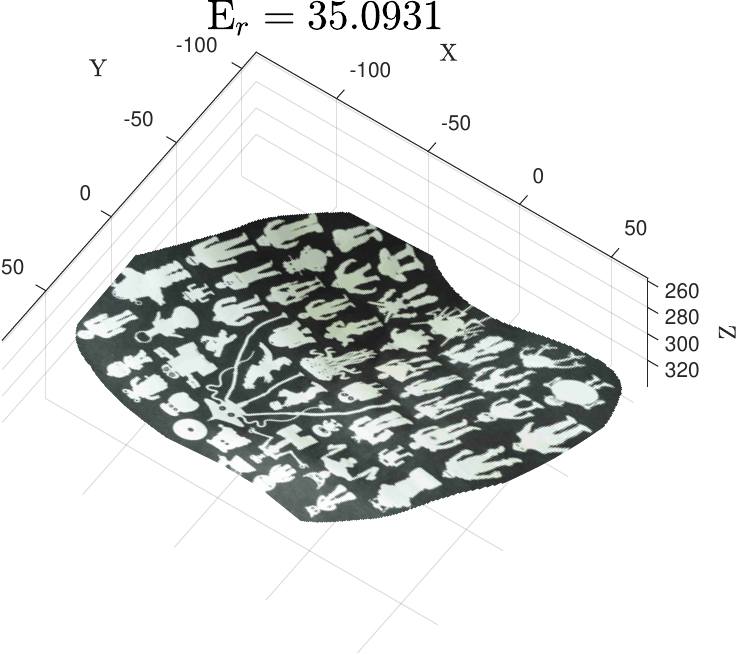}  & \cincludegraphics[width=\imgwidth]{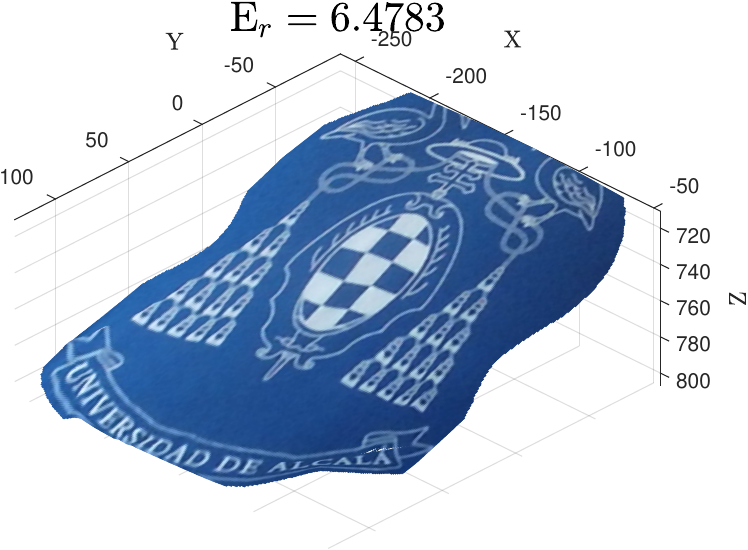} \\ \hline
\multicolumn{1}{c|}{\cite{gotardo2011kernel}}& \cincludegraphics[width=\imgwidth]{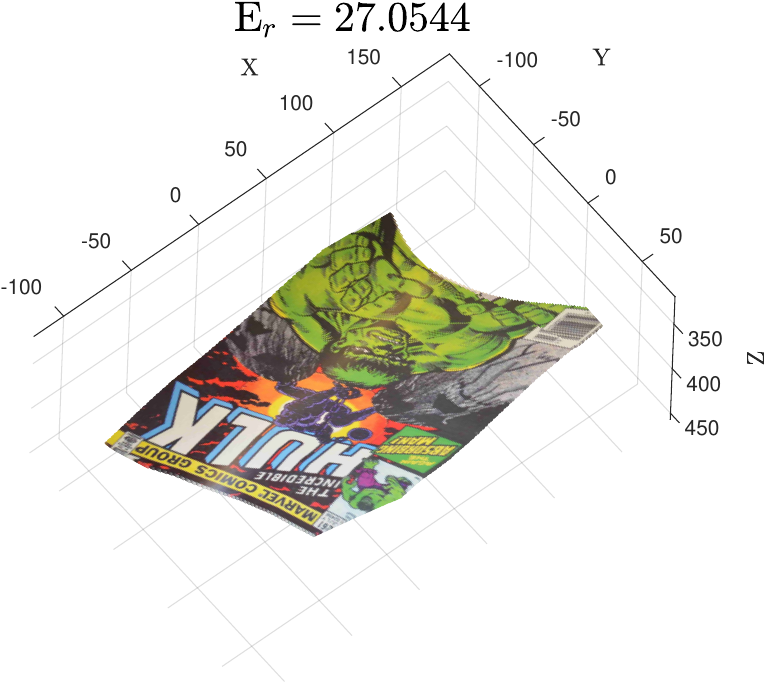}    &    \cincludegraphics[width=\imgwidth]{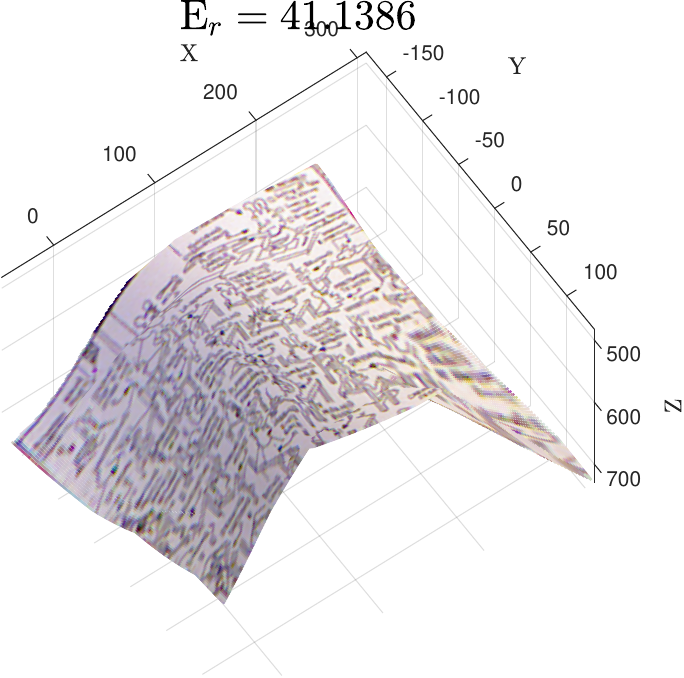}          &  \cincludegraphics[width=\imgwidth]{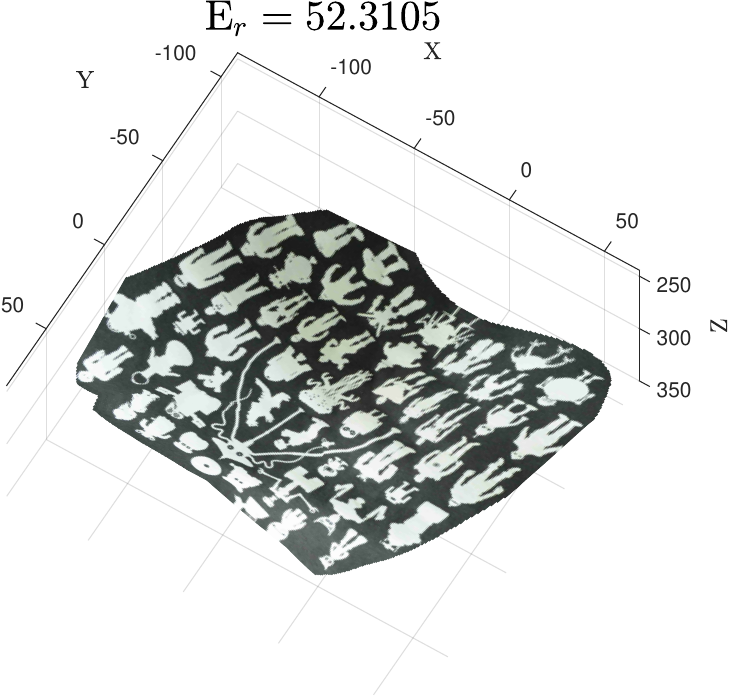}  & \cincludegraphics[width=\imgwidth]{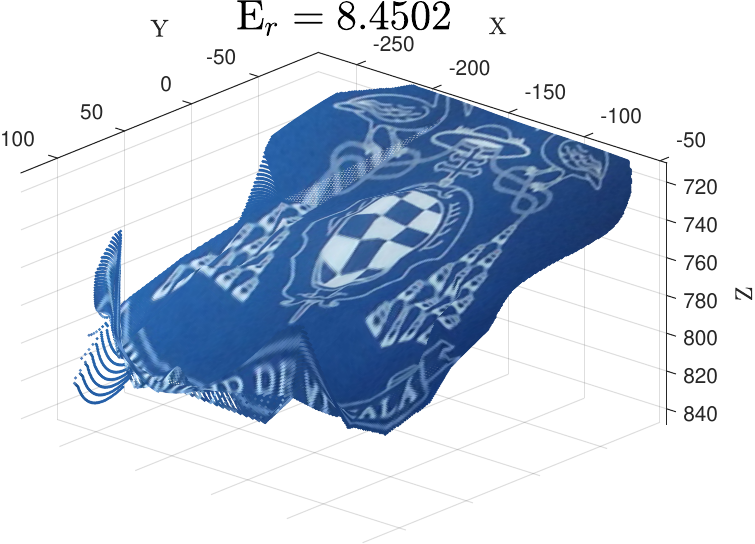} \\ \hline
\multicolumn{1}{c|}{\cite{dai2014simple}}& \cincludegraphics[width=\imgwidth]{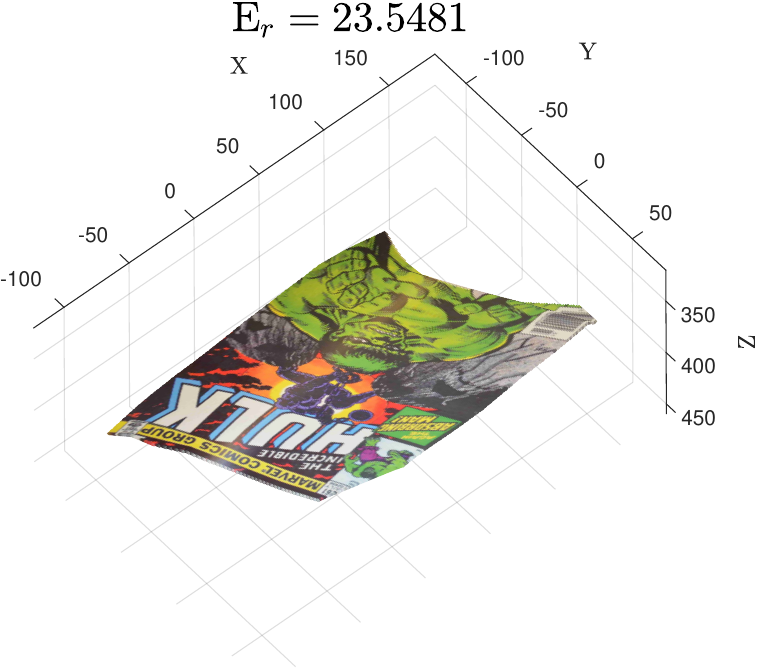}    &    \cincludegraphics[width=\imgwidth]{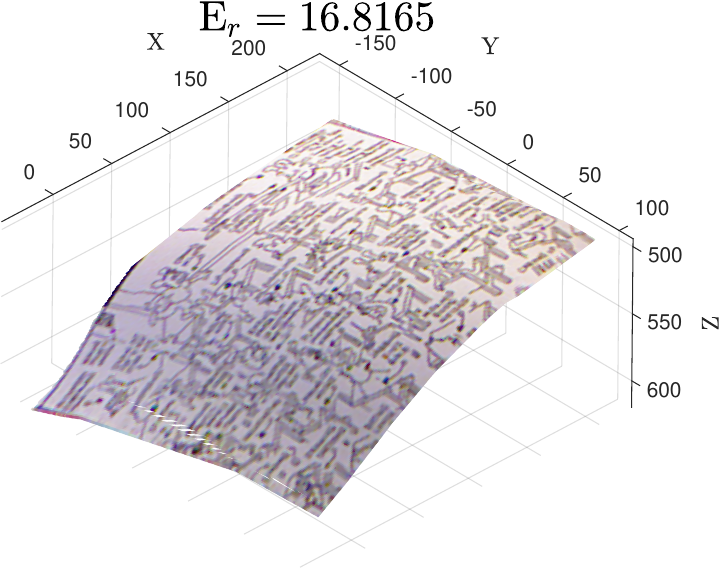}          &  \cincludegraphics[width=\imgwidth]{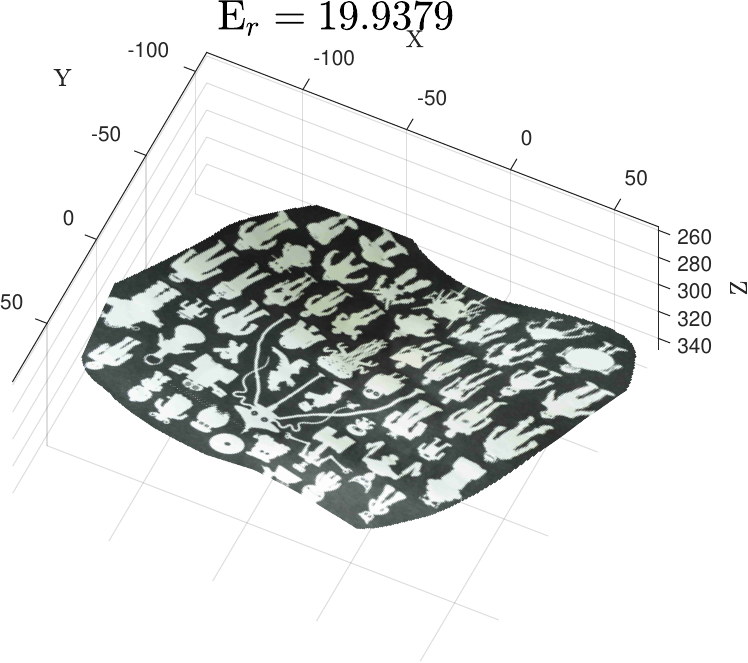}  & \cincludegraphics[width=\imgwidth]{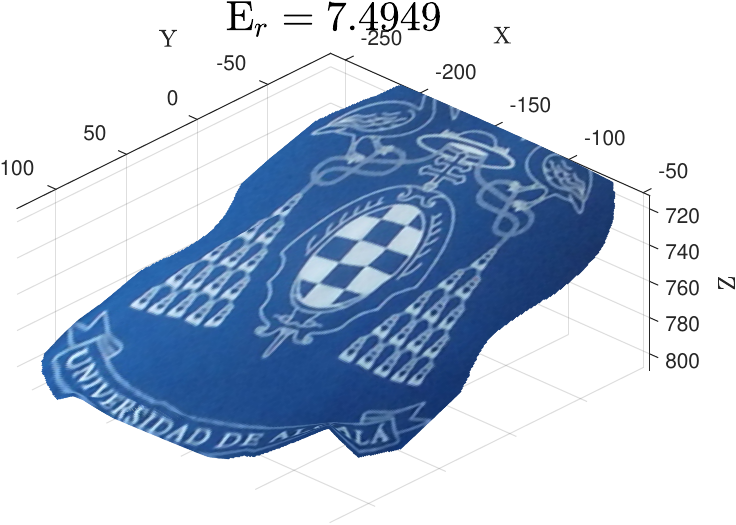} \\ \hline
\multicolumn{1}{c|}{\cite{chhatkuli2014non}}& \cincludegraphics[width=\imgwidth]{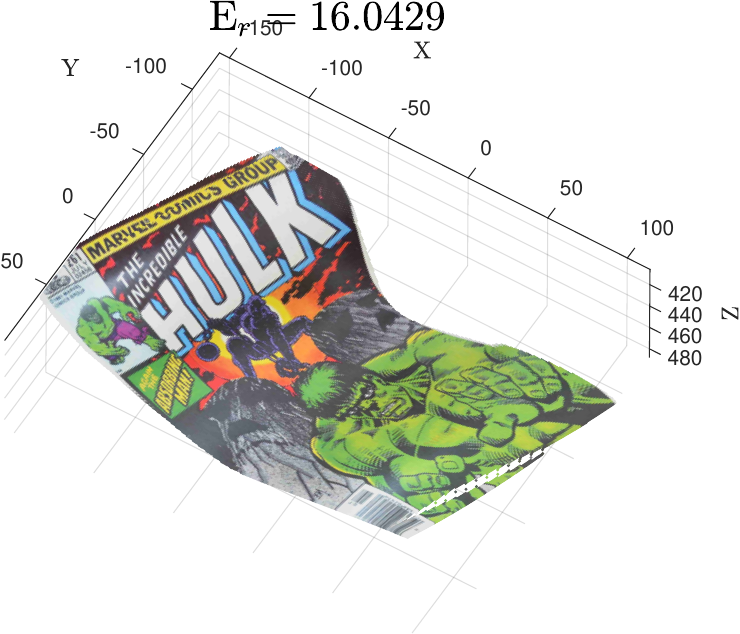}    &    \cincludegraphics[width=\imgwidth]{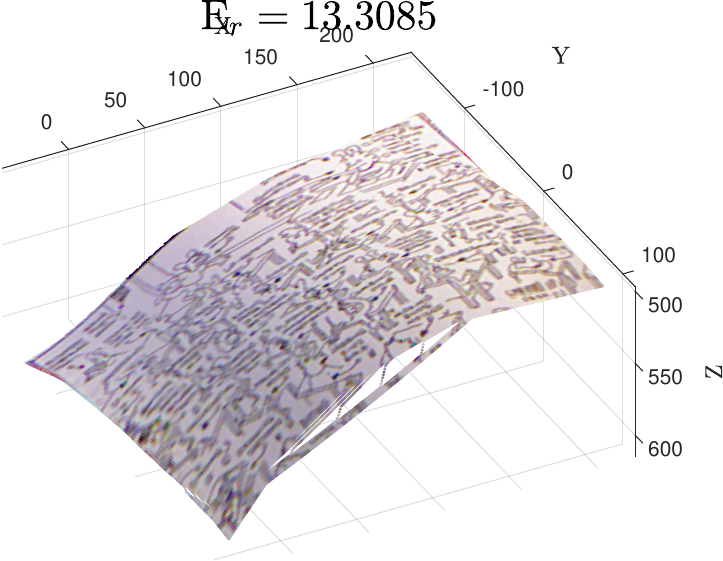}          &  \cincludegraphics[width=\imgwidth]{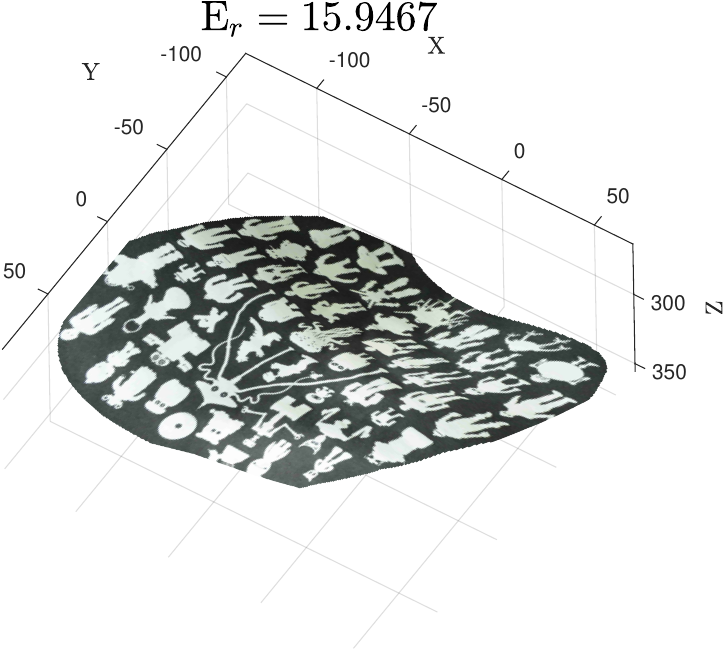}  & \cincludegraphics[width=\imgwidth]{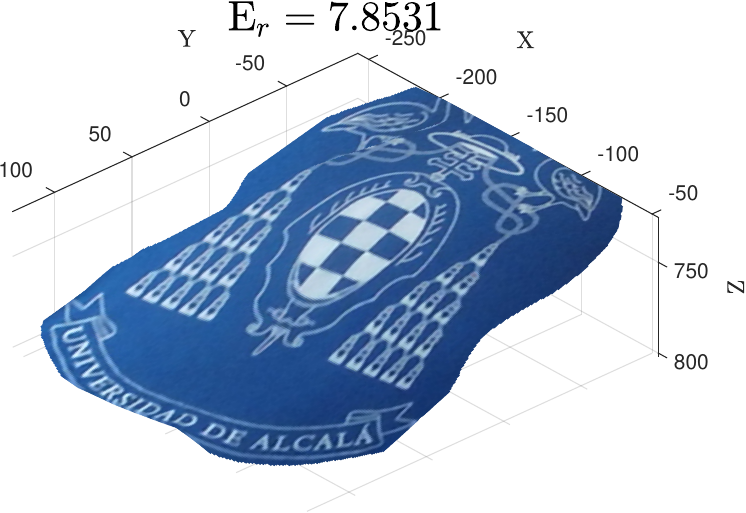} \\ \bottomrule
\end{tabular}
\end{table}

Next we show some qualitative results from our experiments on \y{sb} and \y{sl} with \y{h1}. The qualitative results from a sample frame of each dataset is shown in table~(\ref{tab3}). \y{sb} is a highly curved surface of a balloon being sheared by compressive force. All methods except \y{h1} fails to recover the curvature of the balloon, this is clearly observable in the qualitative results. On the other hand, \y{sl} is a rectangular patch from a nearly-planar surface of a highly stretched leggings. While most of the compared methods, except \cite{ji2017maximizing}, reconstructs the surface reasonably accurately, \cite{chhatkuli2014non}, \cite{hamsici2012learning} and \cite{chhatkuli2017inextensible} reconstructs a mildly curved surface, which is not similar to \y{gt}. In the example of \y{sl} shown in table~(\ref{tab3}), \y{q1} is already more accurate than the other methods, while \y{h1} improves the accuracy, over and above what is achieved by quasi-isometry.

\newcommand{\imgwidthP}{0.9cm}
\newcommand{\imgheightP}{2.0cm}
\newcommand{\imgheightC}{1.4cm}
\newcommand{\imgheightD}{1.2cm}
\begin{table}[]
\makegapedcells
\centering
\caption{Results on \y{sb} and \y{sl} (`$\times$' for methods that failed)}
\label{tab3}
\begin{tabular}{ccc}
\toprule
& \textbf{\y{sb}} & \textbf{\y{sl}}  \\ \hline
\multicolumn{1}{c|}{\small{Input}} &   \dincludegraphics[height=\imgheightP,trim=20cm 10cm 30cm 10cm,clip]{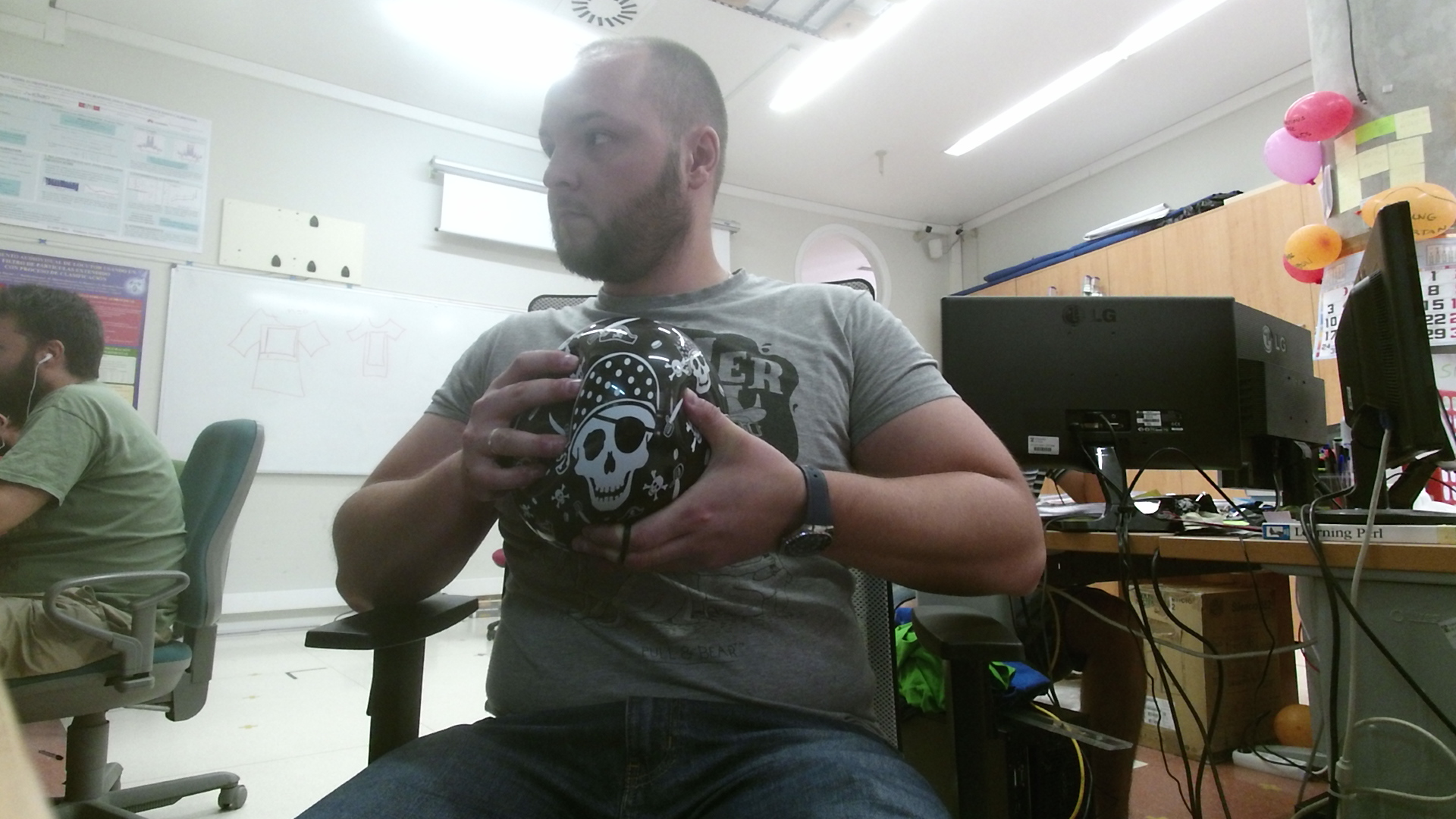}         &     \dincludegraphics[height=\imgheightP,trim=10cm 0 30cm 0,clip]{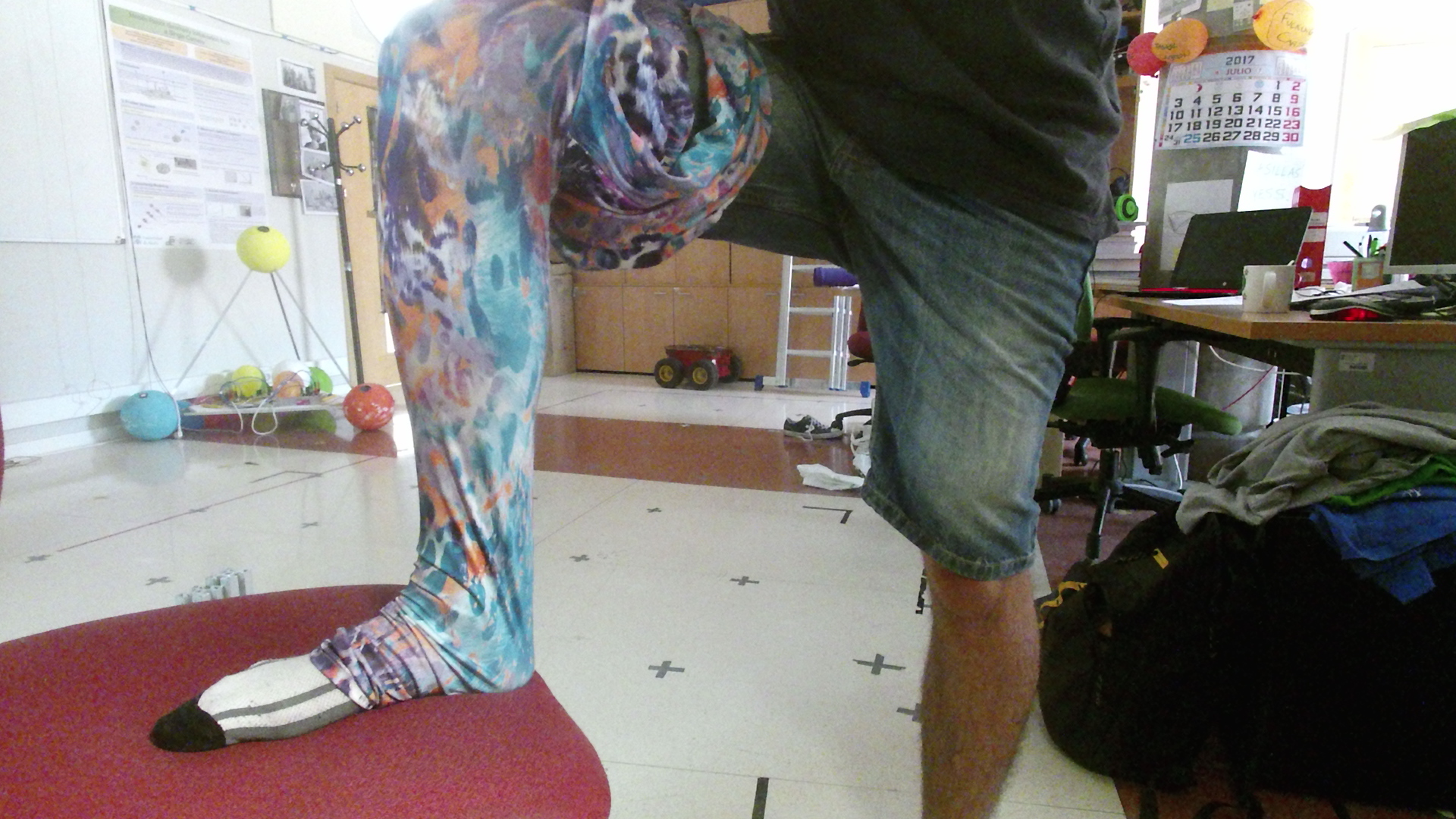}  \\ \hline  
\multicolumn{1}{c|}{\y{gt}} &   \dincludegraphics[height=\imgheightC]{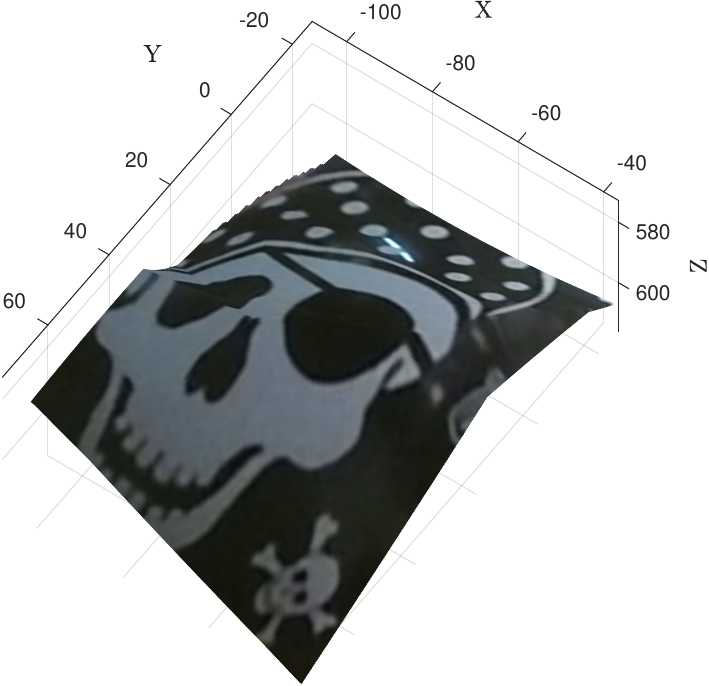}         &     \dincludegraphics[height=\imgheightC]{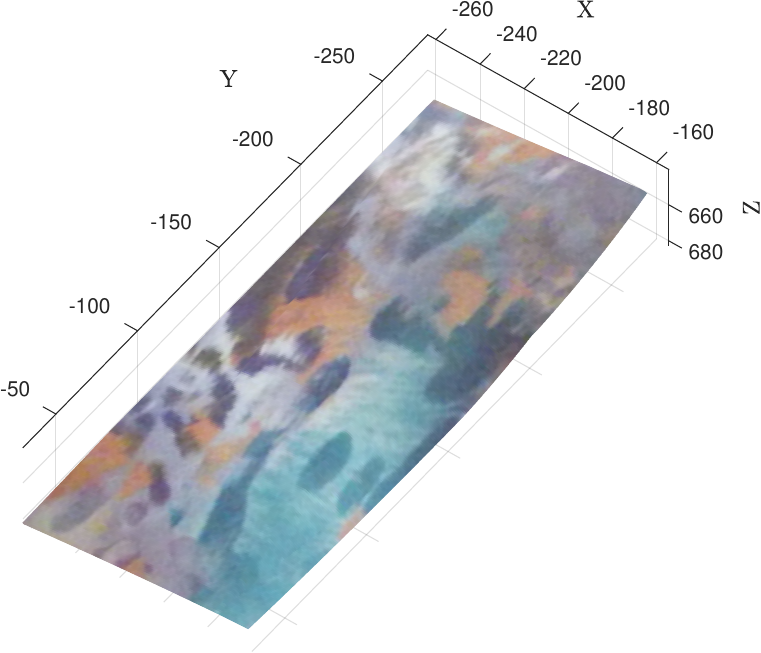}  \\ \hline  
\multicolumn{1}{c|}{\y{h1}} &   \dincludegraphics[height=\imgheightC]{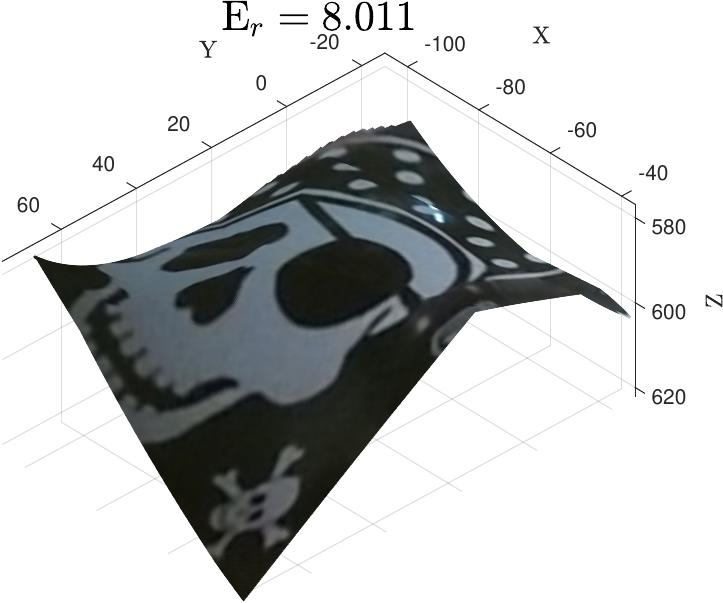}         &     \dincludegraphics[height=\imgheightC]{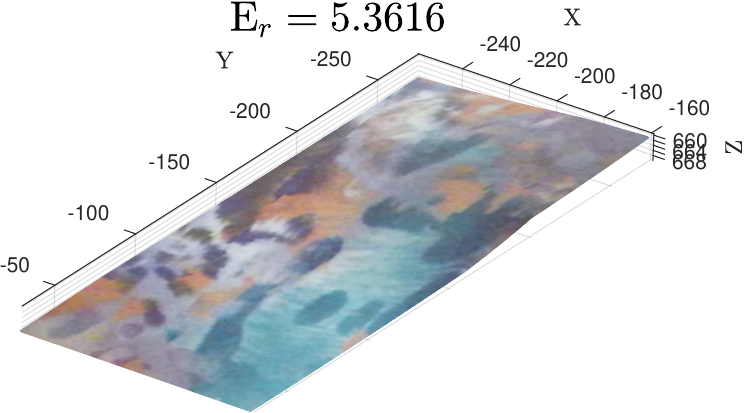}  \\ \hline  
\multicolumn{1}{c|}{\y{h2}} &   \dincludegraphics[height=\imgheightC]{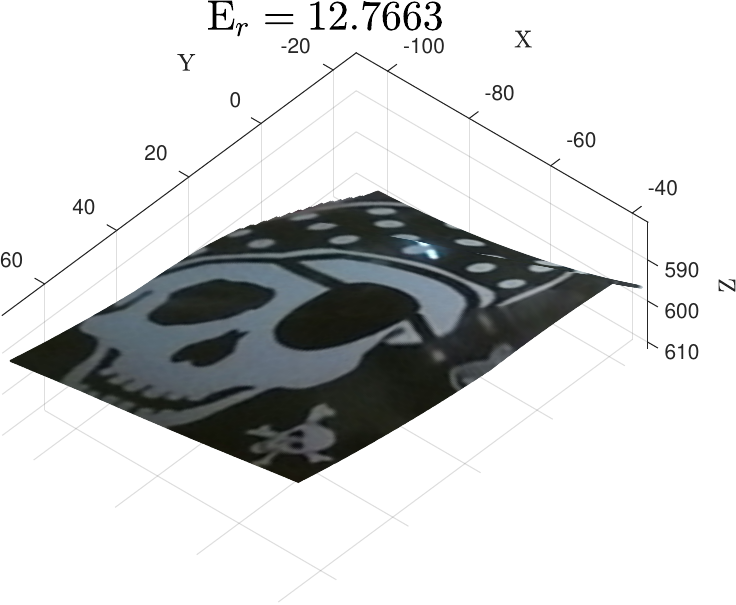}         &     \dincludegraphics[height=\imgheightD]{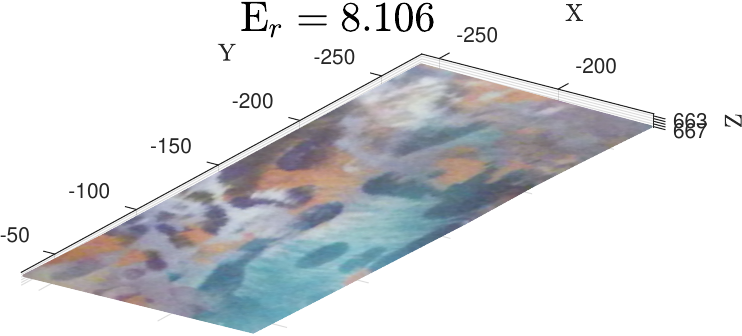}  \\ \hline  
\multicolumn{1}{c|}{\y{q1}} &   \dincludegraphics[height=\imgheightC]{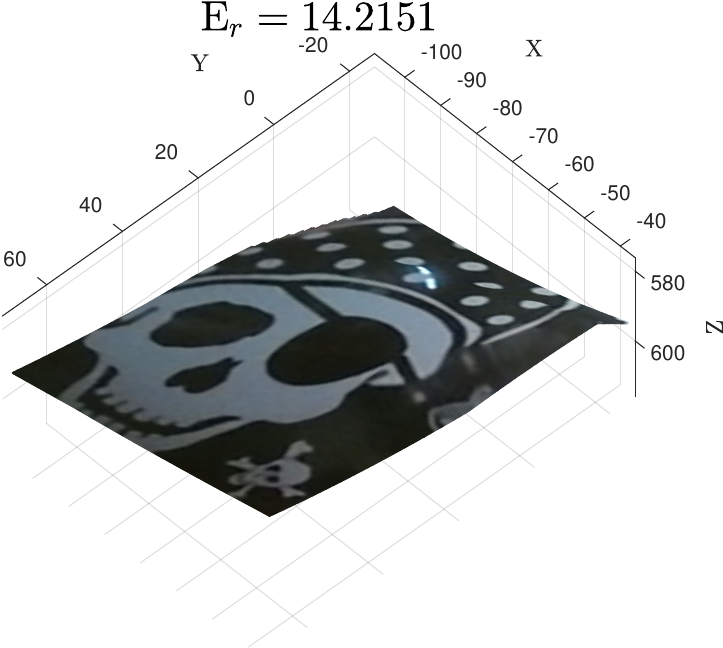}         &     \dincludegraphics[height=\imgheightC]{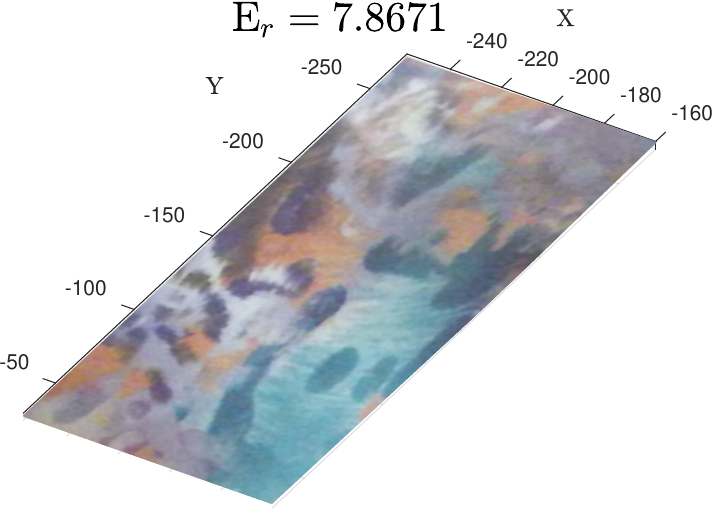} \\ \hline  
\multicolumn{1}{c|}{\y{q2}} &   \dincludegraphics[height=\imgheightC]{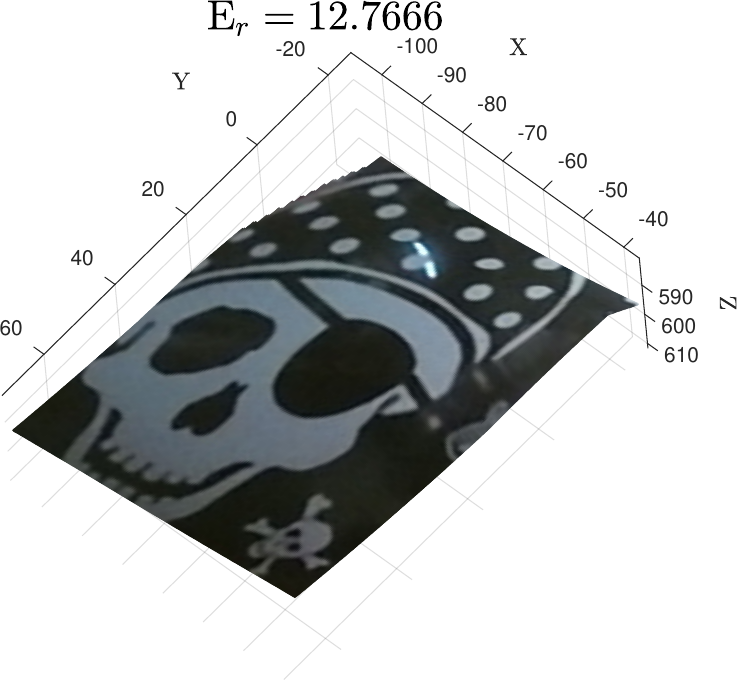}         &     \dincludegraphics[height=\imgheightC]{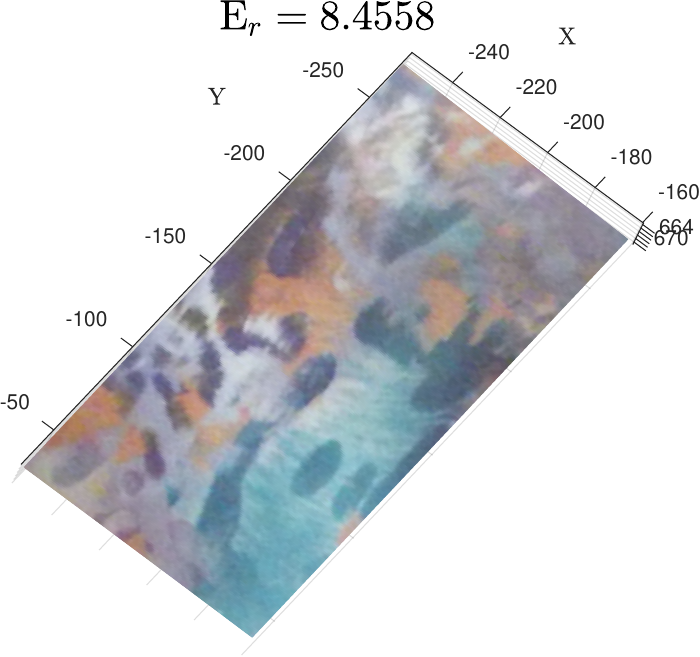} \\ \hline  
\multicolumn{1}{c|}{\cite{chhatkuli2014non}} &   \dincludegraphics[height=\imgheightC]{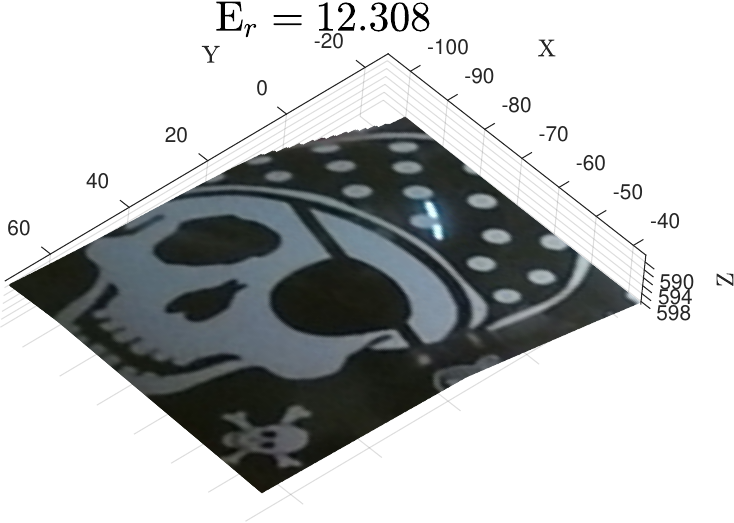}         &     \dincludegraphics[height=\imgheightC]{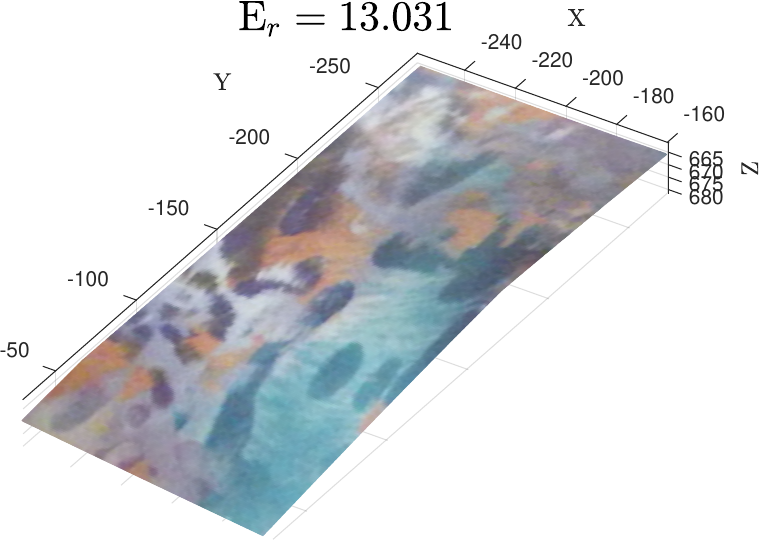}  \\ \hline  
\multicolumn{1}{c|}{\cite{hamsici2012learning}} &   \dincludegraphics[height=\imgheightC]{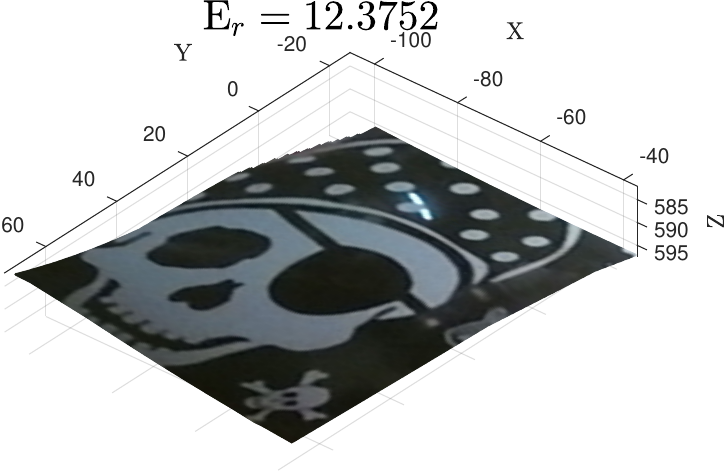}         &     \dincludegraphics[height=\imgheightC]{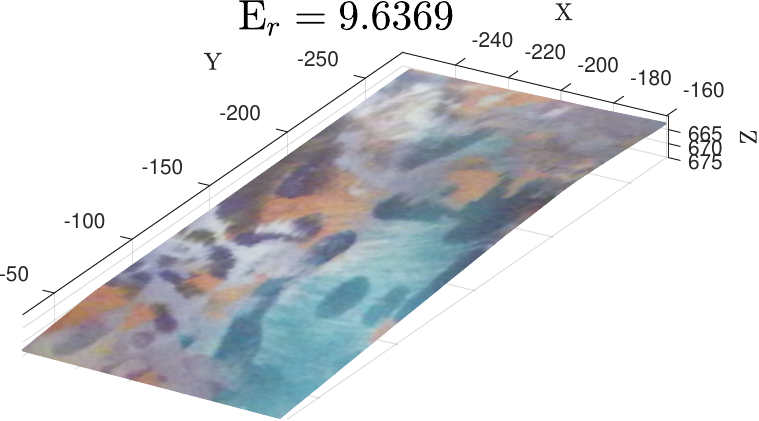}  \\ \hline  
\multicolumn{1}{c|}{\cite{chhatkuli2017inextensible}} &   \dincludegraphics[height=\imgheightC]{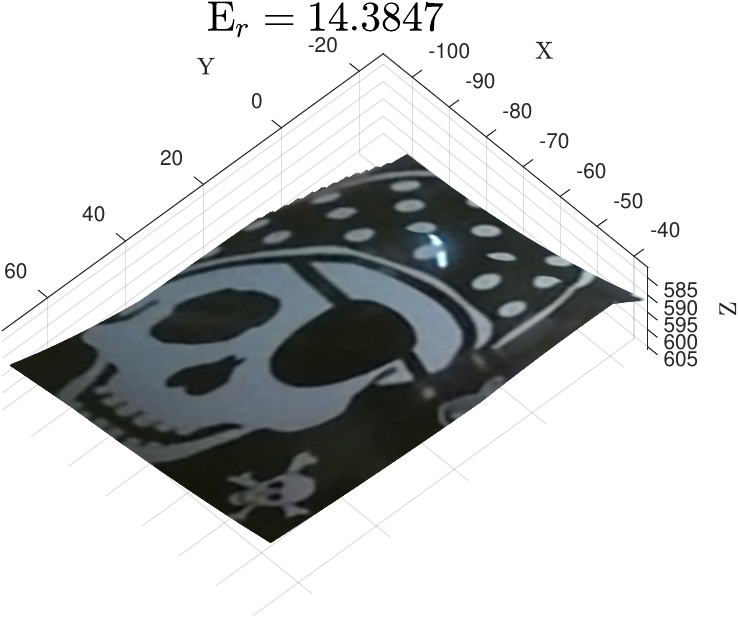}         &     \dincludegraphics[height=\imgheightC]{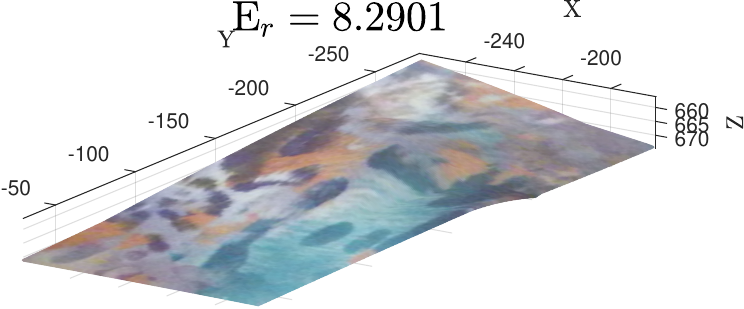}  \\ \hline  
\multicolumn{1}{c|}{\cite{ji2017maximizing}} &   \dincludegraphics[height=\imgheightC]{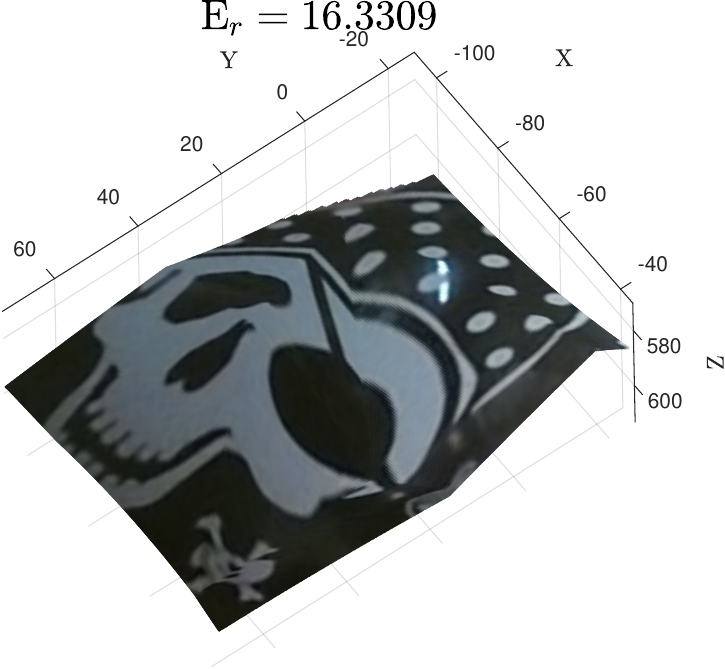}         &     \dincludegraphics[height=\imgheightC]{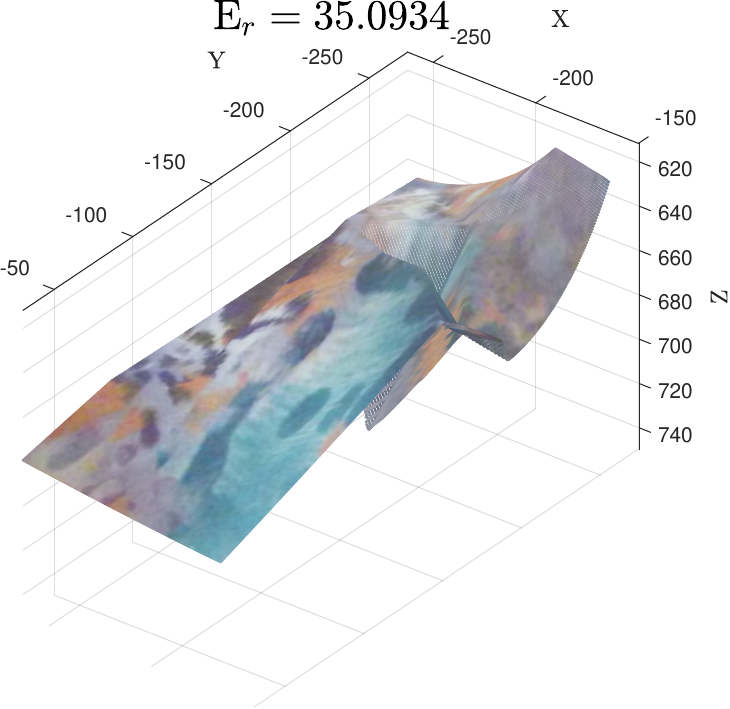}  \\ \hline  
\multicolumn{1}{c|}{\cite{gotardo2011kernel}} &   \dincludegraphics[height=\imgheightC]{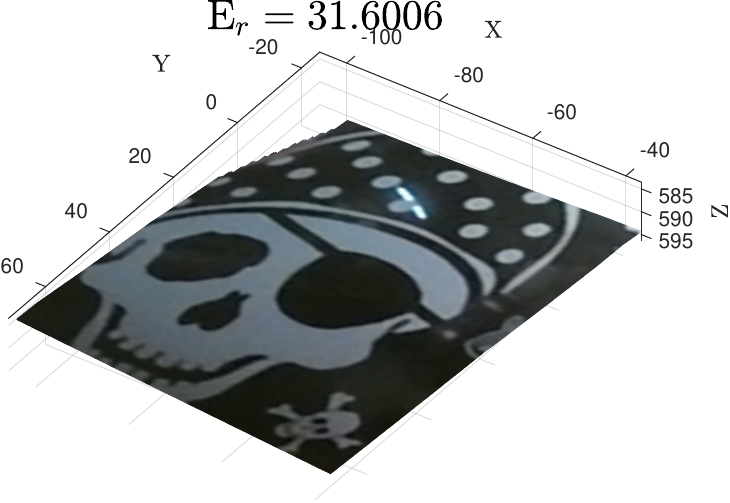}         &     $\times$
\end{tabular}
\end{table}

\begin{figure*}[t]
    \centering
    \begin{overpic}[width=14cm, trim=0 0 0 0,clip]{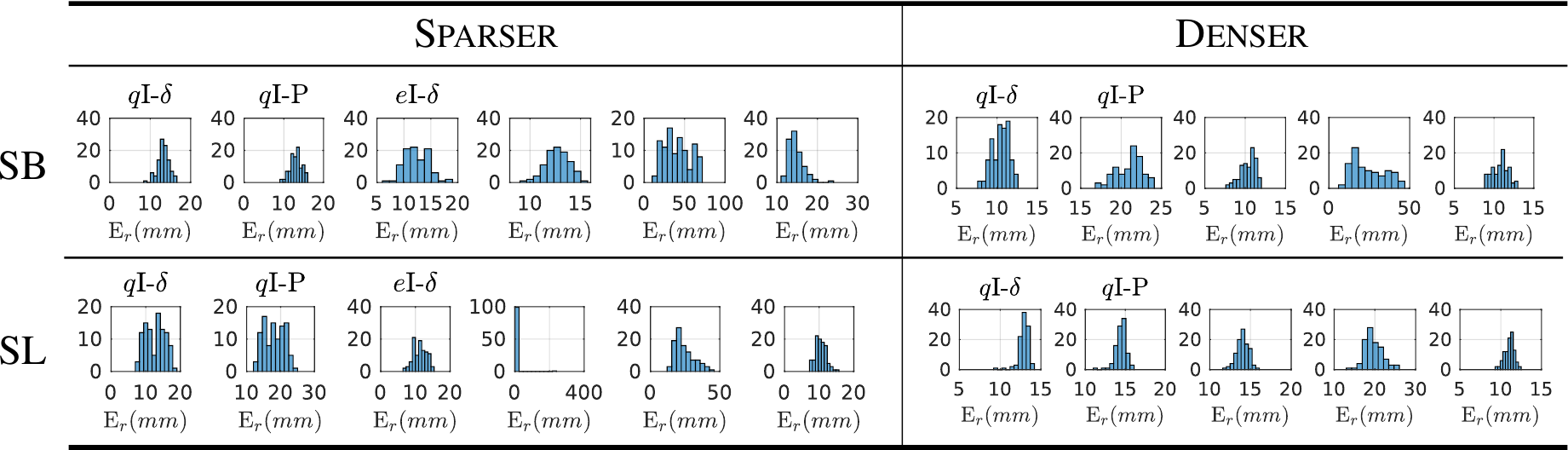}
    \put(134,88){\scalebox{.7}{\cite{chhatkuli2017inextensible}}}
    \put(167,88){\scalebox{.7}{\cite{ji2017maximizing}}}
    \put(202,88){\scalebox{.7}{\cite{chhatkuli2014non}}}
    
    \put(310,88){\scalebox{.7}{\cite{chhatkuli2017inextensible}}}
    \put(342,88){\scalebox{.7}{\cite{ji2017maximizing}}}
    \put(375,88){\scalebox{.7}{\cite{chhatkuli2014non}}}
    
    \put(134,41){\scalebox{.7}{\cite{chhatkuli2017inextensible}}}
    \put(167,41){\scalebox{.7}{\cite{ji2017maximizing}}}
    \put(202,41){\scalebox{.7}{\cite{chhatkuli2014non}}}
    
    \put(310,41){\scalebox{.7}{\cite{chhatkuli2017inextensible}}}
    \put(342,41){\scalebox{.7}{\cite{ji2017maximizing}}}
    \put(375,41){\scalebox{.7}{\cite{chhatkuli2014non}}}
    \end{overpic}
    \caption{The sparser plots are obtained by repeatedly and sparsely subsampling a set of large correspondences while the denser plots are obtained by densely subsampling the same set of large correspondences}
    \label{fig_dense}
\end{figure*}

\subsection{Effect of sparsity on accuracy of \y{h1}}
In the main article, we had briefly talked about the effect of sparsity on accuracy of \y{h1}. A non-chordal \y{psd} matrix completion without additional measures for handling sparsity can lead to sub-optimal solutions \cite{fukuda2001exploiting}. Unfortunately, for denser correspondences, maintaining a fully connected graph structure via $\mathcal{E}_3$ lead to extremely large $\{\mathfrak{T}_i\}$, which is computationally infeasible. We show this via a small-scale, synthetic example in figure~(\ref{fig_iso_chord}). As we increase the number of point correspondences in the input data while maintaining a constant $|\mathcal{E}_3|$ of 2, the accuracy decreases with increasing density, as the number of cliques in $\{\mathfrak{T}_i\}$ increases. In such a case, there can certainly exist situations where \y{s1} or \y{s2} dominates in accuracy over \y{h1}. However, this decrease in accuracy can be compensated by adapting $|\mathcal{E}_3|$, such that $\{\mathfrak{T}_i\}$ remains fully connected, as shown in figure~(\ref{fig_iso_chord}). But this is not generalisable to actual, larger dataset with large point correspondences due to computational limitations. Notably, such an effect of sparsity is absent in \y{s1}, \y{s2}, \y{q1} or \y{q2}, since arbitrarily increasing neighborhood does not create a significant computational bottleneck for these methods.

However, in figure~(\ref{fig_dense}), we show that for the two extensible objects \y{sl} and \y{sb}, the higher accuracy of \y{h1} (from Table~2) is generalisable to different correspondences across multiple resolutions. We randomly sample $\sim50$ point correspondences from \y{sb} and \y{sl} in two different resolutions, the \textit{sparser} one being 11 correspondences and the \textit{denser} one being $\sim$30 points. Over many repetitions, we show that \y{h1} with sparser correspondences form the most accurate method of reconstruction over all resolutions, although the accuracy improvement over \cite{chhatkuli2014non, chhatkuli2017inextensible} are relatively small. But importantly, there is a significant and consistent improvement in accuracy with \y{h1} over \y{q1}, which fits our goal of obtaining a \y{hnr} method that improves upon our proposed \y{qnr} in case of highly stretchable objects.
\begin{figure}[t]
    \centering
    \begin{overpic}[width=5.5cm, trim=0 0 0 0,clip]{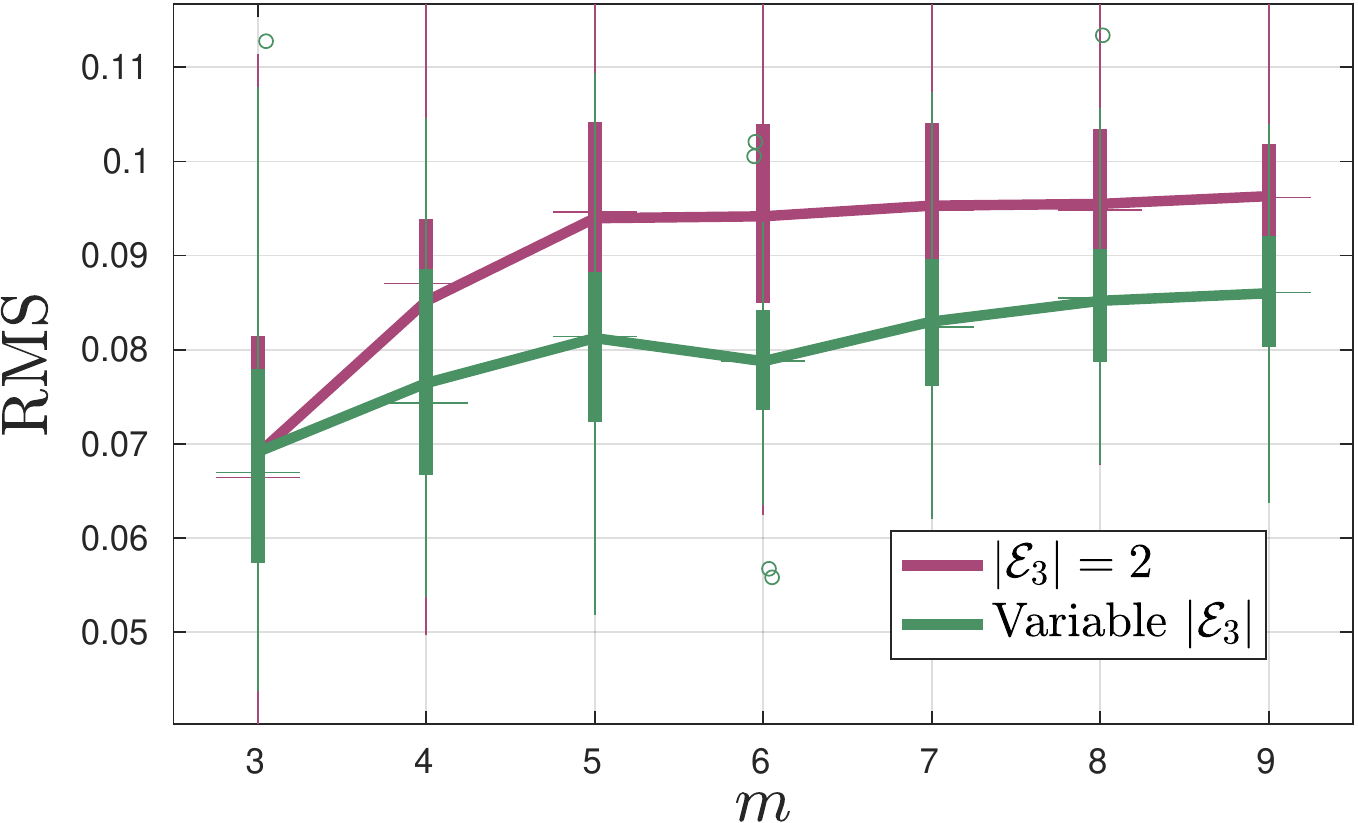}
    \end{overpic}
    \caption{Variation of accuracy with sparsity with two different strategies: one with a constant $\mathcal{E}_3$ of two 2-simplices and the other creates a nearly chordal graph-structure with adaptive $\mathcal{E}_3$}
    \label{fig_iso_chord}
\end{figure}

\subsection{Results from \y{h2}}
With the modification to \y{h2} from equation~(\ref{eqn:hnr_xyz_sol_3}), the \y{rms} on \y{sb} and \y{sl} are 12.48 and 8.35 $mm$ respectively, while the qualitative results have been shown along with other methods in table~(\ref{tab3}). In comparison with table~(2) of the main paper, \y{h2} appears less accurate than \y{h1}. This is surprising, since in most experiments, \y{pp} methods seem to outperform \y{dsl} methods. But for \y{h2}, this drop in accuracy stems from the additional relaxations of section~(\ref{sec_acc}). However, \y{h2} nonetheless marginally improves the results of \y{q2}.

\newcommand{\qw}{3.8cm}
\begin{figure*}[t]
    \centering
    \begin{overpic}[width=\qw, trim=283 40 25 25,clip]{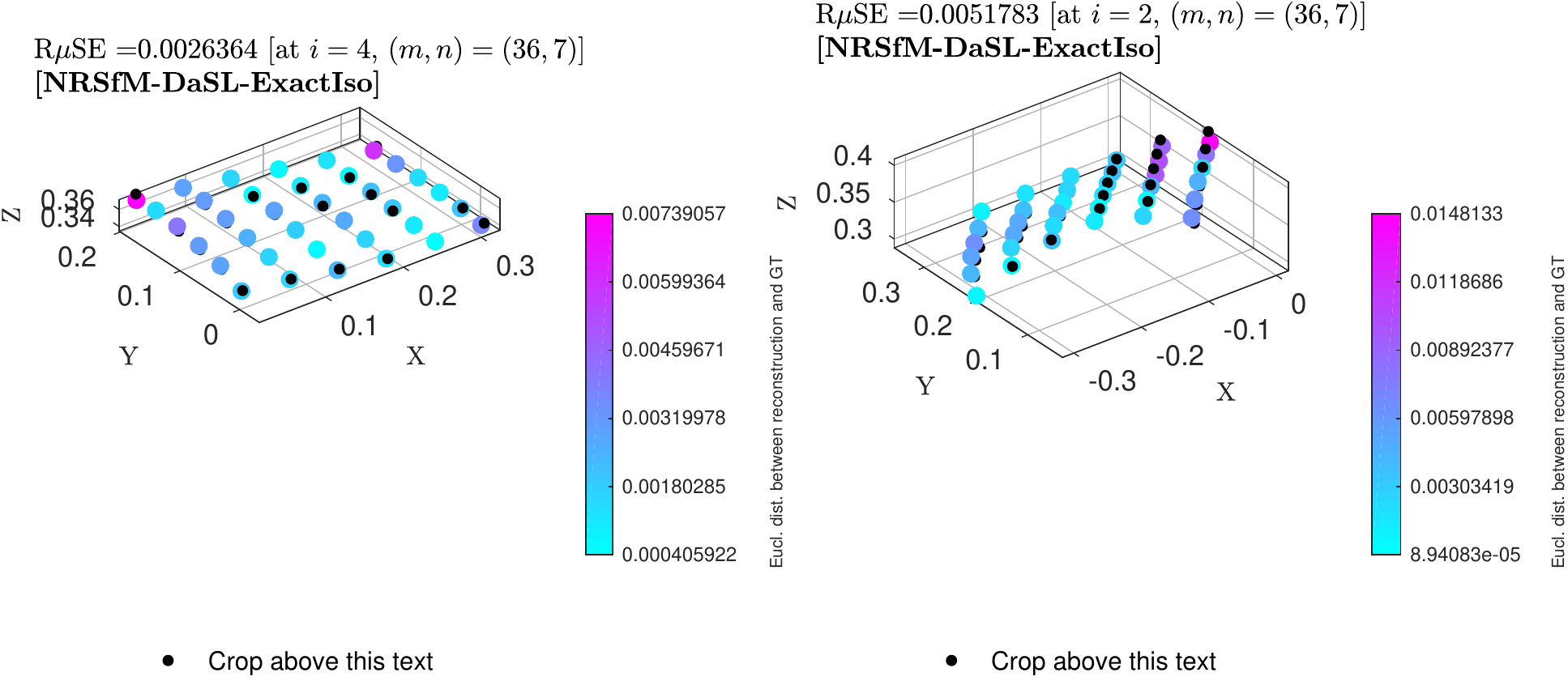}
    \end{overpic}
    \begin{overpic}[width=\qw, trim=283 40 25 25,clip]{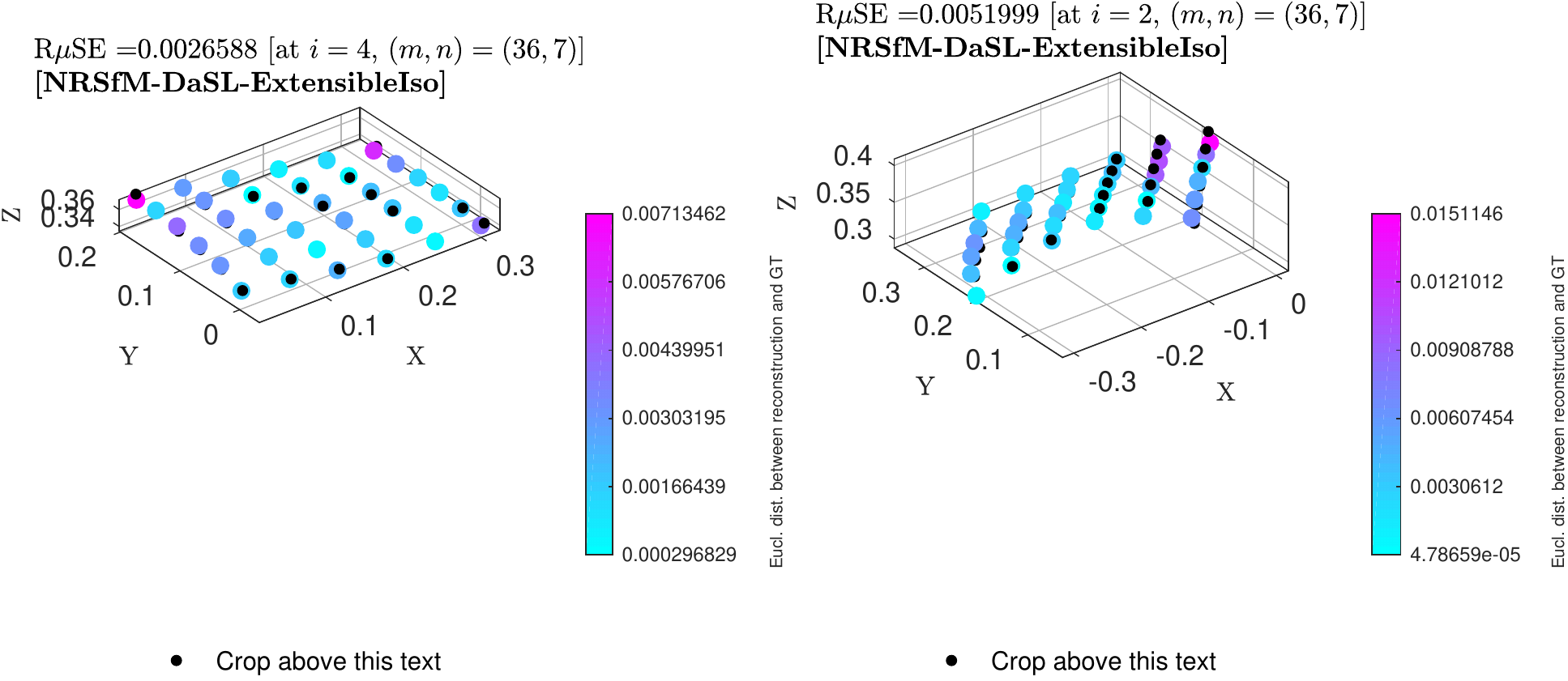}
    \end{overpic}
    \begin{overpic}[width=\qw, trim=283 40 25 25,clip]{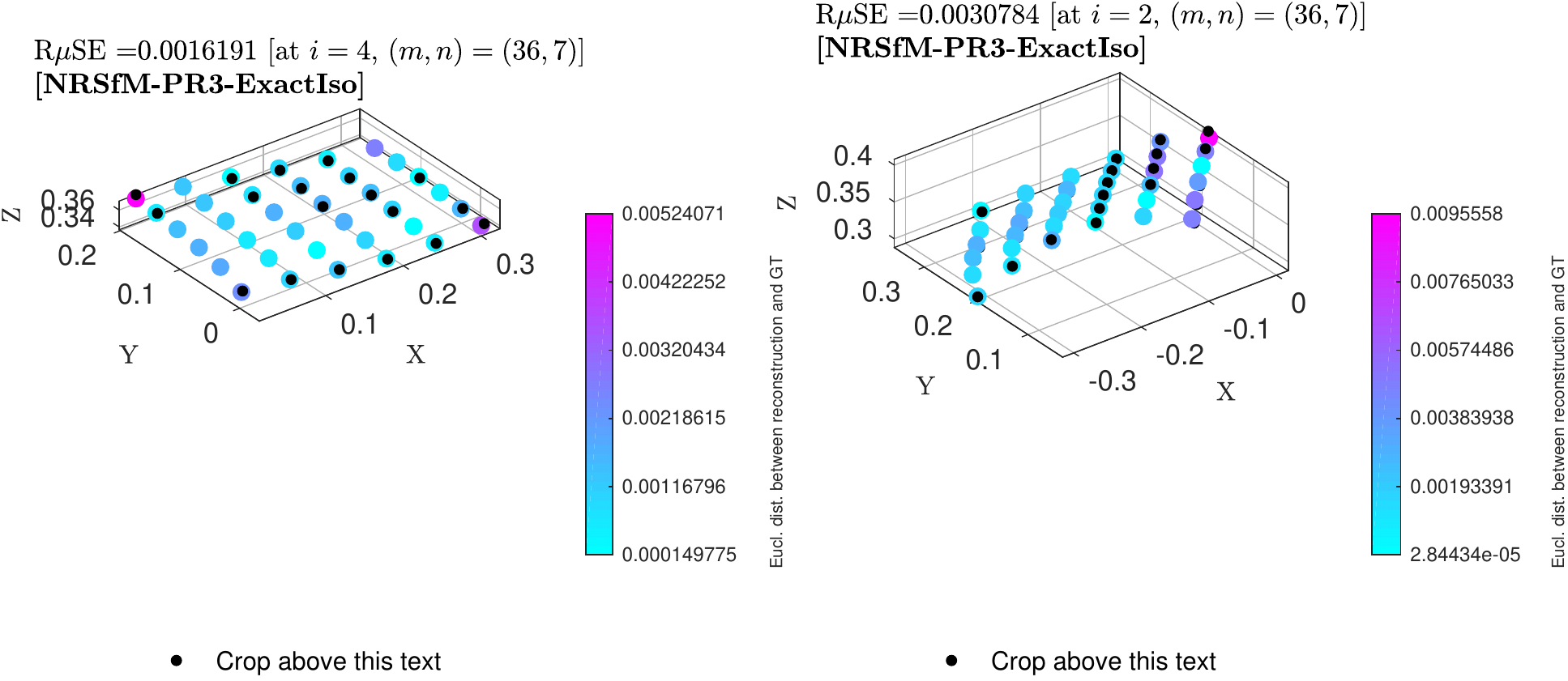}
    \end{overpic}
    \begin{overpic}[width=\qw, trim=283 40 25 25,clip]{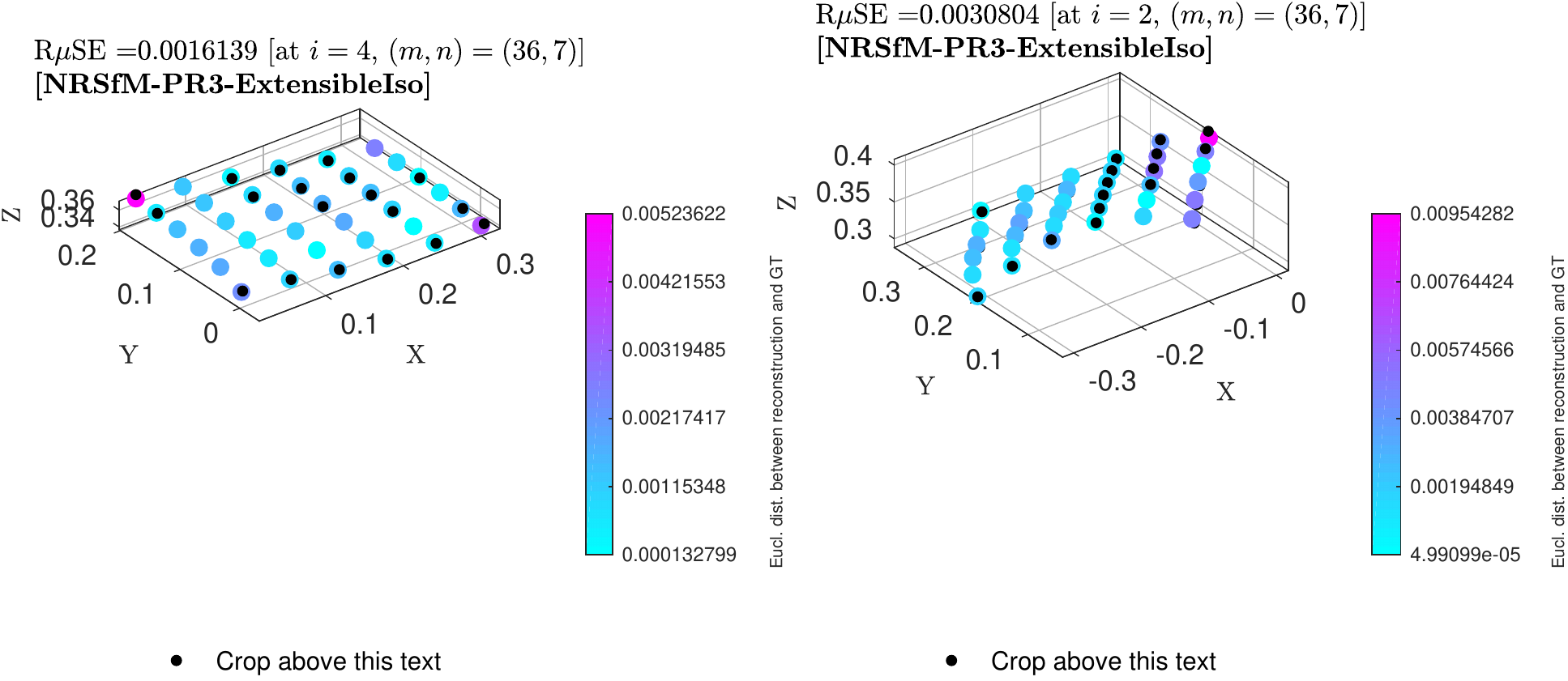}
    \end{overpic}
    
    \begin{overpic}[width=\qw, trim=286 40 25 25,clip]{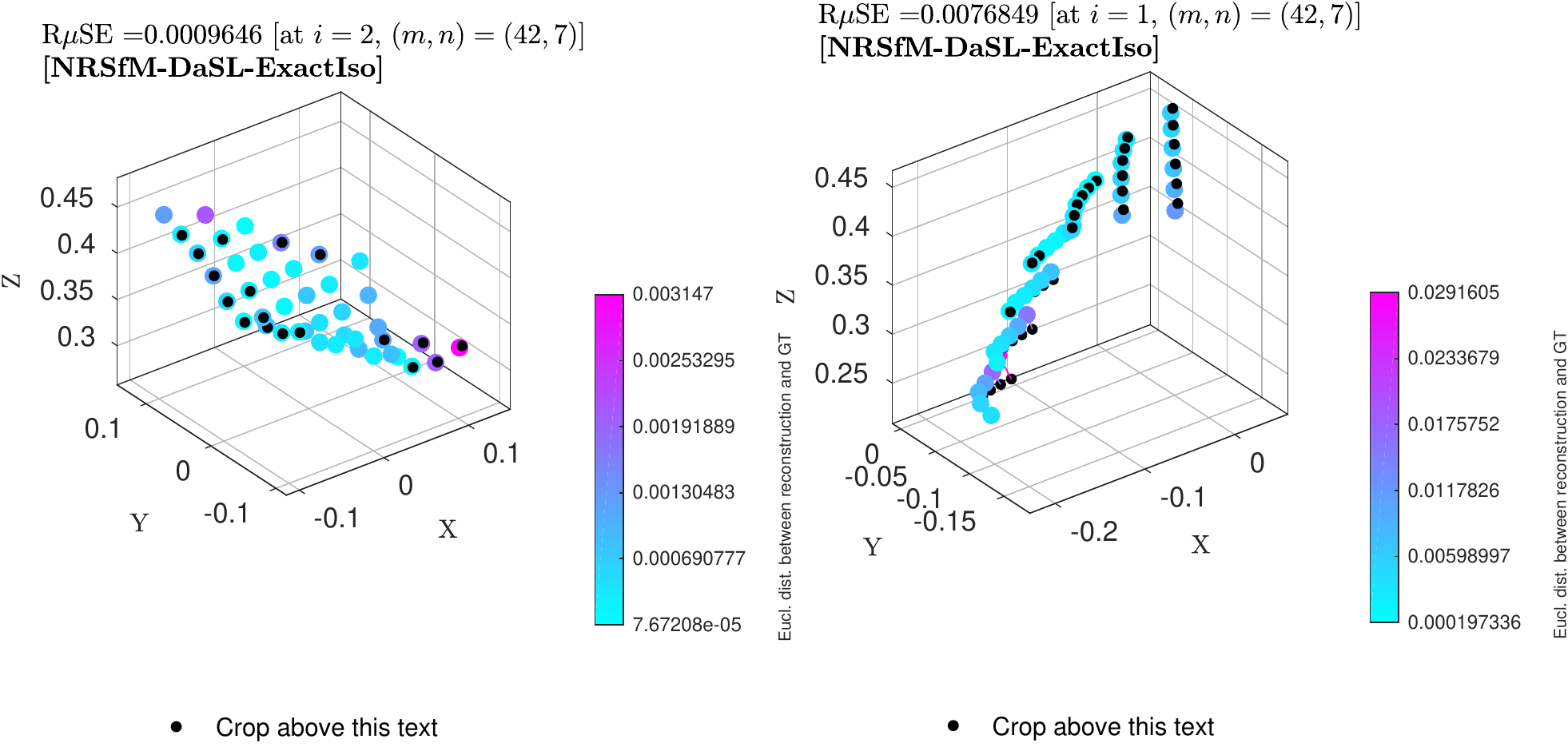}
    \end{overpic}
    \begin{overpic}[width=\qw, trim=286 40 25 25,clip]{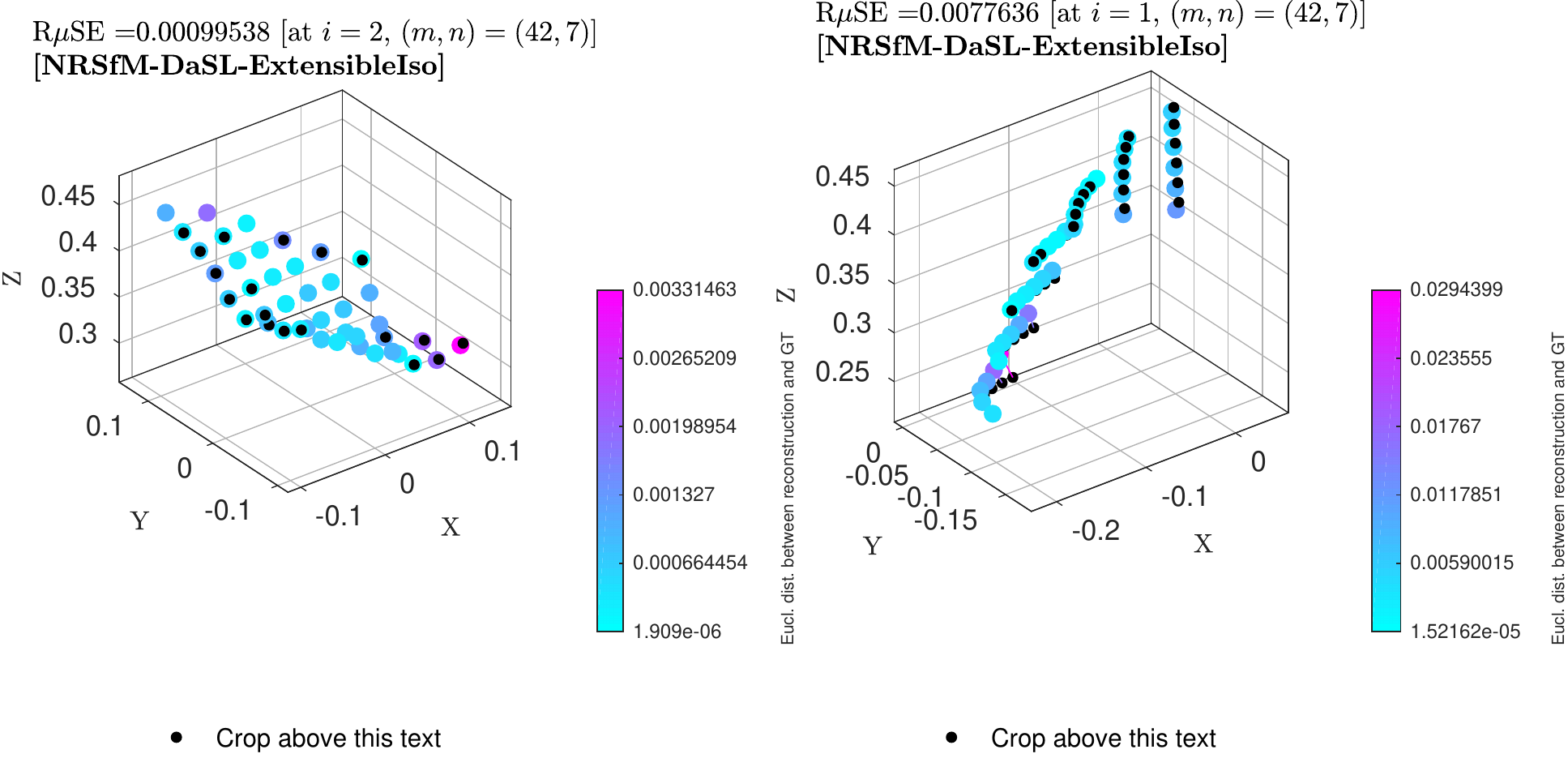}
    \end{overpic}
    \begin{overpic}[width=\qw, trim=286 40 25 25,clip]{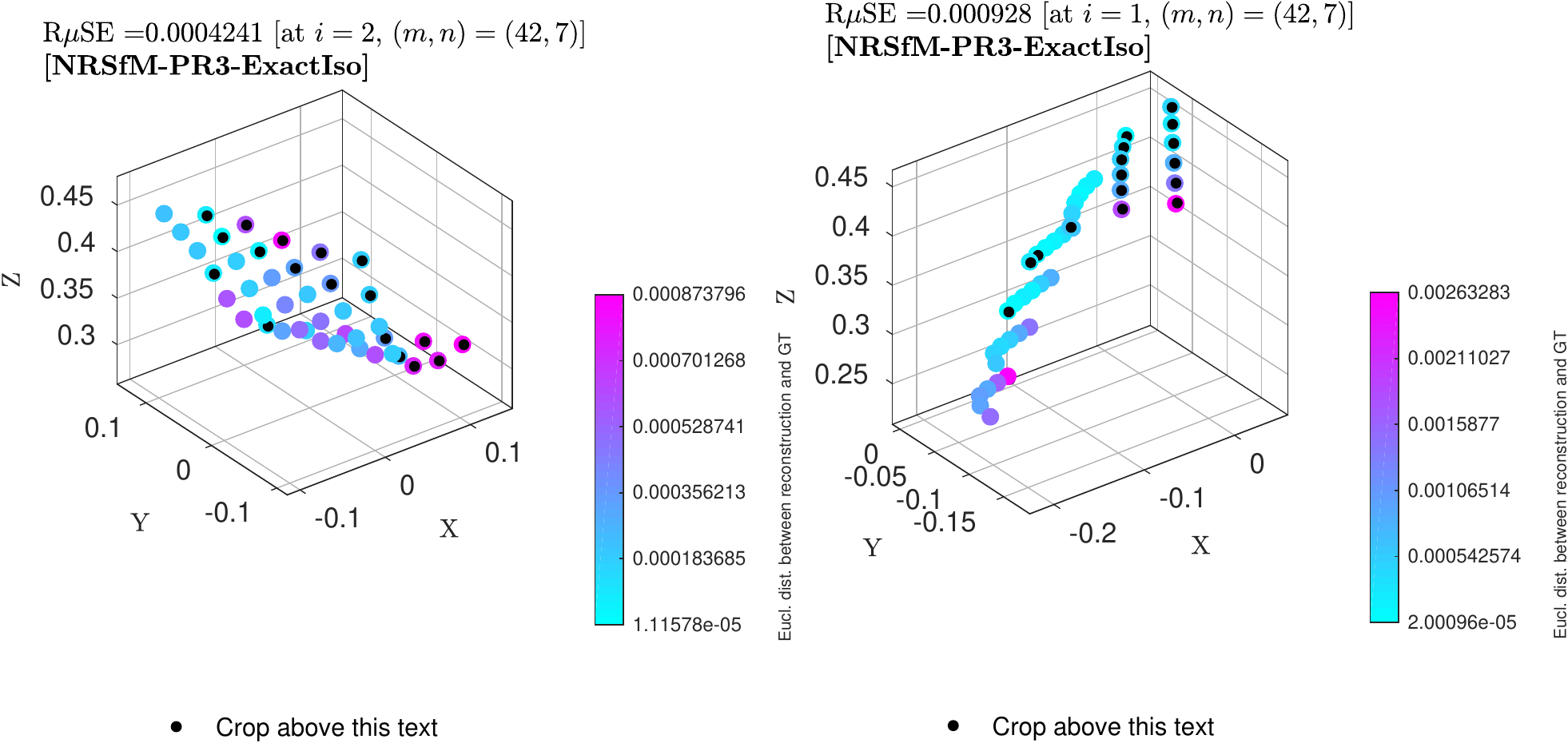}
    \end{overpic}
    \begin{overpic}[width=\qw, trim=286 40 25 25,clip]{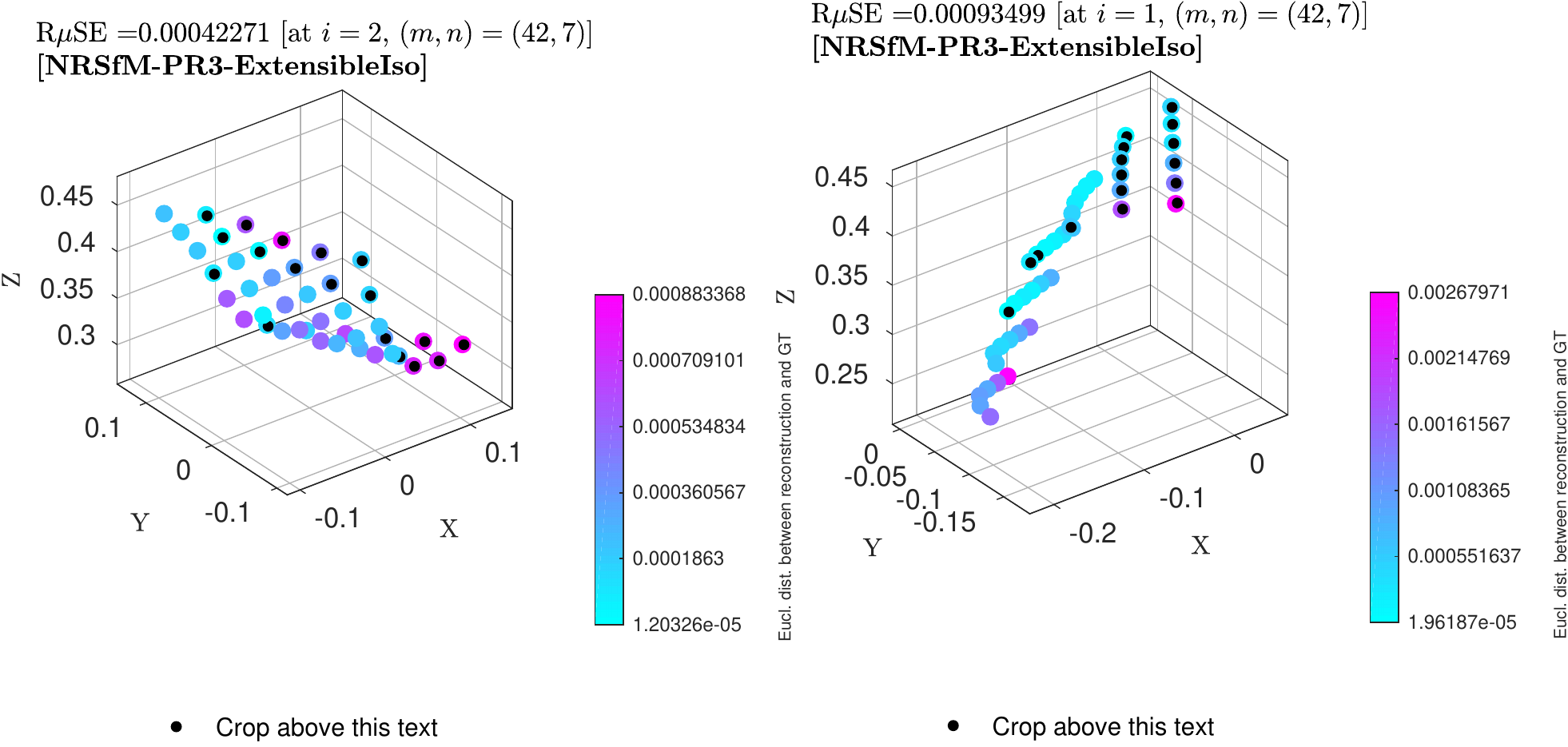}
    \end{overpic}    
    
    \begin{overpic}[width=\qw, trim=10 40 286 25,clip]{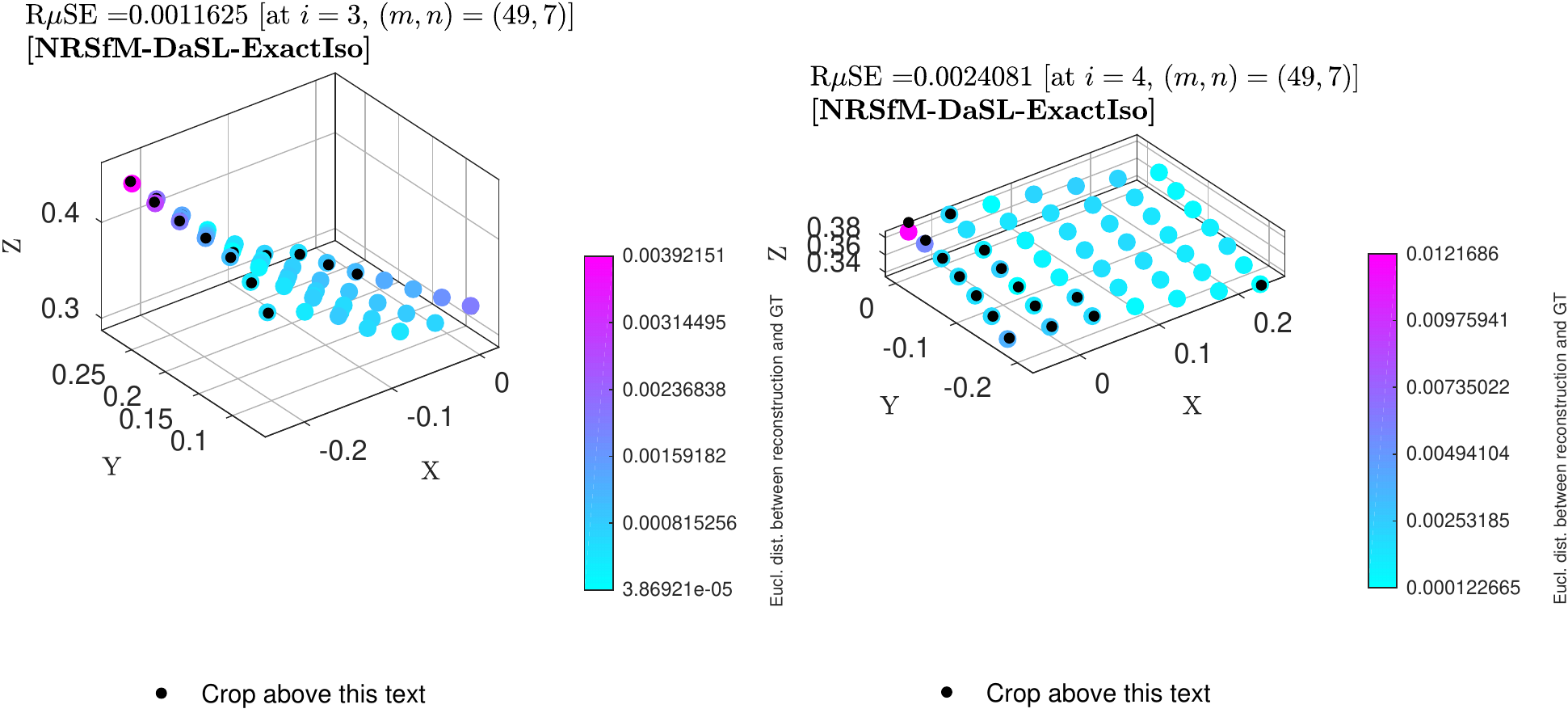}
    \end{overpic}
    \begin{overpic}[width=\qw, trim=10 40 286 25,clip]{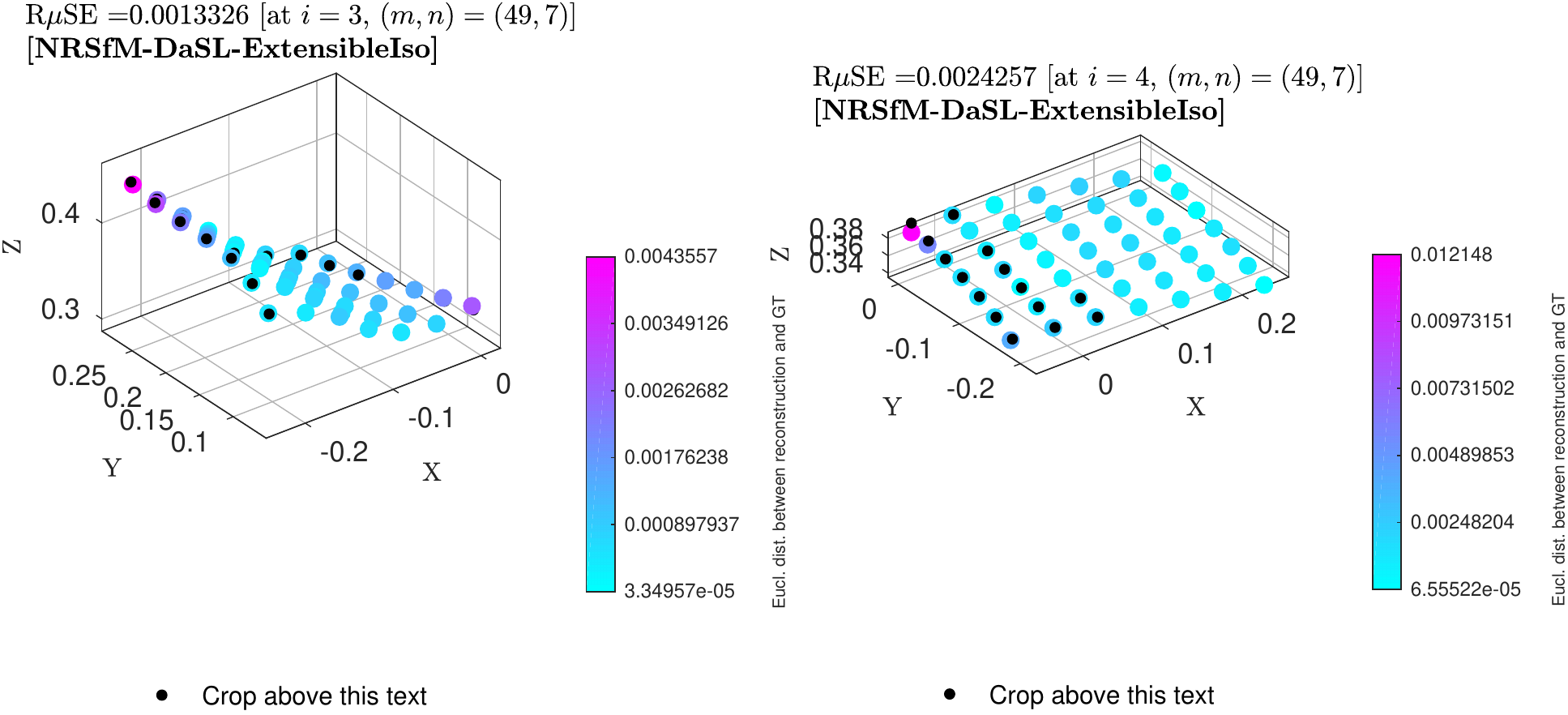}
    \end{overpic}
    \begin{overpic}[width=\qw, trim=10 40 286 25,clip]{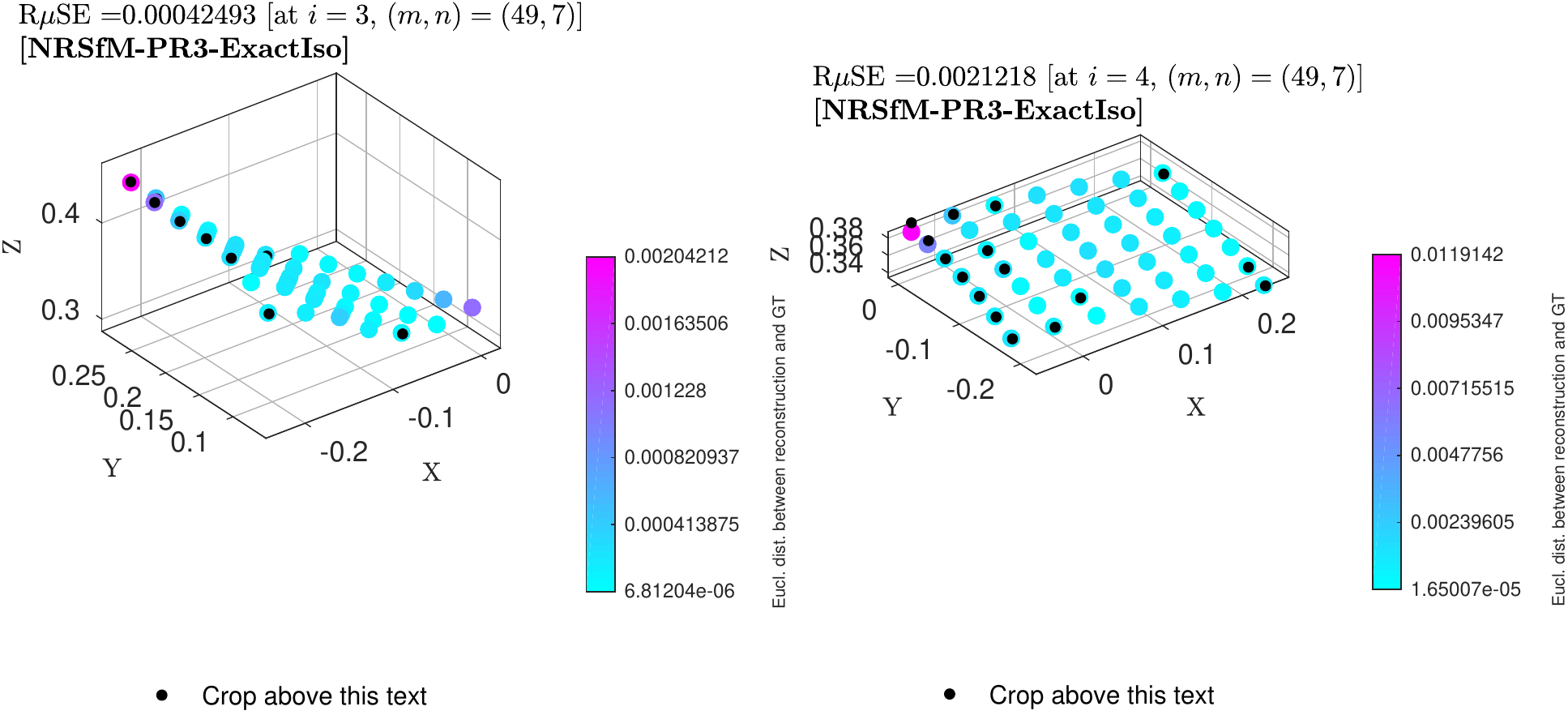}
    \end{overpic}
    \begin{overpic}[width=\qw, trim=10 40 286 25,clip]{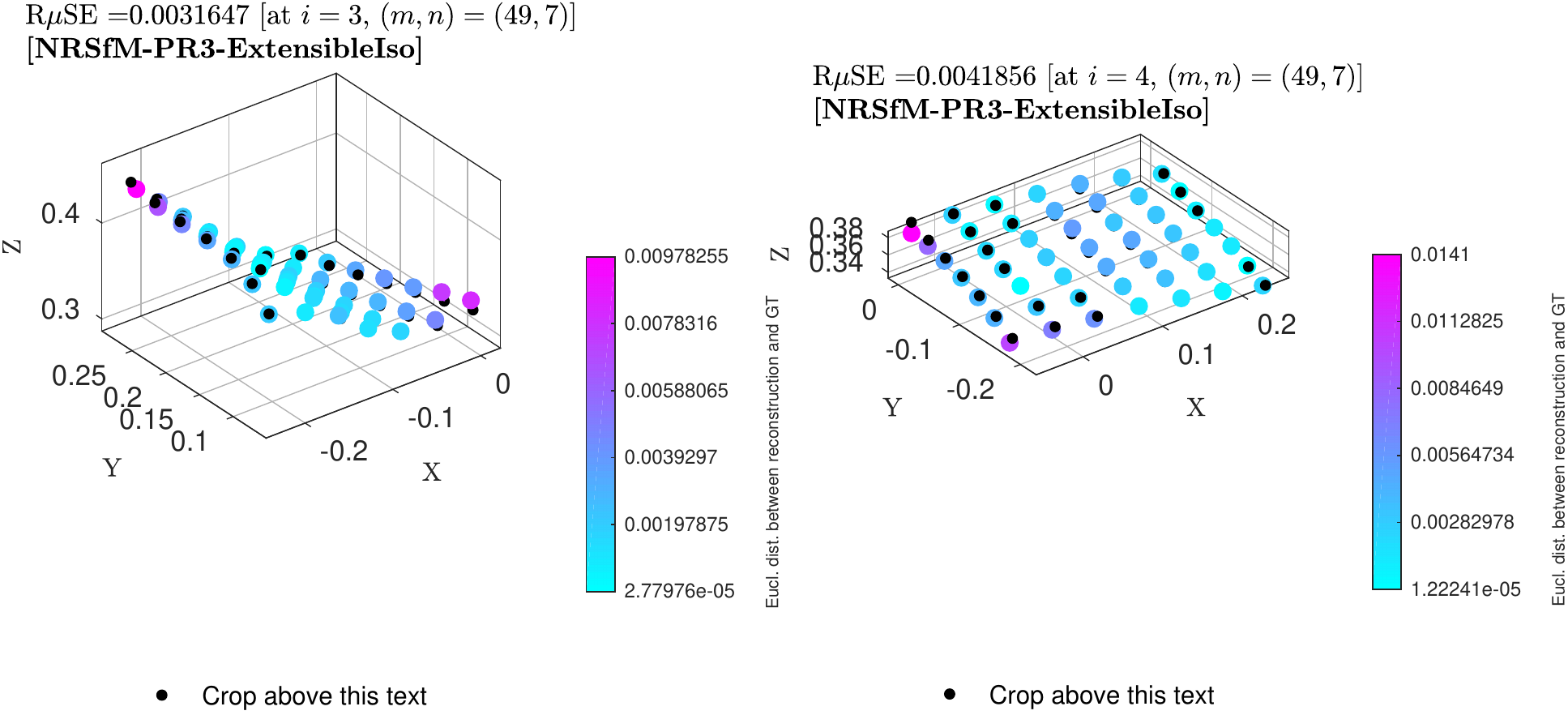}
    \end{overpic}
    
    \begin{overpic}[width=\qw, trim=10 40 286 25,clip]{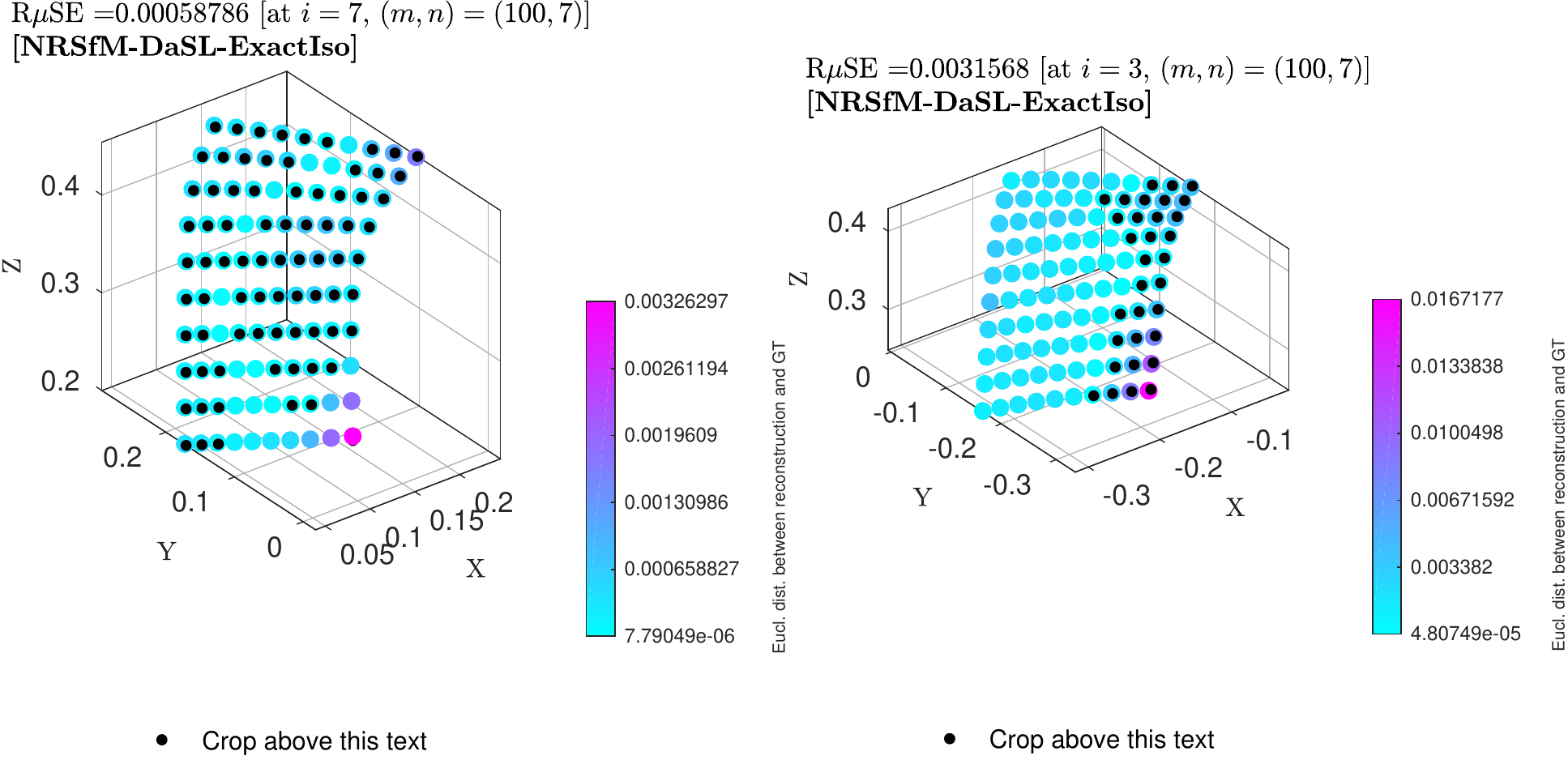}
    \end{overpic}
    \begin{overpic}[width=\qw, trim=10 40 286 25,clip]{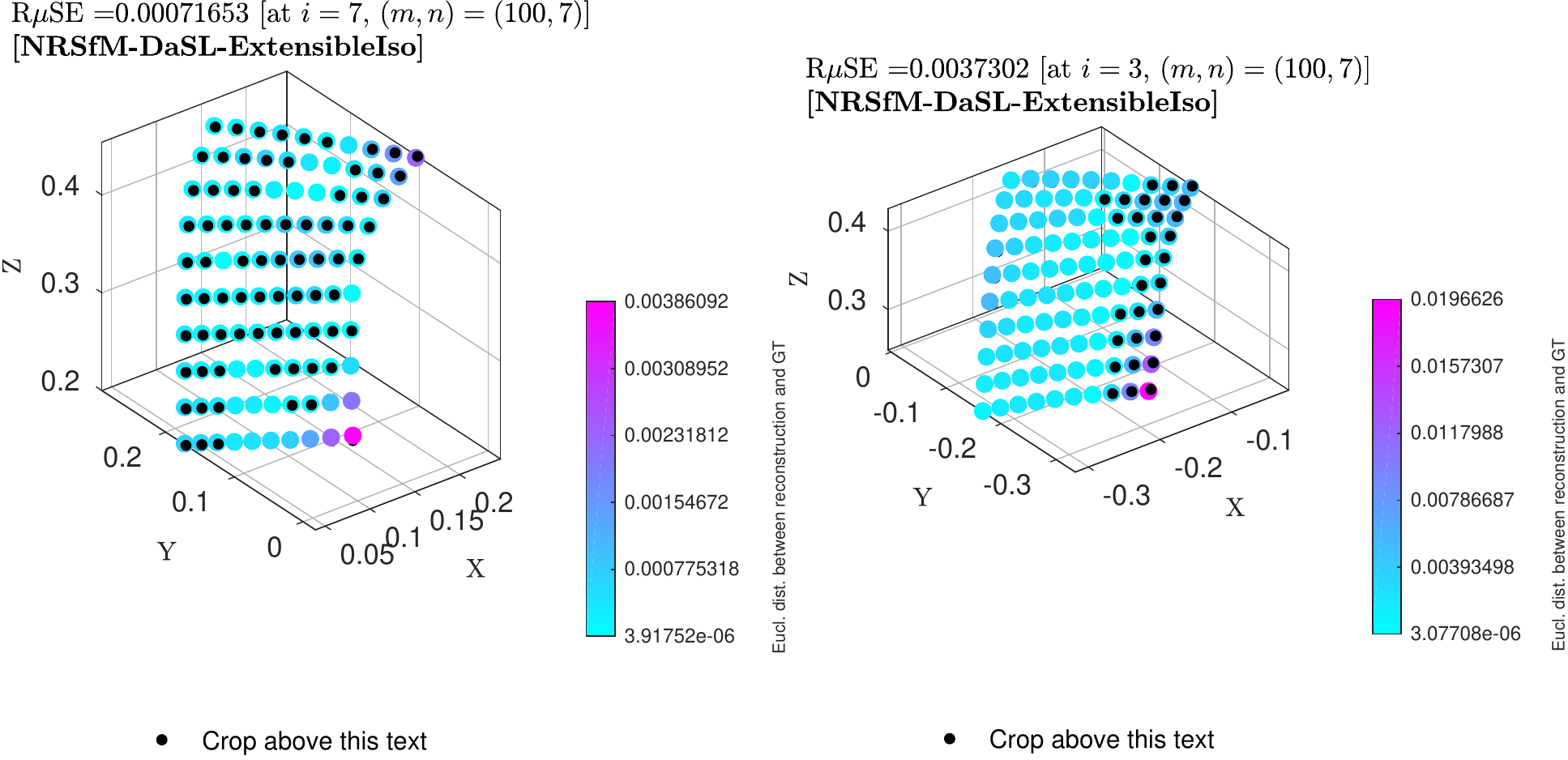}
    \end{overpic}
    \begin{overpic}[width=\qw, trim=10 40 286 25,clip]{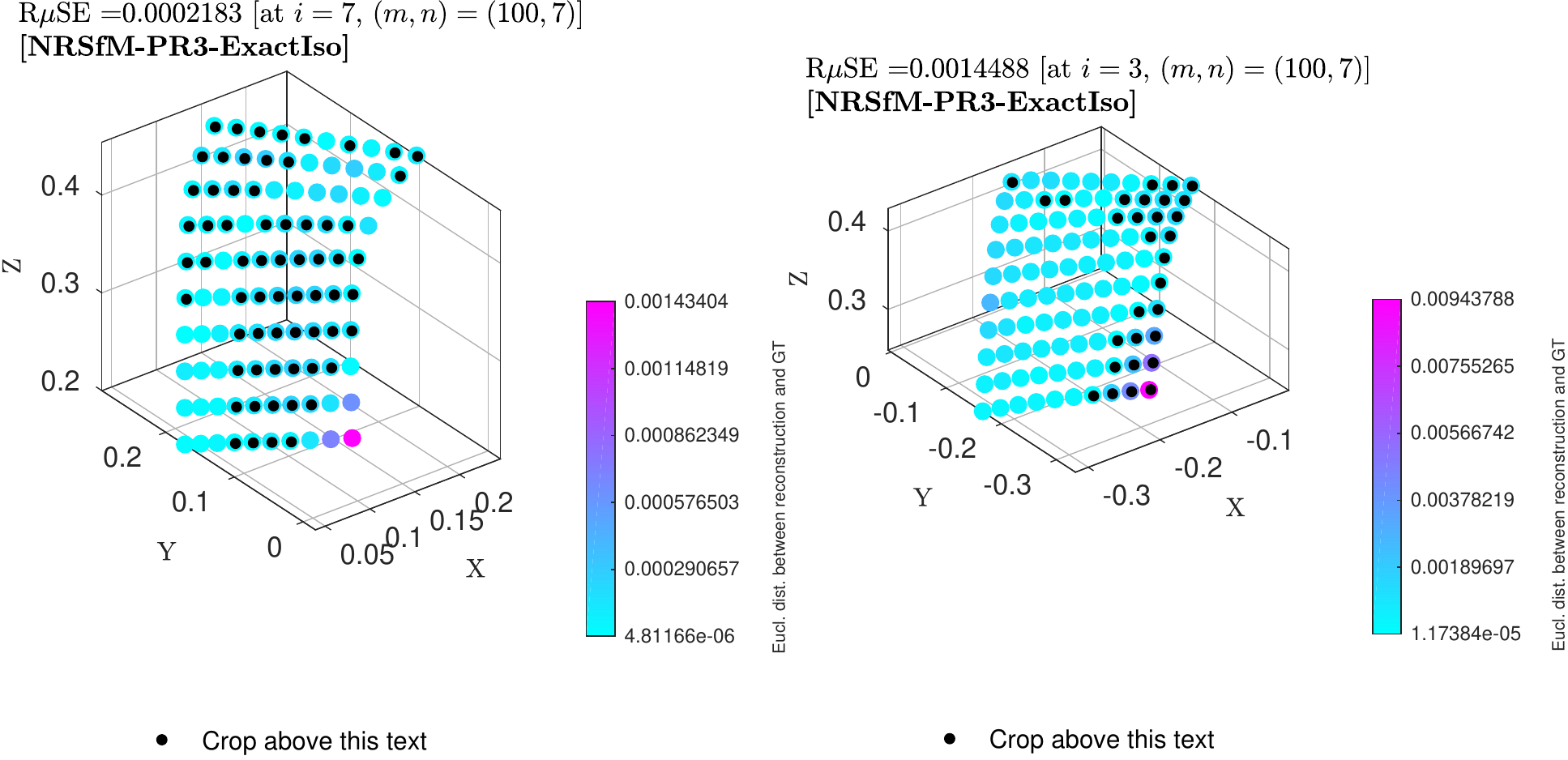}
    \end{overpic}
    \begin{overpic}[width=\qw, trim=10 40 286 25,clip]{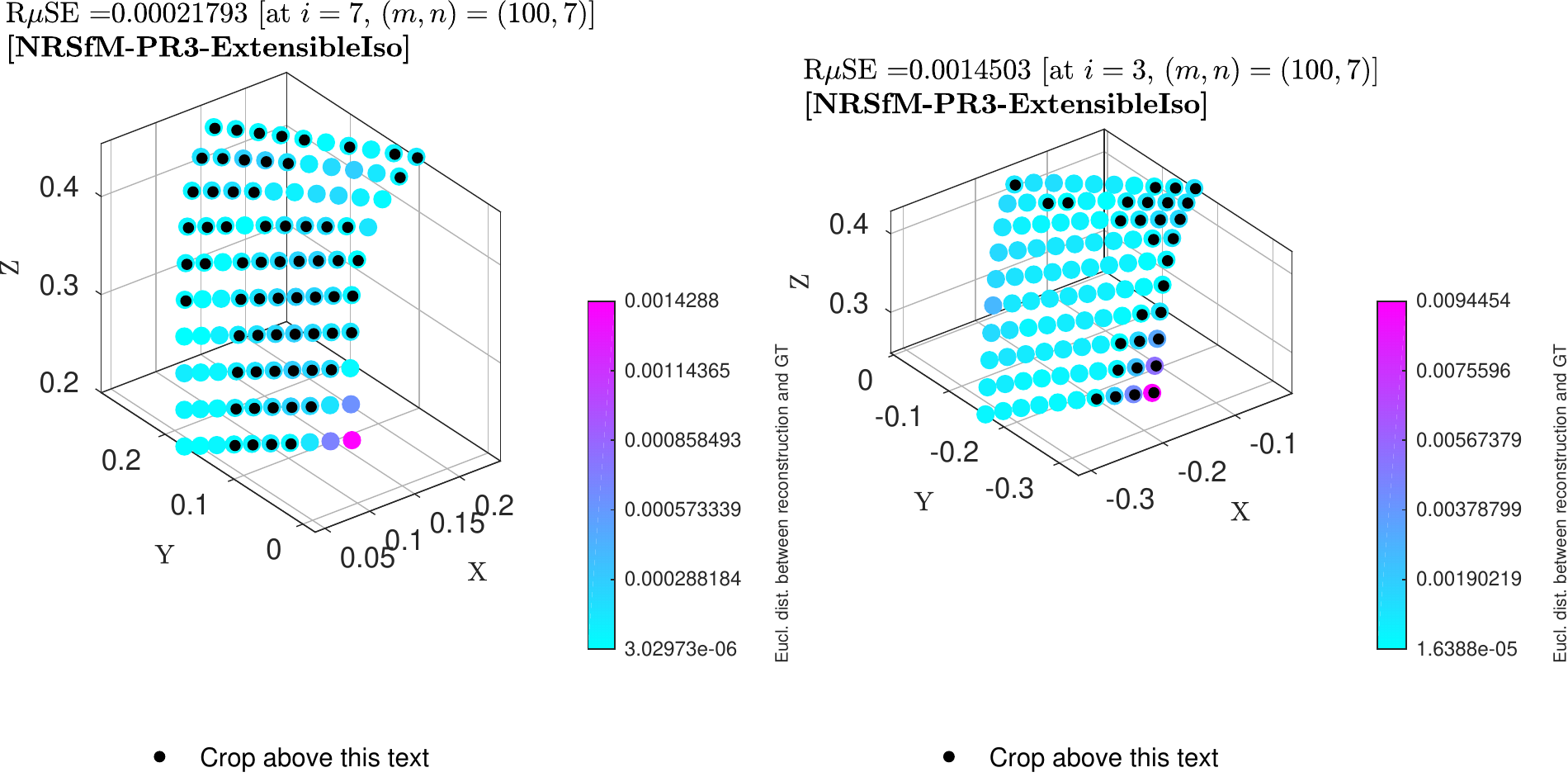}
    \end{overpic} 
    
    \begin{overpic}[width=\qw, trim=270 40 25 25,clip]{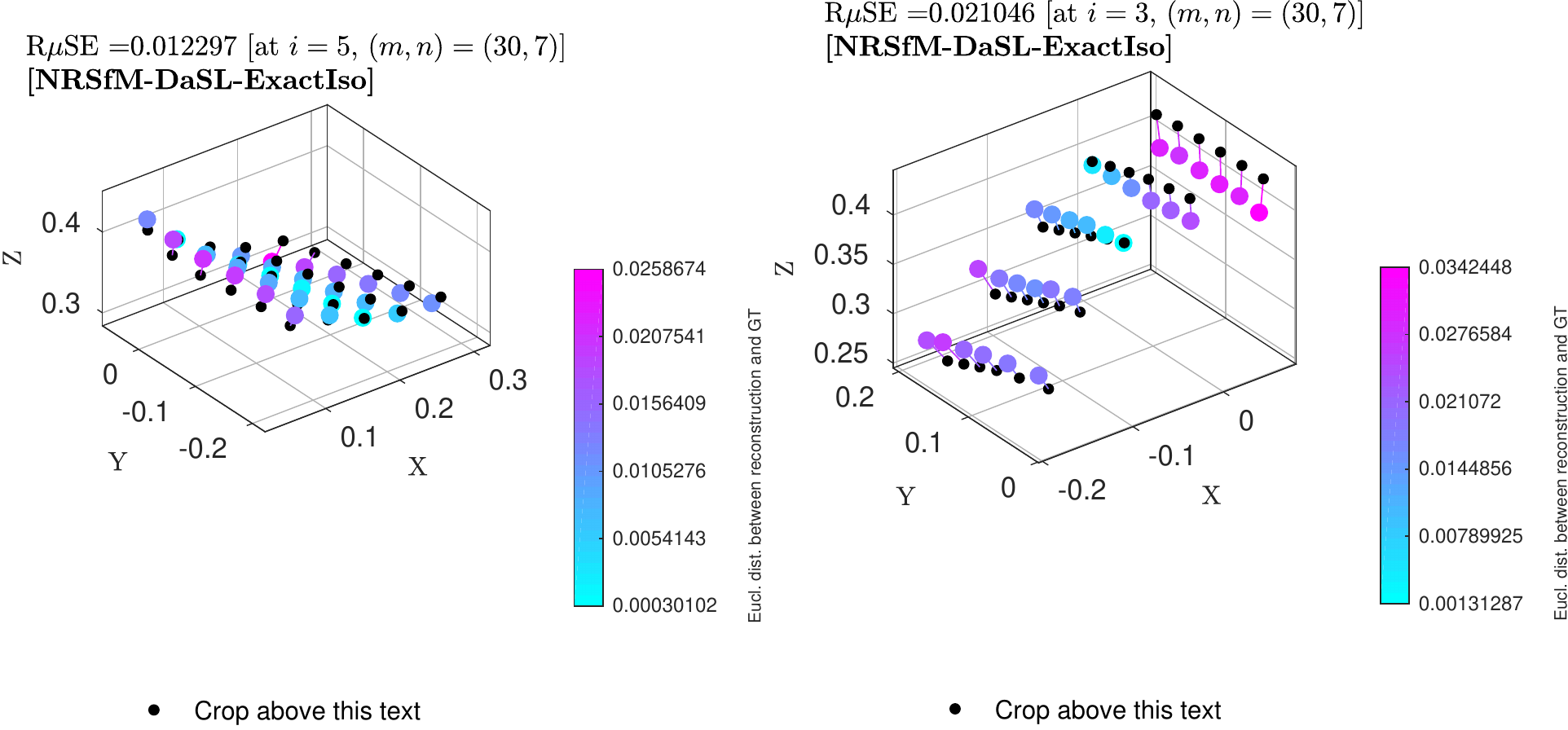}
    \end{overpic}
    \begin{overpic}[width=\qw, trim=273 40 25 25,clip]{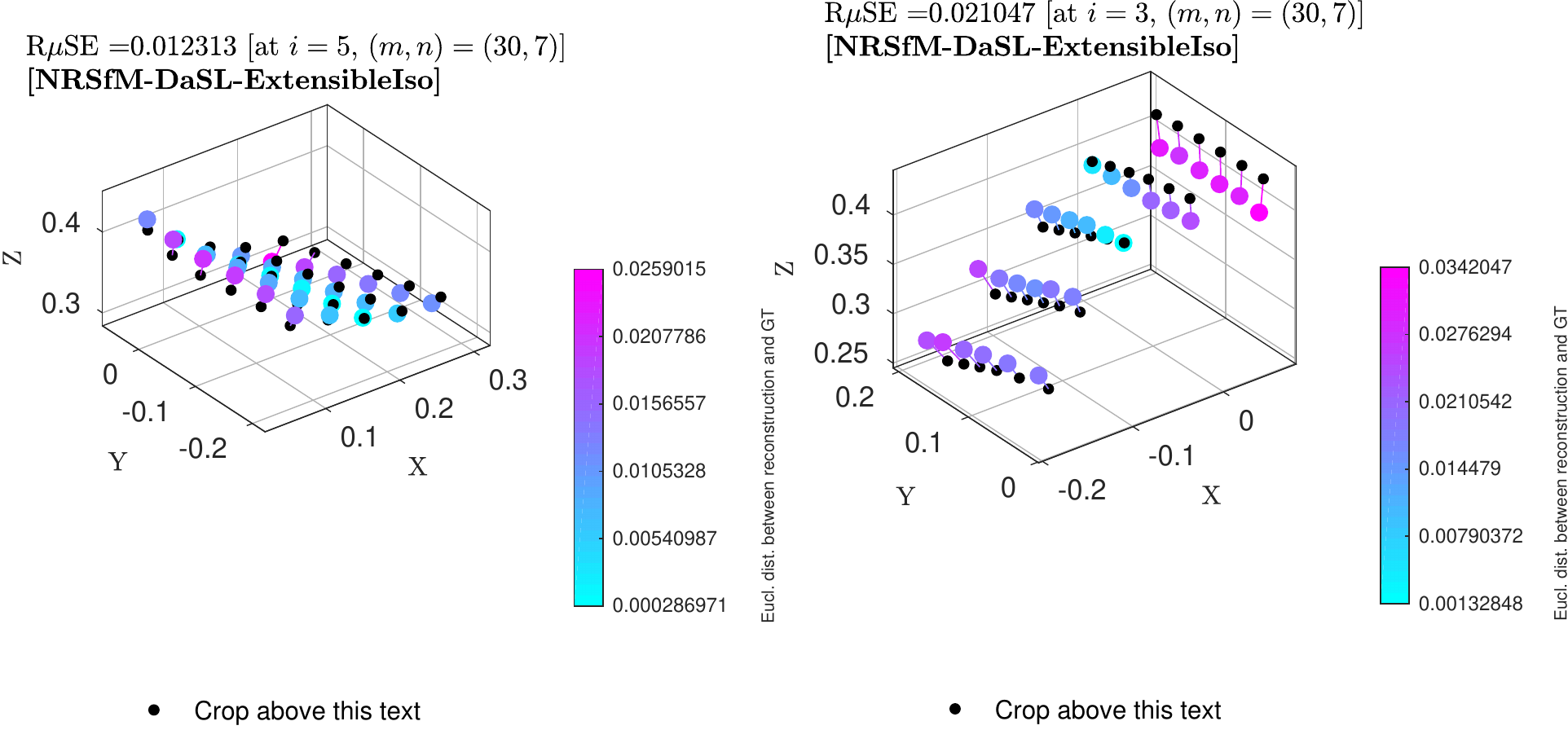}
    \end{overpic}
    \begin{overpic}[width=\qw, trim=280 40 25 25,clip]{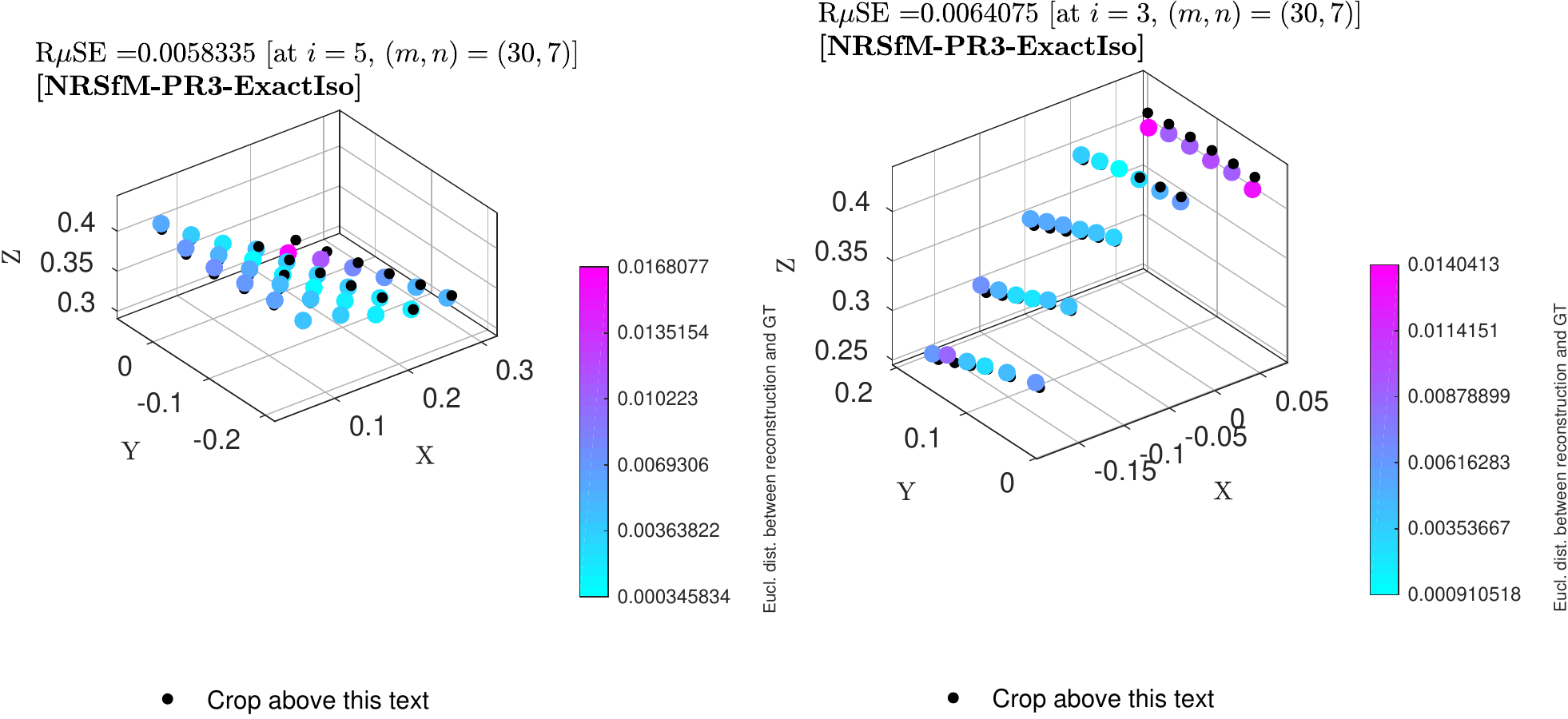}
    \end{overpic}
    \begin{overpic}[width=\qw, trim=280 40 25 25,clip]{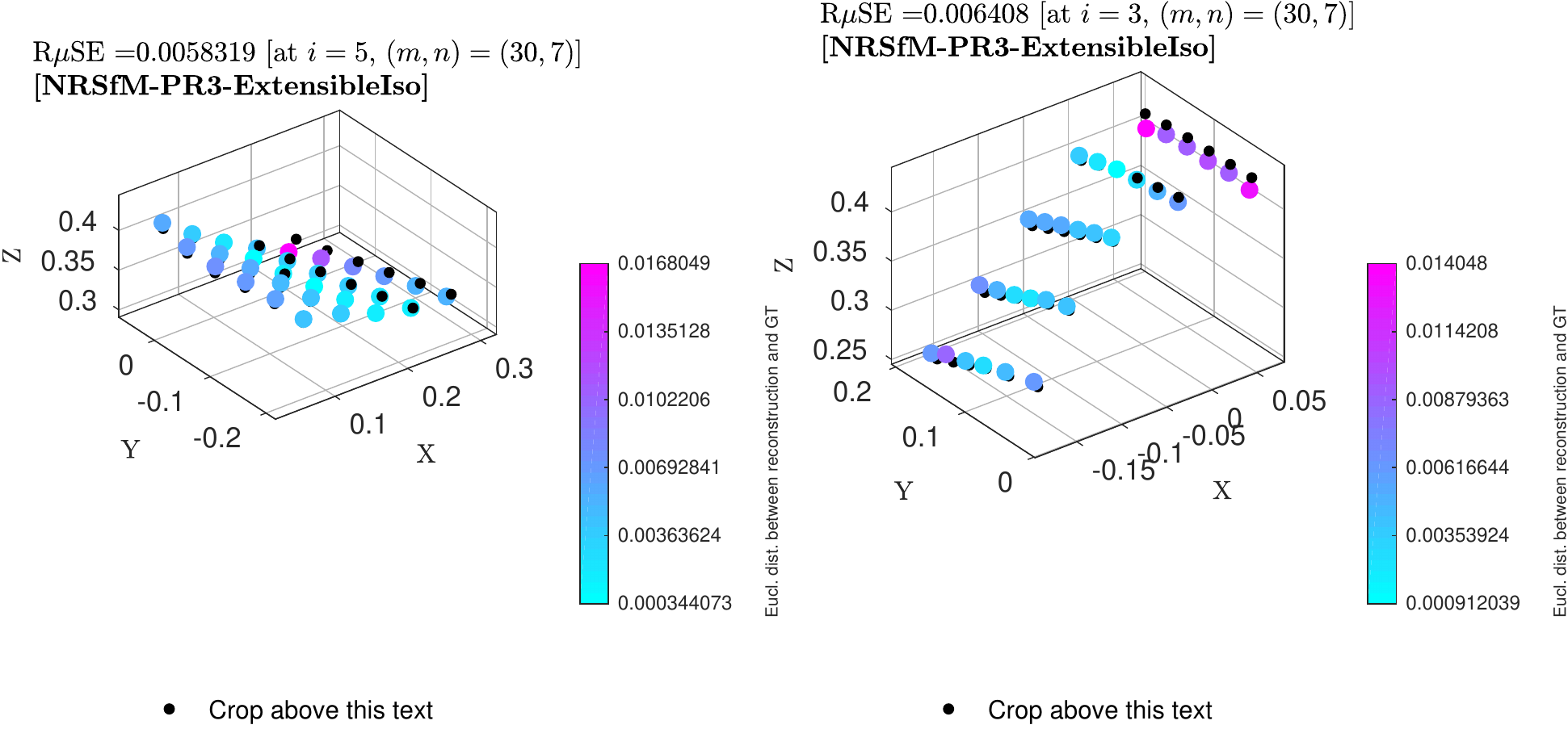}
    \end{overpic}

    \begin{overpic}[width=\qw, trim=10 40 286 25,clip]{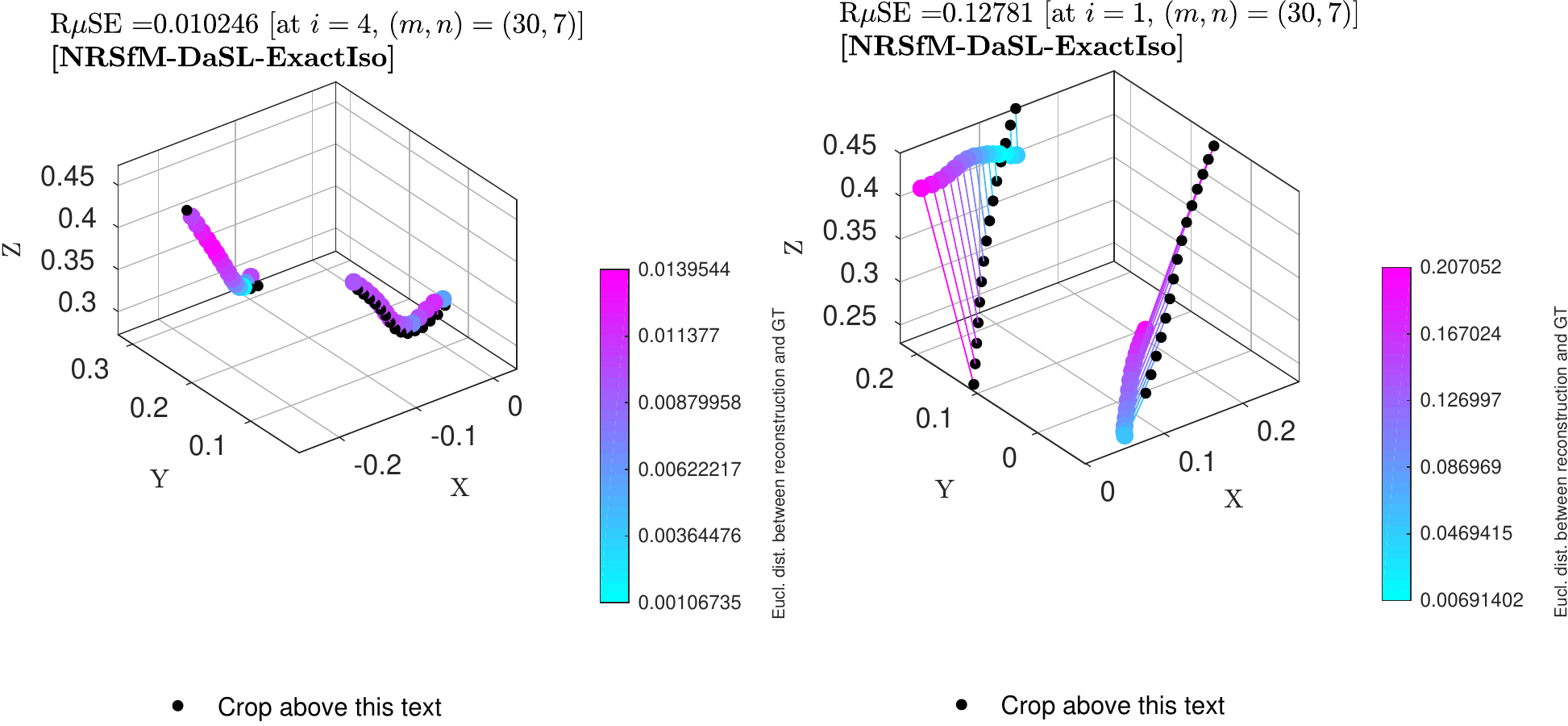}
    \end{overpic}
    \begin{overpic}[width=\qw, trim=10 40 286 25,clip]{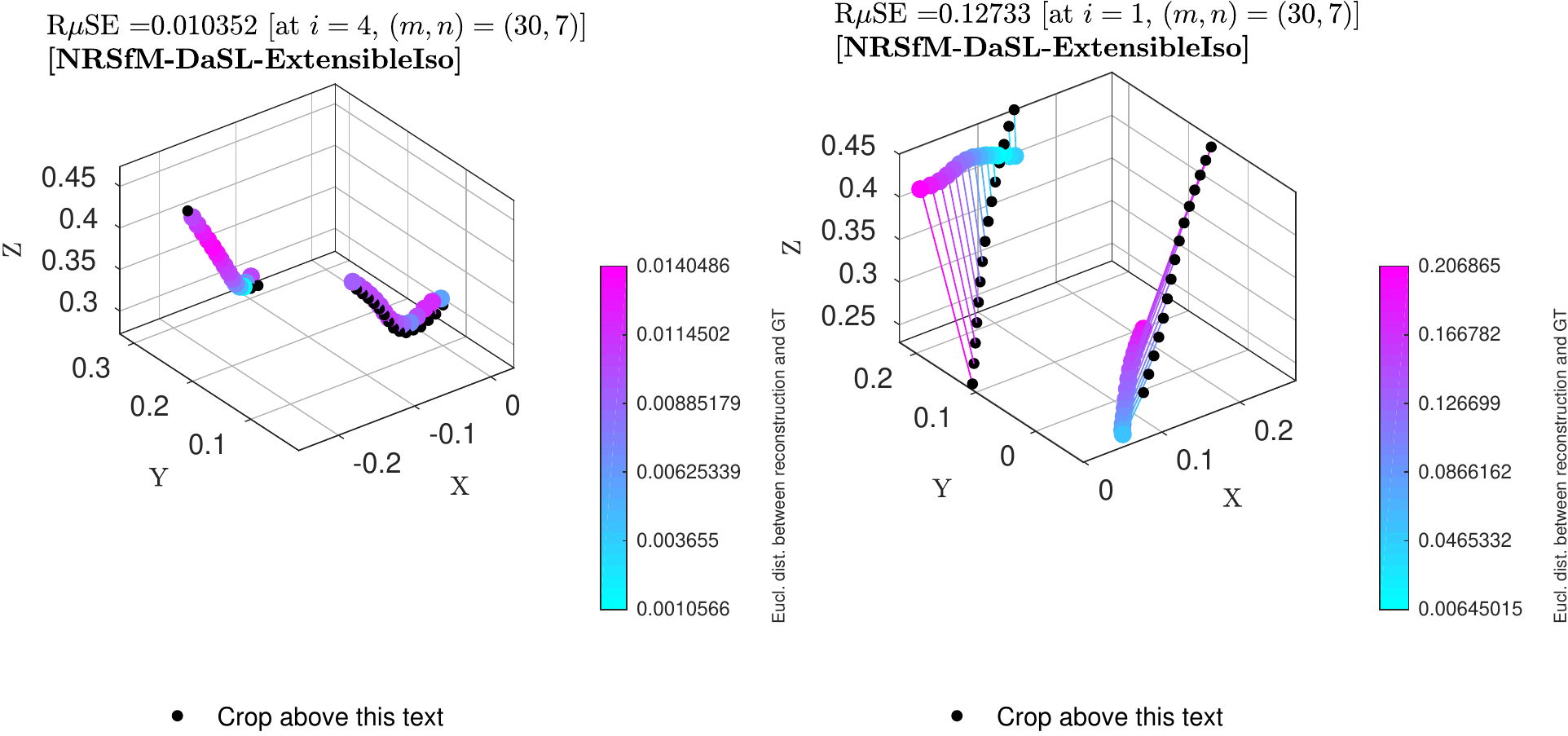}
    \end{overpic}
    \begin{overpic}[width=\qw, trim=10 40 286 30,clip]{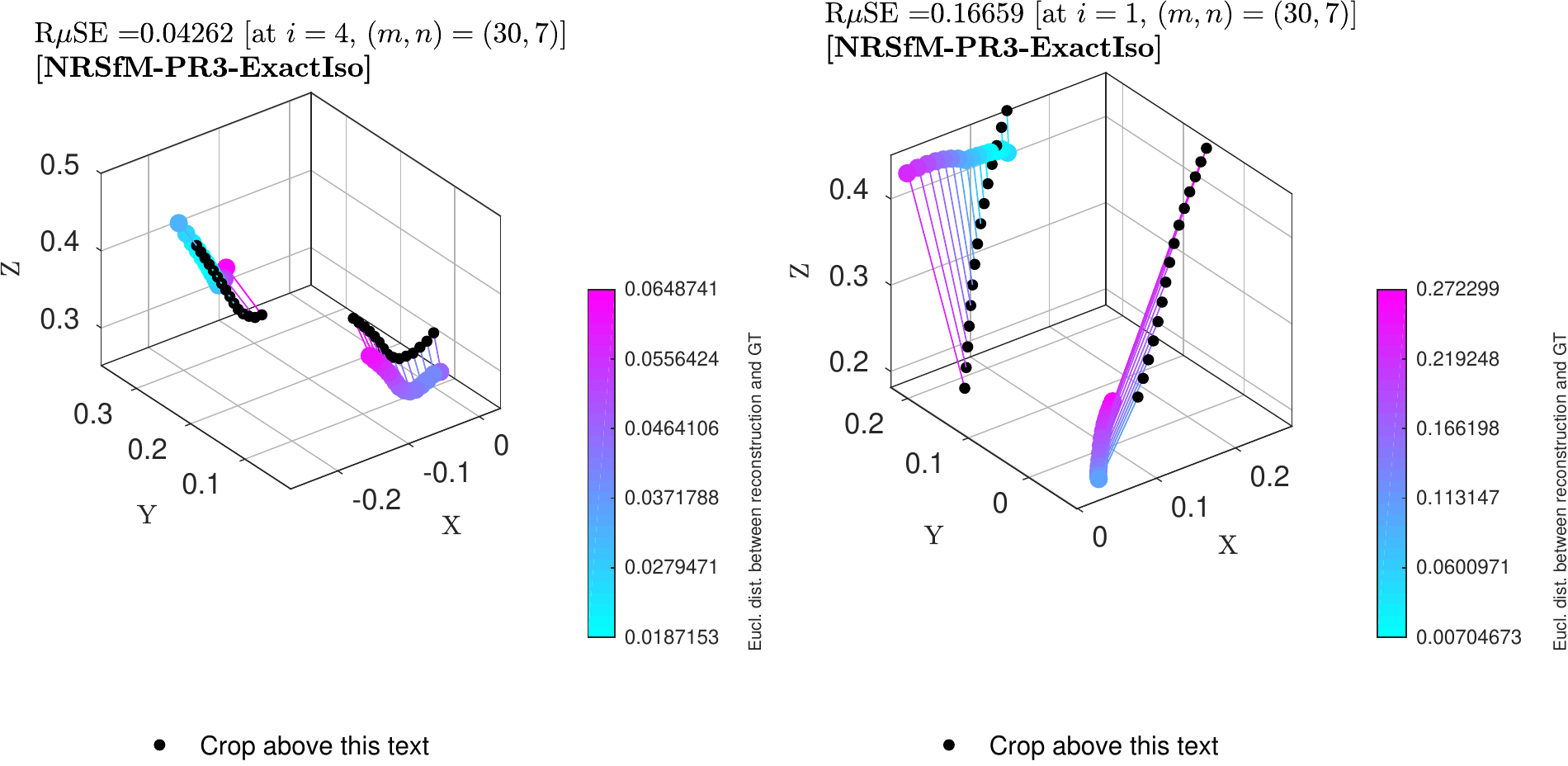}
    \end{overpic}
    \begin{overpic}[width=\qw, trim=10 40 286 30,clip]{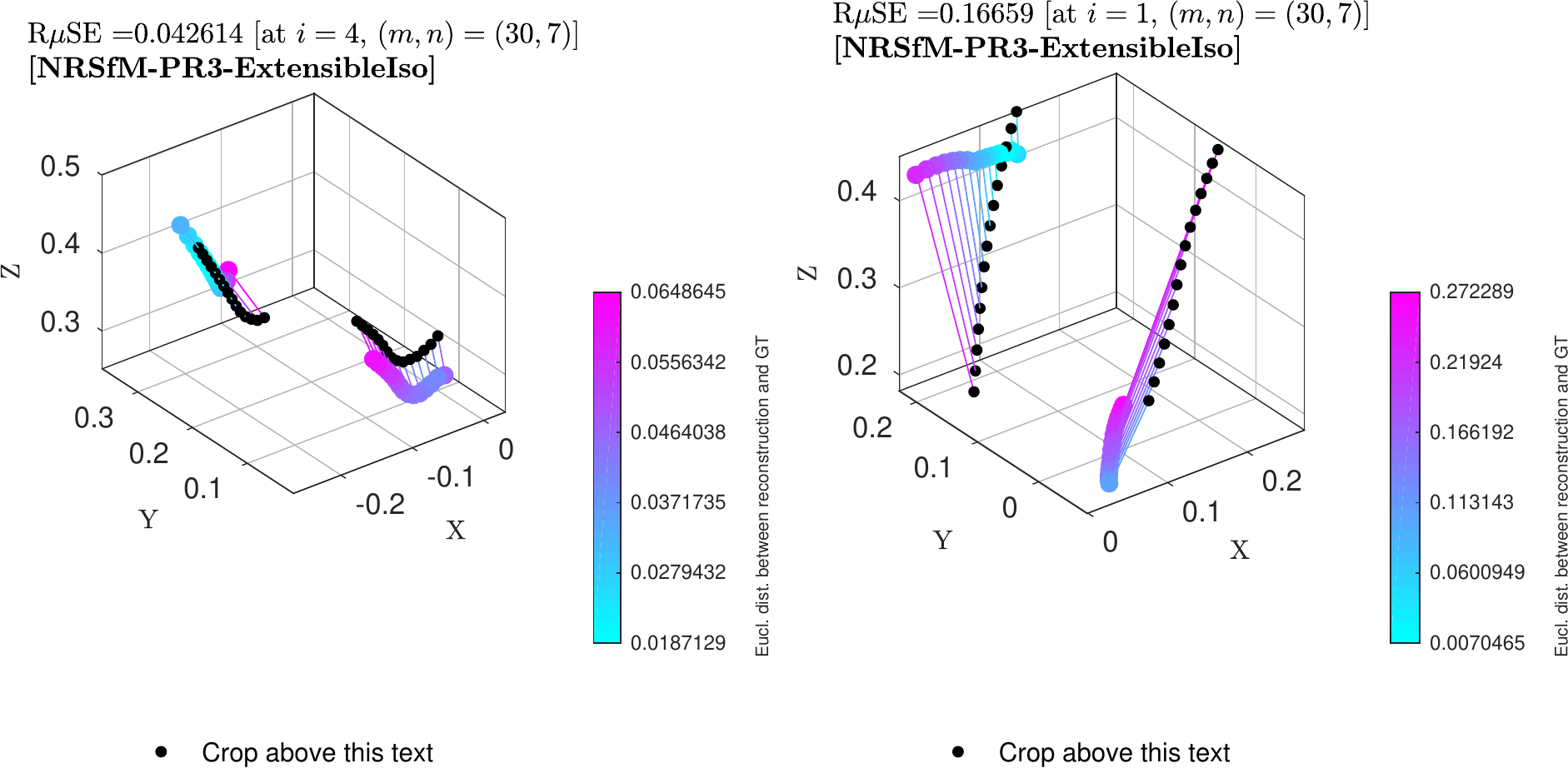}
    \end{overpic}     
    
    \begin{overpic}[width=\qw, trim=286 40 25 25,clip]{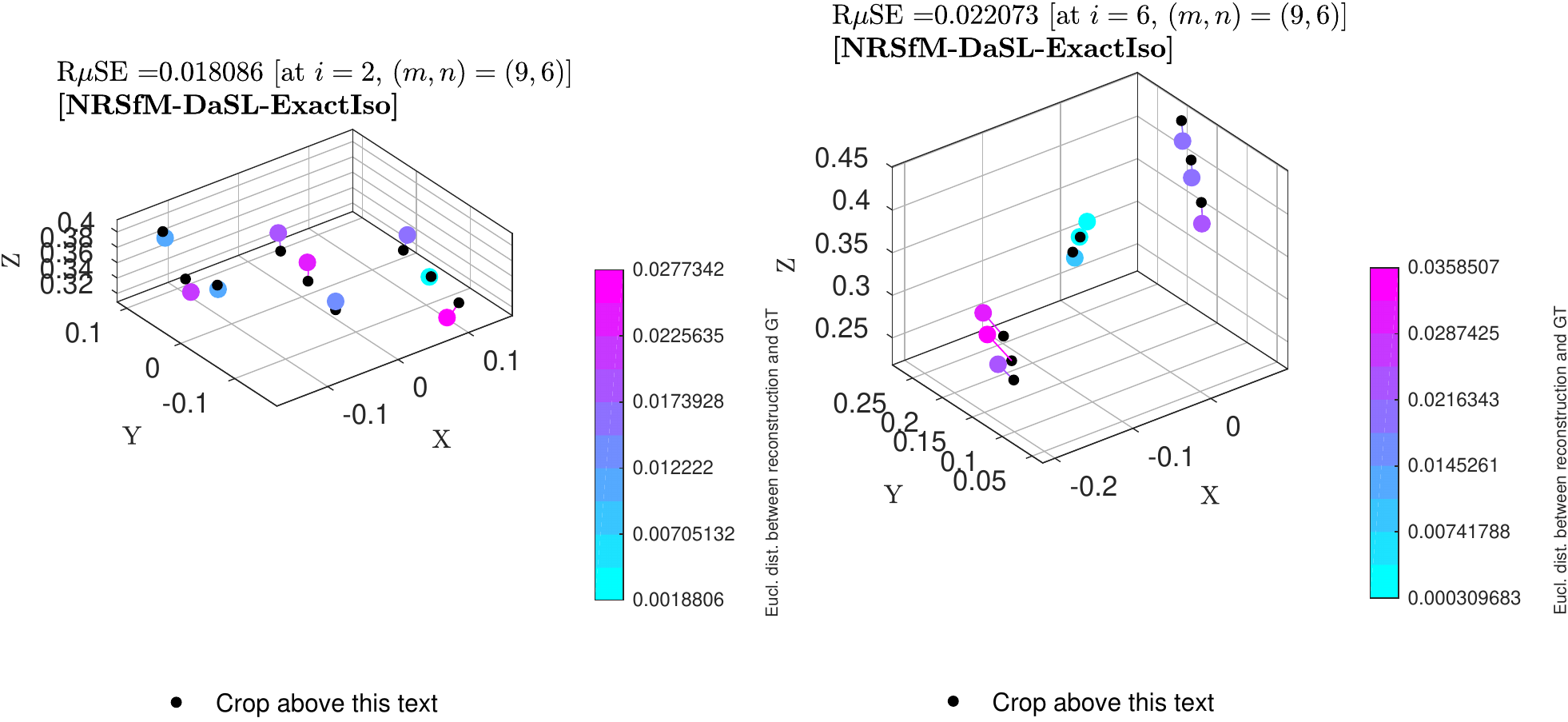}
    \end{overpic}
    \begin{overpic}[width=\qw, trim=286 40 25 25,clip]{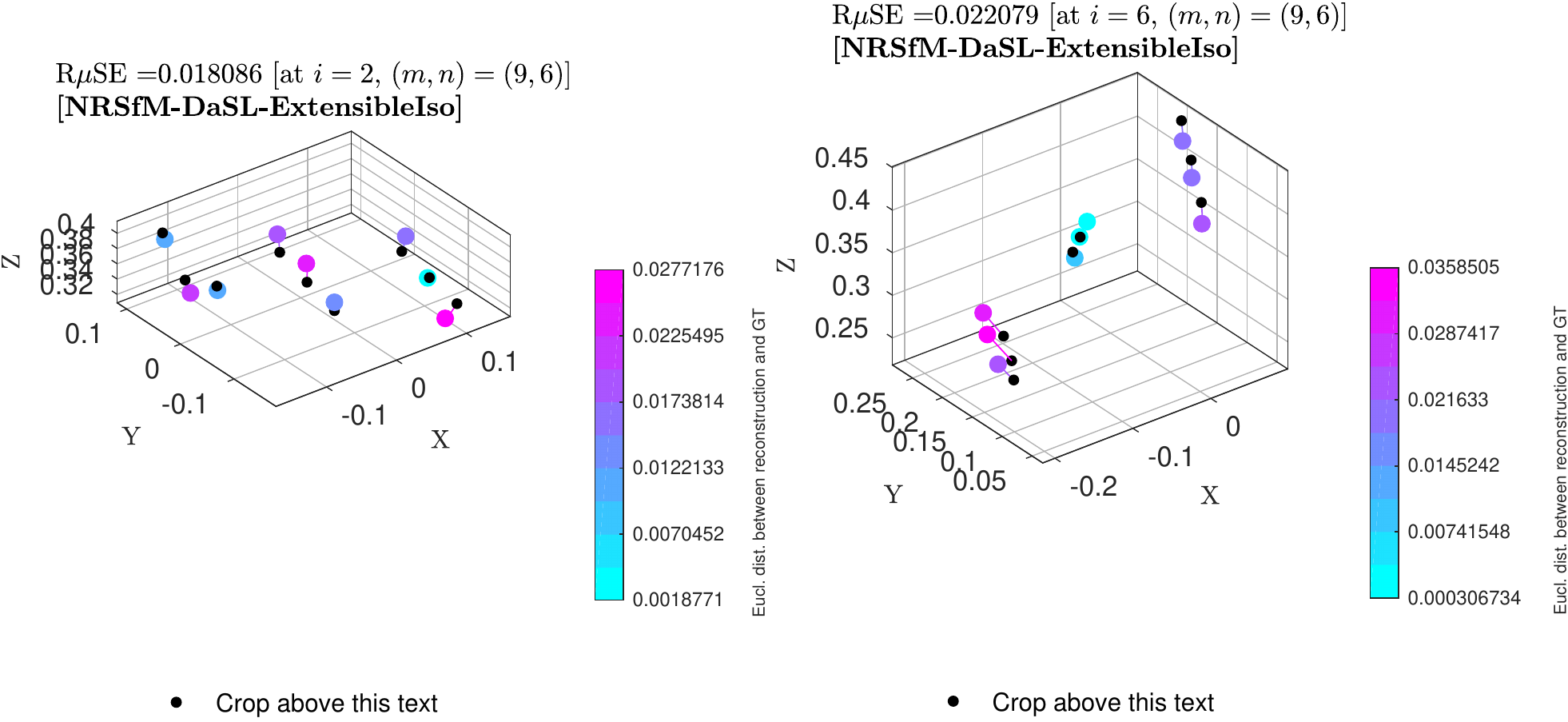}
    \end{overpic}
    \begin{overpic}[width=\qw, trim=286 40 25 25,clip]{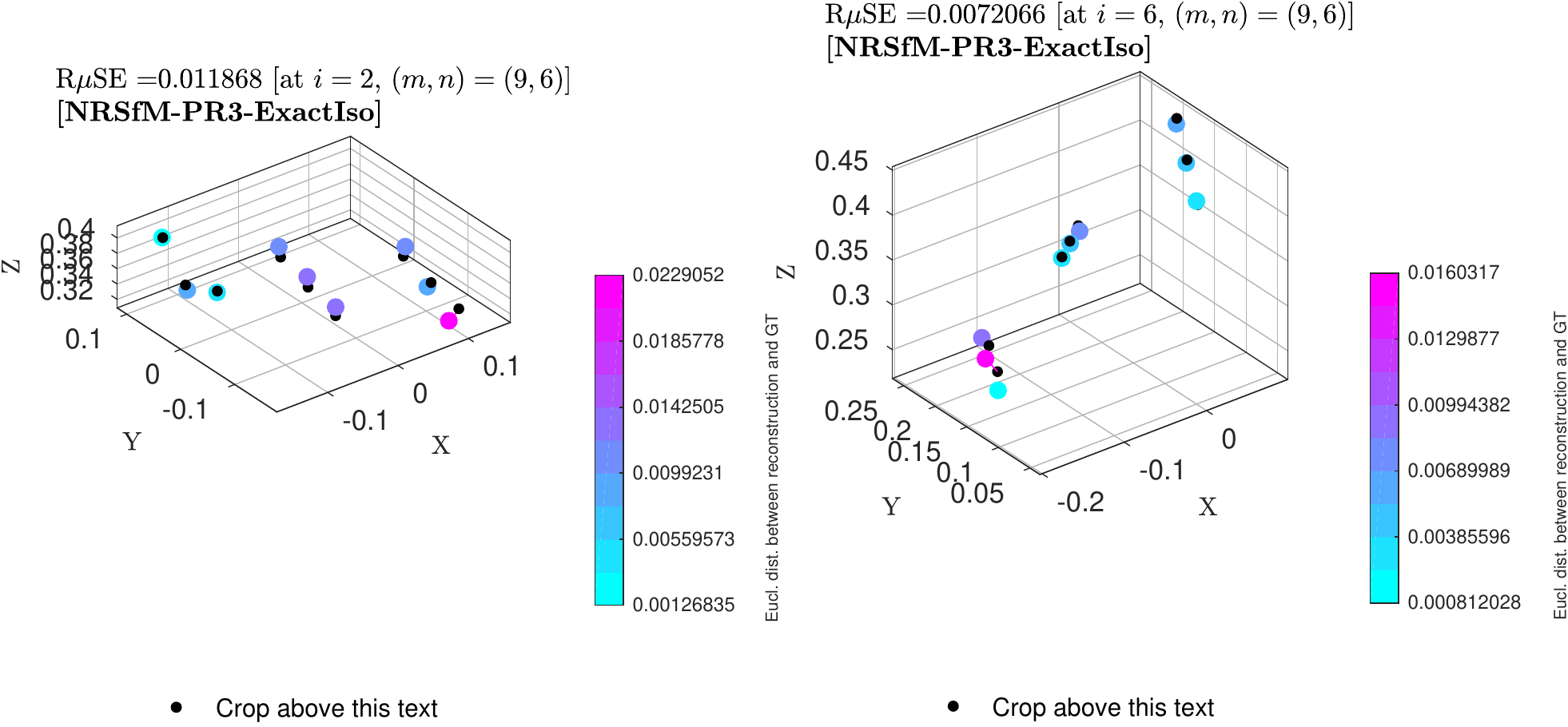}
    \end{overpic}
    \begin{overpic}[width=\qw, trim=286 40 25 25,clip]{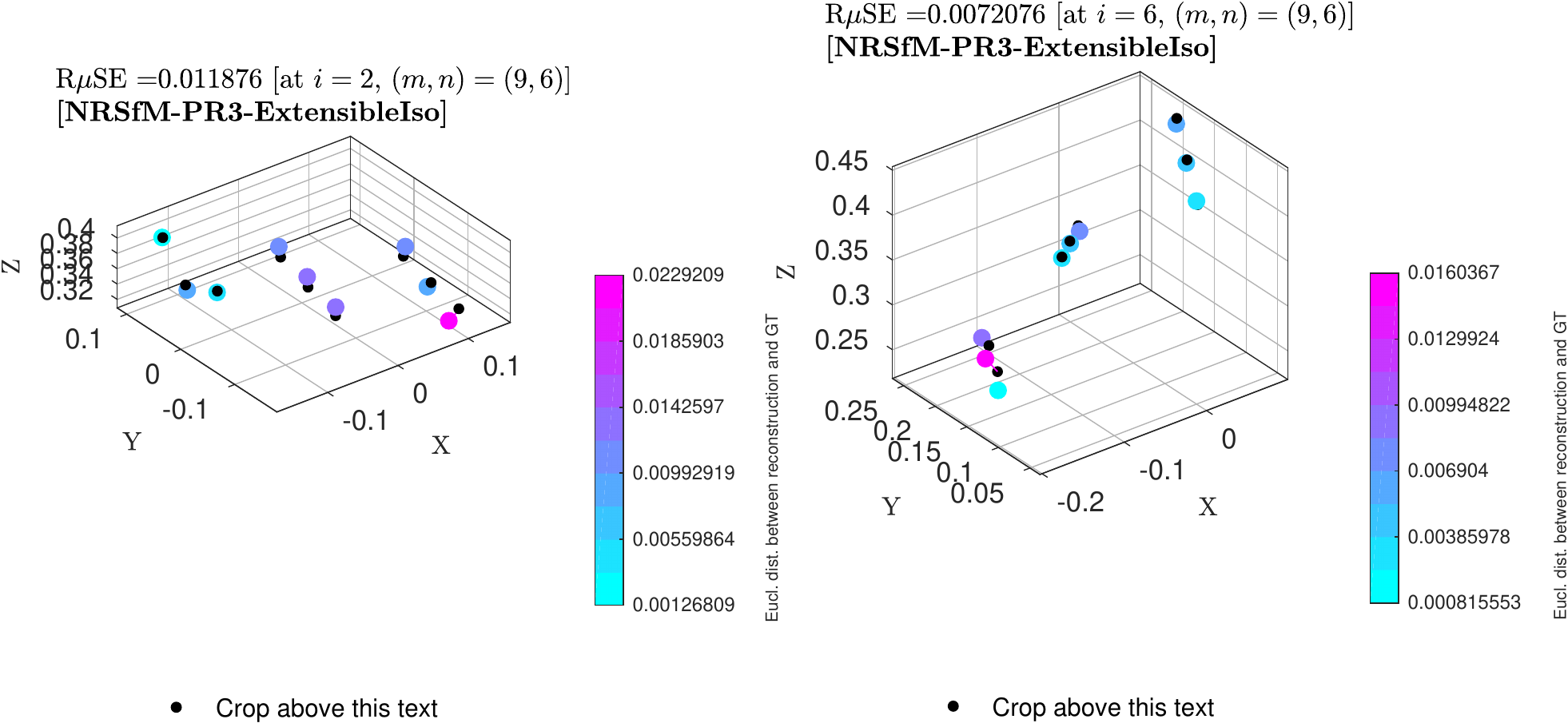}
    \end{overpic}       
    \caption{We show some demonstrative synthetic results from \y{isg} with four of our isometric methods. Each row represents a new dataset. The \textit{first} column is \y{s1}, the \textit{second} column is \y{q1}, the \textit{third} column is \y{s2} and the \textit{fourth} column is \y{q2}. Black points are \y{gt} and the reconstructed points are color-coded by their Euclidean distance to corresponding \y{gt} point (connected by a thin line). All units in \y{au}}
    \label{fig_qual_synth_large}
\end{figure*}

{\small
\bibliographystyle{ieee_fullname}
\bibliography{references}
}